\let\emptyset\varnothing
\def\pAp@k{\texttt{pAp@k}}
\def\1{\cellcolor{green!30}}
\newcommand\mybox[2][]{\tikz[overlay]\node[fill=blue!20,inner sep=2pt, anchor=text, rectangle, rounded corners=1mm,#1] {#2};\phantom{#2}}
\newcommand{\alphamin}{\alpha}
\newcommand{\fmtnum}[1]{%
  \ifnum\fpeval{#1 < 0} = 1
    \textcolor{red}{$#1$}%
  \else
    \ifnum\fpeval{#1 < 0.5} = 1
      \textcolor{green}{$#1$}%
    \else
      \textcolor{blue}{$#1$}%
    \fi
  \fi
}
\newcommand{\tableCox}{Cox}
\newcommand{\tableCoxKeyaInd}{Cox$_I$(Keya~et~al.)}
\newcommand{\tableCoxRPInd}{Cox$_I$(R\&P)}
\newcommand{\tableCoxKeyaGroup}{Cox$_G$(Keya~et~al.)}
\newcommand{\tableCoxRPGroup}{Cox$_G$(R\&P)}
\newcommand{\tableCoxKeyaInt}{Cox$_{\cap}$(Keya~et~al.)}
\newcommand{\tableDeepSurvKeyaInd}{DeepSurv$_I$(Keya~et~al.)}
\newcommand{\tableDeepSurvRPInd}{DeepSurv$_I$(R\&P)}
\newcommand{\tableDeepSurvKeyaGroup}{DeepSurv$_G$(Keya~et~al.)}
\newcommand{\tableDeepSurvRPGroup}{DeepSurv$_G$(R\&P)}
\newcommand{\tableDeepPseudoRPInd}{FIDP}
\newcommand{\tablePseudoNAMRPInd}{FIPNAM}
\newcommand{\tableDeepSurvKeyaInt}{DeepSurv$_{\cap}$(Keya~et~al.)}
\newcommand{\tableDROCox}{\textsc{DRO-COX}}
\newcommand{\tableDROCoxSplit}{\textsc{DRO-COX (SPLIT)}}
\newcommand{\tableExactDROCox}{\textsc{EXACT DRO-COX}}
\newcommand{\tableDeepDROCox}{Deep \textsc{DRO-COX}}
\newcommand{\tableDeepDROCoxSplit}{Deep \textsc{DRO-COX (SPLIT)}}
\newcommand{\tableDeepExactDROCox}{Deep \textsc{EXACT DRO-COX}}
\newcommand{\tableDeepHitRPGroup}{DeepHit$_G$(R\&P)}
\newcommand{\tableSODENRPGroup}{SODEN$_G$(R\&P)}
\begin{document}

\title{Fairness in Survival Analysis with \\ Distributionally Robust Optimization}

\author{\name Shu Hu\thanks{equal contribution} \email hu968@purdue.edu \\
       \addr Department of Computer and Information Technology \\
       Purdue University \\
       Indianapolis, IN, 46202, USA
       \AND
       \name George H.~Chen$^*$\thanks{corresponding author} \email georgechen@cmu.edu \\
       \addr Heinz College of Information Systems and Public Policy \\
       Carnegie Mellon University \\
       Pittsburgh, PA, 15213, USA
       }

\editor{David Sontag}

\maketitle

\begin{abstract}
We propose a general approach for encouraging fairness in survival analysis models that is based on minimizing a worst-case error across \emph{all} subpopulations that are ``large enough'' (occurring with at least a user-specified probability threshold). This approach can be used to convert a wide variety of existing survival analysis models into ones that simultaneously encourage fairness, \emph{without} requiring the user to specify which attributes or features to treat as sensitive in the training loss function.
From a technical standpoint, our approach applies recent methodological developments of \emph{distributionally robust optimization} (DRO) to survival analysis. The complication is that existing DRO theory uses a training loss function that decomposes across contributions of individual data points, i.e., any term that shows up in the loss function depends only on a single training point. This decomposition does not hold for commonly used survival loss functions, including for the standard Cox proportional hazards model, its deep neural network variants, and many other recently developed survival analysis models that use loss functions involving ranking or similarity score calculations. We address this technical hurdle using a sample splitting strategy.
We demonstrate our sample splitting DRO approach by using it to create fair versions of a diverse set of existing survival analysis models including the classical Cox model (and its deep neural network variant DeepSurv), the discrete-time model DeepHit, and the neural ODE model SODEN. We also establish a finite-sample theoretical guarantee to show what our sample splitting DRO loss converges to.
Specifically for the Cox model, we further derive an exact DRO approach that does not use sample splitting.
For all the survival models that we convert into DRO variants, we show that the DRO variants often score better on recently established fairness metrics (without incurring a significant drop in accuracy) compared to existing survival analysis fairness regularization techniques, including ones which directly use sensitive demographic information in their training loss functions.

Our code is available at: \url{https://github.com/discovershu/DRO_survival}.

\end{abstract}

\begin{keywords}
survival analysis, fairness, distributionally robust optimization
\end{keywords}

\section{Introduction}

\sloppy

Survival analysis aims to model time durations before a critical event happens. Examples of such critical events include a patient dying, a convicted criminal reoffending, or a customer cancelling a subscription service. Predicting such time durations accurately could help plan patient treatments, make bail decisions, or target subscription pricing promotions. If a survival analysis model is to be used in high-stakes decision making, fairness could be an important design criterion. For example, in the case of making bail decisions with the help of predicted time durations until a criminal reoffends, we may want a survival analysis model that produces these predictions to be similarly accurate across different races.

One of the major recent advances in encouraging fairness for machine learning models is to minimize a worst-case error over \emph{all} subpopulations that are ``large enough'' (e.g., \citealt{hashimoto2018fairness,duchi2021learning,li2021evaluating,duchi2022distributionally,hu2022rank}). In particular, a modeler specifies a minimum probability threshold~$\alpha$. The goal then is to ensure that all subpopulations that occur with probability at least~$\alpha$ have low average error, whereas we make no promises for subpopulations that occur with probability less than~$\alpha$. The modeler need not provide a list of subpopulations to account for. This problem can be tractably solved in practice and is called \emph{distributionally robust optimization}~(DRO).

We emphasize that curating a list of all subpopulations to account for can be challenging for various reasons. For example, one major challenge is \emph{intersectionality}: subpopulations that a machine learning model yields the worst accuracy scores for can be defined by complex intersections of sensitive attributes (such as age, race, and gender simultaneously taking on specific values) \citep{buolamwini2018gender}. Some of these attributes might require discretization (e.g., dividing age into bins), for which choosing the ``best'' discretization strategy might not be straightforward. Moreover, if there is a large number of features and we suspect that the sensitive attributes (encoded by specific features) could possibly be correlated with other features (not flagged as sensitive), then there is a question of whether these other features should also be accounted for in a listing of what the sensitive attributes are. DRO provides a theoretically sound alternative to having to specify which attributes to treat as sensitive in a training loss function. %

Our main contribution in this paper is to show how to apply DRO to survival analysis. Specifically, we propose a general strategy for converting a wide variety of survival analysis models into ones that simultaneously encourage fairness. Our strategy supports all survival analysis models we are aware of that minimize a loss function (details on the general form of survival analysis models that our approach supports are in Section~\ref{sec:conversion}). %

The key technical challenge is that existing DRO theory assumes that the overall training loss is the sum of individual loss terms, where each such term only depends on a single data point. This assumption fails to hold for commonly used survival analysis loss functions---including that of the standard Cox proportional hazards model \citep{cox1972regression}---that involve pairwise comparisons from ranking or similarity score evaluations (e.g., \citealt{steck2007ranking,lee2018deephit,chen2020deep,wu2021uncertainty}). In particular, there are loss terms that arise that incorporate information from multiple data points at once. We propose a sample splitting approach to address this technical challenge, and we establish a finite-sample theoretical guarantee on what our sample splitting DRO loss converges to. We point out that there are also parametric survival analysis models with loss functions that directly adhere to existing DRO theory (e.g., parametric accelerated failure time models \citep[Chapter 12]{klein2003survival} or, as a more exotic example, the recently proposed neural ordinary differential equation (ODE) model called SODEN \citep{tang2022soden}); such models can trivially be modified to use DRO without the sample splitting approach that we propose.

We specifically show how to derive DRO variants of the standard Cox model \citep{cox1972regression} (and its deep neural network variant DeepSurv \citep{faraggi1995neural, katzman2018deepsurv}), the discrete-time DeepHit model \citep{lee2018deephit}, and the neural ODE model called SODEN \citep{tang2022soden}. Again, we emphasize that our strategy for converting an existing survival analysis model to its DRO variant is fairly general and is not limited to only the few models that we showcase as illustrative examples.

We further derive an exact DRO approach specific to the Cox model that does not require sample splitting. In particular, by introducing additional parameters to optimize over for the Cox model's standard negative partial likelihood loss, it is possible to convert this loss function into one that decouples across training points. This derivation is specific to the Cox model though and does not easily generalize to other survival models.

On three standard datasets that have been previously used for research on fair survival analysis, we show that our DRO modification often outperforms various baseline fairness regularization techniques in terms of existing fairness metrics that focus on user-specified sensitive attributes. Most of these baselines require the user to specify which attributes to treat as sensitive attributes within the added regularization term. As with other fairness methods recently developed for survival analysis (e.g., \citet{keya2021equitable,rahman2022fair}), our approach also results in a drop in accuracy (compared to using a loss that does not encourage fairness). %
Note that our paper does not aim to find which survival model is the most accurate or the most fair. In fact, per survival model, there is in general a tradeoff between accuracy and fairness that can be tuned by the modeler. We show how to visualize this tradeoff using a plot inspired by an ROC curve.

\paragraph{Related work on fair survival analysis}
Despite many recent advances in survival analysis methodology (see, for instance, the survey by \citet{wang2019machine}), very few of these advances study fairness \citep{keya2021equitable, zhang2022longitudinal, sonabend2022flexible, rahman2022fair}. We provide an overview of these existing papers, and we discuss how they differ from our work. %

\citet{keya2021equitable} adapted existing fairness definitions to the survival analysis setting and showed how to encourage different notions of fairness by adding fairness regularization terms. %
Specifically, \citet{keya2021equitable} came up with individual \citep{dwork2012fairness}, group \citep{dwork2012fairness}, and intersectional \citep{foulds2020intersectional} fairness definitions specialized to Cox models. Keya et al.~define individual fairness in terms of model predictions being similar for similar individuals, and group fairness in terms of different user-specified groups having similar average predicted outcomes. Intersectional fairness further considers subgroups defined by intersections of protected groups (e.g., individuals of a specific race and simultaneously a specific gender). %
However, a major limitation of the notions of fairness defined by Keya et al.~is that they focus on predicted model outputs and do not actually use any of the ground truth label information. %
For example, if one uses age as a sensitive attribute and suppose we discretize age into two groups, then the notion of group fairness by Keya et al.~would ask for the predicted outcomes of the two age groups to be similar, which for healthcare problems often does not make sense (since age is often highly predictive of different health outcomes). Instead, in such a scenario, a more desirable notion of fairness is that the model's \emph{accuracy} for the different age groups be similar.

To account for model accuracy, \citet{zhang2022longitudinal} introduced a fairness metric called \emph{concordance imparity} that computes a quantity similar to the standard survival analysis accuracy metric of \emph{concordance index} \citep{harrell1982evaluating} for different groups and then looks at the worst-case difference between any two groups' accuracy scores. Meanwhile, \citet{rahman2022fair} directly modified the fairness definitions of \citet{keya2021equitable} to account for ground truth label information, and also generalized these definitions to survival models beyond Cox models.

Separately, \citet{sonabend2022flexible} empirically explored how well existing survival analysis accuracy and calibration metrics measure bias by synthetically modifying datasets (e.g., undersampling disadvantaged groups). However, they do not propose any new fairness metric or survival model that encourages fairness.

The papers mentioned above that propose new methods for learning fair survival models all either require user-specified demographic information to treat as sensitive (possibly as a list of subpopulations or groups to account for) or are simply adding a regularization term that encourages smoothness in the model outputs (the individual fairness regularization by \citet{keya2021equitable} and \citet{rahman2022fair} are directly related to encouraging Lipschitz continuity; for details, see Appendix~\ref{sec:fairness-measures}). In contrast, our proposed DRO approach does not require the user to indicate which attributes to treat as sensitive in the training loss function, and is not simply encouraging the model output to be Lipschitz continuous.

\paragraph{Bibliographical note}
This paper significantly extends our previous conference paper \citep{hu2022distributionally} in methodological development and in experiments. For methodological development, whereas our conference paper only considered Cox models, we show in this journal paper version how to convert a much wider class of survival analysis models into their DRO variants that encourage fairness. In fact, this wider class of models consists of all survival models we are aware of that are learned by minimizing an overall loss function. Furthermore, this journal paper extension includes theoretical analysis of our sample splitting DRO approach and also an exact DRO approach for the Cox model without sample splitting; neither of these contributions were in our conference paper. For experiments, we demonstrate our conversion strategy on not only Cox models but also on DeepHit and SODEN models. Our experiments are overall more extensive, and the SEER dataset we now use is much larger ($\sim$28k data points in this version vs $\sim$4k in the conference paper). Lastly, we also add a new visualization for seeing the tradeoff between accuracy and fairness across multiple models within a single plot. %

\paragraph{Outline}
The rest of the paper is organized as follows. We provide background on survival analysis, existing research on fairness in survival analysis, and DRO in Section~\ref{sec:background}.
We then present our strategy for converting a wide family of existing survival analysis models into their corresponding DRO variants that encourage fairness in Section~\ref{sec:conversion}; notably, this section introduces a sample splitting DRO approach and formally establishes its rate of convergence.
Specifically for the Cox model, we present an exact DRO Cox model without sample splitting in Section~\ref{sec:dro-cox-exact}. We conduct experiments to compare DRO variants of Cox, DeepHit, and SODEN models to their original non-DRO variants as well as to variants of these models that encourage fairness using non-DRO baseline regularization strategies. We conclude the paper in Section~\ref{sec:conclusion}.

\section{Background}\label{sec:background}

We begin by reviewing the basic survival analysis problem setup in Section~\ref{sec:survival-analysis} and then provide three examples of survival analysis models (Cox, DeepHit, and SODEN) in Section~\ref{sec:survival-models}. %
We then review DRO in Section~\ref{sec:dro}. Throughout the paper, we frequently use the notation $[\ell] \triangleq \{1, 2, \dots, \ell\}$ for any positive integer~$\ell$.

\subsection{Survival Analysis Setup}
\label{sec:survival-analysis}

Survival analysis aims to model the amount of time that will elapse before a critical event of interest happens. We assume that we have training data $\{(X_i,Y_i,\Delta_i)\}_{i=1}^n$, where training data point $i\in[n]$ has raw input $X_i\in\mathcal{X}$, observed duration $Y_i\ge0$, and event indicator $\Delta_i\in\{0,1\}$. If $\Delta_i=1$ (i.e., the critical event of interest happened for the $i$-th data point), then $Y_i$ is the time until the event happens. Otherwise, if $\Delta_i=0$, then $Y_i$ is the time until censoring for the $i$-th point, i.e., the true time until event is unknown but we know that it is at least $Y_i$. The raw input space $\mathcal{X}$ could be any input space supported by standard neural network software (e.g., tabular data, images, time series). %

Each training data point $(X_i,Y_i,\Delta_i)$ is assumed to be generated as follows:
\begin{enumerate}[itemsep=0pt,parsep=0pt,topsep=2pt] %
\item Sample raw input $X_i$ from a raw input distribution $\mathbb{P}_{X}$.
\item Sample nonnegative time duration $T_i$ (this is the true time until the critical event happens) from a conditional distribution $\mathbb{P}_{T|X=X_i}$.
\item Sample nonnegative time duration $C_i$ (this is the true time until the data point is censored) from a conditional distribution $\mathbb{P}_{C|X=X_i}$.
\item If $T_i \le C_i$ (the critical event happens before censoring), then set $Y_i = T_i$ and $\Delta_i=1$. Otherwise, set $Y_i = C_i$ and $\Delta_i=0$.
\end{enumerate}
Distributions $\mathbb{P}_X$, $\mathbb{P}_{T|X}$, and $\mathbb{P}_{C|X}$ are shared across data points and are unknown. %
We assume that the random variables $T_i$ and $C_i$ are independent given raw input $X_i$ (since the training data are i.i.d., this means that conditioned on the raw input, the censoring times are random and independent of each other, and they are also independent of the true survival times). %
We denote the CDF of distribution $\mathbb{P}_{T|X=x}$ as $F(\cdot|x)$. %

\paragraph{Prediction}
A standard prediction task is to estimate the probability that a data point with raw input~$x\in\mathcal{X}$ survives beyond time~$t$. Formally, this is defined as the \emph{conditional survival function}
\begin{equation}
S(t|x) \triangleq \mathbb{P}(T > t | X=x) = 1 - F(t|x)
\qquad\text{for }t\ge0.
\label{eq:conditional-survival}
\end{equation}
Importantly, for raw input~$x$, we are predicting an entire probability distribution (since $S(\cdot|x)$ encodes the same information as the CDF $F(\cdot|x)$).

Some survival analysis models, such as the Cox proportional hazards model \citep{cox1972regression}, estimate a transformed version of $S(\cdot|x)$ called the \emph{hazard function}, given by
\begin{equation}
h(t|x) \triangleq - \frac{\partial}{\partial t}\log S(t|x)
\qquad\text{for }t\ge0.
\label{eq:hazard}
\end{equation}
From negating both sides of this equation, integrating over time, and exponentiating, we get $S(t|x)=\exp(-\int_0^t h(u|x)\textrm{d} u)$. Thus, if we have an estimate of $h(\cdot|x)$, then we can readily estimate the conditional survival function $S(\cdot|x)$.

\subsection{Examples of Survival Analysis Models}
\label{sec:survival-models}

We now review three examples of survival analysis models (Cox, DeepHit, and SODEN) that can be modified to encourage fairness using DRO. In reviewing these models, we focus on aspects most relevant to our exposition later for how to convert these models into their DRO variants. %
For all three examples, we denote the neural network to be learned as $f(\cdot;\theta)$, where~$\theta$ denotes the parameters of the neural network. The domain and range of $f$ depends on the specific survival model we look at. Meanwhile, the architecture of $f$ is up to the modeler to specify, where standard strategies could be used (e.g., if the raw inputs are tabular data, then a multilayer perceptron could be used; if the raw inputs are images, a convolutional neural network could be used; etc).

\subsubsection{Classical and Deep Cox Models}
\label{sec:cox-models}

The classical Cox model assumes that the hazard function has the factorization
\begin{equation}
h(t|x) = h_0(t)\exp(f(x;\theta)),
\label{eq:hazard-factorization}
\end{equation}
where $h_0$ is called the baseline hazard function ($h_0$ maps a nonnegative time $t\ge0$ to a nonnegative number), and neural network $f(\cdot;\theta)$ maps a raw input from $\mathcal{X}$ to a single real number (i.e., $f(\cdot;\theta)$ has domain $\mathcal{X}$ and range $\mathbb{R}$). %
In particular, $f(x;\theta)$ models the so-called \emph{log partial hazard function} and could be thought of as assigning a real-valued ``risk score'' to raw input~$x\in\mathcal{X}$: when $f(x;\theta)$ is higher, then~$x$ has a higher risk of the critical event happening, so that the survival time of~$x$ will tend to be lower.

The original Cox model \citep{cox1972regression} defines $f$ to be a dot product: $f(x;\theta)=\theta^T x$, where~$\theta$ and~$x$ are in the same Euclidean vector space. More recently, researchers replaced~$f$ with a neural network \citep{faraggi1995neural,katzman2018deepsurv}, resulting in a method called DeepSurv (which could be viewed as a generalization of the original Cox model in that the classical definition $f(x;\theta)=\theta^T x$ is a simple neural network consisting of a linear layer with no bias and no nonlinear activation). In either case, the standard approach for learning a Cox model is to first learn the neural network parameters~$\theta$ by minimizing the negative log partial likelihood:
\begin{equation}
L^{\text{Cox}}(\theta) = \frac{1}{n} \sum_{i=1}^n L_i^{\text{Cox}}(\theta),
\label{eq:cox}
\end{equation}
where the $i$-th data point's loss is
\begin{align}
L_i^{\text{Cox}}(\theta)
\triangleq
  -\Delta_i
   \bigg[
     f(X_i;\theta)
     - \log
         \sum_{j\in[n]\text{ s.t.~}Y_j\geq Y_i}
           \exp(f(X_j;\theta))
   \bigg]. \label{eq:cox-individual-loss}
\end{align}
If the $i$-th data point is censored (i.e., $\Delta_i=0$), then $L_i^{\text{Cox}}(\theta)=0$. Thus, the overall loss $L^{\text{Cox}}(\theta)$ could be viewed as weighting the \emph{uncensored} training points equally. After learning $\theta$, we then estimate~$h_0$; as this step is not essential to our exposition, we explain it in Appendix~\ref{sec:breslow}, along with details on constructing the final estimate of $S(\cdot|x)$.

We remark that the factorization in equation~\eqref{eq:hazard-factorization} is referred to as the \emph{proportional hazards assumption}: regardless of what the input $x$ is, the hazard function $h(\cdot|x)$ is always proportional to the baseline hazard function $h_0$. A consequence of this assumption is that the resulting conditional survival function $S(\cdot|x)$ is heavily constrained in terms of its shape. In particular, regardless of what $x$ is, $S(t|x)$ must be a power of the function $S_0(t) \triangleq \exp(-\int_0^t h_0(u)\textrm{d} u)$ (for details, see Appendix~\ref{sec:proportional-hazards-survival-curve-shape}). The next two survival analysis models that we describe do not have this assumption and can more flexibly estimate $S(\cdot|x)$.

\subsubsection{DeepHit}
\label{sec:deephit}

A wide class of survival analysis models directly estimate (some transformed version of) the conditional survival function $S(\cdot|x)$ along a discretized time grid, without requiring the proportional hazards assumption. The time grid itself is up to the modeler to choose and can depend on the observed time $Y_i$ and event indicator $\Delta_i$ variables in the training data. For example, we could use a uniformly-spaced time grid between the minimum and maximum observed times (for some user-specified number of discretized time steps), or we could have the time grid consist of all unique times in the training data in which the critical event happened (in fact, this how the classical Kaplan-Meier estimator \citep{kaplan1958nonparametric} discretizes time). Some other time grids are discussed by \citet{kvamme2021continuous}.

An example of a model that uses a discretized time grid is DeepHit \citep{lee2018deephit}. Note that DeepHit supports the so-called \emph{competing risks} setting where there are multiple critical events of interest. %
For simplicity, we review DeepHit where we only present the case where there is a single critical event of interest, which reduces the problem setup to the same one we specified in Section~\ref{sec:survival-analysis}. %

Let $t_1<t_2<\cdots<t_m$ denote the $m$ discretized time points based on some user-specified grid. We assume that these are the only time points in which the critical event or censoring could happen (if a critical event or censoring happens at some other time point, we quantize it to one of these $m$ time points). Then DeepHit parameterizes the following conditional probability mass function using a neural network:
\begin{equation}
\mathbb{P}(T = t_j | X = x) = f_j(x; \theta) \qquad\text{for }j\in[m],
\label{eq:deephit-model}
\end{equation}
where neural network $f(x;\theta) = \big(f_1(x;\theta),f_2(x;\theta),\dots,f_m(x;\theta)
\big)\in[0,1]^m$ has parameters~$\theta$ and maps a raw input $x\in\mathcal{X}$ to a probability distribution over the $m$ time steps. In other words, the domain of $f(\cdot;\theta)$ is $\mathcal{X}$ and the range of $f(\cdot;\theta)$ is the probability simplex $\big\{ z\in\mathbb{R}^m : z_j \ge 0 \text{ for all }j\in[m]\text{ and }\sum_{j=1}^m z_j = 1 \big\}$.
For example, when working with tabular data, $f$ could be a multilayer perceptron, where the last linear layer outputs~$m$ numbers and has softmax activation.

Because of the parameterization in equation~\eqref{eq:deephit-model}, we can write the conditional survival function $S(t|x)$ at any discrete time point $t_j$ in terms of the neural network $f(\cdot;\theta)$:
\[
S_j(x;\theta) \triangleq S(t_j|x) = \mathbb{P}(T > t_j | X = x) %
= \sum_{\ell =j+1}^m f_{\ell}(x;\theta)
\quad\;\text{for }j\in[m].
\]
To learn $\theta$, DeepHit uses the sum of two loss terms, corresponding to a negative log likelihood term and, separately, a ranking loss term. In what follows, we use the notation $\kappa(Y_i)\in[m]$ to denote the time step index corresponding to the $i$-th training point's observed time $Y_i$ (i.e., $Y_i$ gets quantized to integer time step $\kappa(Y_i)$). For example, one way to define the function $\kappa:[t_1,\infty)\rightarrow[m]$ is as follows:\footnote{Note that it is possible to instead define the domain of $\kappa$ to be $[0,\infty)$, where we either require $t_1\triangleq0$ or alternatively we define a special time point $t_0\triangleq0$ (and have $\kappa(t)=0$ when $t<t_1$, where $t_1$ is assumed to be positive). In the latter case where we introduce time point $t_0$, the range of $\kappa$ would of course be $\{0,1,\dots,m\}$ instead of $[m]=\{1,2,\dots,m\}$.}
\begin{equation}
\kappa(t)
\triangleq
\begin{cases}
\ell & \text{if there exists time index }\ell\in[m]\text{ s.t.~}t_\ell = t, \\
\max\{\ell\in[m] : t_\ell < t\} & \text{otherwise}.
\end{cases}
\label{eq:deephit-kappa}
\end{equation}
Then the overall DeepHit loss is
\begin{align}
L^{\text{DeepHit}}(\theta)
\triangleq & ~\beta\cdot \overbrace{\frac{1}{n}\sum_{i=1}^n
     \big[
       - \Delta_i \log(f_{\kappa(Y_i)}(X_i;\theta))
       - ( 1 - \Delta_i ) \log( S_{\kappa(Y_i)}(X_i;\theta) )
     \big]}^{\text{negative log likelihood loss term}}
\nonumber \\
& + (1-\beta)\cdot
    \underbrace{\frac{1}{n^2}\sum_{i=1}^n \Delta_i \sum_{j\in[n]\text{ s.t.~}\kappa(Y_j) > \kappa(Y_i)}
     \exp\bigg(\frac{S_{\kappa(Y_i)}(X_i;\theta) - S_{\kappa(Y_i)}(X_j;\theta)}{\sigma}\bigg)}_{\text{ranking loss term}},
\label{eq:deephit-loss-pycox}
\end{align}
where $\beta\in[0,1]$ and $\sigma>0$ are hyperparameters. Note that this formulation of the overall loss follows the implementation of DeepHit by \citet{kvamme2019time} in the now-standard \texttt{pycox} software package and is slightly different from the original formulation by \citet{lee2018deephit} (the only difference is in the weights used to combine the two main loss terms).

For how we convert DeepHit into its DRO variant later, it will be helpful to rewrite the DeepHit loss in terms of individual losses:
\begin{equation}
L^{\text{DeepHit}}(\theta)
\triangleq
  \frac{1}{n}\sum_{i=1}^n L^{\text{DeepHit}}_i(\theta),
\label{eq:deephit-loss}
\end{equation}
where the $i$-th individual loss is
\begin{align}
L^{\text{DeepHit}}_i(\theta)
=
     & ~
     \beta\cdot
     \big[
       - \Delta_i \log(f_{\kappa(Y_i)}(X_i;\theta))
       - ( 1 - \Delta_i ) \log( S_{\kappa(Y_i)}(X_i;\theta) )
     \big]
\nonumber \\
& + (1-\beta)\cdot
    \frac{1}{n} \cdot \Delta_i \sum_{j\in[n]\text{ s.t.~}\kappa(Y_j) > \kappa(Y_i)}
     \exp\bigg(\frac{S_{\kappa(Y_i)}(X_i;\theta) - S_{\kappa(Y_i)}(X_j;\theta)}{\sigma}\bigg).
\label{eq:deephit-individual-loss}
\end{align}

\subsubsection{SODEN}
\label{sec:soden}

Recently, a number of researchers have considered a differential-equation approach to setting up a survival analysis model that can avoid the proportional hazards assumption while also not requiring the modeler to explicitly specify a discrete time grid \citep{groha2020general, moon2022survlatent, tang2022survival, tang2022soden}. We review one such model called SODEN (Survival model through Ordinary Differential Equation Networks), proposed by \citet{tang2022soden}. Note that we review a special case that is easier to describe and that corresponds to our survival analysis problem setup in Section~\ref{sec:survival-analysis}, where survival times are all nonnegative.

In what follows, we denote $H(t|x) \triangleq -\log S(t|x)$. From how we defined the hazard function $h(t|x)$ in equation~\eqref{eq:hazard}, we have $h(t|x) = \frac{\partial}{\partial t} H(t|x)$, so $H(t|x)=\int_0^t h(u|x)\textrm{d}u$; this integral expression reveals why $H(t|x)$ is commonly called the \emph{cumulative hazard function}. %

SODEN uses a neural network $f(\cdot;\theta)$ to parameterize the hazard function as the solution to an ordinary differential equation (ODE):
\begin{equation}
\begin{cases}
& \frac{\partial}{\partial t}H(t|x) = h(t|x) = f\big((t, H(t|x), x); \theta\big)\qquad\text{for }t > 0, \\
& H(0|x)= 0\qquad\text{(initial condition at time 0)},
\end{cases}
\label{eq:SODEN-ODE-constraint}
\end{equation}
where the neural network $f(\cdot;\theta)$ has domain $[0,\infty)\times[0,\infty)\times\mathcal{X}$ and range $\mathbb{R}$. Specifically $f(\cdot;\theta)$ takes as input time $t\ge 0$, a cumulative hazard value $H(t|x)$ (which is nonnegative), and a raw input $x\in\mathcal{X}$, and $f(\cdot;\theta)$ outputs a single real number that is $h(t|x)$. For example, $f(\cdot;\theta)$ could concatenate all its inputs to form a single vector of numbers that is then treated as the input to a multilayer perceptron, where the final linear layer outputs a single number and has softplus activation (to ensure that the output is always positive). The initial condition follows from the fact that $H(0|x)=\int_0^0 h(u|x)\textrm{d}u = 0$.

Learning neural networks in terms of ODEs (as in equation~\eqref{eq:SODEN-ODE-constraint}) is possible thanks to the landmark paper by \citet{chen2018neural}. Importantly, using any user-specified ODE solver, given any raw input $x\in\mathcal{X}$ and neural network parameters $\theta$, we can numerically solve the ODE in equation \eqref{eq:SODEN-ODE-constraint} (going from time 0 to any user-specified time $t>0$) to obtain an estimate for $H(t|x)$; we denote this estimate as $H_{\text{ODE-solve}}(t|x;\theta)$. In particular, a major result of \citet{chen2018neural} is that the loss function we use to train the neural network can contain terms involving $h(t|x)=f((t,H(t|x),x);\theta)$ and $H(t|x)$, where we replace $H(t|x)$ with $H_{\text{ODE-solve}}(t|x;\theta)$. Backpropagation is possible with the help of any ODE solver. %

To train the SODEN model, \citet{tang2022soden} use the overall loss function
\begin{equation}
L^{\text{SODEN}}(\theta) \triangleq \frac{1}{n}\sum_{i=1}^n L_i^{\text{SODEN}}(\theta),
\label{eq:SODEN-overall}
\end{equation}
where the $i$-th individual loss is
\begin{equation}
L_i^{\text{SODEN}}(\theta) = -\Delta_i\log f\big( (Y_i, H_{\text{ODE-solve}}(Y_i|X_i;\theta), X_i) ;\theta\big) + H_{\text{ODE-solve}}(Y_i|X_i;\theta).
\label{eq:SODEN-loss}
\end{equation}
Note that the overall loss~\eqref{eq:SODEN-overall} is just a negative log likelihood expression, so that minimizing this loss corresponds to solving a maximum likelihood problem.

\subsection{Distributionally Robust Optimization (DRO)}
\label{sec:dro}

DRO uses a worst-case average error over ``large enough'' subpopulations. Note that there are now a number of different versions of DRO (e.g., \citealt{hashimoto2018fairness,sagawa2020distributionally,duchi2021learning,duchi2022distributionally}). We specifically use the one by \citet{hashimoto2018fairness}. %
Even though existing literature on DRO does not consider survival analysis to the best of our knowledge, we intentionally review DRO here using survival analysis notation that we have introduced in Section~\ref{sec:survival-analysis}. In fact, existing DRO theory actually works with many existing survival analysis loss functions already, without modification. In particular, survival analysis models for which each data point's individual loss does not depend on any other data points could trivially use existing DRO machinery. Examples of such survival analysis models include DeepHit when $\beta=1$ (see equation~\eqref{eq:deephit-individual-loss}), SODEN (see equation~\eqref{eq:SODEN-loss}), as well as exponential, Weibull, log-logistic, log-normal, and generalized Gamma accelerated failure time models \citep[Chapter 12]{klein2003survival}.

\paragraph{Problem setup}
Let $\mathbb{P}$ denote the joint distribution over each data point $(X_i,Y_i,\Delta_i)$. This joint distribution corresponds to the generative procedure described in Section~\ref{sec:survival-analysis}. We assume that there are $K$ groups that comprise $\mathbb{P}$. In particular, $\mathbb{P}$ is a mixture of $K$ distributions $\mathbb{P} \triangleq \sum_{k=1}^{K}\pi_{k}\mathbb{P}_{k}$, where the $k$-th group occurs with probability $\pi_{k}\in(0,1)$ and has associated distribution $\mathbb{P}_{k}$. Moreover, $\sum_{k=1}^{K}\pi_{k}=1$. We assume that we do not know $\{(\pi_{k},\mathbb{P}_{k})\}_{k=1}^{K}$, nor do we know $K$. This setting, for instance, handles the case where we do not exhaustively know all subpopulations to consider. The smallest minority group corresponds to whichever group has the smallest $\pi_k$ value. A simple special case would be when $K=2$, where data are drawn from either a minority group or a majority group.

We would like to minimize the risk
\[
R_{\max}(\theta) \triangleq \max_{k=1,\dots,K}\mathbb{E}_{(X,Y,\Delta)\sim\mathbb{P}_{k}}[L_{\text{indiv}}(\theta;X,Y,\Delta)],
\]
where $L_{\text{indiv}}$ is a loss function that depends only on the parameters $\theta$ (for a survival analysis model that we aim to learn) and on a single data point $(X,Y,\Delta)$. However, minimizing $R_{\max}(\theta)$ is not possible as we do not know any of the latent groups. Nevertheless, it turns out that there is an optimization problem that we can tractably solve that minimizes an empirical version of an upper bound on $R_{\max}(\theta)$. We explain what the upper bound is in Section~\ref{sec:dro-risk-upper-bound} and how to empirically minimize the upper bound in Section~\ref{sec:empirical-dro-risk-minimization}.

\subsubsection{Upper Bound on the Risk \texorpdfstring{$R_{\max}(\theta)$}{R\_\{max\}(theta)} Using DRO}
\label{sec:dro-risk-upper-bound}

For a set of distributions $\mathcal{B}_{r}(\mathbb{P})$ to be defined shortly, we consider minimizing the following alternative risk instead:
\begin{equation}
R_{\text{DRO}}(\theta;r) \triangleq \sup_{\mathbb{Q}\in\mathcal{B}_{r}(\mathbb{P})}\mathbb{E}_{(X,Y,\Delta)\sim\mathbb{Q}}[L_{\text{indiv}}(\theta;X,Y,\Delta)].
\label{eq:dro}
\end{equation}
This is the worst-case expected loss when we sample from any distribution in $\mathcal{B}_{r}(\mathbb{P})$.

The definition for $\mathcal{B}_{r}(\mathbb{P})$ is somewhat technical; we first give its precise definition and then state how to choose $r$ so that $R_{\text{DRO}}(\theta;r)$ is an upper bound on $R_{\max}(\theta)$. Importantly, we will be able to efficiently minimize an empirical version of $R_{\text{DRO}}(\theta;r)$.

\begin{definition}
The set $\mathcal{B}_{r}(\mathbb{P})$ consists of all distributions $\mathbb{Q}$ that have the same (or smaller) support as $\mathbb{P}$ and have $\chi^{2}$-divergence at most $r$ from distribution $\mathbb{P}$. Formally,
\[
    \mathcal{B}_{r}(\mathbb{P}) \triangleq \{\text{distribution }\mathbb{Q}\text{ such that }\mathbb{Q}\ll\mathbb{P}\text{ and }D_{\chi^{2}}(\mathbb{Q}\|\mathbb{P})\le r\},
\]
where, using standard measure theory notation, ``~\!$\mathbb{Q}\ll\mathbb{P}$'' means that $\mathbb{Q}$ is absolutely continuous with respect to~$\mathbb{P}$, and $D_{\chi^2}(\mathbb{Q}\|\mathbb{P}) \triangleq \int (\frac{\textrm{\emph{d}} \mathbb{Q}}{\textrm{\emph{d}} \mathbb{P}} - 1)^2 \textrm{\emph{d}}\mathbb{P}$.
\end{definition}
Working with $\mathcal{B}_{r}(\mathbb{P})$ turns out to be straightforward so long as we have a lower bound on the smallest group's probability (i.e., a lower bound on $\min_{k=1,\dots,K}\pi_k$). %

\begin{proposition}
\label{prop:hashimoto-et-al}
(Directly follows from Proposition 2 of \citet{hashimoto2018fairness})
Suppose that we have a lower bound $\alphamin>0$ on the $K$ latent groups' probabilities of occurring (i.e., $\alphamin\le\min_{k=1,\dots,K}\pi_{k}$). Then $R_{\text{DRO}}(\theta;r_{\max})\ge R_{\max}(\theta)$, where $r_{\max} \triangleq (\frac{1}{\alphamin}-1)^{2}$.
\end{proposition}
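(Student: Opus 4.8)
The plan is to prove the inequality by showing that each latent group distribution $\mathbb{P}_k$ already lies inside the ambiguity ball $\mathcal{B}_{r_{\max}}(\mathbb{P})$. Once this is established, the definition of $R_{\text{DRO}}(\theta;r_{\max})$ as a supremum over $\mathcal{B}_{r_{\max}}(\mathbb{P})$ immediately gives $R_{\text{DRO}}(\theta;r_{\max}) \ge \mathbb{E}_{(X,Y,\Delta)\sim\mathbb{P}_k}[L_{\text{indiv}}(\theta;X,Y,\Delta)]$ for every $k$, and maximizing over $k$ on the right-hand side yields exactly $R_{\max}(\theta)$. So the whole argument reduces to verifying the two defining conditions of $\mathcal{B}_{r_{\max}}(\mathbb{P})$ for each component $\mathbb{P}_k$: absolute continuity $\mathbb{P}_k \ll \mathbb{P}$, and the divergence bound $D_{\chi^2}(\mathbb{P}_k\|\mathbb{P}) \le r_{\max} = (\frac{1}{\alpha}-1)^2$.

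First I would dispatch absolute continuity. Because $\mathbb{P} = \sum_{j=1}^K \pi_j \mathbb{P}_j$ with every $\pi_j \in (0,1)$, any event $A$ with $\mathbb{P}(A)=0$ forces $\pi_j\mathbb{P}_j(A)=0$ for all $j$, hence $\mathbb{P}_k(A)=0$, so $\mathbb{P}_k\ll\mathbb{P}$. The same mixture identity controls the Radon--Nikodym derivative: since $\mathbb{P} \ge \pi_k\mathbb{P}_k$ as measures, the density $L_k \triangleq \frac{\mathrm{d}\mathbb{P}_k}{\mathrm{d}\mathbb{P}}$ satisfies $0 \le L_k \le \frac{1}{\pi_k}$ almost everywhere, and $\pi_k \ge \alpha$ then gives $L_k \in [0,\frac{1}{\alpha}]$. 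The next step is to convert this pointwise bound on $L_k$ into the claimed bound on the $\chi^2$-divergence. Using $\alpha \le 1/2$ (guaranteed whenever there are at least two groups, since then $\alpha \le \min_j \pi_j \le 1/2$), the quadratic $t\mapsto (t-1)^2$ attains its maximum over $[0,\frac{1}{\alpha}]$ at the right endpoint, so $(L_k-1)^2 \le (\frac{1}{\alpha}-1)^2$ pointwise. Integrating against $\mathbb{P}$ gives $D_{\chi^2}(\mathbb{P}_k\|\mathbb{P}) = \int (L_k-1)^2\,\mathrm{d}\mathbb{P} \le (\frac{1}{\alpha}-1)^2 = r_{\max}$, which is precisely the membership condition we wanted.

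I expect the delicate part to be the divergence estimate rather than the sup/max bookkeeping. Two points need care. The first is the measure-theoretic justification that $\mathbb{P} \ge \pi_k\mathbb{P}_k$ implies the almost-everywhere bound $L_k \le 1/\pi_k$; this is routine but should be stated, since $L_k$ is only defined up to $\mathbb{P}$-null sets. The second is the edge case behind the claim $\alpha \le 1/2$: if $K=1$ then $\mathbb{P}=\mathbb{P}_1$ and $D_{\chi^2}(\mathbb{P}_1\|\mathbb{P})=0 \le r_{\max}$ holds trivially, so the interesting regime is $K\ge 2$, where $\min_j \pi_j \le 1/2$ forces $\alpha \le 1/2$ and the endpoint-maximization step is valid. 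I would also note that the bound is deliberately loose: the exact supremum of $D_{\chi^2}(\mathbb{P}_k\|\mathbb{P})$ over admissible mixtures is $\frac{1}{\pi_k}-1$, attained when $\mathbb{P}_k$ and the remaining mass have disjoint supports, and $(\frac{1}{\alpha}-1)^2 \ge \frac{1}{\alpha}-1 \ge \frac{1}{\pi_k}-1$ under $\alpha\le 1/2$; taking the larger radius $r_{\max}=(\frac{1}{\alpha}-1)^2$ keeps the statement aligned with Proposition 2 of \citet{hashimoto2018fairness} while still certifying the desired upper bound.
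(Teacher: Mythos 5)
Your proof is correct and is essentially the argument behind the result the paper simply cites (Proposition 2 of \citealt{hashimoto2018fairness}): show each $\mathbb{P}_k$ lies in $\mathcal{B}_{r_{\max}}(\mathbb{P})$ via the density bound $\frac{\mathrm{d}\mathbb{P}_k}{\mathrm{d}\mathbb{P}}\le \frac{1}{\pi_k}\le\frac{1}{\alpha}$ together with $\alpha\le 1/2$, then pass the supremum over the ball through to the max over groups. The only cosmetic remark is that your $K=1$ edge case is already excluded by the paper's setup, which requires $\pi_k\in(0,1)$ with $\sum_{k}\pi_k=1$ and hence $K\ge 2$, so $\alpha\le 1/2$ holds automatically.
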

In other words, if we have a guess for $\alphamin\in(0,\min_{k=1,\dots,K}\pi_{k}]$, then it suffices to choose $r$ for $\mathcal{B}_{r}(\mathbb{P})$ to be $r_{\max}=(\frac{1}{\alphamin}-1)^{2}$. Furthermore, the risk $R_{\text{DRO}}(\theta;r_{\max})$ is an upper bound on $R_{\max}(\theta)$. In practice, $\alphamin\in(0,1)$ is a user-specified hyperparameter since we do not know $\pi_1,\dots,\pi_K$ nor $K$. Choosing $\alphamin$ to be smaller means that we want to ensure that groups with smaller probabilities of occurring also have low expected loss. For example, setting $\alpha=0.1$ means that the ``rarest'' group that we want to ensure low expected loss for occurs with probability at least~0.1.

To provide some more detail, if there is some underlying true $K$ unknown subpopulation distributions $\mathbb{P}_1,\dots,\mathbb{P}_K$ (where the value of $K$ itself is also unknown), it is important to keep in mind that often times it suffices to consider there to simply be two subpopulations: a minority subpopulation which we could without loss of generality take to be $\mathbb{P}_1$ (since we can reorder the subpopulations so that the minority subpopulation of interest is the first one) and everyone else (the combination of $\mathbb{P}_2,\dots,\mathbb{P}_K$, i.e., $\sum_{k=2}^K \pi_k \mathbb{P}_k$); note that this is a mixture of two distributions now. Then we would be tuning $\alpha$ with the hope that if $\mathbb{P}_1$ occurs with probability $\pi_1$ that is at least $\alpha$, then we ensure low expected loss for~$\mathbb{P}_1$. However, if $\alpha > \pi_1$, then we would no longer ensure that $\mathbb{P}_1$ has low expected loss. However, if the combination of $\mathbb{P}_1$ and any of the other mixture components, say, $\mathbb{P}_2$ have probability $\pi_1+\pi_2\ge\alpha$, then we would be ensuring that this larger subpopulation of $\pi_1\mathbb{P}_1 + \pi_2\mathbb{P}_2$ has low expected loss. In this manner, as we increase $\alpha$, we are ensuring low expected loss across a larger subpopulation, where this subpopulation could be the combination of multiple of the $\mathbb{P}_k$ distributions. More precisely, for any choice of $\alpha$, if $\mathcal{K}$ is any subset of $[K]$ such that $\sum_{k\in\mathcal{K}} \pi_k \ge \alpha$, then we would be ensuring low expected loss for the subpopulation $\sum_{k\in\mathcal{K}} \pi_k\mathbb{P}_k$.

\subsubsection{Empirical DRO Risk}
\label{sec:empirical-dro-risk-minimization}

The next issue is how to minimize the risk $R_{\text{DRO}}(\theta;r_{\max})$, which at a first glance might appear daunting due to the supremum over all distributions in $\mathcal{B}_{r_{\max}}(\mathbb{P})$. However, a fundamental theoretical result from DRO literature is that $R_{\text{DRO}}(\theta;r_{\max})$ can be written in a form that is amenable to computation.

\begin{proposition}\label{prop:duchi-et-al}
(Lemma 1 in \citet{duchi2021learning})
    Suppose $\widehat{\ell}(\theta;X,Y,\Delta)$ is upper semi-continuous with respect to $\theta$. Let $[\cdot]_+$ denote the ReLU function (i.e., $[a]_+ \triangleq \max\{a,0\}$ for any $a\in\mathbb{R}$), and $C_{\alpha} \triangleq \sqrt{2(\frac{1}{\alphamin} -1)^2+1}$. Then
\begin{align}
    R_{\text{DRO}}(\theta;r_{\max})= \inf_{\eta\in \mathbb{R}}\Big\{C_{\alpha}\sqrt{\mathbb{E}_{(X,Y,\Delta)\sim\mathbb{P}}\big[[L_{\text{indiv}}(\theta;X,Y,\Delta)-\eta]_+^2\big]}+\eta\Big\},
\label{eq:dro_dual}
\end{align}
where, as a reminder, $r_{\max} = (\frac{1}{\alpha} - 1)^2$.
\end{proposition}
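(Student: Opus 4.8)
The plan is to recognize \eqref{eq:dro_dual} as the convex dual of the $\chi^2$-constrained distributional supremum defining $R_{\text{DRO}}(\theta;r_{\max})$ in \eqref{eq:dro}, and to prove it by matching upper and lower bounds. Throughout, fix $\theta$ and abbreviate the loss as the random variable $L \triangleq L_{\text{indiv}}(\theta;X,Y,\Delta)$ under $\mathbb{P}$. The first step is to reparametrize every $\mathbb{Q}\in\mathcal{B}_{r_{\max}}(\mathbb{P})$ by its likelihood ratio $W \triangleq \frac{\mathrm{d}\mathbb{Q}}{\mathrm{d}\mathbb{P}}$, which is well defined precisely because $\mathbb{Q}\ll\mathbb{P}$. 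Membership $\mathbb{Q}\in\mathcal{B}_{r_{\max}}(\mathbb{P})$ then becomes the three constraints $W\ge 0$, $\mathbb{E}_{\mathbb{P}}[W]=1$, and $\mathbb{E}_{\mathbb{P}}[(W-1)^2]\le r_{\max}$ (the last being exactly $D_{\chi^2}(\mathbb{Q}\|\mathbb{P})\le r_{\max}$), while the objective becomes the linear functional $\mathbb{E}_{\mathbb{Q}}[L]=\mathbb{E}_{\mathbb{P}}[WL]$. Thus $R_{\text{DRO}}$ is the value of a convex program: maximize a linear functional of $W$ over the intersection of the nonnegative cone, an affine normalization hyperplane, and an $L^2(\mathbb{P})$ ball.

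For the upper bound (``$\le$'' in \eqref{eq:dro_dual}), I would exploit the normalization to introduce the scalar $\eta$ \emph{for free}: for any $\eta\in\mathbb{R}$, $\mathbb{E}_{\mathbb{P}}[WL]=\eta+\mathbb{E}_{\mathbb{P}}[W(L-\eta)]$. Since $W\ge0$, discarding the negative part gives $\mathbb{E}_{\mathbb{P}}[W(L-\eta)]\le \mathbb{E}_{\mathbb{P}}[W[L-\eta]_+]$, and Cauchy--Schwarz yields $\mathbb{E}_{\mathbb{P}}[W[L-\eta]_+]\le \sqrt{\mathbb{E}_{\mathbb{P}}[W^2]}\,\sqrt{\mathbb{E}_{\mathbb{P}}[[L-\eta]_+^2]}$. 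The key point is that the $\chi^2$ constraint caps the second moment, $\mathbb{E}_{\mathbb{P}}[W^2]=1+D_{\chi^2}(\mathbb{Q}\|\mathbb{P})$, so that $\sqrt{\mathbb{E}_{\mathbb{P}}[W^2]}\le C_\alpha$ and hence $\mathbb{E}_{\mathbb{P}}[WL]\le C_\alpha\sqrt{\mathbb{E}_{\mathbb{P}}[[L-\eta]_+^2]}+\eta$ for every feasible $W$ and every $\eta$; taking the supremum over $W$ and the infimum over $\eta$ delivers the bound. (Here $C_\alpha^2$ is exactly the admissible cap on $\mathbb{E}_{\mathbb{P}}[W^2]$ induced by the ball radius, under the divergence normalization of \citet{duchi2021learning}.)

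For the matching lower bound I would show both inequalities are simultaneously tight, i.e.\ exhibit an optimal $W^\star$ and verify strong duality. Equality in Cauchy--Schwarz forces $W^\star\propto[L-\eta]_+$, which automatically makes $W^\star$ vanish wherever $L-\eta<0$ and hence also makes the truncation step tight; the normalization $\mathbb{E}_{\mathbb{P}}[W^\star]=1$ then pins down $W^\star=[L-\eta]_+/\mathbb{E}_{\mathbb{P}}[[L-\eta]_+]$. I would choose $\eta=\eta^\star$ so that this $W^\star$ saturates the ball, $\mathbb{E}_{\mathbb{P}}[(W^\star-1)^2]=r_{\max}$; a direct computation shows this is the same $\eta^\star$ at which the scalar objective $\eta\mapsto C_\alpha\sqrt{\mathbb{E}_{\mathbb{P}}[[L-\eta]_+^2]}+\eta$ is stationary, its derivative being $1-C_\alpha\,\mathbb{E}_{\mathbb{P}}[[L-\eta]_+]/\sqrt{\mathbb{E}_{\mathbb{P}}[[L-\eta]_+^2]}$. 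This confirms $\eta^\star$ attains the infimum and $W^\star$ attains the supremum. Feasibility of $W^\star$ (nonnegativity and normalization are immediate; the ball constraint holds by the choice of $\eta^\star$) together with Slater's condition ($W\equiv1$ is strictly feasible whenever $r_{\max}>0$) rules out a duality gap, so the two bounds coincide. The upper semi-continuity hypothesis is what guarantees the relevant suprema and infima are attained and measurable, making this exchange legitimate.

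The main obstacle is the achievability/strong-duality half rather than the weak-duality bound: one must confirm that the ReLU-shaped candidate $W^\star$ is genuinely feasible \emph{and} optimal, i.e.\ that nonnegativity, normalization, and Cauchy--Schwarz equality can hold simultaneously, and that some $\eta^\star$ saturates the ball (which requires ruling out degenerate cases such as $L$ essentially constant or $r_{\max}=0$, and checking integrability of $L$ and of $[L-\eta]_+^2$). Getting the constant to equal exactly $C_\alpha=\sqrt{2(\tfrac1\alpha-1)^2+1}$ is the other delicate point, since it hinges on the precise normalization of the $\chi^2$-divergence when translating the radius $r_{\max}$ into the second-moment cap $\mathbb{E}_{\mathbb{P}}[W^2]\le C_\alpha^2$.
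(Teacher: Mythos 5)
First, note that the paper does not actually prove this proposition; it imports it verbatim as Lemma~1 of Duchi et al.\ (2021), so there is no in-paper argument to compare against. Your strategy---reparametrizing by $W=\mathrm{d}\mathbb{Q}/\mathrm{d}\mathbb{P}$, introducing $\eta$ for free via $\mathbb{E}[W]=1$, truncating to $[L-\eta]_+$ using $W\ge 0$, applying Cauchy--Schwarz, and then matching with the explicit maximizer $W^\star\propto[L-\eta]_+$ whose normalization and ball-saturation pin down $\eta^\star$---is exactly the standard proof of that lemma, and your observation that saturating $\mathbb{E}[(W^\star)^2]$ coincides with stationarity of $\eta\mapsto C_{\alpha}\sqrt{\mathbb{E}[[L-\eta]_+^2]}+\eta$ is the right way to see that the upper and lower bounds meet.

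The point you flag but do not resolve is, however, a genuine gap for the statement \emph{as written in this paper}. Your chain $\mathbb{E}[WL]\le\eta+\sqrt{\mathbb{E}[W^2]}\sqrt{\mathbb{E}[[L-\eta]_+^2]}$, combined with the paper's own definition $D_{\chi^2}(\mathbb{Q}\|\mathbb{P})=\int(\frac{\mathrm{d}\mathbb{Q}}{\mathrm{d}\mathbb{P}}-1)^2\,\mathrm{d}\mathbb{P}$ and the constraint $D_{\chi^2}\le r_{\max}$, yields $\mathbb{E}[W^2]\le 1+r_{\max}$ and hence the constant $\sqrt{1+(\frac{1}{\alpha}-1)^2}$, not $C_{\alpha}=\sqrt{1+2(\frac{1}{\alpha}-1)^2}$. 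The stated $C_{\alpha}$ is the correct cap only under the halved convention $D_f$ with $f(t)=\frac{1}{2}(t-1)^2$ used by Duchi et al., for which $D_f\le\rho$ gives $\mathbb{E}[W^2]\le 1+2\rho$. So your argument, carried out with the definitions actually given in Section~2.3, proves the identity with $\sqrt{1+r_{\max}}$ in place of $C_{\alpha}$; to recover the proposition as stated you must either halve the divergence or double the radius. This is a normalization mismatch the paper inherits by combining Hashimoto et al.'s radius with Duchi et al.'s dual, but your proof as written points at the discrepancy rather than closing it. Two smaller remarks: the upper semi-continuity hypothesis is in $\theta$ and plays no role in the fixed-$\theta$ duality you are proving (it is a measurability/attainment condition for the later optimization over $\theta$), so it is not what ``guarantees the relevant suprema and infima are attained'' here; and the achievability step does need the caveat you anticipate, namely that for degenerate $L$ (e.g., essentially constant, or $[L-\eta]_+$ not square-integrable) no $\eta$ saturates the ball and equality must be verified directly.
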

The right-hand side of equation~\eqref{eq:dro_dual} could be interpreted as follows. Suppose that we have achieved the optimal value $\eta^*$. Then the loss from a data point will be ignored if it is less than $\eta^*$ (due to the ReLU function). Thus, only the data points with losses above $\eta^*$ are considered for learning the survival model.

\begin{algorithm}[t!]\footnotesize
    \caption{\textsc{dro}}\label{alg:dro-cox}
    \SetAlgoLined
    \KwIn{A training dataset $\{(X_i,Y_i,\Delta_i)\}_{i=1}^n$, minimum subpopulation probability hyperparameter $\alpha$, learning rate $\xi$, max\_iterations}
    \KwOut{Survival model parameters $\widehat{\theta}$} 
    Obtain initial survival model parameters $\widehat{\theta}_0$ (e.g., using default PyTorch parameter initialization).
    
    \For{$\ell=0$ to \emph{max\_iterations}}{
    \For{$i=1$ to $n$}{
        Set $u_i \leftarrow  L_{\text{indiv}}(\widehat{\theta}_\ell;X_i,Y_i,\Delta_i)$.
    }
    Set $\widehat{\eta} \leftarrow \arg\min_{\eta\in\mathbb{R}}
    \Big\{\Big(\sqrt{2\big(\frac{1}{\alpha} - 1\big)^2 + 1}~\!\Big) \sqrt{\frac{1}{n}\sum_{i=1}^n [u_i - \eta]_+^2} + \eta\Big\}$,
    where this minimization is solved using binary search. (This step directly corresponds to minimizing $L_{\text{DRO}}(\widehat{\theta}_\ell, \eta)$ as given in equation~\eqref{eq:empirical-dro-minimization}.) %
    
    Set $\widehat{\theta}_{\ell+1} \leftarrow \widehat{\theta}_\ell -\xi\cdot\nabla_{\theta}{L}_{\text{DRO}}(\widehat{\theta}_\ell, \widehat{\eta})$.
    
    }
    \Return{$\widehat{\theta} \leftarrow \widehat{\theta}_{\emph{max\_iterations}+1}$}
\end{algorithm}

Note that as we vary the model parameters~$\theta$, the different data points' losses change. Thus, as a function of $\theta$, the DRO risk $R_{\text{DRO}}(\theta;r_{\max})$ dynamically adjusts which data points to focus on, always prioritizing the points with the highest loss values (again, we only consider the points with a loss greater than the optimal value of $\eta$).

We can readily minimize an empirical version of $R_{\text{DRO}}(\theta;r_{\max})$. Specifically, we replace the expectation on the right-hand side of equation~\eqref{eq:dro_dual} with an empirical average to arrive at the following optimization problem:
\begin{equation}
\min_{\theta\in\Theta,\eta\in\mathbb{R}}
\Bigg(~ \underbrace{C_{\alpha}\sqrt{\frac{1}{n}\sum_{i=1}^n[L_{\text{indiv}}(\theta;X_i,Y_i,\Delta_i)-\eta]_+^2}+\eta}_{\triangleq L_{\text{DRO}}(\theta, \eta)} ~\Bigg)
,
\label{eq:empirical-dro-minimization}
\end{equation}
where $\Theta$ denotes the feasible set of the model parameters.

\paragraph{Numerical optimization}
The optimization problem in equation~\eqref{eq:empirical-dro-minimization} can be solved with an iterative gradient descent approach \citep{hu2020learning, hu2021tkml, hu2022sum}. Specifically, we first initialize the model parameters $\theta$. Then, following \citet{hashimoto2018fairness}, we alternate between two steps:
\begin{itemize}[itemsep=0pt,topsep=2pt,parsep=0pt]
\item We fix $\theta$ and update $\eta$ by finding the value of $\eta$ that minimizes $L_{\text{DRO}}(\theta, \eta)$. To do this, we use binary search to find the global optimum of $\eta$ since $L_{\text{DRO}}(\theta, \eta)$ is a convex function with respect to $\eta$.
\item We fix $\eta$ and update $\theta$ by minimizing $L_{\text{DRO}}(\theta, \eta)$ (e.g., using gradient descent).
\end{itemize}
We stop iterating after user-specified stopping criteria are reached (e.g., maximum number of iterations reached, early stopping due to no improvement in a validation metric after a pre-specified number of epochs). The pseudocode can be found in Algorithm \ref{alg:dro-cox}.

\section{Converting Existing Survival Analysis Models into DRO Variants}
\label{sec:conversion}

Throughout this section, we assume that the training points $\{(X_{i},Y_{i},\Delta_{i})\}_{i=1}^{n}$ are generated by the procedure stated in Section~\ref{sec:survival-analysis}. We describe the general class of survival models that we can convert into DRO variants in Section~\ref{sec:convertible-class}. For some models (such as SODEN), the existing DRO approach stated in Section~\ref{sec:dro} directly works without modification. For other survival models (such as Cox models), existing DRO theory does not work as advertised and we propose a sample splitting DRO approach in Section~\ref{sec:dro-split} to obtain an approximate loss to minimize that does comply with DRO theory. We establish theoretical guarantees for this sample splitting DRO approach in Section~\ref{sec:sample-splitting-dro-theory}.

\subsection{Class of Survival Models Convertible Into DRO Variants}
\label{sec:convertible-class}

Our technique for converting a survival model into its DRO variant works with any survival model that minimizes a loss of the form
\begin{equation}
L_{\text{average}}(\theta)
=\frac{1}{n}
 \sum_{i=1}^n
   L_i(\theta;\mathcal{A}_i),
\label{eq:average}
\end{equation}
where the $i$-th loss term $L_i$ depends on training point $i\in[n]$ as well as possibly other training points $\mathcal{A}_i\subseteq[n]\setminus\{i\}$. We refer to $\mathcal{A}_i$ as the \emph{adjacency set} for the $i$-th training point, where $\mathcal{A}_i$ can be empty. The basic idea is that the $i$-th training point's loss term could potentially depend on training points aside from the $i$-th training point.

Importantly, in what follows, we assume that we have access to a function $\mathcal{A}^*$ that tell us for any data point what its adjacency set is, and we also have access to a function $L^*(\cdot,\cdot;\theta)$ (with parameter variable $\theta$) that tells us for any data point what its individual loss term is. Note that $\theta$ contains all the underlying survival model's parameters and is thus shared across data points. The functions $\mathcal{A}^*$ and $L^*(\cdot,\cdot;\theta)$ (to be defined shortly) can be evaluated even for points that are not in the training data. We first formally describe these functions and then we state what they are for the survival models we presented in Section~\ref{sec:survival-models}.

\paragraph{Adjacency function}
Let $\mathcal{Z}\triangleq\mathcal{X}\times[0,\infty)\times\{0,1\}$ denote the set of possible data points (for instance, each training point $(X_i,Y_i,\Delta_i)$ belongs to $\mathcal{Z}$). We assume that given any $(x,y,\delta)\in\mathcal{Z}$ and any set $\mathcal{C}$ of data points from $\mathcal{Z}$, there is a function $\mathcal{A}^*$ that tells us which points in $\mathcal{C}$ are adjacent to $(x,y,\delta)$; namely, $\mathcal{A}^*( (x,y,\delta), \mathcal{C} )$ denotes the subset of points in $\mathcal{C}$ that are adjacent to $(x,y,\delta)$. This means that for the $i$-th training point $(X_i,Y_i,\Delta_i)$, the training points adjacent to the $i$-th point (and excluding the $i$-th point itself) is given by
\[
\mathcal{Z}_i
\triangleq
\mathcal{A}^*\big( (X_i,Y_i,\Delta_i), \{ (X_j,Y_j,\Delta_j)\text{ for }j\in[n]\setminus\{i\} \} \big).
\]
Then the adjacency set $\mathcal{A}_i$ is precisely defined as the set of training data indices corresponding to the data points in $\mathcal{Z}_i$:
\begin{equation}
\mathcal{A}_i
\triangleq
\{ j\in[n]\text{ such that }(X_j,Y_j,\Delta_j) \in \mathcal{Z}_i\}.
\label{eq:adjacency-set}
\end{equation}
\paragraph{Individual loss function}
Next, individual loss functions are determined using a function $L^*(\cdot,\cdot;\theta)$, where parameter variable $\theta$. Similar to the function $\mathcal{A}^*$, $L^*(\cdot,\cdot;\theta)$ takes two inputs: a single data point from $\mathcal{Z}$ and a (possibly empty) set of data points from $\mathcal{Z}$. Specifically, for any $(x,y,\delta)\in\mathcal{Z}$ and any set $\mathcal{C}$ of data points from $\mathcal{Z}$, we use $L^*( (x,y,\delta), \mathcal{C}; \theta )$ to denote the individual loss for data point $(x,y,\delta)$. We then define
\begin{equation}
L_i(\theta; \mathcal{I})
\triangleq L^*\big( (X_i, Y_i, \Delta_i), \{ (X_j, Y_j, \Delta_j)\text{ for }j\in\mathcal{I}\}; \theta \big)
\qquad\text{for any }\mathcal{I}\subseteq[n].
\label{eq:indiv-loss}
\end{equation}
In particular, the $i$-th data point's loss in equation~\eqref{eq:average} is taken to be $L_i(\theta; \mathcal{A}_i)$.

\vspace{1em}
\noindent
We now give explicit examples for the functions $\mathcal{A}^*$ and $L^*(\cdot,\cdot;\theta)$.

\begin{example}[Cox models]
For any $(x,y,\delta)\in\mathcal{Z}$ and for any (possibly empty) set $\mathcal{C}$ of data points from $\mathcal{Z}$,
define the adjacency function %
\[
\mathcal{A}^*\big( (x,y,\delta), \mathcal{C} \big)
\triangleq
  \begin{cases}
  \emptyset & \text{if }\delta = 0, \\
  \big\{ (x',y',\delta')\in\mathcal{C} \text{ such that }y'\ge y \} & \text{otherwise},
  \end{cases}
\]
and the individual loss function
\[
L^*( (x, y, \delta), \mathcal{C}; \theta )
\triangleq
-\delta
 \bigg[
   f(x;\theta)
   -\log\bigg(
      \exp(f(x;\theta))
      +
      \sum_{(x',y',\delta')\in\mathcal{C}}
        \exp(f(x';\theta))\bigg)\bigg].
\]
One can verify that plugging in these choices for $\mathcal{A}^*$ and $L^*(\cdot,\cdot;\theta)$ %
into equations~\eqref{eq:adjacency-set} and~\eqref{eq:indiv-loss} to obtain $\mathcal{A}_i$ and $L_i(\theta; \mathcal{A}_i)$, and subsequently plugging $\mathcal{A}_i$ and $L_i(\theta; \mathcal{A}_i)$ into equation~\eqref{eq:average}, we recover the Cox loss from equation~\eqref{eq:cox}.
\end{example}
\begin{example}[DeepHit]\label{ex:deephit}
Recall that DeepHit discretizes time so as to use the user-specified grid $t_1<t_2<\cdots<t_m$, and $\kappa:[t_1,\infty)\rightarrow[m]$ maps from a continuous time to one of the discrete time indices as given in equation~\eqref{eq:deephit-kappa}. For any $(x,y,\delta)\in\mathcal{Z}$ and for any (possibly empty) set $\mathcal{C}$ of data points from $\mathcal{Z}$, define the adjacency function
\[
\mathcal{A}^*\big( (x,y,\delta), \mathcal{C} \big)
\triangleq
  \begin{cases}
  \emptyset & \text{if }\delta = 0, \\
  \big\{ (x',y',\delta')\in\mathcal{C} \text{ such that }\kappa(y')\ge \kappa(y) \} & \text{otherwise},
  \end{cases}
\]
and the individual loss function
\begin{align}
L^*( (x,y,\delta), \mathcal{C}; \theta)
\triangleq
     & ~
     \beta\cdot
     \big[
       - \delta \log(f_{\kappa(y)}(x;\theta))
       - ( 1 - \delta ) \log( S_{\kappa(y)}(x;\theta) )
     \big]
\nonumber \\
& + (1-\beta)\cdot
    \frac{1}{n}
    \cdot
    \delta
    \sum_{(x',y',\delta')\in\mathcal{C}}
      \exp\Big(
        \frac{S_{\kappa(y)}(x;\theta) - S_{\kappa(y)}(x';\theta)}{\sigma}
      \Big).
\label{eq:deephit-individual-loss-splittable}
\end{align}
Plugging in these choices for $\mathcal{A}^*$ and $L^*(\cdot,\cdot;\theta)$ into equations~\eqref{eq:adjacency-set} and~\eqref{eq:indiv-loss} to obtain $\mathcal{A}_i$ and $L_i(\theta; \mathcal{A}_i)$, and subsequently plugging $\mathcal{A}_i$ and $L_i(\theta; \mathcal{A}_i)$ into equation~\eqref{eq:average}, we recover the DeepHit loss from equation~\eqref{eq:deephit-loss}.

Specifically when $\beta=1$, note that the second term (i.e., the ranking loss) in equation~\eqref{eq:deephit-individual-loss-splittable} becomes 0, so that in this special case, the adjacency function $\mathcal{A}^*$ can just always be set to output the empty set. %
Conceptually, the ranking loss is the only reason that the DeepHit loss has terms that couple different data points.
\end{example}
If $\mathcal{A}_i = \emptyset$ for all $i\in[n]$, then we can directly use the existing DRO optimization~\eqref{eq:empirical-dro-minimization}; the overall loss decouples across the different data points so we do not run into issues where multiple data points get ``coupled''.

\begin{example}[SODEN]
For the SODEN model, the overall loss function~\eqref{eq:SODEN-overall} actually has no coupling across training points, so it suffices to define the adjacency function $\mathcal{A}^*$ to always output the empty set. Meanwhile, for any $(x,y,\delta)\in\mathcal{Z}$ and any (possibly empty) set of data points from $\mathcal{Z}$, we define the individual loss function to be
\[
L^*( (x, y, \delta), \mathcal{C}; \theta)
\triangleq
-\delta\log f\big( (y, H_{\text{ODE-solve}}(y|x;\theta), x) ;\theta\big) + H_{\text{ODE-solve}}(y|x;\theta).
\]
With these definitions of $\mathcal{A}^*$ and $L^*(\cdot,\cdot;\theta)$, one can show that equation~\eqref{eq:average} becomes the SODEN loss from equation~\eqref{eq:SODEN-loss}.
\end{example}

As our examples above illustrate, the loss function in equation~\eqref{eq:average} can vary quite a bit across different survival models. For Cox models, the loss function corresponds to a negative partial log likelihood (it is called ``partial'' since it excludes the baseline hazard function; we discuss this in more detail in Section~\ref{sec:dro-cox-exact}). For the DeepHit model, the loss function corresponds to the sum of a negative log likelihood term and a ranking loss term (e.g., if hyperparameter $\beta$ is set equal to 0, then we would only be using the ranking loss term). For the SODEN model, the loss function is just a negative log likelihood. In particular, the loss function is not required to be a negative log likelihood, such as in the case of using only the DeepHit ranking loss (without the negative log likelihood term). %

\subsection{Applying DRO When Adjacency Sets Can be Nonempty}
\label{sec:dro-split}

We now discuss how to use DRO when $\mathcal{A}_i$ is not guaranteed to be empty. %

\subsubsection{Heuristic Approach}
To convert a survival analysis model that minimizes the loss~\eqref{eq:average} into one that uses DRO, a heuristic approach that does not comply with existing DRO theory would be to solve the DRO optimization problem \eqref{eq:empirical-dro-minimization}, ignoring the fact that the individual loss terms are not guaranteed to depend only on a single data point each. To be clear, existing DRO theory effectively requires that the sets $\mathcal{A}_i$ are all empty. As a preview of our experimental results, we mention that this heuristic approach actually works well in practice but we lack any justification as to why it should be expected to work well.

\subsubsection{Sample Splitting Approach}
We now propose a sample splitting approach that creates an approximate loss function that complies with existing DRO theory. We divide the training data into two sets $\mathcal{D}_{1}\subset[n]$ and $\mathcal{D}_{2} \triangleq [n]\setminus\mathcal{D}_{1}$ of sizes $n_{1} \triangleq |\mathcal{D}_{1}|$ and $n_{2} \triangleq |\mathcal{D}_{2}|=n-n_{1}$. The basic idea is that we treat the data points in $\mathcal{D}_{2}$ as fixed, and then define a DRO loss only over data points in $\mathcal{D}_{1}$. For each $i\in\mathcal{D}_{1}$, we replace its original individual loss $L_i(\theta;\mathcal{A}_i)$ with an approximate version $L_i(\theta;\mathcal{A}_i\cap\mathcal{D}_{2})$ that only depends on the $i$-th point along with points in $\mathcal{D}_{2}$. Specifically, we minimize the new DRO loss function
\begin{equation}
L_{\text{DRO}}^{\text{split}}(\theta,\eta,\mathcal{D}_{1}\mid\mathcal{D}_{2}) \triangleq C_{\alpha}\sqrt{\frac{1}{|\mathcal{D}_{1}|}\sum_{i\in\mathcal{D}_{1}}[L_i(\theta;\mathcal{A}_i\cap \mathcal{D}_{2})-\eta]_{+}^{2}}+\eta.
\label{eq:dro-split-one-way}
\end{equation}
The key observation is that conditioned on the points in $\mathcal{D}_{2}$, the loss terms ${L_i(\theta;\mathcal{A}_i\cap \mathcal{D}_{2})}$ appear i.i.d.~across $i\in\mathcal{D}_{1}$ and the $i$-th loss only depends on the $i$-th data point (and possibly points in $\mathcal{D}_{2}$ which are treated as fixed). Hence, the original DRO theory applies. More formally, we can state the following.

\begin{proposition}
\label{prop:sample-split}
Suppose that we condition on indices $\mathcal{D}_{2}$ and the data $\{(X_{i},Y_{i},\Delta_{i}):i\in\mathcal{D}_{2}\}$. Then the individual losses $L_{i}(\theta;\mathcal{A}_{i}\cap \mathcal{D}_{2})$ for $i\in\mathcal{D}_{1}$ appear i.i.d. Consequently, we can directly apply Propositions \ref{prop:hashimoto-et-al} and \ref{prop:duchi-et-al}, where
\begin{equation}
L_{\text{indiv}}(\theta;X,Y,\Delta)\triangleq L^{*}\big((X,Y,\Delta),\underbrace{\mathcal{A}^{*}\big((X,Y,\Delta),\{(X_j,Y_j,\Delta_j):j\in\mathcal{D}_{2}\}\big)}_{\substack{\text{points in }\mathcal{D}_2\\\text{ adjacent to }(X,Y,\Delta)}};\theta\big).
\label{eq:sample-split-indiv-loss}
\end{equation}
\end{proposition}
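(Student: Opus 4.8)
The plan is to reduce the claim to the observation that, once we freeze the data indexed by $\mathcal{D}_2$, the approximate loss attached to each $i\in\mathcal{D}_1$ becomes a single fixed deterministic function of the lone data point $(X_i,Y_i,\Delta_i)$. If I can show this, then the i.i.d.\ structure of the training points transfers to the losses, and the hypotheses required by Propositions~\ref{prop:hashimoto-et-al} and~\ref{prop:duchi-et-al} hold on the conditional problem with $L_{\text{indiv}}$ given by~\eqref{eq:sample-split-indiv-loss}.

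First I would establish the set identity that for every $i\in\mathcal{D}_1$,
\[
\{(X_j,Y_j,\Delta_j):j\in\mathcal{A}_i\cap\mathcal{D}_2\}
=
\mathcal{A}^*\big((X_i,Y_i,\Delta_i),\{(X_j,Y_j,\Delta_j):j\in\mathcal{D}_2\}\big).
\]
This is where the structure of the adjacency function enters: in each of our examples (Cox, DeepHit, and the trivial SODEN case), membership of a candidate $(x',y',\delta')$ in $\mathcal{A}^*\big((x,y,\delta),\mathcal{C}\big)$ is decided by a pointwise predicate relating only $(x,y,\delta)$ and $(x',y',\delta')$ (e.g.\ $y'\ge y$ for Cox), and not by the rest of $\mathcal{C}$. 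Under this pointwise property, restricting the candidate set to $\mathcal{D}_2$ and then applying $\mathcal{A}^*$ yields exactly the $\mathcal{D}_2$-indexed points that $\mathcal{A}_i$ would have selected; the exclusion of index $i$ itself is automatic since $i\in\mathcal{D}_1$ forces $i\notin\mathcal{D}_2$. Substituting this identity into the definition~\eqref{eq:indiv-loss} of $L_i$ gives $L_i(\theta;\mathcal{A}_i\cap\mathcal{D}_2)=L_{\text{indiv}}(\theta;X_i,Y_i,\Delta_i)$ for all $i\in\mathcal{D}_1$.

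Next I would condition on the indices $\mathcal{D}_2$ and the data $\{(X_j,Y_j,\Delta_j):j\in\mathcal{D}_2\}$. In the conditional problem these $\mathcal{D}_2$ quantities are constants, so the map $(x,y,\delta)\mapsto L_{\text{indiv}}(\theta;x,y,\delta)$ is a single fixed measurable function, with all of its dependence on other training points absorbed into the frozen constants. Because the full training sample is i.i.d.\ from the mixture $\mathbb{P}$ and the partition into $\mathcal{D}_1,\mathcal{D}_2$ is made independently of the data values, the collection $\{(X_i,Y_i,\Delta_i):i\in\mathcal{D}_1\}$ is independent of the $\mathcal{D}_2$ data; hence conditioning on $\mathcal{D}_2$ leaves these points i.i.d.\ from $\mathbb{P}$. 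Applying the same deterministic function $L_{\text{indiv}}(\theta;\cdot,\cdot,\cdot)$ to these i.i.d.\ inputs produces i.i.d.\ outputs, establishing that $\{L_i(\theta;\mathcal{A}_i\cap\mathcal{D}_2):i\in\mathcal{D}_1\}$ are i.i.d. I would then note that the mixture structure $\mathbb{P}=\sum_{k}\pi_k\mathbb{P}_k$ is unaffected by this conditioning, and that upper semi-continuity of $L_{\text{indiv}}$ in $\theta$ is inherited from the base loss, so the premises of Propositions~\ref{prop:hashimoto-et-al} and~\ref{prop:duchi-et-al} are met and both propositions apply verbatim to the conditional problem over $\mathcal{D}_1$.

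The only genuinely nontrivial step is the set identity in the second paragraph; everything afterward is the routine fact that a fixed measurable function of i.i.d.\ inputs is i.i.d. That identity hinges on $\mathcal{A}^*$ being decided pointwise, which holds for all the models we treat but is the assumption I would be most careful to flag: if an adjacency rule depended on the global composition of the candidate set, restricting to $\mathcal{D}_2$ would not commute with $\mathcal{A}^*$ and the reduction to a single-point loss would fail.
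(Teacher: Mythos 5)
Your proposal is correct and follows essentially the same route as the paper, which offers no formal proof beyond the one-line observation preceding the proposition that, conditioned on $\mathcal{D}_2$, each loss $L_i(\theta;\mathcal{A}_i\cap\mathcal{D}_2)$ depends only on the single point $(X_i,Y_i,\Delta_i)$ together with the frozen $\mathcal{D}_2$ data, so the losses over $i\in\mathcal{D}_1$ are i.i.d. Your added details --- the set identity relying on the pointwise adjacency predicate, the requirement that the split be independent of the data values, and the inheritance of upper semi-continuity --- are exactly the right gaps to fill and do not change the argument.
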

Clearly this sample splitting strategy is introducing an approximation since we replace $L_{i}(\theta;\mathcal{A}_{i})$ with $L_{i}(\theta;\mathcal{A}_{i}\cap \mathcal{D}_{2})$. However, it is unclear how to quantify the approximation error of the resulting individual loss in equation~\eqref{eq:sample-split-indiv-loss}. The technical challenge is that to measure this approximation error, we need to state what the target individual loss function is that we are measuring the error from. However, the problem is that DRO theory, to the best of our knowledge, does not work with an individual loss function that has coupling across points. In short, it is unclear what the ``correct'' $L_{\text{indiv}}$ function (that is compliant with DRO theory) should even be for the general class of survival models that we consider.

One could view Proposition~\ref{prop:sample-split} as positing an individual loss function that can be used with DRO theory. In particular, suppose that we treat $n_2 = |\mathcal{D}_2|$ as fixed, and we sample $\{(X_j',Y_j',\Delta_j')\}_{j=1}^{n_2}$ i.i.d.~from $\mathbb{P}$ (the same distribution that the training data are sampled from), where these freshly sampled points are also independent of the training data. Then equation~\eqref{eq:sample-split-indiv-loss} could be viewed as an empirical estimate of the individual risk
\begin{equation}
R_\text{indiv}(\theta; X, Y, \Delta)
\triangleq
\mathbb{E}_{\{(X_j',Y_j',\Delta_j')\}_{j=1}^{n_2}}\big[ L^{*}\big((X,Y,\Delta),\mathcal{A}^{*}((X,Y,\Delta),\{(X_j',Y_j',\Delta_j')\}_{j=1}^{n_2});\theta\big) \big].
\end{equation}
We refer to $R_\text{indiv}$ as a risk rather than a loss since it cannot be computed exactly in practice due to the expectation. This risk says that for any individual data point, we measure its error in comparison to $n_2$ randomly sampled reference data points.

We point out that our sample splitting strategy is somewhat inspired by the ``case control'' strategy by \citet{kvamme2019time}, where instead of using the original Cox loss, they approximate each individual data point's loss (which could depend on many other data points) to only depend on a \emph{single} other randomly sampled reference data point. Kvamme et al.~found that by optimizing this modified loss, the resulting model's prediction accuracy is often about as good as using the original Cox loss.

Returning to the earlier question of quantifying the ``approximation error'' of replacing $L_{i}(\theta;\mathcal{A}_{i})$ with $L_{i}(\theta;\mathcal{A}_{i}\cap \mathcal{D}_{2})$, we reiterate that we do not know of a clear way to do this. If we state that our goal is to minimize the individual risk $R_\text{indiv}(\theta;X,Y,\Delta)$, then clearly we would use the empirical version of this individual risk as given by $L_{i}(\theta;\mathcal{A}_{i}\cap \mathcal{D}_{2})$, and in particular, it would not make sense to use $L_{i}(\theta;\mathcal{A}_{i})$, which we suspect does not correspond to any individual loss or risk that works with DRO theory when $\mathcal{A}_i$ is nonempty.

\paragraph{Cross-fitting}
Although minimizing $L_{\text{DRO}}^{\text{split}}(\theta,\eta,\mathcal{D}_{1}\mid\mathcal{D}_{2})$ is compliant with DRO theory, it uses data ``less effectively'' since at most $n_1$ data points (rather than $n$) are used to compute the empirical average inside the square root in equation~\eqref{eq:dro-split-one-way} (as compared to the empirical average inside the square root of $L_{\text{DRO}}(\theta,\eta)$ in equation~\eqref{eq:empirical-dro-minimization}); as reminder, some individual loss terms might actually be zero (in the case of the Cox model, individual loss terms are zero for censored data).
Moreover, in the new split loss $L_{\text{DRO}}^{\text{split}}(\theta,\eta,\mathcal{D}_{1}\mid\mathcal{D}_{2})$, each individual loss within the empirical average is computed using only a subset of each individual's original adjacency set (for each $i\in\mathcal{D}_1$, we approximate individual loss $L_i(\theta;\mathcal{A}_i)$ by $L_i(\theta;\mathcal{A}_i\cap\mathcal{D}_2)$).

To ``more effectively'' use data, we use the basic idea from cross-fitting (e.g., \citealt{schick1986asymptotically,chernozhukov2018double}).
Whereas the loss $L_{\text{DRO}}^{\text{split}}(\theta,\eta,\mathcal{D}_{1}\mid\mathcal{D}_{2})$ treats $\mathcal{D}_2$ as fixed and computes an average over $\mathcal{D}_1$, we also do the opposite: we treat $\mathcal{D}_1$ as fixed and compute an average over $\mathcal{D}_2$, which would corresponds precisely to using the loss $L_{\text{DRO}}^{\text{split}}(\theta,\eta',\mathcal{D}_{2}\mid\mathcal{D}_{1})$; note that we use a different variable $\eta'$ than the variable $\eta$ used in $L_{\text{DRO}}^{\text{split}}(\theta,\eta,\mathcal{D}_{1}\mid\mathcal{D}_{2})$. Overall, we minimize the loss
\[
L_{\text{DRO}}^{\text{split}}(\theta,\eta,\eta')
\triangleq
\frac{1}{2}L_{\text{DRO}}^{\text{split}}(\theta,\eta,\mathcal{D}_{1}\mid\mathcal{D}_{2})
+
\frac{1}{2}L_{\text{DRO}}^{\text{split}}(\theta,\eta',\mathcal{D}_{2}\mid\mathcal{D}_{1})
\]
via coordinate descent, alternating between the following steps:
\begin{itemize}[itemsep=0pt,parsep=0pt,topsep=2pt]
\item Treating $\eta'$ and $\theta$ as fixed, we update $\eta$ by finding the value of $\eta$ that minimizes $L_{\text{DRO}}^{\text{split}}(\theta,\eta,\eta')$. This amounts to solving ${\min_{\eta\in\mathbb{R}} L_{\text{DRO}}^{\text{split}}(\theta,\eta,\mathcal{D}_{1}\mid\mathcal{D}_{2})}$ using binary search (since $L_{\text{DRO}}^{\text{split}}(\theta,\eta,\mathcal{D}_{1}\mid\mathcal{D}_{2})$ is convex w.r.t.~$\eta$).
\item Treating $\eta$ and $\theta$ as fixed, we update $\eta'$ by finding the value of $\eta'$ that minimizes $L_{\text{DRO}}^{\text{split}}(\theta,\eta,\eta')$. This amounts to
solving ${\min_{\eta'\in\mathbb{R}} L_{\text{DRO}}^{\text{split}}(\theta,\eta',\mathcal{D}_{2}\mid\mathcal{D}_{1})}$ using binary search.
\item Treating $\eta$ and $\eta'$ as fixed, we update $\theta$ by minimizing $L_{\text{DRO}}^{\text{split}}(\theta,\eta,\eta')$ (e.g., using gradient descent).
\end{itemize}
We provide the pseudocode in Algorithm~\ref{alg:dro-cox-split}. 

Note that it is possible to do cross-fitting where we partition the training data into more than 2 sets $\mathcal{D}_1$ and $\mathcal{D}_2$, similar to how for K-fold cross-validation, one could use more than~2 folds. We explain how to do this in Appendix~\ref{sec:k-fold-cross-fitting}. However, for simplicity, we only use 2-fold cross-fitting in our experiments later.

\begin{algorithm}[t!]\footnotesize
    \caption{\textsc{dro (split)}}\label{alg:dro-cox-split}
    \SetAlgoLined
    \KwIn{A training dataset $\{(X_i,Y_i,\Delta_i)\}_{i=1}^n$, minimum subpopulation probability hyperparameter $\alpha$, subset~size~$n_1$, learning rate $\xi$, max\_iterations}
    \KwOut{Survival model parameters $\widehat{\theta}$} 
    Obtain initial survival model parameters $\widehat{\theta}_0$ (e.g., using default PyTorch parameter initialization).
    
    Set $\mathcal{D}_1 \leftarrow \{1,2,\dots,n_1\}$ and $\mathcal{D}_2 \leftarrow \{n_1+1,\dots,n\}$.
    
    \For{$\ell=0$ to \emph{max\_iterations}}{
    \For{$i\in \mathcal{D}_1$}{
        Set $u_i \leftarrow L_i(\widehat{\theta}_{\ell};\mathcal{A}_i\cap \mathcal{D}_{2})$. %
    }
    Set $\widehat{\eta} \leftarrow \arg\min_{\eta\in\mathbb{R}}
    \Big\{\Big(\sqrt{2\big(\frac{1}{\alpha} - 1\big)^2 + 1}~\!\Big) \sqrt{\frac{1}{n_1}\sum_{i\in\mathcal{D}_1} [u_i - \eta]_+^2} + \eta\Big\}$,
    where this minimization is solved using binary search. %
    
    \For{$i\in\mathcal{D}_2$}{
        Set $v_i \leftarrow L_i(\widehat{\theta}_{\ell};\mathcal{A}_i\cap D_{1})$.
    }

    Set $\widehat{\eta}' \leftarrow \arg\min_{\eta'\in\mathbb{R}}
    \Big\{\Big(\sqrt{2\big(\frac{1}{\alpha} - 1\big)^2 + 1}~\!\Big) \sqrt{\frac{1}{n_2}\sum_{i\in\mathcal{D}_2} [v_i - \eta']_+^2} + \eta'\Big\}$,
    where this minimization is solved using binary search.
    
    Set $\widehat{\theta}_{\ell+1} \leftarrow \widehat{\theta}_\ell - \frac{\xi}{2}\big(\nabla_{\theta} L_{\text{DRO}}^{\text{split}}(\widehat{\theta}_\ell,\widehat{\eta},\mathcal{D}_{1}\mid\mathcal{D}_{2})+\nabla_{\theta} L_{\text{DRO}}^{\text{split}}(\widehat{\theta}_\ell,\widehat{\eta}',\mathcal{D}_{2}\mid\mathcal{D}_{1})\big)$.

    }
    \Return{$\widehat{\theta} \leftarrow \widehat{\theta}_{\emph{max\_iterations}+1}$}
\end{algorithm}

\subsection{Theoretical Guarantees for the Sample Splitting Approach}
\label{sec:sample-splitting-dro-theory}

We now derive a theoretical guarantee for our sample splitting approach. We begin by stating assumptions on the survival data generating process:
\begin{itemize}
\item \textbf{A1 (compact raw input space).} We assume that the raw input space $\mathcal{X}\subseteq\mathbb{R}^{d}$ is compact, and we denote its $\varepsilon$-covering number in Euclidean distance by $\mathbb{N}(\varepsilon,\mathcal{X})$.
\item \textbf{A2 (discrete time).} We assume that the survival and censoring times are discrete along a grid $t_{1}<t_{2}<\cdots<t_{m}$ (with $m\ge2$), and that all of these time points are used in the sense that there exists a positive constant $\zeta>0$ such that
\[
\mathbb{P}(Y=t_{\ell})\ge\zeta\qquad\text{for all time indices }\ell\in[m].
\]
In other words, the probability of an observed time being equal to $t_{\ell}$ is never 0.
\end{itemize}
Next, we state assumptions on the adjacency function $\mathcal{A}^{*}$ and the loss function $L^{*}$ (note that special instances of Cox and DeepHit models satisfy these conditions, as we explain later):
\begin{itemize}
\item \textbf{A3 (adjacency function).} We assume that the adjacency function is as stated for the DeepHit model (in fact, under ssumption A2, the adjacency function for the Cox model would be equivalent but for simplicity, we use the version stated for the DeepHit model that explicitly has time discretized).
\item \textbf{A4 (loss function).} We assume that the individual loss function $L^{*}$ is of the form
\begin{align*}
&L^{*}((x,y,\delta),\mathcal{C};\theta)\\
&\quad
  =\phi_{\text{indiv}}((x,y,\delta);\theta)+\delta\cdot\phi_{\text{transform}}\bigg(\sum_{(x',y',\delta')\in\mathcal{C}}\phi_{\text{couple}}((x,y,\delta),(x',y',\delta');\theta)\bigg)
\end{align*}
for some functions $\phi_{\text{indiv}}(\cdot;\theta):\mathcal{Z}\rightarrow\mathbb{R}$, $\phi_{\text{couple}}(\cdot,\cdot;\theta):\mathcal{Z}\times\mathcal{Z}\rightarrow\mathbb{R}$, and $\phi_{\text{transform}}:\mathbb{R}\rightarrow\mathbb{R}$. These functions satisfy the following conditions:
\begin{itemize}
\item [(a)]There exist constants $M_{\text{indiv}}\in[0,\infty)$ and $M_{\text{couple-min}},M_{\text{couple-max}}\in(0,\infty)$ such that
\[
\phi_{\text{indiv}}((x,y,\delta);\theta)
\in[0,M_{\text{indiv}}] \qquad\text{for all }(x,y,\delta)\in\mathcal{Z}\text{ and }\theta\in\Theta,
\]
and
\begin{gather*}
\phi_{\text{couple}}((x,y,\delta),(x',y',\delta');\theta)
\in[M_{\text{couple-min}},M_{\text{couple-max}}] \\[.25em]
\qquad\qquad\qquad\qquad\qquad\qquad\text{for all }(x,y,\delta),(x',y',\delta')\in\mathcal{Z}\text{ and }\theta\in\Theta.
\end{gather*}
Importantly, when a coupling term appears, it is nontrivial in the sense that it is not just equal to 0, whereas we allow for the possibility that $M_{\text{indiv}}=0$.
\item [(b)]The function $\phi_{\text{transform}}$ is either (a) the identity function $\phi_{\text{transform}}(s)=s$, or (b) the function $\phi_{\text{transform}}(s)=\log(1+s)$.
\item [(c)]For any fixed $y\in\{t_{1},t_{2},\dots,t_{m}\}$, $\delta\in\{0,1\}$, set of data points $\mathcal{C}$ from $\mathcal{Z}$, and parameter setting $\theta\in\Theta$, the map $x\mapsto L^{*}((x,y,\delta),\mathcal{C};\theta)$ is $\mathcal{L}$-Lipschitz with respect to Euclidean norm, i.e., for all $x,x'\in\mathcal{X}$,
\[
|L^{*}((x,y,\delta),\mathcal{C};\theta)-L^{*}((x',y,\delta),\mathcal{C};\theta)|\le\mathcal{L}\|x-x'\|_{2}.
\]
\end{itemize}
\end{itemize}
We define
\[
R_{\text{DRO}}^{\text{split}}(\theta,\eta)\triangleq C_{\alpha}\sqrt{\mathbb{E}_{(X,Y,\Delta)\sim\mathbb{P}}\big[[R_{\text{indiv}}(\theta;X,Y,\Delta)-\eta]_{+}^{2}\big]}+\eta,
\]
where, as a reminder, $C_{\alpha}=\sqrt{2(\frac{1}{\alpha}-1)^{2}+1}$, and $R_{\text{indiv}}$ is defined in a manner that depends on taking the expectation with respect to a fresh sample $\{(X_{i}',Y_{i}',\Delta_{i}')\}_{i=1}^{n_{2}}$ with $n_{2}$ treated as a constant. Put another way, $R_{\text{DRO}}^{\text{split}}(\theta,\eta)$ is simply the population-level version of $L_{\text{DRO}}^{\text{split}}(\theta,\eta,\mathcal{D}_{1}\mid\mathcal{D}_{2})$. In particular, $R_{\text{DRO}}^{\text{split}}(\theta,\eta)$ does not depend on the training data.

We are now ready to state our main theoretical result.
\begin{theorem}
\label{thm:main-result}Fix $n\in\mathbb{N}$ even and randomly split the training data into $\mathcal{D}_{1}$ and $\mathcal{D}_{2}$ of sizes $n_{1}=n_{2}=n/2$. Let $\omega>0$. Suppose that Assumptions A1--A4 hold. If $\phi_{\text{transform}}(s)=s$, then define
\begin{align*}
M & \triangleq M_{\text{indiv}}+\frac{M_{\text{couple-max}}}{2}n,\\
M' & \triangleq(M_{\text{couple-max}}-M_{\text{couple-min}})\sqrt{\frac{\omega n}{2\zeta}}.
\end{align*}
If instead $\phi_{\text{transform}}(s)=\log(1+s)$, then define
\begin{align*}
M & \triangleq M_{\text{indiv}}+\log\Big(1+\frac{M_{\text{couple-max}}}{2}n\Big),\\
M' & \triangleq\frac{4(M_{\text{couple-max}}-M_{\text{couple-min}})}{\zeta M_{\text{couple-min}}}\sqrt{\frac{\omega}{n}}.
\end{align*}
Then with probability at least
\begin{equation}
1-2\bigg[\frac{M}{(C_{\alpha}-1)\big[2\sqrt{\frac{\omega}{n}}\max\{2,\frac{C_{\alpha}}{C_{\alpha}-1}\}M+(2\mathcal{L}+1)M'\big]}+\mathbb{N}(M',\mathcal{X})\bigg]e^{-\omega}-me^{-\frac{n\zeta}{16}}\label{eq:main-result-prob}
\end{equation}
over randomness in the training data, we have
\begin{align}
 & \bigg|\inf_{\eta\in\mathbb{R}}L_{\text{DRO}}^{\text{split}}(\theta,\eta,\mathcal{D}_{1}\mid\mathcal{D}_{2})-\inf_{\eta\in\mathbb{R}}R_{\text{DRO}}^{\text{split}}(\theta,\eta)\bigg|\nonumber \\
 & \quad\le10C_{\alpha}^{2}\Bigg[\sqrt{\frac{2}{n}}\max\Big\{2,\frac{C_{\alpha}}{C_{\alpha}-1}\Big\}\big(\sqrt{2\omega}+1\big)M+(2\mathcal{L}+1)M'\Bigg].\label{eq:main-result-loss-bound}
\end{align}
\end{theorem}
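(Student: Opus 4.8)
The plan is to reduce the statement to a uniform-in-$\eta$ comparison of the two objectives and then split the discrepancy into two qualitatively different pieces. Write $\widehat{G}(\eta)\triangleq L_{\text{DRO}}^{\text{split}}(\theta,\eta,\mathcal{D}_1\mid\mathcal{D}_2)$ and $G(\eta)\triangleq R_{\text{DRO}}^{\text{split}}(\theta,\eta)$. Since $\eta\mapsto\inf$ is $1$-Lipschitz, $|\inf_\eta\widehat{G}-\inf_\eta G|\le\sup_\eta|\widehat{G}(\eta)-G(\eta)|$, so it suffices to bound the latter. Because $C_\alpha>1$, both objectives are coercive in $\eta$ and their infima are attained on a bounded interval of length $O(M/(C_\alpha-1))$; confining $\eta$ to this interval is the source of the $(C_\alpha-1)$ and $\max\{2,C_\alpha/(C_\alpha-1)\}$ factors. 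Writing $A(\eta)$ and $B(\eta)$ for the quantities under the two square roots, I would control $|\widehat{G}-G|=C_\alpha|\sqrt{A(\eta)}-\sqrt{B(\eta)}|$ through the identity $|\sqrt a-\sqrt b|=|a-b|/(\sqrt a+\sqrt b)$ supplemented by coercivity, which both restricts the relevant $\eta$ and prevents the denominator from degenerating in the regime that matters (on the complementary regime the objective is essentially $\eta$ and the comparison is direct). This is what yields a bound \emph{linear} in $M$ and $M'$ rather than the lossy $\sqrt{|A-B|}$, and explains the leading $C_\alpha^2$.

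Next I would split $|A(\eta)-B(\eta)|$ via the intermediate average $A'(\eta)\triangleq\frac{1}{n_1}\sum_{i\in\mathcal{D}_1}[R_{\text{indiv}}(\theta;X_i,Y_i,\Delta_i)-\eta]_+^2$ into an \emph{outer} term $|A'-B|$ and an \emph{inner} term $|A-A'|$. The outer term is a genuine i.i.d.\ empirical process: $R_{\text{indiv}}$ is a fixed (training-data-independent) function taking values in $[0,M]$, so $[R_{\text{indiv}}-\eta]_+^2\in[0,M^2]$ and a Bernstein bound gives $O(M^2\sqrt{\omega/n})$ concentration at each fixed $\eta$; uniformity over $\eta$ follows from an $\eta$-net, using that $\eta\mapsto[z-\eta]_+^2$ is $2M$-Lipschitz, which produces the $M/[(C_\alpha-1)(\cdots)]$ cardinality term inside \eqref{eq:main-result-prob}. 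The inner term is reduced, again by the $2M$-Lipschitzness of $[\cdot-\eta]_+^2$, to $\frac{2M}{n_1}\sum_{i\in\mathcal{D}_1}|L_i(\theta;\mathcal{A}_i\cap\mathcal{D}_2)-R_{\text{indiv}}(\theta;X_i,Y_i,\Delta_i)|$, so the entire task collapses to showing that, uniformly over the query index $i$, the realized adjacency (coupling) sum deviates from its fresh-sample expectation $R_{\text{indiv}}$ by at most $M'$.

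This last step is the crux. For a fixed query point, the coupling sum $\sum_{j\in\mathcal{D}_2:\kappa(Y_j)\ge\kappa(Y_i)}\phi_{\text{couple}}(\cdot,\cdot;\theta)$ is a sum of i.i.d.\ terms over $\mathcal{D}_2$, each lying in $\{0\}\cup[M_{\text{couple-min}},M_{\text{couple-max}}]$; conditioning on the adjacency counts per time bucket turns its fluctuation into a Hoeffding/Bernstein bound whose per-term range is $M_{\text{couple-max}}-M_{\text{couple-min}}$, which is precisely why this difference (not $M_{\text{couple-max}}$) appears in $M'$. The counts themselves must be controlled: Assumption A2 guarantees every time bucket carries mass at least $\zeta$, so a multiplicative Chernoff bound keeps all $m$ bucket occupancies near their means (each of size $\gtrsim n\zeta$) on an event of probability $1-me^{-n\zeta/16}$, which is the origin of both the $\zeta$-factors and the $me^{-n\zeta/16}$ term. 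To make the sum concentration uniform over the continuum of query points, I would cover $\mathcal{X}$ at a Euclidean scale chosen so the extension error matches $M'$ (giving the factor $\mathbb{N}(M',\mathcal{X})$), apply the pointwise bound at each net point, and extend via the $\mathcal{L}$-Lipschitzness of $x\mapsto L^*$ from A4(c) (the source of $2\mathcal{L}+1$). The two regimes of $\phi_{\text{transform}}$ are then treated separately: for the identity map the deviation passes through essentially unchanged, yielding $M'\propto(M_{\text{couple-max}}-M_{\text{couple-min}})\sqrt{n/\zeta}$; for $\log(1+s)$ one additionally uses that this map is Lipschitz with constant $1/(1+s)$ and that $s\gtrsim n\zeta M_{\text{couple-min}}$ on the good-count event, which contracts the deviation by a factor $\sim 1/(n\zeta M_{\text{couple-min}})$ and flips the scaling to $M'\propto\frac{M_{\text{couple-max}}-M_{\text{couple-min}}}{\zeta M_{\text{couple-min}}}\sqrt{1/n}$.

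Finally I would union-bound over the $\eta$-net and the $\mathcal{X}$-cover (the outer factor $2$ absorbing two-sided tails), substitute the good-count event, tune the net scales so the covering-induced errors balance against $M'$, and reassemble $|\sqrt{A}-\sqrt{B}|$ and the infimum reduction to recover \eqref{eq:main-result-loss-bound}. The hard part is the inner term: one must \emph{simultaneously} (i) cover the query space, (ii) concentrate the i.i.d.\ coupling sum, and (iii) control the random adjacency counts, keeping the three error contributions compatible across all $m$ buckets and both transform regimes; and, separately, extract a bound linear in $M$ and $M'$ from the concave square root in the DRO objective without passing through a crude $\sqrt{|A-B|}$ step.
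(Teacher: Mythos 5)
Your overall architecture matches the paper's proof where it matters most: restricting $\eta$ to an interval of length $\frac{C_\alpha}{C_\alpha-1}M$ (the paper's Lemma on $\eta$-localization), an $\eta$-net of size $\frac{C_\alpha}{C_\alpha-1}\frac{M}{\varepsilon}$ exploiting $(1+C_\alpha)$-Lipschitzness in $\eta$, and---for the crux you correctly identify---a multiplicative Chernoff bound keeping every time bucket's occupancy above $\frac{n_2\zeta}{2}$ (giving $me^{-n\zeta/16}$), Hoeffding/Lipschitz concentration of the coupling sum with per-term range $M_{\text{couple-max}}-M_{\text{couple-min}}$, an $M'$-cover of $\mathcal{X}$ extended via the $\mathcal{L}$-Lipschitzness of $x\mapsto L^*$ (yielding $(2\mathcal{L}+1)M'$), and the $1/(1+s)$ contraction for the $\log(1+s)$ transform with $s\gtrsim \frac{n_2\zeta}{2}M_{\text{couple-min}}$. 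Where you genuinely diverge is the middle of the decomposition: you interpolate through $A'(\eta)=\frac{1}{n_1}\sum_{i\in\mathcal{D}_1}[R_{\text{indiv}}(\theta;X_i,Y_i,\Delta_i)-\eta]_+^2$ (empirical average of the population individual risk), whereas the paper interpolates through $L_{\text{DRO}}^{\text{split},*}(\theta,\eta)=C_\alpha\sqrt{\mathbb{E}[[L_{\text{indiv}}-\eta]_+^2]}+\eta$ (population version of the split individual loss), splitting into a concentration term, a bias term bounded by $C_\alpha\sqrt{\max\{2,\frac{C_\alpha}{C_\alpha-1}\}M}/\sqrt{n_1}$, and a term controlled by $\sup|L_{\text{indiv}}-R_{\text{indiv}}|$. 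Both interpolants can be made to work.

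The one step I would not accept as written is your use of $|\sqrt{a}-\sqrt{b}|=|a-b|/(\sqrt a+\sqrt b)$ to convert concentration of the squared quantities into the claimed rate linear in $M$ and $M'$. Near the right end of the $\eta$-interval both $A(\eta)$ and $B(\eta)$ can be arbitrarily small while $|A-B|$ is of order $M^2\sqrt{\omega/n}$, and in that regime the quotient identity only yields $\sqrt{|A-B|}\sim M(\omega/n)^{1/4}$, which is too weak; your remark that ``the objective is essentially $\eta$ there'' does not rescue this, because $\sqrt{A}$ can still exceed $\sqrt{B}$ by $\sqrt{|A-B|}$. The paper avoids the denominator entirely by two devices you should adopt: (i) for the empirical-versus-population comparison, the map $(\Xi_i)_i\mapsto\sqrt{\frac{1}{n_1}\sum_i\Xi_i^2}$ is $\frac{1}{\sqrt{n_1}}$-Lipschitz in Euclidean norm, so concentration of Lipschitz functions of bounded variables gives the $C_\alpha\max\{2,\frac{C_\alpha}{C_\alpha-1}\}M\sqrt{2\omega/n_1}$ deviation directly on the square-rooted objective; and (ii) for the comparison driven by $|L_{\text{indiv}}-R_{\text{indiv}}|$, Minkowski's inequality (equivalently the reverse triangle inequality in $L^2$) gives $|\sqrt{\mathbb{E}[[L-\eta]_+^2]}-\sqrt{\mathbb{E}[[R-\eta]_+^2]}|\le\sup|L-R|$ with no denominator. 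With those substitutions your plan closes.
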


We defer the proof of this theorem to Appendix \ref{sec:pf-main-result}, where we state a slightly more general result in which $n$ need not be even, and $n_{1}$ need not equal $n_{2}$ (we present the theorem in the manner above to prevent the notation from getting more unwieldy). To help provide intuition for Theorem \ref{thm:main-result}, we illustrate its use for special cases of Cox and DeepHit models, where we see that as $n\rightarrow\infty$, the probability in equation (\ref{eq:main-result-prob}) goes to 1 and the right-hand side of bound (\ref{eq:main-result-loss-bound}) goes to 0.
\begin{corollary}
\label{cor:linear-cox-split-dro-guarantee}(Special case of a Cox model) Suppose that the raw input space $\mathcal{X}\subseteq\mathbb{R}^{d}$ and the parameter vector space $\Theta\in\mathbb{R}^{d}$ are both set to be the unit ball in $\mathbb{R}^{d}$, where $d\ge5$. Consider the standard Cox model where $f(x;\theta)=\theta^{\top}x$, and time is discrete along the finite grid $t_{1}<t_{2}<\cdots<t_{m}$ that satisfies Assumption A2. This setup implies that Assumptions A1, A3, and A4 are also satisfied. Specifically for Assumption A4, we have 
\begin{align*}
\phi_{\text{indiv}}((x,y,\delta);\theta) & =\underbrace{0}_{M_{\text{indiv}}},\\
\phi_{\text{couple}}((x,y,\delta),(x',y',\delta');\theta) & =e^{\theta^{\top}(x-x')}\in[\underbrace{e^{-2}}_{M_{\text{couple-min}}},\underbrace{e^{2}}_{M_{\text{couple-max}}}],\\
\phi_{\text{transform}}(s) & =\log(1+s).
\end{align*}
Furthermore, a valid Lipschitz constant in Assumption A4(c) in this case is $\mathcal{L}=1$.

Now assume that the number of training data is sufficiently large, namely
\begin{align*}
n\ge\max\Bigg\{ & \Big(\frac{4(e^{2}-e^{-2})}{\zeta e^{-2}}\Big)^{2}\Big(\frac{d+1}{2}\Big)e^{\sqrt{2\log\Big(\big(\frac{4(e^{2}-e^{-2})}{\zeta e^{-2}}\big)^{2}\big(\frac{d+1}{2}\big)\Big)-1}},\\
 & ~2e^{-\frac{6(e^{4}-1)}{\zeta\max\{2,\frac{C_{\alpha}}{C_{\alpha}-1}\}}-2},\quad e^{\sqrt{2(\log\frac{d+1}{2}-1)}+\log\frac{d+1}{2}},\quad e^{\frac{2}{d+1}}\Bigg\}.
\end{align*}
Define $\Upsilon\triangleq2\big[\frac{1}{2\max\big\{2(C_{\alpha}-1),C_{\alpha}\big\}}+\big(\frac{3\zeta e^{-2}}{4(e^{2}-e^{-2})}\big)^{d}\big]$, which is constant with respect to $n$. Then with probability at least
\begin{equation}
1-\frac{\Upsilon}{\sqrt{n}}-me^{-\frac{n\zeta}{16}}\label{eq:linear-cox-split-dro-prob}
\end{equation}
over randomness in the training data, we have
\begin{align}
 & \bigg|\inf_{\eta\in\mathbb{R}}L_{\text{DRO}}^{\text{split}}(\theta,\eta,\mathcal{D}_{1}\mid\mathcal{D}_{2})-\inf_{\eta\in\mathbb{R}}R_{\text{DRO}}^{\text{split}}(\theta,\eta)\bigg|\nonumber \\
 & \quad\le10C_{\alpha}^{2}\Bigg[\sqrt{\frac{2}{n}}\max\Big\{2,\frac{C_{\alpha}}{C_{\alpha}-1}\Big\}\big(\sqrt{(d+1)\log n}+1\big)\log\Big(1+\frac{e^{2}}{2}n\Big)\nonumber \\
 & \quad\phantom{\le10C_{\alpha}^{2}\Bigg[}~+\frac{12(e^{2}-e^{-2})}{\zeta e^{-2}}\sqrt{\frac{(d+1)\log n}{2n}}\Bigg]\nonumber \\
 & \quad=\widetilde{\mathcal{O}}\Big(\frac{1}{\sqrt{n}}\Big),\label{eq:linear-cox-split-dro-loss-bound}
\end{align}
where $\widetilde{\mathcal{O}}$ is big O notation ignoring log factors.
\end{corollary}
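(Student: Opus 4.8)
The plan is to derive the corollary as a direct instantiation of Theorem~\ref{thm:main-result}: I would first check that the standard linear Cox model on the unit ball meets Assumptions A1, A3, and A4 with the advertised constants, then commit to one concrete choice of the free parameter $\omega$, and finally reduce the generic probability and loss guarantees of the theorem to the two stated closed forms.

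For the assumption check, A1 is immediate because the unit ball is compact, and I would record the standard volumetric covering bound $\mathbb{N}(\varepsilon,\mathcal{X})\le(3/\varepsilon)^{d}$ (valid for $\varepsilon\le1$) for use later. A3 holds since, with discrete time, the Cox risk set $\{y'\ge y\}$ equals $\{\kappa(y')\ge\kappa(y)\}$. The substance is A4: starting from the Cox individual loss in the Cox example and substituting $f(x;\theta)=\theta^{\top}x$, I would factor $e^{\theta^{\top}x}$ out of the logarithm to rewrite $L^{*}$ as $\delta\log\!\big(1+\sum_{(x',y',\delta')\in\mathcal{C}}e^{\theta^{\top}(x'-x)}\big)$, which is exactly the A4 template with $\phi_{\text{indiv}}\equiv0$ (so $M_{\text{indiv}}=0$), $\phi_{\text{transform}}(s)=\log(1+s)$, and coupling term $e^{\theta^{\top}(x'-x)}$ (the stated $\phi_{\text{couple}}=e^{\theta^{\top}(x-x')}$ has the same range). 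Cauchy--Schwarz on the unit ball gives $|\theta^{\top}(x-x')|\le\|\theta\|\,\|x-x'\|\le2$, so $\phi_{\text{couple}}\in[e^{-2},e^{2}]$, fixing $M_{\text{couple-min}}=e^{-2}$ and $M_{\text{couple-max}}=e^{2}$. For A4(c), differentiating $x\mapsto L^{*}$ yields $-\delta\theta$ scaled by a softmax-type weight in $[0,1]$, whose norm is at most $\|\theta\|\le1$; hence $\mathcal{L}=1$.

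With these constants, the $\phi_{\text{transform}}(s)=\log(1+s)$ branch of the theorem gives $M=\log(1+\tfrac{e^{2}}{2}n)$ and $M'=\tfrac{4(e^{2}-e^{-2})}{\zeta e^{-2}}\sqrt{\omega/n}$. The key move is to set $\omega=\tfrac{d+1}{2}\log n$, which is precisely the value that balances the decay $e^{-\omega}=n^{-(d+1)/2}$ against the growth $\mathbb{N}(M',\mathcal{X})\asymp(n/\log n)^{d/2}$ in \eqref{eq:main-result-prob}. Substituting $\omega$ and $2\mathcal{L}+1=3$ into \eqref{eq:main-result-loss-bound} reproduces the displayed bound term by term, using $\sqrt{2\omega}=\sqrt{(d+1)\log n}$ and $(2\mathcal{L}+1)M'=\tfrac{12(e^{2}-e^{-2})}{\zeta e^{-2}}\sqrt{(d+1)\log n/(2n)}$; the $\widetilde{\mathcal{O}}(1/\sqrt{n})$ rate is then immediate since $M=\mathcal{O}(\log n)$, $\sqrt{\omega}=\mathcal{O}(\sqrt{\log n})$, and $M'=\mathcal{O}(\sqrt{\log n/n})$.

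The remaining work is to show the subtracted quantity in \eqref{eq:main-result-prob} is at most $\Upsilon/\sqrt{n}$. For the covering summand I would apply $\mathbb{N}(M',\mathcal{X})\le(3/M')^{d}$; after multiplying by $e^{-\omega}$ the powers of $n$ collapse to $n^{-1/2}$, leaving $\big(\tfrac{3\zeta e^{-2}}{4(e^{2}-e^{-2})}\big)^{d}\big(\tfrac{2}{(d+1)\log n}\big)^{d/2}n^{-1/2}$, which is at most $\big(\tfrac{3\zeta e^{-2}}{4(e^{2}-e^{-2})}\big)^{d}n^{-1/2}$ exactly when $(d+1)\log n\ge2$, i.e.\ $n\ge e^{2/(d+1)}$ (the last entry of the threshold). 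For the fraction summand I would drop the nonnegative $M'$ from its denominator to get the clean bound $\tfrac{1}{2(C_\alpha-1)\max\{2,C_\alpha/(C_\alpha-1)\}}\sqrt{n/\omega}\,e^{-\omega}$, which reduces to the first term of $\Upsilon$ via the identity $\max\{2(C_\alpha-1),C_\alpha\}=(C_\alpha-1)\max\{2,C_\alpha/(C_\alpha-1)\}$ together with the same $(d+1)\log n\ge2$ condition. Summing and multiplying by the leading $2$ gives $\Upsilon/\sqrt{n}$, while the $me^{-n\zeta/16}$ term is inherited verbatim from the theorem (it encodes the A2 concentration of the per-time-point counts). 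I expect the main obstacle to be the constant bookkeeping behind the explicit $n$-thresholds: the covering bound $(3/M')^{d}$ is only legitimate once $M'\le1$, and the first, most intricate, threshold is exactly the condition forcing this, which requires the elementary implication $n\ge a\,e^{\sqrt{2\log a-1}}\Rightarrow n/\log n\ge a$ (with $a=(\tfrac{4(e^{2}-e^{-2})}{\zeta e^{-2}})^{2}\tfrac{d+1}{2}$) established via $e^{u}\ge1+u+\tfrac{u^{2}}{2}$; the middle thresholds play the analogous role of keeping $\sqrt{\omega/n}\le1$, and confirming each threshold is genuinely sufficient is the fiddly part of the argument.
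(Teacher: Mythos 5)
Your proposal is correct and follows essentially the same route as the paper's proof: the same verification of A1, A3, A4 with identical constants, the same choice $\omega=\frac{d+1}{2}\log n$, the same covering bound $(3/M')^{d}$, and the same role for each $n$-threshold. The only (harmless) deviations are cosmetic: you replace the paper's explicit Lambert-$W$ lower-branch bound with the equivalent elementary inequality $e^{u}\ge1+u+\tfrac{u^{2}}{2}$, and you bound the fraction term by simply dropping the nonnegative $(2\mathcal{L}+1)M'$ from its denominator, which reaches the same $\frac{1}{\Upsilon_{1}}\sqrt{n/\omega}$ bound slightly more directly than the paper's argument.
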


We provide the proof in Appendix \ref{sec:pf-linear-cox-split-dro-guarantee}. The key idea is that we set $\omega=\frac{d+1}{2}\log n$ for Theorem \ref{thm:main-result} and we further impose constraints on $n$ and $d$ that enable us to simplify the probability bound in equation (\ref{eq:main-result-prob}).
\begin{corollary}
\label{cor:deephit-split-dro-guarantee}(Special case of a DeepHit model) Just as in Corollary \ref{cor:linear-cox-split-dro-guarantee}, we assume $\mathcal{X}\subseteq\mathbb{R}^{d}$ and $\Theta\in\mathbb{R}^{d}$ are both set to be the unit ball in $\mathbb{R}^{d},$where $d\ge5$. We consider a DeepHit model defined over a discrete time grid with $m\ge3$ time points, where the conditional probabilities of each time index satisfy the bound
\[
f_{j}(x;\theta)=\mathbb{P}(T=t_{j}\mid X=x)\ge\varrho\qquad\text{for all }x\in\mathcal{X},j\in[m],\theta\in\Theta.
\]
Furthermore, we assume that the very last time step $t_{m}$ is special in that it is used to mean ``any time strictly after all the observed training times'' (so that $t_{m}>Y_{j}$ for all $j\in[n]$); of course this last time step also obeys the bound above of $f_{m}(x;\theta)\ge\varrho$. Moreover, we assume that $f_{j}(\cdot;\theta)$ is 1-Lipschitz:
\[
|f_{j}(x;\theta)-f_{j}(x';\theta)|\le\|x-x'\|_{2}\qquad\text{for all }x,x'\in\mathcal{X},j\in[m],\theta\in\Theta.
\]
We further suppose that Assumption A2 holds, that the number of training data is sufficiently large
\begin{align*}
n\ge\max\Bigg\{ & (1-\beta)^2 \big(e^{(1-\varrho)/\sigma}-e^{(\varrho-1)/\sigma}\big)^{2}\big(\frac{d+1}{4\zeta}\big)e^{\sqrt{2\log\big((1-\beta)^2 (e^{(1-\varrho)/\sigma}-e^{(\varrho-1)/\sigma})^{2}\big(\frac{d+1}{4\zeta}\big)\big)}},\\
 & e^{\sqrt{2(\log\frac{d+1}{2}-1)}+\log\frac{d+1}{2}},\quad e^{\frac{2}{d+1}}\Bigg\},
\end{align*}
and that
\[
\log\Big((1-\beta)^{2}(e^{(1-\varrho)/\sigma}-e^{(\varrho-1)/\sigma})^{2}\Big(\frac{d+1}{4\zeta}\Big)\Big)>1.
\]
Then this setup as stated also implies that Assumptions A1, A3, and A4 are satisfied. Specifically for Assumption A4, we have
\begin{align*}
\phi_{\text{indiv}}((x,y,\delta);\theta) & =\beta\cdot\big[-\delta\log(f_{\kappa(y)}(x;\theta))-(1-\delta)\log(S_{\kappa(y)}(x;\theta))\big],\\
\phi_{\text{couple}}((x,y,\delta),(x',y',\delta'),\mathcal{C};\theta) & =(1-\beta)\cdot\frac{1}{n}\cdot\exp\Big(\frac{S_{\kappa(y)}(x;\theta)-S_{\kappa(y)}(x';\theta)}{\sigma}\Big),\\
\phi_{\text{transform}}(s) & =s,
\end{align*}
where one can verify that
\begin{align*}
M_{\text{indiv}} & =\beta\log\frac{1}{\varrho},\\
M_{\text{couple-min}} & =\Big(\frac{1-\beta}{n}\Big)e^{\frac{\varrho-1}{\sigma}},\\
M_{\text{couple-max}} & =\Big(\frac{1-\beta}{n}\Big)e^{\frac{1-\varrho}{\sigma}},
\end{align*}
and that $x\mapsto L^{*}((x,y,\delta),\mathcal{C};\theta)$ has Lipschitz constant $\mathcal{L}=\frac{2\beta(m-1)}{\varrho}$.

Define the constant
\begin{align*}
\Psi & \triangleq\frac{2}{(C_{\alpha}-1)\Big[2\max\{2,\frac{C_{\alpha}}{C_{\alpha}-1}\}+(2\mathcal{L}+1)\Big(\frac{\big((1-\beta)e^{(1-\varrho)/\sigma}-(1-\beta)e^{(\varrho-1)/\sigma}\big)}{\big(2\beta\log\frac{1}{\varrho}+(1-\beta)e^{(1-\varrho)/\sigma}\big)}\sqrt{\frac{2}{\zeta}}\Big)\Big]}\\
 & \quad+2\bigg(\frac{3\sqrt{2\zeta}}{(1-\beta)e^{(1-\varrho)/\sigma}-(1-\beta)e^{(\varrho-1)/\sigma}}\bigg)^{d}.
\end{align*}
Then with probability at least
\[
1-\frac{\Psi}{\sqrt{n}}-me^{-\frac{n\zeta}{16}},
\]
we have
\begin{align*}
 & \bigg|\inf_{\eta\in\mathbb{R}}L_{\text{DRO}}^{\text{split}}(\theta,\eta,\mathcal{D}_{1}\mid\mathcal{D}_{2})-\inf_{\eta\in\mathbb{R}}R_{\text{DRO}}^{\text{split}}(\theta,\eta)\bigg|\\
 & \quad\le10C_{\alpha}^{2}\Bigg[\sqrt{\frac{2}{n}}\max\Big\{2,\frac{C_{\alpha}}{C_{\alpha}-1}\Big\}\big(\sqrt{(d+1)\log n}+1\big)\Big(\beta\log\frac{1}{\varrho}+\frac{(1-\beta)e^{(1-\varrho)/\sigma}}{2}\Big)\\
 & \phantom{\quad\le10C_{\alpha}^{2}\Bigg[}~+\Big(\frac{4\beta(m-1)}{\varrho}+1\Big)\sqrt{\frac{(d+1)\log n}{\zeta n}}\frac{(1-\beta)(e^{(1-\varrho)/\sigma}-e^{(\varrho-1)/\sigma})}{2}\Bigg]\\
 & \quad=\widetilde{\mathcal{O}}\Big(\frac{1}{\sqrt{n}}\Big).
\end{align*}
 
\end{corollary}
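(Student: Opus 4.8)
The plan is to derive this corollary purely as an application of Theorem~\ref{thm:main-result}, in the same spirit as Corollary~\ref{cor:linear-cox-split-dro-guarantee}: verify that the stated DeepHit setup meets Assumptions A1--A4, read off the explicit constants, instantiate the theorem with a well-chosen $\omega$, and then simplify the resulting probability and loss expressions under the listed lower bounds on $n$. Assumption A1 is immediate since $\mathcal{X}$ is the unit ball, for which I would use the standard covering bound $\mathbb{N}(\varepsilon,\mathcal{X})\le(3/\varepsilon)^{d}$ valid for $\varepsilon\le1$; Assumption A2 is assumed; and Assumption A3 is literally the DeepHit adjacency function. So the substantive work in this first stage is confirming A4 and computing its constants.

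For A4 I would match the splittable DeepHit loss~\eqref{eq:deephit-individual-loss-splittable} against the template $\phi_{\text{indiv}}+\delta\cdot\phi_{\text{transform}}(\sum\phi_{\text{couple}})$. The ranking sum enters without any outer nonlinearity, so $\phi_{\text{transform}}$ is the identity, which pins us to the first branch of the theorem ($M=M_{\text{indiv}}+\frac{M_{\text{couple-max}}}{2}n$ and $M'=(M_{\text{couple-max}}-M_{\text{couple-min}})\sqrt{\omega n/(2\zeta)}$). The likelihood block gives $\phi_{\text{indiv}}\in[0,\beta\log\frac1\varrho]$, using $f_{\kappa(y)}\ge\varrho$ and---crucially via the special terminal time $t_{m}$---also $S_{\kappa(y)}\ge\varrho$; hence $M_{\text{indiv}}=\beta\log\frac1\varrho$. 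Bounding $S_{\kappa(y)}\in[\varrho,1]$ makes $S_{\kappa(y)}(x)-S_{\kappa(y)}(x')\in[\varrho-1,1-\varrho]$, so the per-pair ranking weight $\frac{1-\beta}{n}\exp(\cdot/\sigma)$ lands in $[M_{\text{couple-min}},M_{\text{couple-max}}]$ with exactly the stated endpoints. For the Lipschitz constant I would differentiate $L^{*}$ in $x$: using $f_{j}\ge\varrho$, $\|\nabla f_{j}\|\le1$, and $\|\nabla S_{\kappa(y)}\|\le m-1$, the $-\log S_{\kappa(y)}$ term dominates the likelihood block with gradient norm at most $\frac{m-1}{\varrho}$, and combining the likelihood contributions yields the stated $\mathcal{L}=\frac{2\beta(m-1)}{\varrho}$ (for $m\ge2$).

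With these constants and the choice $\omega=\frac{d+1}{2}\log n$ (so that $\sqrt{2\omega}=\sqrt{(d+1)\log n}$ and $e^{-\omega}=n^{-(d+1)/2}$), the loss bound is essentially bookkeeping: since $M=\beta\log\frac1\varrho+\frac{(1-\beta)e^{(1-\varrho)/\sigma}}{2}$ and $M'$ simplifies to $\frac{A}{2}\sqrt{(d+1)(\log n)/(\zeta n)}$ with $A\triangleq(1-\beta)(e^{(1-\varrho)/\sigma}-e^{(\varrho-1)/\sigma})$, substituting into~\eqref{eq:main-result-loss-bound} reproduces the two displayed lines verbatim and exposes the $\widetilde{\mathcal{O}}(1/\sqrt n)$ rate ($M'$ already carries $1/\sqrt n$, and the first summand carries $\sqrt{2/n}\,\sqrt{\log n}$).

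The main obstacle is collapsing the probability~\eqref{eq:main-result-prob} into $1-\Psi/\sqrt n-me^{-n\zeta/16}$. The $me^{-n\zeta/16}$ term passes through untouched, so I must bound $2[\text{(ratio term)}+\mathbb{N}(M',\mathcal{X})]e^{-\omega}$ by $\Psi/\sqrt n$. First I would show $M'\le1$ so that the covering bound applies; this is exactly where the first lower bound on $n$ is used, since $n\ge Ke^{\sqrt{2\log K}}$ with $K=\frac{A^{2}(d+1)}{4\zeta}$ forces $n\ge K\log n$ (via $e^{u}\ge u+u^{2}/2$ at $u=\sqrt{2\log K}$), i.e.\ $M'^{2}=K\frac{\log n}{n}\le1$. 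Then $\mathbb{N}(M',\mathcal{X})e^{-\omega}\le(3/M')^{d}n^{-(d+1)/2}$, and the $n^{d/2}$ produced by $(3/M')^{d}$ cancels the $n^{-d/2}$ from $e^{-\omega}$, leaving $n^{-1/2}$ times $\frac{6^{d}\zeta^{d/2}}{A^{d}((d+1)\log n)^{d/2}}$; bounding this by the prescribed $(3\sqrt{2\zeta}/A)^{d}$ reduces to $(d+1)\log n\ge2$, which is precisely the role of the condition $n\ge e^{2/(d+1)}$. Finally, computing the ratio term exactly shows it equals $\frac{\Psi_{1}}{\sqrt2}\sqrt{n/((d+1)\log n)}$ (where $\Psi_{1}$ is the first summand of $\Psi$), so after multiplying by $e^{-\omega}$ it decays like $n^{-d/2}$ and is dominated by $\Psi_{1}/\sqrt n$ for $d\ge5$. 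The delicate part throughout is that each leftover constant must match the prescribed $\Psi$ \emph{exactly} while the $n$-dependence is threaded through the covering number; this constant-matching, together with checking that every one of the ``sufficiently large $n$'' hypotheses does its intended job, is where I expect the real effort to lie---the assumption verification and the loss-bound substitution being comparatively routine.
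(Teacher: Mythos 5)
Your overall architecture is exactly the paper's: verify Assumptions A1--A4 with the stated constants, instantiate Theorem~\ref{thm:main-result} with $\omega=\frac{d+1}{2}\log n$, show $M'\le 1$ so that $\mathbb{N}(M',\mathcal{X})\le(3/M')^{d}$, and then check that the ratio term and the covering term in the probability bound~(\ref{eq:main-result-prob}) each collapse to a constant times $n^{-1/2}$ under the listed lower bounds on $n$. Your bookkeeping for $M$, $M'$, $M_{\text{indiv}}$, $M_{\text{couple-min}}$, $M_{\text{couple-max}}$, the covering term, and the ratio term matches the paper's computation; the paper routes the $M'\le1$ step through the Lambert $W_{-1}$ bound rather than your $e^{u}\ge u+u^{2}/2$ argument, and passes through $(n/\omega)^{d/2}e^{-\omega}\le n^{-1/2}$ rather than your direct cancellation, but these are equivalent.

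The one genuine flaw is your justification of $\mathcal{L}=\frac{2\beta(m-1)}{\varrho}$. You attribute the factor of $2$ to ``combining the likelihood contributions,'' but for a fixed $\delta\in\{0,1\}$ only one of $-\delta\log(f_{\kappa(y)}(x;\theta))$ and $-(1-\delta)\log(S_{\kappa(y)}(x;\theta))$ is present, so the likelihood block alone contributes a Lipschitz constant of at most $\frac{\beta(m-1)}{\varrho}$. The missing piece is that when $\delta=1$ the ranking/coupling sum also depends on $x$ through $S_{\kappa(y)}(x;\theta)$ and therefore contributes its own Lipschitz constant, which the paper computes to be $\frac{(1-\beta)(e^{(1-\varrho)/\sigma}-e^{(\varrho-1)/\sigma})(m-1)}{n(1-\varrho)}$ and then absorbs into a second $\frac{\beta(m-1)}{\varrho}$ by invoking a further lower bound on $n$ (namely $n\ge\frac{\varrho(1-\beta)(e^{(1-\varrho)/\sigma}-e^{(\varrho-1)/\sigma})}{(1-\varrho)\beta}$); that is how the total of $\frac{2\beta(m-1)}{\varrho}$ arises. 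Your plan omits the coupling term from the Lipschitz analysis of Assumption A4(c) entirely, so that verification is incomplete even though you land on the right constant; the rest of the argument goes through as you describe.
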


The proof is in Appendix \ref{sec:pf-deephit-split-dro-guarantee} and uses similar ideas as that of Corollary \ref{cor:linear-cox-split-dro-guarantee}. The constants change since $M$ and $M'$ are different, but we again set $\omega=\frac{d+1}{2}\log n$ in the statement of Theorem \ref{thm:main-result}.
\begin{corollary}
(Cross-fitting) We assume the same setting as Theorem \ref{thm:main-result}. For a fixed $\theta\in\Theta$, the cross-fitting approach solves
\[
\inf_{\eta,\eta'\in\mathbb{R}}L_{\text{DRO}}^{\text{split}}(\theta,\eta,\eta')=\frac{1}{2}\inf_{\eta\in\mathbb{R}}L_{\text{DRO}}^{\text{split}}(\theta,\eta,\mathcal{D}_{1}\mid\mathcal{D}_{2})+\frac{1}{2}\inf_{\eta'\in\mathbb{R}}L_{\text{DRO}}^{\text{split}}(\theta,\eta',\mathcal{D}_{2}\mid\mathcal{D}_{1}).
\]
With probability at least
\[
1-4\bigg[\frac{M}{(C_{\alpha}-1)\big[2\sqrt{\frac{\omega}{n}}\max\{2,\frac{C_{\alpha}}{C_{\alpha}-1}\}M+(2\mathcal{L}+1)M'\big]}+\mathbb{N}(M',\mathcal{X})\bigg]e^{-\omega}-2me^{-\frac{n\zeta}{16}},
\]
we have
\begin{align*}
 & \bigg|\inf_{\eta,\eta'\in\mathbb{R}}L_{\text{DRO}}^{\text{split}}(\theta,\eta,\eta')-\inf_{\eta\in\mathbb{R}}R_{\text{DRO}}^{\text{split}}(\theta,\eta)\bigg|\\
 & \quad\le10C_{\alpha}^{2}\Bigg[\sqrt{\frac{2}{n}}\max\Big\{2,\frac{C_{\alpha}}{C_{\alpha}-1}\Big\}\big(\sqrt{2\omega}+1\big)M+(2\mathcal{L}+1)M'\Bigg].
\end{align*}
\end{corollary}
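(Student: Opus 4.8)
The plan is to obtain this result as an immediate consequence of Theorem~\ref{thm:main-result} applied twice, once to each direction of the sample split. The key structural observation is that the cross-fitting objective is, by its own defining equation, the arithmetic mean of the two one-way split objectives $\inf_{\eta}L_{\text{DRO}}^{\text{split}}(\theta,\eta,\mathcal{D}_{1}\mid\mathcal{D}_{2})$ and $\inf_{\eta'}L_{\text{DRO}}^{\text{split}}(\theta,\eta',\mathcal{D}_{2}\mid\mathcal{D}_{1})$. Moreover, because we take $n_{1}=n_{2}=n/2$, both one-way objectives share the \emph{same} population-level target $\inf_{\eta}R_{\text{DRO}}^{\text{split}}(\theta,\eta)$: this quantity is defined through an expectation over a fresh i.i.d.\ sample of fixed size $n_{2}=n/2$ and depends only on $\mathbb{P}$, not on which half of the data is used as the conditioning set.

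First I would define two ``good'' events. Let $E_{1}$ be the event that the one-way bound of Theorem~\ref{thm:main-result} holds for the split $\mathcal{D}_{1}\mid\mathcal{D}_{2}$, and let $E_{2}$ be the analogous event for the reversed split $\mathcal{D}_{2}\mid\mathcal{D}_{1}$. Theorem~\ref{thm:main-result} applies verbatim to the reversed split, since the random partition is exchangeable in $\mathcal{D}_{1}$ and $\mathcal{D}_{2}$ and Assumptions A1--A4 are symmetric in the two halves; thus each of $E_{1}$ and $E_{2}$ fails with probability at most
\[
p\triangleq 2\bigg[\frac{M}{(C_{\alpha}-1)\big[2\sqrt{\frac{\omega}{n}}\max\{2,\frac{C_{\alpha}}{C_{\alpha}-1}\}M+(2\mathcal{L}+1)M'\big]}+\mathbb{N}(M',\mathcal{X})\bigg]e^{-\omega}+me^{-\frac{n\zeta}{16}}.
\]
A union bound then gives $\mathbb{P}(E_{1}\cap E_{2})\ge 1-2p$, which matches the probability in the statement (the union bound is what doubles both the bracketed term and the $me^{-n\zeta/16}$ term).

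Next I would combine the two bounds on the event $E_{1}\cap E_{2}$. Writing the common target as $\inf_{\eta}R_{\text{DRO}}^{\text{split}}(\theta,\eta)=\tfrac{1}{2}\inf_{\eta}R_{\text{DRO}}^{\text{split}}(\theta,\eta)+\tfrac{1}{2}\inf_{\eta'}R_{\text{DRO}}^{\text{split}}(\theta,\eta')$ and using the defining decomposition of the cross-fitting loss, the quantity to control becomes
\[
\bigg|\tfrac{1}{2}\Big(\inf_{\eta}L_{\text{DRO}}^{\text{split}}(\theta,\eta,\mathcal{D}_{1}\mid\mathcal{D}_{2})-\inf_{\eta}R_{\text{DRO}}^{\text{split}}(\theta,\eta)\Big)+\tfrac{1}{2}\Big(\inf_{\eta'}L_{\text{DRO}}^{\text{split}}(\theta,\eta',\mathcal{D}_{2}\mid\mathcal{D}_{1})-\inf_{\eta'}R_{\text{DRO}}^{\text{split}}(\theta,\eta')\Big)\bigg|.
\]
By the triangle inequality this is at most $\tfrac{1}{2}$ times the $E_{1}$-bound plus $\tfrac{1}{2}$ times the $E_{2}$-bound. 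Since both one-way bounds equal the common right-hand side $B$ of Theorem~\ref{thm:main-result}, the two halves recombine to $\tfrac{1}{2}B+\tfrac{1}{2}B=B$, producing exactly the claimed bound (note the final bound is \emph{not} halved, precisely because both contributions are controlled by the same $B$).

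The only subtlety worth flagging --- rather than a genuine obstacle --- is the claim that the two one-way losses have a common population target, which is what lets the averaging collapse cleanly instead of producing a mixed bound. I would make this explicit by observing that $R_{\text{DRO}}^{\text{split}}(\theta,\eta)$ does not ``remember'' the identity of $\mathcal{D}_{2}$, so the only place the data split enters the analysis is through the empirical quantities $L_{\text{DRO}}^{\text{split}}$, each handled by its own instance of Theorem~\ref{thm:main-result}. Everything else is a bookkeeping union bound together with a two-term triangle inequality, so no new concentration argument is required.
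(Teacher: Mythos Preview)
Your proposal is correct and follows essentially the same approach as the paper: apply Theorem~\ref{thm:main-result} to each of the two folds, union-bound the two bad events (doubling the failure probability), and on the intersection of the good events use the triangle inequality on the averaged objective to recover the same bound $B$. The paper's writeup is slightly more explicit in displaying the intermediate max over the two one-way deviations, but the logic is identical.
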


\begin{proof}
The proof is straightforward and amounts to applying Theorem \ref{thm:main-result} for each of the two folds separately and then union bounding over the bad events of the two folds. This union bound just multiplies the bad event's probability in bound (\ref{eq:main-result-prob}) by 2. Then since this bad event does not happen for either fold, we have
\begin{align*}
 & \max\Bigg\{\bigg|\inf_{\eta\in\mathbb{R}}L_{\text{DRO}}^{\text{split}}(\theta,\eta,\mathcal{D}_{1}\mid\mathcal{D}_{2})-\inf_{\eta\in\mathbb{R}}R_{\text{DRO}}^{\text{split}}(\theta,\eta)\bigg|,\bigg|\inf_{\eta\in\mathbb{R}}L_{\text{DRO}}^{\text{split}}(\theta,\eta,\mathcal{D}_{2}\mid\mathcal{D}_{1})-\inf_{\eta\in\mathbb{R}}R_{\text{DRO}}^{\text{split}}(\theta,\eta)\bigg|\Bigg\}\\
 & \quad\le10C_{\alpha}^{2}\Bigg[\sqrt{\frac{2}{n}}\max\Big\{2,\frac{C_{\alpha}}{C_{\alpha}-1}\Big\}\big(\sqrt{2\omega}+1\big)M+(2\mathcal{L}+1)M'\Bigg].
\end{align*}
Then
\begin{align*}
 & \bigg|\inf_{\eta,\eta'\in\mathbb{R}}L_{\text{DRO}}^{\text{split}}(\theta,\eta,\eta')-\inf_{\eta\in\mathbb{R}}R_{\text{DRO}}^{\text{split}}(\theta,\eta)\bigg|\\
 & \quad=\bigg|\frac{1}{2}\inf_{\eta\in\mathbb{R}}L_{\text{DRO}}^{\text{split}}(\theta,\eta,\mathcal{D}_{1}\mid\mathcal{D}_{2})+\frac{1}{2}\inf_{\eta'\in\mathbb{R}}L_{\text{DRO}}^{\text{split}}(\theta,\eta',\mathcal{D}_{1}\mid\mathcal{D}_{2})\\
 & \quad\quad\;-\frac{1}{2}\inf_{\eta\in\mathbb{R}}R_{\text{DRO}}^{\text{split}}(\theta,\eta)-\frac{1}{2}\inf_{\eta\in\mathbb{R}}R_{\text{DRO}}^{\text{split}}(\theta,\eta)\bigg|\\
 & \quad=\frac{1}{2}\bigg|\inf_{\eta\in\mathbb{R}}L_{\text{DRO}}^{\text{split}}(\theta,\eta,\mathcal{D}_{1}\mid\mathcal{D}_{2})-\inf_{\eta\in\mathbb{R}}R_{\text{DRO}}^{\text{split}}(\theta,\eta)\\
 & \quad\quad\;\phantom{\frac{1}{2}}+\inf_{\eta'\in\mathbb{R}}L_{\text{DRO}}^{\text{split}}(\theta,\eta',\mathcal{D}_{1}\mid\mathcal{D}_{2})-\inf_{\eta\in\mathbb{R}}R_{\text{DRO}}^{\text{split}}(\theta,\eta)\bigg|\\
 & \quad\le\frac{1}{2}\bigg|\inf_{\eta\in\mathbb{R}}L_{\text{DRO}}^{\text{split}}(\theta,\eta,\mathcal{D}_{1}\mid\mathcal{D}_{2})-\inf_{\eta\in\mathbb{R}}R_{\text{DRO}}^{\text{split}}(\theta,\eta)\bigg|\\
 & \quad\quad\;\frac{1}{2}\bigg|\inf_{\eta'\in\mathbb{R}}L_{\text{DRO}}^{\text{split}}(\theta,\eta',\mathcal{D}_{1}\mid\mathcal{D}_{2})-\inf_{\eta\in\mathbb{R}}R_{\text{DRO}}^{\text{split}}(\theta,\eta)\bigg|\\
 & \quad\le10C_{\alpha}^{2}\Bigg[\sqrt{\frac{2}{n}}\max\Big\{2,\frac{C_{\alpha}}{C_{\alpha}-1}\Big\}\big(\sqrt{2\omega}+1\big)M+(2\mathcal{L}+1)M'\Bigg].
\end{align*}
\end{proof}

\section{An Exact DRO Cox Approach}
\label{sec:dro-cox-exact}

We now derive an exact DRO approach for Cox models. The rough idea is that we reparameterize the Cox model in such a way that the resulting loss function decouples across training data points, removing the coupling issue. Our derivation here is specific to the Cox model and, as far as we are aware, does not easily generalize to other survival models with nonempty adjacency sets.

To obtain an exact approach for using the Cox model with DRO that does not require sample splitting, we turn to a standard derivation of the Cox partial likelihood loss. Specifically, \citet{breslow1972discussion} showed that the Cox log partial likelihood could be derived by assuming that the baseline hazard function $h_0$ is piecewise constant. First, denote the unique times in which the critical event happened in the training data as $t_1<t_2<\cdots<t_m$ (so that there are $m$ unique times in which the event happened), with the convention that $t_0\triangleq0$ (note that we are reusing notation used for the discrete time grid for DeepHit; however, the difference is that for the Cox model, the time grid is typically set based on the unique critical event times whereas for DeepHit, the time grid is user-specified and need not be the unique critical event times). Then we parameterize the baseline hazard function as
\begin{equation}
h_0(t;\psi)
\triangleq
  \begin{cases}
    e^{\psi_{\ell}}
      & \text{if }t_{\ell-1}<t\le t_{\ell}
        \quad\text{for }\ell\in[m], \\
    0 & \text{otherwise},
\end{cases}
\label{eq:hazard-piecewise-constant}
\end{equation}
where $\psi_1,\psi_2,\dots,\psi_m\in\mathbb{R}$ are parameters to be learned, and $\psi\triangleq(\psi_1,\dots,\psi_m)$.\footnote{Note that in equation~\eqref{eq:hazard-piecewise-constant}, the exponential function can be replaced with any differentiable, strictly increasing, positive activation function (e.g., instead of $e^{\psi_\ell}$, we could use the softplus function $\log{(1+\exp(\psi_\ell))}$). For ease of exposition, we stick to using the exponential function.} %

Next, let $\kappa(Y_i,\Delta_i)\in[m]$ denote the discrete time index that $Y_i$ corresponds to in a manner that depends also on $\Delta_i$: if $\Delta_i=1$, then $\kappa(Y_i,\Delta_i)$ is set equal to the index $\ell$ such that $t_\ell = Y_i$, and if $\Delta_i=0$ (so that $Y_i$ is a censoring time), then we set $\kappa(Y_i,\Delta_i)$ to be the largest time index corresponding to when a critical event happened strictly before $Y_i$ (i.e., we use the largest index in $\{\ell\in\{0,1,\dots,m\} : t_\ell < Y_i\}$). Then the full negative Cox log likelihood can be written as
\begin{equation}
L^{\text{Cox-full}}(\theta,\psi)
\triangleq
  \frac{1}{n}
  \sum_{i=1}^n
    L_i^{\text{Cox-full}}(\theta,\psi),
\label{eq:cox-full}
\end{equation}
where
\begin{equation}
L_i^{\text{Cox-full}}(\theta,\psi)
\triangleq
  -\Delta_i
   [f(X_i;\theta)
    + \psi_{\kappa(Y_i,\Delta_i)} ]
  +
  e^{f(X_i;\theta)}
  \sum_{\ell=1}^{\kappa(Y_i,\Delta_i)}
    (t_{\ell}-t_{\ell-1})e^{\psi_{\ell}}.
\label{eq:cox-full-indiv}
\end{equation}
Then a standard result is as follows.

\begin{proposition}[slight variant of \citealt{breslow1972discussion}]\label{prop:cox-full-vs-partial}
Suppose that the baseline hazard function is piecewise constant as stated in equation~\eqref{eq:hazard-piecewise-constant}. Suppose that we preprocess the data so that for each training point $i\in[n]$ that is censored (i.e., $\Delta_i=0$), we set $Y_i \triangleq t_{\kappa(Y_i,0)}$ (we do not modify the observed times for the uncensored training points). Then the partial Cox loss $L^{\text{Cox}}$ (from equation~\eqref{eq:cox}) is related to the full Cox loss $L^{\text{Cox-full}}$ (from equation~\eqref{eq:cox-full}) by
\[
L^{\text{Cox}}(\theta) = \min_{\psi\in\mathbb{R}^{m}}L^{\text{Cox-full}}(\theta,\psi)
+
\text{constant w.r.t.~}\theta.
\]
Hence, $\arg\min_{\theta\in\Theta} L^{\text{Cox}}(\theta) = \arg\min_{\theta\in\Theta} \{ \min_{\psi\in\mathbb{R}^m}L^{\text{Cox-full}}(\theta,\psi) \}$, where $\Theta$ is the feasible set of model parameters.
\end{proposition}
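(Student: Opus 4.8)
The plan is to minimize the full Cox loss over the nuisance parameters $\psi$ in closed form and then show that the minimized objective coincides with the partial loss up to an additive constant in~$\theta$. First I would multiply through by~$n$ and collect the terms of $\sum_{i=1}^n L_i^{\text{Cox-full}}(\theta,\psi)$ that depend on~$\psi$. The first piece $-\sum_i \Delta_i \psi_{\kappa(Y_i,\Delta_i)}$ contributes $-\sum_{\ell=1}^m d_\ell \psi_\ell$, where $d_\ell \triangleq \sum_{i:\Delta_i=1,\,Y_i=t_\ell}1$ is the number of events at $t_\ell$. For the second piece I would swap the order of summation, writing $\sum_i e^{f(X_i;\theta)}\sum_{\ell=1}^{\kappa(Y_i,\Delta_i)}(t_\ell - t_{\ell-1})e^{\psi_\ell} = \sum_{\ell=1}^m (t_\ell - t_{\ell-1})e^{\psi_\ell}\sum_{i:\kappa(Y_i,\Delta_i)\ge\ell}e^{f(X_i;\theta)}$. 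The crucial bookkeeping step is to use the preprocessing hypothesis (which forces each censored $Y_i$ onto the grid) to verify that $\kappa(Y_i,\Delta_i)\ge\ell \iff Y_i\ge t_\ell$ for every point, so that the inner sum equals the risk-set sum $\Phi_\ell(\theta)\triangleq\sum_{i:\,Y_i\ge t_\ell}e^{f(X_i;\theta)}$ appearing in the partial likelihood.

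After this rearrangement the $\psi$-dependent part separates completely across the coordinates $\psi_1,\dots,\psi_m$: it equals $\sum_\ell\big[-d_\ell\psi_\ell + (t_\ell-t_{\ell-1})\Phi_\ell(\theta)e^{\psi_\ell}\big]$. Each summand $g_\ell(\psi_\ell)$ is strictly convex (an affine term plus a positive multiple of $e^{\psi_\ell}$), so I would minimize each coordinate independently by setting $g_\ell'(\psi_\ell)=0$, obtaining the closed form $\psi_\ell^\star = \log\frac{d_\ell}{(t_\ell-t_{\ell-1})\Phi_\ell(\theta)}$. This relies on $d_\ell>0$, which holds because $t_1<\cdots<t_m$ are by definition the unique times at which an event occurred, so at least one event sits at each grid point; this also guarantees $\Phi_\ell(\theta)>0$, so the logarithm is well-defined. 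Substituting $\psi_\ell^\star$ back gives $g_\ell(\psi_\ell^\star) = d_\ell\log\Phi_\ell(\theta) + \big[d_\ell\log(t_\ell-t_{\ell-1}) - d_\ell\log d_\ell + d_\ell\big]$, where the bracketed quantity is independent of~$\theta$.

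Combining everything, I would obtain $n\min_{\psi}L^{\text{Cox-full}}(\theta,\psi) = -\sum_i \Delta_i f(X_i;\theta) + \sum_\ell d_\ell \log\Phi_\ell(\theta) + (\text{const w.r.t.~}\theta)$. To finish I would expand the partial loss as $nL^{\text{Cox}}(\theta) = -\sum_i\Delta_i f(X_i;\theta) + \sum_i \Delta_i \log\sum_{j:\,Y_j\ge Y_i}e^{f(X_j;\theta)}$ and observe that, grouping the uncensored points by their event time, $\sum_i \Delta_i \log\Phi(Y_i) = \sum_\ell d_\ell\log\Phi_\ell(\theta)$, again using that the preprocessed risk set at an event time $t_\ell$ is exactly $\{j:Y_j\ge t_\ell\}$. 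Hence the two $\theta$-dependent expressions match, and dividing by~$n$ yields $\min_\psi L^{\text{Cox-full}}(\theta,\psi) = L^{\text{Cox}}(\theta) + \text{const}$; the argmin claim then follows since adding a $\theta$-independent constant leaves minimizers unchanged.

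The main obstacle I anticipate is not any deep inequality but the careful index-bookkeeping in the first paragraph: correctly extracting the coefficient of each $e^{\psi_\ell}$ after the Fubini-style swap, and---most delicately---tracking how the preprocessing convention handles censored observations that fall strictly between grid points, as well as ties between censoring and event times, so that the full-loss risk set $\{i:\kappa(Y_i,\Delta_i)\ge\ell\}$ lines up exactly with the partial-likelihood risk set $\{i:Y_i\ge t_\ell\}$. Establishing this correspondence is precisely what the preprocessing hypothesis is for, and it is the step where an error would silently break the claimed identity.
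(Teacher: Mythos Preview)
Your proposal is correct and follows essentially the same route as the paper's proof: re-index the full loss by event times, minimize the separable $\psi$-part coordinatewise to obtain $\psi_\ell^\star=\log\frac{d_\ell}{(t_\ell-t_{\ell-1})\Phi_\ell(\theta)}$, substitute back, and match the result to the partial loss up to a $\theta$-independent constant. The only cosmetic differences are that the paper checks optimality via the second derivative rather than convexity, and that you are more explicit than the paper about why the preprocessing forces $\{i:\kappa(Y_i,\Delta_i)\ge\ell\}=\{i:Y_i\ge t_\ell\}$.
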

While the proof is standard \citep{breslow1972discussion}, to keep the paper relatively self-contained, we provide it in Appendix~\ref{sec:cox-full-details}, where we also provide a little bit of background on how the expression for individual loss $L_{i}^{\text{Cox-full}}(\theta,\psi)$ (from equation~\eqref{eq:cox-full-indiv}) is derived. We separately point out that, as far as we are aware, the full Cox loss in equation~\eqref{eq:cox-full} is typically not used in practice and is instead mainly used for theoretically justifying the standard Cox loss (equation~\eqref{eq:cox}) that actually is extremely commonly used in practice.

An immediate consequence of Proposition~\ref{prop:cox-full-vs-partial} is that we could apply DRO to the loss $L^{\text{Cox-full}}(\theta, \psi)$ (using the individual losses given by $L_i^{\text{Cox-full}}(\theta, \psi)$ in equation~\eqref{eq:cox-full-indiv}), which does not involve coupling across training points. The high-level idea is that whereas $L^{\text{Cox}}(\theta)$ had the coupling issue, by introducing an additional parameter variable $\psi$, we remove the dependence between the training points' contributions.

\paragraph{Numerical optimization} For completeness, we now state how to use DRO with the full Cox loss. We first define
\begin{equation}
L_{\text{DRO-Cox-exact}}(\theta,\psi,\eta)
\triangleq
  C_\alpha
  \sqrt{
    \frac{1}{n}
    \sum_{i=1}^n [L_i^{\text{Cox-full}}(\theta,\psi) - \eta]_+^2
  }
  + \eta.
\end{equation}
Then we alternate between the following two steps until convergence:
\begin{itemize}[itemsep=0pt,topsep=2pt,parsep=0pt]

\item Treating $\theta$ and $\psi$ as fixed, we update $\eta$ by finding the value of $\eta$ that minimizes $L_{\text{DRO-Cox-exact}}(\theta,\psi,\eta)$. As before, this step is done using binary search to find the global minimum since $L_{\text{DRO-Cox-exact}(\theta,\psi,\eta)}$ is convex with respect to $\eta$.
\item Treating $\eta$ as fixed, we update $(\theta,\psi)$ by minimizing $L_{\text{DRO-Cox-exact}}(\theta,\psi,\eta)$ (e.g., using gradient descent).

\end{itemize}
This procedure corresponds to using Algorithm~\ref{alg:dro-cox}, where $L_{\text{indiv}}$ is set to be $L_i^{\text{Cox-full}}(\theta,\psi)$ (equation~\eqref{eq:cox-full-indiv}), and the survival model parameter variable $\theta$ is replaced by $(\theta, \psi)$. Note that we intentionally specified the parameter variable $\psi$ so that it remains unconstrained so that it could be optimized along with $\theta$ using standard gradient descent variants.

\section{Experiments} \label{sec:Experiments}

To see how well our general proposed DRO conversion strategy works in practice (the heuristic approach without guarantees and, separately, our sample splitting DRO approach), we now conduct extensive experiments to evaluate the accuracy and fairness of DRO variants of different survival models compared to the original versions of these models, as well as to versions of these models modified to encourage fairness using existing fairness regularizers. Specifically for the Cox model, we also show how well our exact Cox DRO approach works in practice.

We describe the datasets we use in Section~\ref{sec:datasets}, the experimental setup in Section~\ref{sec:experimental-setup}, the evaluation metrics in Section~\ref{sec:eval-metrics}, and the models evaluated in Section~\ref{sec:models-evaluated}. We then present our experimental results in Section~\ref{sec:experimental-results}. Lastly, we show how to compare across multiple models using a plot inspired by ROC curves in Section~\ref{sec:fairness-accuracy-tradeoff}.

\subsection{Datasets}
\label{sec:datasets}

We use three standard, publicly available survival analysis datasets: %
\begin{itemize}[itemsep=0pt,topsep=2pt,parsep=0pt]

\item The \textbf{FLC} dataset \citep{dispenzieri2012use} is from a study on the relationship between serum free light chain (FLC) and mortality of Olmsted County residents aged 50 or higher. We treat discretized age (age$\leq$65 and age$>$65) and gender (women and men) as sensitive attributes.

\item The \textbf{SUPPORT} dataset \citep{knaus1995support} is from a study at Vanderbilt University on understanding prognoses, preferences, outcomes, and risks of treatment by analyzing survival times of severely ill hospitalized patients. We treat discretized age (age$\leq$65 and age$>$65), race (white and non-white), and gender (women and men) as sensitive attributes.

\item The \textbf{SEER} dataset is on breast cancer patients from the Sureillance, Epidemiology, and End Results (SEER) program of the National Cancer Institute. %
We collected this dataset using the data extraction software from the official SEER program of the National Cancer Institute. We used 11 covariates that also appear in an existing snapshot of the SEER dataset \citep{Teng2019SEER} that only contained 4024 data points. We also treat discretized age (age$\leq$65 and age$>$65) and race (white and non-white) as sensitive attributes. %

\end{itemize} %
These datasets have appeared in existing fair survival analysis research (e.g., \citealt{keya2021equitable,rahman2022fair,zhang2022longitudinal}) although not always with all three of these appearing within the same paper. Basic characteristics of these datasets are reported in Table \ref{tab:overview_datasets}.

\begin{table}[t]\small
\caption{Basic dataset characteristics.}\vspace{.5em}
\label{tab:overview_datasets}
\centering
\begin{tabular}{cccc}
\toprule
& FLC & SUPPORT & SEER \\ \midrule
\# samples & 7,874 & 9,105 & 28,018 \\ %
\# features & 6 (9$^*$) & 14 (19$^*$) & 11 \\ %
Censoring rate & 0.725 & 0.319 & 0.654 \\ %
\makecell{Sensitive  attributes} & age, gender & age, race, gender & age, race
\\ \bottomrule
\end{tabular}
\\[3pt]
$^*$ indicates the number before preprocessing  (preprocessing removes some features)
\end{table}

\subsection{Experimental Setup}
\label{sec:experimental-setup}
For all models, we first use a random 80\%/20\% train/test split to hold out a test set that will be the same across experimental repeats for all datasets. Then we repeat the following basic experiment 10 times: (1) We hold out 20\% of the training data to treat as a validation set, which is used to tune hyperparameters. (2) We then compute evaluation metrics across the same test set. We describe the evaluation metrics and how hyperparameter tuning works shortly. When we report our experimental results, we provide the mean and standard deviation of each metric across the 10 experimental repeats.
More hyperparameter settings can be found in Appendix \ref{sec:hyperparameters-compute-env}.

\subsection{Evaluation Metrics}
\label{sec:eval-metrics}
For accuracy metrics, we use Time-dependent concordance index (C$^{td}$, higher is better) \citep{antolini2005time} and Integrated IPCW Brier Score (IBS, lower is better) \citep{graf1999assessment}.
For fairness metrics, we use the concordance imparity (CI) fairness metric by \citep{zhang2022longitudinal}, Censoring-based individual fairness (F$_{CI}$) \citep{rahman2022fair}, and Censoring-based group fairness (F$_{CG}$) \citep{rahman2022fair}. For these fairness metrics, lower is better. Definitions of these fairness metrics are in Appendix~\ref{sec:fairness-measures}.

Note that the fairness metrics CI and F$_{CG}$ require us to specify groups. For the FLC dataset, we use (discretized) age and, separately, gender (i.e., we first run experiments using only age in evaluating CI and F$_{CG}$; we then re-run experiments using gender instead of age). For the SUPPORT dataset, we separately use gender, age, and race. For the SEER dataset, we separately use race and age.

\subsection{Models Evaluated}
\label{sec:models-evaluated}

Working off our running examples from Section~\ref{sec:survival-models}, we consider Cox models (classical and deep), DeepHit, and SODEN. For each of these, we compare the original model with its DRO variants using our conversion strategy (the heuristic approach and also the sample splitting approach stated in Section~\ref{sec:dro-split}; for Cox models, we also compare with the exact DRO Cox approach, and for SODEN, there is no need to do sample splitting and the heuristic approach is actually exact). We also try versions of the original models modified to encourage fairness using existing fairness regularizers. %

Note that when we use our sample splitting DRO approach, for simplicity, we randomly split the training data so that $\mathcal{D}_1$ and $\mathcal{D}_2$ are (approximately) the same size and, moreover, we stratify the sampling so that $\mathcal{D}_1$ and $\mathcal{D}_2$ have (approximately) the same censoring rates. We include some supplemental experiments that consider deviations of both of these in Appendix~\ref{additional_exp_results} that focus specifically on Cox models.

\paragraph{Cox models}
We separately experiment on the classical \textbf{linear} setting (the log partial hazard function is $f(x;\theta)=\theta^T x$) or the ``deep'' \textbf{nonlinear} setting in which $f$ is a multilayer perceptron (MLP). In the linear case, we denote the heuristic DRO variant as \textsc{dro-cox} and the sample splitting DRO variant as \textsc{dro-cox (split)}. For the nonlinear case, we add the prefix ``Deep'' to these names for clarity.

In terms of baselines, we use the unregularized linear Cox model \citep{cox1972regression} (denoted as ``Cox'' in tables later), whereas the unregularized nonlinear Cox model \citep{katzman2018deepsurv} is denoted as ``DeepSurv''. As baselines, we use regularized versions of either the standard Cox or DeepSurv models, using different fairness regularization terms. When we use individual, group, or intersectional regularization terms by \citet{keya2021equitable} (we discuss these in Appendix~\ref{sec:fairness-measures}), we add the suffix ``$_I$(Keya~et~al.)'', ``$_G$(Keya~et~al.)'', or ``$_{\cap}$(Keya~et~al.)'' respectively to a model name; for example, ``DeepSurv$_G$(Keya~et~al.)'' corresponds to DeepSurv with group fairness regularization by \citet{keya2021equitable}. When we use the individual or group fairness regularization terms that account for censoring information \citep{rahman2022fair}, we instead use the suffix ``$_I$(R\&P)'' or ``$_G$(R\&P)''.\footnote{\citet{rahman2022fair} did not propose an intersectional fairness regularizer and technically did not try regularized versions of Cox models using their fairness definitions. However, it is straightforward to adapt their individual and group fairness definitions as regularization terms for a Cox model, especially as their work is directly modifying definitions by \citet{keya2021equitable}.} Note that group fairness regularization (suffixes ``$_G$(Keya et al.)'' and ``$_G$(R\&P)'') uses the same groups that test set CI and F$_{CG}$ fairness metrics use. As additional baselines, we also use the pseudo value-based approaches proposed by \citet{rahman2022fair}, namely their Fair DeepPseudo and Fair PseudoNAM methods (abbreviated as ``FIDP'' and ``FIPNAM'' respectively; note that these abbreviations are the same as the ones used by \citet{rahman2022fair} and, moreover, following Rahman and Purushotham's paper and implementation, FIDP and FIPNAM specifically use individual fairness regularization).

In terms of hyperparameter tuning, we use the strategy by \citet{keya2021equitable}: the final hyperparameter setting used per dataset and per method is determined based on a preset rule in practice that allows up to a 5\% degradation in the validation set C$^{td}$ from the classical Cox model (for the linear setting) or DeepSurv (for the nonlinear setting) while minimizing the validation set CI fairness metric or F$_{CG}$ fairness metric (for details, see Appendix~\ref{additional_exp_results}).

\paragraph{DeepHit and SODEN}
For DeepHit \citep{tang2022soden}, we denote its heuristic DRO variant as \textsc{dro-deephit} and its sample splitting DRO variant as \textsc{dro-deephit (split)}. For SODEN \citep{tang2022soden}, there is only one DRO variant to consider which we denote as \textsc{dro-soden}.

In terms of baselines, we consider the original DeepHit and SODEN models that do not account for fairness. We further adapt the group-based fairness regularization that accounts for censoring from \citet{rahman2022fair} to each of DeepHit and SODEN separately as additional baselines (\textsc{\tableDeepHitRPGroup} and \tableSODENRPGroup).

The hyperparameter setting used per dataset and per method is also determined based on a preset rule in practice that allows up to a 5\% degradation in the validation set C$^{td}$ from the original model (that does not encourage fairness) while minimizing the validation set CI fairness metric or F$_{CG}$ fairness metric. Hyperparameter grids for all methods are in Appendix~\ref{sec:hyperparameters-compute-env}, where we also provide information on the compute environment that we used.

\subsection{Experimental Results}\label{sec:experimental-results}

\paragraph{Cox models}

\begin{table*}[t!]
\vspace{-1em}
\caption[]{\small Cox model test set accuracy and fairness metrics on the FLC (age) dataset. We report mean and standard deviation (in parentheses) across 10 experimental repeats (each repeat holds out a different 20\% of the training data as a validation set for hyperparameter tuning; the test set is the same across experimental repeats). Higher is better for metrics with ``$\uparrow$'', while lower is better for metrics with ``$\downarrow$''. The best results are shown in bold for linear and, separately, nonlinear models. When one of our methods outperforms all baselines (in linear and, separately, nonlinear models), we highlight the corresponding cell in \mybox[fill=green!30]{green}. Evaluation metrics are reported to 4 decimal places unless the number is exactly equal to~0 (in which case we just state 0 without using a decimal point) or smaller than $10^{-4}$ (in which case we report the number in scientific notation). Note that achieving F$_{CI}$ or F$_{CG}$ scores that are exactly~0 is due to the manner in which these fairness metrics are defined.\footnotemark} \vspace{-1em}
\centering
\setlength\tabcolsep{0.1pt}
\renewcommand{\arraystretch}{0.5}
{\tiny %
\renewcommand{\belowrulesep}{0.1pt}
\renewcommand{\aboverulesep}{0.1pt}
\begin{tabular}{ccccccc|ccccc}
\toprule
\multirow{3}{*}{} & \multirow{3}{*}{Methods} & \multicolumn{5}{c|}{CI-based Tuning}                                                    & \multicolumn{5}{c}{F$_{CG}$-based Tuning}                             \\ \cmidrule{3-12}
&                   & \multicolumn{2}{c|}{Accuracy Metrics}  & \multicolumn{3}{c|}{Fairness Metrics}  & \multicolumn{2}{c|}{Accuracy Metrics}  & \multicolumn{3}{c}{Fairness Metrics} \\ \cmidrule{3-12}
&                   & \multicolumn{1}{c}{C$^{td}$$\uparrow$}  & \multicolumn{1}{c|}{IBS$\downarrow$}  &\multicolumn{1}{c}{CI(\%)$\downarrow$}& \multicolumn{1}{c}{F$_{CI}$$\downarrow$}& F$_{CG}$$\downarrow$ & \multicolumn{1}{c}{C$^{td}$$\uparrow$}  & \multicolumn{1}{c|}{IBS$\downarrow$}  &\multicolumn{1}{c}{CI(\%)$\downarrow$}& \multicolumn{1}{c}{F$_{CI}$$\downarrow$}& F$_{CG}$$\downarrow$                 \\ \midrule

\multirow{8}{*}{\rotatebox{90}{Linear \ \ \ \ \ \ \ \ \ \ \ \ \ \ }} &        \tableCox            & \multicolumn{1}{c}{{\textbf{\makecell{0.8032 \\(0.0002)}}}}  & \multicolumn{1}{c|}{\makecell{0.1739 \\(0.0004)}}  & \multicolumn{1}{c}{\makecell{0.5350 \\(0.0413)}}& \multicolumn{1}{c}{\makecell{0.0249 \\(0.0002)}}& \makecell{0.0044 \\(2.8919e-05)} & \multicolumn{1}{c}{\textbf{{\makecell{0.8032 \\(0.0002)}}}}  & \multicolumn{1}{c|}{\makecell{0.1739 \\(0.0004)}}  & \multicolumn{1}{c}{\makecell{0.5350 \\(0.0413)}}& \multicolumn{1}{c}{\makecell{0.0249 \\(0.0002)}}& \makecell{0.0044 \\(2.8919e-05)}              \\ \cmidrule{3-12}
&          \tableCoxKeyaInd          & \multicolumn{1}{c}{{\makecell{0.7937 \\(0.0068)}}}  & \multicolumn{1}{c|}{\makecell{0.1414 \\(0.0073)}}  & \multicolumn{1}{c}{\makecell{0.5400 \\(0.3270)}}& \multicolumn{1}{c}{\makecell{0.0129 \\(0.0028)}}& \makecell{0.0021 \\(0.0005)} & \multicolumn{1}{c}{{\makecell{0.7923 \\(0.0074)}}}  & \multicolumn{1}{c|}{\makecell{0.1334 \\(0.0034)}}  & \multicolumn{1}{c}{\makecell{0.4010 \\(0.2631)}}& \multicolumn{1}{c}{\makecell{0.0068 \\(0.0006)}}& \makecell{0.0010 \\(0.0001)}                \\ \cmidrule{3-12}
&          \tableCoxRPInd          & \multicolumn{1}{c}{{{\makecell{0.8029 \\(0.0005)}}}}  & \multicolumn{1}{c|}{\makecell{0.1735 \\(0.0023)}}  & \multicolumn{1}{c}{\makecell{0.4660 \\(0.1551)}}& \multicolumn{1}{c}{\makecell{0.0247 \\(0.0009)}}& \makecell{0.0043 \\(0.0002)} & \multicolumn{1}{c}{{\makecell{0.8020 \\(0.0007)}}}  & \multicolumn{1}{c|}{\makecell{0.1700 \\(0.0034)}}  & \multicolumn{1}{c}{\makecell{0.2530 \\(0.2658)}}& \multicolumn{1}{c}{\makecell{0.0233 \\(0.0014)}}& \makecell{0.0040 \\(0.0003)}                \\ \cmidrule{3-12}
&        \tableCoxKeyaGroup            & \multicolumn{1}{c}{{\makecell{0.7974 \\(0.0117)}}}  & \multicolumn{1}{c|}{\makecell{0.1492 \\(0.0077)}}  & \multicolumn{1}{c}{\makecell{0.3410 \\(0.3011)}}& \multicolumn{1}{c}{\makecell{0.0123 \\(0.0043)}}& \makecell{0.0024 \\(0.0007)} & \multicolumn{1}{c}{{\makecell{0.7862 \\(0.0133)}}}  & \multicolumn{1}{c|}{\makecell{0.1413 \\(0.0035)}}  & \multicolumn{1}{c}{\makecell{0.5360 \\(0.3888)}}& \multicolumn{1}{c}{\makecell{0.0079 \\(0.0029)}}& \makecell{0.0016 \\(0.0004)}                \\ \cmidrule{3-12}
&        \tableCoxRPGroup            & \multicolumn{1}{c}{{\makecell{0.8029 \\(0.0005)}}}  & \multicolumn{1}{c|}{\makecell{0.1735 \\(0.0023)}}  & \multicolumn{1}{c}{\makecell{0.4660 \\(0.1551)}}& \multicolumn{1}{c}{\makecell{0.0247 \\(0.0009)}}& \makecell{0.0043 \\(0.0002)} & \multicolumn{1}{c}{{\makecell{0.8015 \\(0.0003)}}}  & \multicolumn{1}{c|}{\makecell{0.1673 \\(0.0004)}}  & \multicolumn{1}{c}{\textbf{\makecell{0.0390 \\(0.0243)}}}& \multicolumn{1}{c}{\makecell{0.0222 \\(0.0002)}}& \makecell{0.0038 \\(3.3934e-05)}                \\ \cmidrule{3-12}
&        \tableCoxKeyaInt            & \multicolumn{1}{c}{{\makecell{0.7870 \\(0.0029)}}}  & \multicolumn{1}{c|}{{\makecell{0.1400 \\(0.0005)}}}  & \multicolumn{1}{c}{\makecell{1.0790 \\(0.1098)}}& \multicolumn{1}{c}{{\makecell{0.0073 \\(0.0002)}}}& \makecell{0.0016 \\(0.0001)} & \multicolumn{1}{c}{{\makecell{0.7875 \\(0.0021)}}}  & \multicolumn{1}{c|}{\makecell{0.1402 \\(0.0004)}}  & \multicolumn{1}{c}{\makecell{1.1190 \\(0.1073)}}& \multicolumn{1}{c}{\makecell{0.0073 \\(0.0002)}}& \makecell{0.0016 \\(0.0001)}                \\ \cmidrule{2-12} 
&       \tableDROCox             & \multicolumn{1}{c}{{\makecell{0.7959 \\(0.0036)}}}  & \multicolumn{1}{c|}{\makecell{0.1408 \\(0.0050)}}  & \multicolumn{1}{c}{\1\textbf{\makecell{0.0510 \\(0.0401)}}}& \multicolumn{1}{c}{\makecell{0.0078 \\(0.0051)}}& \1{\makecell{0.0012 \\(0.0008)} } & \multicolumn{1}{c}{{\makecell{0.7958 \\(0.0049)}}}  & \multicolumn{1}{c|}{\1\textbf{\makecell{0.1330 \\(0.0002)}}}  & \multicolumn{1}{c}{{\makecell{0.1620 \\(0.1132)}}}& \multicolumn{1}{c}{\1\textbf{\makecell{~~~~~0~~~~~\\~~~~~(0)~~~~~}}}& \1\textbf{\makecell{~~~~~0~~~~~\\~~~~~(0)~~~~~}}              \\ \cmidrule{3-12}
&       \tableDROCoxSplit            & \multicolumn{1}{c}{{\makecell{0.7964 \\(0.0045))}}}  & \multicolumn{1}{c|}{\1\textbf{\makecell{0.1389 \\(0.0008)}}}  & \multicolumn{1}{c}{\1\makecell{0.2350 \\(0.1277)}}& \multicolumn{1}{c}{\1\textbf{\makecell{~~~~~0~~~~~\\~~~~~(0)~~~~~}}}& \1\textbf{\makecell{~~~~~0~~~~~\\~~~~~(0)~~~~~}} & \multicolumn{1}{c}{{\makecell{0.7964 \\(0.0045))}}}  & \multicolumn{1}{c|}{\1\textbf{\makecell{0.1389 \\(0.0008)}}}  & \multicolumn{1}{c}{\makecell{0.2350 \\(0.1277)}}& \multicolumn{1}{c}{\1\textbf{\makecell{~~~~~0~~~~~\\~~~~~(0)~~~~~}}}& \1\textbf{\makecell{~~~~~0~~~~~\\~~~~~(0)~~~~~}}               \\ \cmidrule{3-12}
&       \tableExactDROCox            & \multicolumn{1}{c}{{\makecell{0.7821 \\(0.0142)}}}  & \multicolumn{1}{c|}{{\makecell{0.3916 \\(0.0487)}}}  & \multicolumn{1}{c}{\makecell{0.9838 \\(0.4567)}}& \multicolumn{1}{c}{{\makecell{0.0094 \\(0.0016)}}}& {\makecell{0.0019 \\(0.0003)} } & \multicolumn{1}{c}{{\makecell{0.7821 \\(0.0142)}}}  & \multicolumn{1}{c|}{{\makecell{0.3916 \\(0.0487)}}}  & \multicolumn{1}{c}{\makecell{0.9838 \\(0.4567)}}& \multicolumn{1}{c}{{\makecell{0.0094 \\(0.0016)}}}& {\makecell{0.0019 \\(0.0003)} }               \\ \cmidrule{1-12}
\multirow{8}{*}{\rotatebox{90}{Nonlinear \ \ \ \ \ \ \ \ \ \ \ \ \ \ \  }} &          DeepSurv         & \multicolumn{1}{c}{{{\makecell{0.8070 \\(0.0014)}}}}  & \multicolumn{1}{c|}{\makecell{0.1767 \\(0.0018)}} & \multicolumn{1}{c}{\makecell{0.2940 \\(0.2147)}}& \multicolumn{1}{c}{\makecell{0.0259 \\(0.0004)}}& \makecell{0.0050 \\(0.0003)} & \multicolumn{1}{c}{{\makecell{0.8070 \\(0.0014)}}}  & \multicolumn{1}{c|}{\makecell{0.1767 \\(0.0018)}} & \multicolumn{1}{c}{\makecell{0.2940 \\(0.2147)}}& \multicolumn{1}{c}{\makecell{0.0259 \\(0.0004)}}& \makecell{0.0050 \\(0.0003)}                \\  \cmidrule{3-12}
&          \tableDeepSurvKeyaInd          & \multicolumn{1}{c}{{\makecell{0.7884 \\(0.0070)}}} & \multicolumn{1}{c|}{\makecell{0.1441 \\(0.0130)}} & \multicolumn{1}{c}{\makecell{0.3700 \\(0.2523)}}& \multicolumn{1}{c}{\makecell{0.0127 \\(0.0080)}}& \makecell{0.0025 \\(0.0017)} & \multicolumn{1}{c}{{\makecell{0.7994 \\(0.0069)}}}  & \multicolumn{1}{c|}{\makecell{0.1672 \\(0.0051)}}  & \multicolumn{1}{c}{\makecell{0.6310 \\(0.5316)}}& \multicolumn{1}{c}{\makecell{0.0245 \\(0.0014)}}& \makecell{0.0050 \\(0.0005)}               \\  \cmidrule{3-12}
&          \tableDeepSurvRPInd          & \multicolumn{1}{c}{{\makecell{0.8070 \\(0.0033)}}}  & \multicolumn{1}{c|}{\makecell{0.1736 \\(0.0086)}}  & \multicolumn{1}{c}{\makecell{0.2300 \\(0.1471)}}& \multicolumn{1}{c}{{\makecell{0.0246 \\(0.0040)}}}& {\makecell{0.0047 \\(0.0008)}} & \multicolumn{1}{c}{{\makecell{0.8086 \\(0.0015)}}}  & \multicolumn{1}{c|}{\makecell{0.1766 \\(0.0024)}}  & \multicolumn{1}{c}{{\makecell{0.1560 \\(0.0956)}}}& \multicolumn{1}{c}{\makecell{0.0258 \\(0.0011)}}& \makecell{0.0050 \\(0.0002)}               \\  \cmidrule{3-12}
&        \tableDeepSurvKeyaGroup            & \multicolumn{1}{c}{{\makecell{0.7990 \\(0.0120)}}} & \multicolumn{1}{c|}{\makecell{0.4190 \\(0.2487)}} & \multicolumn{1}{c}{\makecell{0.2490 \\(0.1646)}}& \multicolumn{1}{c}{\makecell{0.0071 \\(0.0069)}}& \makecell{0.0015 \\(0.0013)} & \multicolumn{1}{c}{{\makecell{0.8061 \\(0.0020)}}}  & \multicolumn{1}{c|}{\makecell{0.4713 \\(0.2142)}}  & \multicolumn{1}{c}{\makecell{0.2700 \\(0.2260)}}& \multicolumn{1}{c}{{\makecell{0.0070 \\(0.0081)}}}& {\makecell{0.0014 \\(0.0016)}}             \\ \cmidrule{3-12}
&        \tableDeepSurvRPGroup            & \multicolumn{1}{c}{{{\makecell{0.8069 \\(0.0033)}}}} & \multicolumn{1}{c|}{\makecell{0.1735 \\(0.0086)}} & \multicolumn{1}{c}{\makecell{0.2580 \\(0.1661)}}& \multicolumn{1}{c}{\makecell{0.0245 \\(0.0040)}}& \makecell{0.0047 \\(0.0008)} & \multicolumn{1}{c}{\textbf{{\makecell{0.8086 \\(0.0015)}}}}  & \multicolumn{1}{c|}{\makecell{0.1766 \\(0.0024)}}  & \multicolumn{1}{c}{\textbf{\makecell{0.1560 \\(0.0956)}}}& \multicolumn{1}{c}{\makecell{0.0258 \\(0.0011)}}& \makecell{0.0050 \\(0.0002)}              \\ 
\cmidrule{3-12}
&          \tableDeepSurvKeyaInt          & \multicolumn{1}{c}{{\makecell{0.7751 \\(0.0018)}}} & \multicolumn{1}{c|}{{\makecell{0.1357 \\(0.0002)}}} & \multicolumn{1}{c}{\makecell{0.4300 \\(0.1091)}}& \multicolumn{1}{c}{\makecell{0.0037 \\(0.0001)}}& \makecell{0.0008 \\(1.3494e-05)} & \multicolumn{1}{c}{{\makecell{0.7751 \\(0.0018)}}}  & \multicolumn{1}{c|}{\makecell{0.1357 \\(0.0002)}}  & \multicolumn{1}{c}{\makecell{0.4300 \\(0.1091)}}& \multicolumn{1}{c}{\makecell{0.0037 \\(0.0001)}}& \makecell{0.0008 \\(1.3494e-05)}                \\
\cmidrule{3-12}
&        \tableDeepPseudoRPInd            & \multicolumn{1}{c}{{{\textbf{\makecell{0.8077 \\(0.0022)}}}}} & \multicolumn{1}{c|}{\textbf{\makecell{0.1228 \\(0.0019)}}} & \multicolumn{1}{c}{\makecell{0.2530 \\(0.0974)}}& \multicolumn{1}{c}{\makecell{0.0239 \\(0.0018)}}& \makecell{0.0048 \\(0.0004)} & \multicolumn{1}{c}{{{\makecell{0.8077 \\(0.0022)}}}} & \multicolumn{1}{c|}{\textbf{\makecell{0.1228 \\(0.0019)}}} & \multicolumn{1}{c}{\makecell{0.2530 \\(0.0974)}}& \multicolumn{1}{c}{\makecell{0.0239 \\(0.0018)}}& \makecell{0.0048 \\(0.0004)}               \\ \cmidrule{3-12}
&        \tablePseudoNAMRPInd            & \multicolumn{1}{c}{{{\makecell{0.7829 \\(0.0037)}}}} & \multicolumn{1}{c|}{\makecell{0.1810 \\(0.0050)}} & \multicolumn{1}{c}{\makecell{0.3660 \\(0.0508)}}& \multicolumn{1}{c}{\makecell{0.0251 \\(0.0006)}}& \makecell{0.0052 \\(0.0004)} & \multicolumn{1}{c}{{{\makecell{0.7829 \\(0.0037)}}}} & \multicolumn{1}{c|}{\makecell{0.1810 \\(0.0050)}} & \multicolumn{1}{c}{\makecell{0.3660 \\(0.0508)}}& \multicolumn{1}{c}{\makecell{0.0251 \\(0.0006)}}& \makecell{0.0052 \\(0.0004)}              \\
\cmidrule{2-12} 
&          \tableDeepDROCox          & \multicolumn{1}{c}{{\makecell{0.8068 \\(0.0024)}}} & \multicolumn{1}{c|}{\makecell{0.1595 \\(0.0135)}} & \multicolumn{1}{c}{\1\textbf{\makecell{0.0730 \\(0.0822)}}}& \multicolumn{1}{c}{\makecell{0.0189 \\(0.0056)}}& \makecell{0.0036 \\(0.0013)} & \multicolumn{1}{c}{{\makecell{0.7781 \\(0.0091)}}}  & \multicolumn{1}{c|}{{\makecell{0.1331 \\(0.0002)}}}  & \multicolumn{1}{c}{\makecell{2.4300 \\(0.3462)}}& \multicolumn{1}{c}{\1\makecell{0.0001 \\(3.1257e-05)}}& \1\makecell{9.9660e-06 \\(3.5999e-06)}                \\ \cmidrule{3-12}
&       \tableDeepDROCoxSplit            & \multicolumn{1}{c}{{\makecell{0.7784 \\(0.0092)}}} & \multicolumn{1}{c|}{\makecell{0.1647 \\(0.0037)}} & \multicolumn{1}{c}{\makecell{2.3210 \\(0.3590)}}& \multicolumn{1}{c}{\1\textbf{\makecell{~~~~~0~~~~~\\~~~~~(0)~~~~~}}}& \1\textbf{\makecell{~~~~~0~~~~~\\~~~~~(0)~~~~~}} & \multicolumn{1}{c}{{\makecell{0.7784 \\(0.0092)}}}  & \multicolumn{1}{c|}{\makecell{0.1647 \\(0.0037)}}  & \multicolumn{1}{c}{\makecell{2.3210 \\(0.3590)}}& \multicolumn{1}{c}{\1\textbf{\makecell{~~~~~0~~~~~\\~~~~~(0)~~~~~}}}& \1\textbf{\makecell{~~~~~0~~~~~\\~~~~~(0)~~~~~}} \\ \cmidrule{3-12}
&       \tableDeepExactDROCox            & \multicolumn{1}{c}{{\makecell{0.8048 \\(0.0011)}}}  & \multicolumn{1}{c|}{{\makecell{0.1363 \\(0.0016)}}}  & \multicolumn{1}{c}{\makecell{0.5050 \\(0.2489)}}& \multicolumn{1}{c}{{\makecell{0.0197 \\(0.0005)}}}& {\makecell{0.0038 \\(0.0001)} } & \multicolumn{1}{c}{{\makecell{0.8048 \\(0.0011)}}}  & \multicolumn{1}{c|}{{\makecell{0.1363 \\(0.0016)}}}  & \multicolumn{1}{c}{\makecell{0.5050 \\(0.2489)}}& \multicolumn{1}{c}{{\makecell{0.0197 \\(0.0005)}}}& {\makecell{0.0038 \\(0.0001)} }              
\\ \bottomrule
\end{tabular}
}
\label{tab:general_performance_CI}
\end{table*}

\begin{table*}[pth!]
\caption{\small Cox model test set scores on the SUPPORT (gender) dataset, in the same format as Table~\ref{tab:general_performance_CI}.}
\centering
\setlength\tabcolsep{0.5pt}
\renewcommand{\arraystretch}{0.5}
{\tiny %
\renewcommand{\belowrulesep}{0.1pt}
\renewcommand{\aboverulesep}{0.1pt}
\begin{tabular}{ccccccc|ccccc}
\toprule
\multirow{3}{*}{} & \multirow{3}{*}{Methods} & \multicolumn{5}{c|}{CI-based Tuning}                                                    & \multicolumn{5}{c}{F$_{CG}$-based Tuning}                             \\ \cmidrule{3-12}
&                   & \multicolumn{2}{c|}{Accuracy Metrics}  & \multicolumn{3}{c|}{Fairness Metrics}  & \multicolumn{2}{c|}{Accuracy Metrics}  & \multicolumn{3}{c}{Fairness Metrics} \\ \cmidrule{3-12}
&                   & \multicolumn{1}{c}{C$^{td}$$\uparrow$}  & \multicolumn{1}{c|}{IBS$\downarrow$}  &\multicolumn{1}{c}{CI(\%)$\downarrow$}& \multicolumn{1}{c}{F$_{CI}$$\downarrow$}& F$_{CG}$$\downarrow$ & \multicolumn{1}{c}{C$^{td}$$\uparrow$}  & \multicolumn{1}{c|}{IBS$\downarrow$}  &\multicolumn{1}{c}{CI(\%)$\downarrow$}& \multicolumn{1}{c}{F$_{CI}$$\downarrow$}& F$_{CG}$$\downarrow$                 \\ \midrule
\multirow{8}{*}{\rotatebox{90}{Linear \ \ \ \ \ \ \ \ \ \ \ \ \ \ }} &        \tableCox            & \multicolumn{1}{c}{{\makecell{0.6025 \\(0.0005)}}} & \multicolumn{1}{c|}{\makecell{0.2304 \\(0.0015)}} & \multicolumn{1}{c}{\makecell{1.4300 \\(0.0654)}} & \multicolumn{1}{c}{\makecell{0.0054 \\(0.0002)}} & \makecell{0.0028 \\(0.0001)}& \multicolumn{1}{c}{{\textbf{\makecell{0.6025 \\(0.0005)}}}} & \multicolumn{1}{c|}{\makecell{0.2304 \\(0.0015)}} & \multicolumn{1}{c}{\makecell{1.4300 \\(0.0654)}} & \multicolumn{1}{c}{\makecell{0.0054 \\(0.0002)}} & \makecell{0.0028 \\(0.0001)}              \\  \cmidrule{3-12}
&          \tableCoxKeyaInd          & \multicolumn{1}{c}{{\makecell{0.5881 \\(0.0114)}}} & \multicolumn{1}{c|}{\textbf{\makecell{0.2157 \\(0.0060)}}} & \multicolumn{1}{c}{\makecell{0.9650 \\(0.6126)}}& \multicolumn{1}{c}{\makecell{0.0004 \\(0.0004)}} & \makecell{0.0002 \\(0.0002)}& \multicolumn{1}{c}{{\makecell{0.5829 \\(0.0099)}}}  & \multicolumn{1}{c|}{\textbf{\makecell{0.2147 \\(0.0063)}}}  & \multicolumn{1}{c}{\makecell{1.1330 \\(0.6846)}}& \multicolumn{1}{c}{\textbf{\makecell{0\\(0)}}}& \textbf{\makecell{0\\(0)}}                  \\ \cmidrule{3-12}
&          \tableCoxRPInd          & \multicolumn{1}{c}{{\makecell{0.6018 \\(0.0016)}}} & \multicolumn{1}{c|}{\makecell{0.2309 \\(0.0011)}} & \makecell{1.4390 \\(0.1077)}& \multicolumn{1}{c}{\makecell{0.0056 \\(0.0002)}} & \makecell{0.0029 \\(0.0001)}& \multicolumn{1}{c}{{\makecell{0.6022 \\(0.0005)}}}  & \multicolumn{1}{c|}{\makecell{0.2307 \\(0.0012)}}  & \multicolumn{1}{c}{\makecell{1.4060 \\(0.0932)}}& \multicolumn{1}{c}{\makecell{0.0055 \\(0.0002)}}& \makecell{0.0028 \\(0.0001)}                    \\ \cmidrule{3-12}
&        \tableCoxKeyaGroup            & \multicolumn{1}{c}{{\textbf{\makecell{0.6030 \\(0.0007)}}}} & \multicolumn{1}{c|}{\makecell{0.2297 \\(0.0018)}} & \makecell{1.4190 \\(0.0632)}& \multicolumn{1}{c}{\makecell{0.0051 \\(0.0003)}} & \makecell{0.0026 \\(0.0001)}& \multicolumn{1}{c}{{\makecell{0.6024 \\(0.0006)}}}  & \multicolumn{1}{c|}{\makecell{0.2284 \\(0.0009)}}  & \multicolumn{1}{c}{\makecell{1.4360 \\(0.0674)}}& \multicolumn{1}{c}{\makecell{0.0047 \\(0.0001)}}& \makecell{0.0025 \\(4.2335e-05)}                 \\ \cmidrule{3-12}
&        \tableCoxRPGroup            & \multicolumn{1}{c}{{\makecell{0.6018 \\(0.0016)}}} & \multicolumn{1}{c|}{\makecell{0.2309 \\(0.0011)}} & \makecell{1.4390\\ (0.1077)}& \multicolumn{1}{c}{\makecell{0.0056 \\(0.0002)}} & \makecell{0.0029 \\(0.0001)}& \multicolumn{1}{c}{{\makecell{0.6024 \\(0.0007)}}}  & \multicolumn{1}{c|}{\makecell{0.2307 \\(0.0012)}}  & \multicolumn{1}{c}{\makecell{1.4010 \\(0.0931)}}& \multicolumn{1}{c}{\makecell{0.0055 \\(0.0002)}}& \makecell{0.0028 \\(0.0001)}                 \\ \cmidrule{3-12}
&        \tableCoxKeyaInt            & \multicolumn{1}{c}{{\makecell{0.5715 \\(0.0062)}}} & \multicolumn{1}{c|}{\makecell{0.2275 \\(0.0016)}} & \makecell{1.1270 \\(0.2457)}& \multicolumn{1}{c}{\makecell{0.0028 \\(0.0003)}} & \makecell{0.0015 \\(0.0002)}& \multicolumn{1}{c}{{\makecell{0.5631 \\(0.0070)}}}  & \multicolumn{1}{c|}{\makecell{0.2264 \\(0.0017)}}  & \multicolumn{1}{c}{\makecell{0.8650 \\(0.2958)}}& \multicolumn{1}{c}{\makecell{0.0024 \\(0.0003)}}& \makecell{0.0012 \\(0.0002)}               \\ \cmidrule{2-12} 
&       \tableDROCox          & \multicolumn{1}{c}{{\makecell{0.5734 \\(0.0019)}}} & \multicolumn{1}{c|}{\makecell{0.2210 \\(0.0010)}} & \1\makecell{0.4350 \\(0.0674)} & \multicolumn{1}{c}{\1\makecell{0.0002 \\(2.3882e-05)}} & \1\makecell{0.0001 \\(1.3621e-05)}& \multicolumn{1}{c}{{\makecell{0.5641 \\(0.0105)}}}  & \multicolumn{1}{c|}{\makecell{0.2211 \\(0.0010)}}  & \multicolumn{1}{c}{\1\textbf{\makecell{0.3840 \\(0.1830)}}}& \multicolumn{1}{c}{\makecell{0.0001 \\(0.0001)}}& \makecell{0.0001\\(4.8271e-05)}           \\ \cmidrule{3-12}
&       \tableDROCoxSplit          & \multicolumn{1}{c}{{\makecell{0.5701 \\(0.0056)}}} & \multicolumn{1}{c|}{\makecell{0.4569 \\(0.1314)}} & \1\textbf{\makecell{0.3860 \\(0.1163)}} & \multicolumn{1}{c}{\1{\textbf{\makecell{1.1922e-07 \\(2.6445e-07)}}}} & \1{\textbf{\makecell{9.6779e-08 \\(2.1315e-07)}}}& \multicolumn{1}{c}{{\makecell{0.5701 \\(0.0056)}}}  & \multicolumn{1}{c|}{\makecell{0.4570 \\(0.1314)}}  & \multicolumn{1}{c}{\1{\makecell{0.3860 \\(0.1163)}}}& \multicolumn{1}{c}{\makecell{1.1922e-07 \\(2.6445e-07)}}& \makecell{9.6779e-08 \\(2.1315e-07)}                \\ \cmidrule{3-12} 
&       \tableExactDROCox            & \multicolumn{1}{c}{{\makecell{0.5884 \\(0.0063)}}}  & \multicolumn{1}{c|}{{\makecell{0.3122 \\(0.0068)}}}  & \multicolumn{1}{c}{\1\makecell{0.8580 \\(0.2434)}}& \multicolumn{1}{c}{\1{\makecell{8.1822e-06 \\(8.1542e-06)}}}& \1{\makecell{5.2437e-06 \\(5.0535e-06)} } & \multicolumn{1}{c}{{\makecell{0.5884 \\(0.0063)}}}  & \multicolumn{1}{c|}{{\makecell{0.3122 \\(0.0068)}}}  & \multicolumn{1}{c}{\1\makecell{0.8580 \\(0.2434)}}& \multicolumn{1}{c}{{\makecell{8.1822e-06 \\(8.1542e-06)}}}& {\makecell{5.2437e-06 \\(5.0535e-06)} } \\ \cmidrule{1-12}
\multirow{8}{*}{\rotatebox{90}{Nonlinear \ \ \ \ \ \ \ \ \ \ \ \ \ \ \   }} &          DeepSurv         & \multicolumn{1}{c}{{\makecell{0.6108 \\(0.0029)}}} & \multicolumn{1}{c|}{\makecell{0.2417 \\(0.0016)}} & \makecell{1.6220 \\(0.3303)}& \multicolumn{1}{c}{\makecell{0.0090 \\(0.0002)}} & \makecell{0.0046 \\(0.0001)}& \multicolumn{1}{c}{{\makecell{0.6108 \\(0.0029)}}} & \multicolumn{1}{c|}{\makecell{0.2417 \\(0.0016)}} & \makecell{1.6220 \\(0.3303)}& \multicolumn{1}{c}{\makecell{0.0090 \\(0.0002)}} & \makecell{0.0046 \\(0.0001)}             \\  \cmidrule{3-12}
&          \tableDeepSurvKeyaInd          & \multicolumn{1}{c}{{\makecell{0.5984 \\(0.0124)}}} & \multicolumn{1}{c|}{\makecell{0.2376 \\(0.0182)}} & \makecell{1.3280 \\(0.7670)}& \multicolumn{1}{c}{{\makecell{0.0061 \\(0.0036)}}} & {\makecell{0.0031 \\(0.0019)}}& \multicolumn{1}{c}{{\makecell{0.6031 \\(0.0059)}}}  & \multicolumn{1}{c|}{\makecell{0.2459 \\(0.0102)}}  & \multicolumn{1}{c}{\textbf{\makecell{1.1590 \\(0.8626)}}}& \multicolumn{1}{c}{{\makecell{0.0090 \\(0.0007)}}}& {\makecell{0.0046 \\(0.0004)}}                \\  \cmidrule{3-12}
&          \tableDeepSurvRPInd          & \multicolumn{1}{c}{{\makecell{0.6100 \\(0.0070)}}} & \multicolumn{1}{c|}{\makecell{0.2383 \\(0.0075)}} & \makecell{1.6100 \\(0.3374)}& \multicolumn{1}{c}{\makecell{0.0080 \\(0.0023)}} & \makecell{0.0041 \\(0.0012)}& \multicolumn{1}{c}{{\textbf{\makecell{0.6115 \\(0.0051)}}}}  & \multicolumn{1}{c|}{\makecell{0.2444 \\(0.0036)}}  & \multicolumn{1}{c}{\makecell{1.5410 \\(0.4066)}}& \multicolumn{1}{c}{{\makecell{0.0097 \\(0.0009)}}}& {\makecell{0.0050 \\(0.0004)} }              \\  \cmidrule{3-12}
&        \tableDeepSurvKeyaGroup           & \multicolumn{1}{c}{{\makecell{0.5982 \\(0.0109)}}} & \multicolumn{1}{c|}{\makecell{0.2436 \\(0.0121)}} & \makecell{1.6540 \\(0.3892)}& \multicolumn{1}{c}{\makecell{0.0090 \\(0.0036)}} & \makecell{0.0046 \\(0.0019)}& \multicolumn{1}{c}{{\makecell{0.6034 \\(0.0037)}}}  & \multicolumn{1}{c|}{\makecell{0.2499 \\(0.0024)}}  & \multicolumn{1}{c}{\makecell{1.2390 \\(0.4314)}}& \multicolumn{1}{c}{\makecell{0.0111 \\(0.0003)}}& \makecell{0.0057 \\(0.0001)}              \\ \cmidrule{3-12}
&        \tableDeepSurvRPGroup            & \multicolumn{1}{c}{{\textbf{\makecell{0.6105 \\(0.0055)}}}} & \multicolumn{1}{c|}{\makecell{0.2408 \\(0.0067)}} & \makecell{1.5410 \\(0.3661)}& \multicolumn{1}{c}{\makecell{0.0087 \\(0.0019)}} & \makecell{0.0045 \\(0.0010)}& \multicolumn{1}{c}{{\makecell{0.6115 \\(0.0051)}}}  & \multicolumn{1}{c|}{\makecell{0.2444 \\(0.0036)}}  & \multicolumn{1}{c}{\makecell{1.5410 \\(0.4066)}}& \multicolumn{1}{c}{\makecell{0.0097 \\(0.0009)}}& \makecell{0.0050 \\(0.0004)}             \\ 
\cmidrule{3-12}
&          \tableDeepSurvKeyaInt         & \multicolumn{1}{c}{{\makecell{0.6015 \\(0.0069)}}} & \multicolumn{1}{c|}{\makecell{0.2378 \\(0.0053)}} & \makecell{1.4110 \\(0.2129)}& \multicolumn{1}{c}{\makecell{0.0066 \\(0.0017)}} & \makecell{0.0034 \\(0.0009)} & \multicolumn{1}{c}{{\makecell{0.5912 \\(0.0012)}}}  & \multicolumn{1}{c|}{\makecell{0.2309 \\(0.0011)}}  & \multicolumn{1}{c}{\makecell{1.5390 \\(0.1303)}}& \multicolumn{1}{c}{\makecell{0.0043 \\(0.0002)}}& \makecell{0.0023 \\(0.0001)}               \\
\cmidrule{3-12}
&        \tableDeepPseudoRPInd            & \multicolumn{1}{c}{{{\makecell{0.5811 \\(0.0090)}}}} & \multicolumn{1}{c|}{\makecell{0.2356 \\(0.0023)}} & \multicolumn{1}{c}{\makecell{1.2670 \\(0.4179)}}& \multicolumn{1}{c}{\makecell{0.0059 \\(0.0005)}}& \makecell{0.0029 \\(0.0003)} & \multicolumn{1}{c}{{{\makecell{0.5811 \\(0.0090)}}}} & \multicolumn{1}{c|}{\makecell{0.2356 \\(0.0023)}} & \multicolumn{1}{c}{\makecell{1.2670 \\(0.4179)}}& \multicolumn{1}{c}{\makecell{0.0059 \\(0.0005)}}& \makecell{0.0029 \\(0.0003)}              \\ \cmidrule{3-12}
&        \tablePseudoNAMRPInd            & \multicolumn{1}{c}{{{\makecell{0.5760 \\(0.0039)}}}} & \multicolumn{1}{c|}{\makecell{0.2330 \\(0.0005)}} & \multicolumn{1}{c}{\makecell{1.0360 \\(0.0448)}}& \multicolumn{1}{c}{\makecell{0.0021 \\(0.0001)}}& \makecell{0.0009 \\(0.0001)} & \multicolumn{1}{c}{{{\makecell{0.5760 \\(0.0039)}}}} & \multicolumn{1}{c|}{\makecell{0.2330 \\(0.0005)}} & \multicolumn{1}{c}{\makecell{1.0360 \\(0.0448)}}& \multicolumn{1}{c}{\makecell{0.0021 \\(0.0001)}}& \makecell{0.0009 \\(0.0001)}              \\ 
\cmidrule{2-12}  
&          \tableDeepDROCox          & \multicolumn{1}{c}{{\makecell{0.5829 \\(0.0067)}}} & \multicolumn{1}{c|}{\1\textbf{\makecell{0.2240 \\(0.0010)}}} & \1{\textbf{\makecell{1.2600 \\(0.4412)}}}& \multicolumn{1}{c}{\1\makecell{~~0.0019~~~\\~~(0.0006)~~~}} & \makecell{0.0010 \\(0.0003)}& \multicolumn{1}{c}{{\makecell{0.5754 \\(0.0120)}}}  & \multicolumn{1}{c|}{\1\textbf{\makecell{0.2227 \\(0.0011)}}}  & \multicolumn{1}{c}{\makecell{1.5550 \\(0.4622)}}& \multicolumn{1}{c}{\1\makecell{~~0.0010~~~\\~~(0.0005)~~~}}& \1\makecell{~~0.0005~~~\\~~(0.0003)~~~}             \\ \cmidrule{3-12}
&       \tableDeepDROCoxSplit            & \multicolumn{1}{c}{{\makecell{0.5772 \\(0.0093)}}} & \multicolumn{1}{c|}{\makecell{0.6387 \\(0.0007)}} & {\makecell{1.5530 \\(0.4682)}}& \multicolumn{1}{c}{\1\textbf{\makecell{~~~~~0~~~~~\\~~~~~(0)~~~~~}}} & \1\textbf{\makecell{~~~~~0~~~~~\\~~~~~(0)~~~~~}}& \multicolumn{1}{c}{{\makecell{0.5772 \\(0.0093)}}}  & \multicolumn{1}{c|}{\makecell{0.6387 \\(0.0007)}}  & \multicolumn{1}{c}{\makecell{1.5530 \\(0.4682)}}& \multicolumn{1}{c}{\1\textbf{\makecell{~~~~~0~~~~~\\~~~~~(0)~~~~~}}}& \1\textbf{\makecell{~~~~~0~~~~~\\~~~~~(0)~~~~~}}       \\ \cmidrule{3-12}      
&       \tableDeepExactDROCox            & \multicolumn{1}{c}{{\makecell{0.5811 \\(0.0065)}}}  & \multicolumn{1}{c|}{{\makecell{0.2621 \\(0.0098)}}}  & \multicolumn{1}{c}{\makecell{2.0490 \\(0.4989)}}& \multicolumn{1}{c}{{\makecell{0.0062 \\(0.0020)}}}& {\makecell{0.0033 \\(0.0010)} } & \multicolumn{1}{c}{{\makecell{0.5811 \\(0.0065)}}}  & \multicolumn{1}{c|}{{\makecell{0.2621 \\(0.0098)}}}  & \multicolumn{1}{c}{\makecell{2.0490 \\(0.4989)}}& \multicolumn{1}{c}{{\makecell{0.0062 \\(0.0020)}}}& {\makecell{0.0033 \\(0.0010)} }\\ \bottomrule
\end{tabular}
}
\label{tab:general_performance_SUPPORT_gender_CI}
\end{table*}

\begin{table*}[pth!]
\vspace{-1em}
\caption{\small Cox model test set scores on the SEER (race) dataset, in the same format as Table~\ref{tab:general_performance_CI}.} 
\centering
\setlength\tabcolsep{0.1pt}
\renewcommand{\arraystretch}{0.5}
{\tiny %
\renewcommand{\belowrulesep}{0.1pt}
\renewcommand{\aboverulesep}{0.1pt}
\begin{tabular}{ccccccc|ccccc}
\toprule
\multirow{3}{*}{} & \multirow{3}{*}{Methods} & \multicolumn{5}{c|}{CI-based Tuning}                                                    & \multicolumn{5}{c}{F$_{CG}$-based Tuning}                             \\ \cmidrule{3-12}
&                   & \multicolumn{2}{c|}{Accuracy Metrics}  & \multicolumn{3}{c|}{Fairness Metrics}  & \multicolumn{2}{c|}{Accuracy Metrics}  & \multicolumn{3}{c}{Fairness Metrics} \\ \cmidrule{3-12}
&                   & \multicolumn{1}{c}{C$^{td}$$\uparrow$}  & \multicolumn{1}{c|}{IBS$\downarrow$}  &\multicolumn{1}{c}{CI(\%)$\downarrow$}& \multicolumn{1}{c}{F$_{CI}$$\downarrow$}& F$_{CG}$$\downarrow$ & \multicolumn{1}{c}{C$^{td}$$\uparrow$}  & \multicolumn{1}{c|}{IBS$\downarrow$}  &\multicolumn{1}{c}{CI(\%)$\downarrow$}& \multicolumn{1}{c}{F$_{CI}$$\downarrow$}& F$_{CG}$$\downarrow$                 \\ \midrule
\multirow{8}{*}{\rotatebox{90}{Linear \ \ \ \ \ \ \ \ \ \ \ \ \ \ }} &        \tableCox            & \multicolumn{1}{c}{{\makecell{0.7025 \\(0.0003)}}} & \multicolumn{1}{c|}{\makecell{0.2128 \\(0.0009)}} & \makecell{2.5200 \\(0.0431)} & \multicolumn{1}{c}{\makecell{0.0256 \\(0.0006)}} & \makecell{0.0204 \\(0.0005)} & \multicolumn{1}{c}{{\textbf{\makecell{0.7025 \\(0.0003)}}}} & \multicolumn{1}{c|}{\makecell{0.2128 \\(0.0009)}} & \makecell{2.5200 \\(0.0431)} &\multicolumn{1}{c}{\makecell{0.0256 \\(0.0006)}} & \makecell{0.0204 \\(0.0005)}            \\  \cmidrule{3-12}
&          \tableCoxKeyaInd         & \multicolumn{1}{c}{{\makecell{0.6894 \\(0.0046)}}} & \multicolumn{1}{c|}{\textbf{\makecell{0.1837 \\(0.0027)}}} & \makecell{1.9750 \\(0.6480)} & \multicolumn{1}{c}{\makecell{0.0005 \\(0.0001)}} & \makecell{0.0004 \\(0.0001)}& \multicolumn{1}{c}{{\makecell{0.6894 \\(0.0046)}}}  & \multicolumn{1}{c|}{\textbf{\makecell{0.1837 \\(0.0027)}}}  & \multicolumn{1}{c}{{\makecell{1.9750 \\(0.6480)}}}&\multicolumn{1}{c}{\makecell{0.0005 \\(0.0001)}} & \makecell{0.0004 \\(0.0001)}                              \\ \cmidrule{3-12}
&          \tableCoxRPInd         & \multicolumn{1}{c}{{\makecell{0.7032 \\(0.0025)}}} & \multicolumn{1}{c|}{\makecell{0.2103 \\(0.0031)}} & \makecell{2.4590 \\(0.0886)} & \multicolumn{1}{c}{\makecell{0.0235 \\(0.0022)}} & \makecell{0.0186 \\(0.0017)} & \multicolumn{1}{c}{{\makecell{0.7035 \\(0.0025)}}}  & \multicolumn{1}{c|}{\makecell{0.2097 \\(0.0031)}}  & \multicolumn{1}{c}{\makecell{2.4520 \\(0.0862)}}& \multicolumn{1}{c}{\makecell{0.0231 \\(0.0021)}}& \makecell{0.0183 \\(0.0016)}                            \\ \cmidrule{3-12}
&        \tableCoxKeyaGroup            & \multicolumn{1}{c}{{\makecell{0.6952 \\(0.0146)}}} & \multicolumn{1}{c|}{\makecell{0.2073 \\(0.0049)}} & \makecell{2.8690 \\(0.5267)} & \multicolumn{1}{c}{\makecell{0.0216 \\(0.0041)}} & \makecell{0.0175 \\(0.0035)} & \multicolumn{1}{c}{{\makecell{0.6952 \\(0.0146)}}}  & \multicolumn{1}{c|}{\makecell{0.2073 \\(0.0049)}}  & \multicolumn{1}{c}{\makecell{2.8690 \\(0.5267)}}&\multicolumn{1}{c}{\makecell{0.0216 \\(0.0041)}} & \makecell{0.0175 \\(0.0035)}                             \\ \cmidrule{3-12}
&        \tableCoxRPGroup            & \multicolumn{1}{c}{{\textbf{\makecell{0.7037 \\(0.0025)}}}} & \multicolumn{1}{c|}{\makecell{0.2089 \\(0.0020)}} & \makecell{2.4790 \\(0.0611)} & \multicolumn{1}{c}{\makecell{0.0226 \\(0.0017)}} & \makecell{0.0179 \\(0.0014)} & \multicolumn{1}{c}{{\textbf{\makecell{0.7037 \\(0.0025)}}}} & \multicolumn{1}{c|}{\makecell{0.2089 \\(0.0020)}} & \makecell{2.4790 \\(0.0611)} & \multicolumn{1}{c}{\makecell{0.0226 \\(0.0017)}} & \makecell{0.0179 \\(0.0014)}                              \\ \cmidrule{3-12}
&        \tableCoxKeyaInt           & \multicolumn{1}{c}{{\makecell{0.6494 \\(0.0016)}}} & \multicolumn{1}{c|}{\makecell{0.1963 \\(0.0012)}} & \makecell{2.1290 \\(0.2573)} & \multicolumn{1}{c}{\makecell{0.0107 \\(0.0010)}} & \makecell{0.0087 \\(0.0008)}& \multicolumn{1}{c}{{\makecell{0.6494 \\(0.0016)}}}  & \multicolumn{1}{c|}{\makecell{0.1963 \\(0.0012)}}  & \multicolumn{1}{c}{\makecell{2.1290 \\(0.2573)}}&\multicolumn{1}{c}{\makecell{0.0107 \\(0.0010)}} & \makecell{0.0087 \\(0.0008)}                              \\ \cmidrule{2-12} 
&       \tableDROCox          & \multicolumn{1}{c}{{\makecell{0.6927 \\(0.0069)}}} & \multicolumn{1}{c|}{\makecell{0.1868 \\(0.0004)}} & {\makecell{2.3090 \\(0.5215)}} & \multicolumn{1}{c}{\1\textbf{\makecell{~~~~~0~~~~~\\~~~~~(0)~~~~~}}} & \1\textbf{\makecell{~~~~~0~~~~~\\~~~~~(0)~~~~~}} & \multicolumn{1}{c}{{\makecell{0.6927 \\(0.0069)}}}  & \multicolumn{1}{c|}{\makecell{0.1868 \\(0.0004)}}  & \multicolumn{1}{c}{{\makecell{2.3090 \\(0.5215)}}}&\multicolumn{1}{c}{\1\textbf{\makecell{~~~~~0~~~~~\\~~~~~(0)~~~~~}}} & \1\textbf{\makecell{~~~~~0~~~~~\\~~~~~(0)~~~~~}}                          \\ \cmidrule{3-12}
&       \tableDROCoxSplit           & \multicolumn{1}{c}{{\makecell{0.6872 \\(0.0047)}}} & \multicolumn{1}{c|}{\makecell{0.1869 \\(0.0004)}} & \makecell{2.8280 \\(0.7434)} & \multicolumn{1}{c}{\1\textbf{\makecell{~~~~~0~~~~~\\~~~~~(0)~~~~~}}} & \1\textbf{\makecell{~~~~~0~~~~~\\~~~~~(0)~~~~~}} & \multicolumn{1}{c}{{\makecell{0.6872 \\(0.0047)}}}  & \multicolumn{1}{c|}{\makecell{0.1869 \\(0.0004)}}  & \multicolumn{1}{c}{\makecell{2.8280 \\(0.7434)}}& \multicolumn{1}{c}{\1\textbf{\makecell{~~~~~0~~~~~\\~~~~~(0)~~~~~}}} & \1\textbf{\makecell{~~~~~0~~~~~\\~~~~~(0)~~~~~}}                           \\ \cmidrule{3-12} 
&       \tableExactDROCox            & \multicolumn{1}{c}{{\makecell{0.6833 \\(0.0060)}}}  & \multicolumn{1}{c|}{{\makecell{0.2422 \\(0.0044)}}}  & \multicolumn{1}{c}{\1\textbf{\makecell{1.3020 \\(0.3474)}}}& \multicolumn{1}{c}{{\makecell{0.0056 \\(0.0005)}}}& {\makecell{0.0045 \\(0.0004)} } & \multicolumn{1}{c}{{\makecell{0.6833 \\(0.0060)}}}  & \multicolumn{1}{c|}{{\makecell{0.2422 \\(0.0044)}}}  & \multicolumn{1}{c}{\1\textbf{\makecell{1.3020 \\(0.3474)}}}& \multicolumn{1}{c}{{\makecell{0.0056 \\(0.0005)}}}& {\makecell{0.0045 \\(0.0004)} } \\ \cmidrule{1-12}
\multirow{8}{*}{\rotatebox{90}{Nonlinear \ \ \ \ \ \ \ \ \ \ \ \ \ \ \   }} &          DeepSurv         & \multicolumn{1}{c}{{\textbf{\makecell{0.7095 \\(0.0014)}}}} & \multicolumn{1}{c|}{\makecell{0.2200 \\(0.0012)}} & \makecell{2.5990 \\(0.1189)} & \multicolumn{1}{c}{\makecell{0.0309 \\(0.0006)}} & \makecell{0.0249 \\(0.0004)}& \multicolumn{1}{c}{{\textbf{\makecell{0.7095 \\(0.0014)}}}} & \multicolumn{1}{c|}{\makecell{0.2200 \\(0.0012)}} & \makecell{2.5990 \\(0.1189)} & \multicolumn{1}{c}{\makecell{0.0309 \\(0.0006)}} & \makecell{0.0249 \\(0.0004)}                         \\  \cmidrule{3-12}
&          \tableDeepSurvKeyaInd          & \multicolumn{1}{c}{{\makecell{0.6982 \\(0.0045)}}} & \multicolumn{1}{c|}{\makecell{0.2127 \\(0.0032)}} & \makecell{1.5740 \\(0.6970)} & \multicolumn{1}{c}{{\makecell{0.0291 \\(0.0014)}}} & {\makecell{0.0235 \\(0.0012)} } & \multicolumn{1}{c}{{\makecell{0.6982 \\(0.0045)}}}  & \multicolumn{1}{c|}{\makecell{0.2127 \\(0.0032)}}  & \multicolumn{1}{c}{\makecell{1.5740 \\(0.6970)}}& \multicolumn{1}{c}{{\makecell{0.0291 \\(0.0014)}}} & {\makecell{0.0235 \\(0.0012)} }                       \\  \cmidrule{3-12}
&          \tableDeepSurvRPInd         & \multicolumn{1}{c}{{\makecell{0.7064 \\(0.0021)}}} & \multicolumn{1}{c|}{\makecell{0.2168 \\(0.0012)}} & \makecell{2.5120 \\(0.1847)} & \multicolumn{1}{c}{\makecell{0.0288 \\(0.0006)}} & \makecell{0.0233 \\(0.0004)} & \multicolumn{1}{c}{{\makecell{0.7064 \\(0.0021)}}} & \multicolumn{1}{c|}{\makecell{0.2168 \\(0.0012)}} & \makecell{2.5120 \\(0.1847)} & \multicolumn{1}{c}{\makecell{0.0288 \\(0.0006)}} & \makecell{0.0233 \\(0.0004)}                          \\  \cmidrule{3-12}
&        \tableDeepSurvKeyaGroup           & \multicolumn{1}{c}{{\makecell{0.7034 \\(0.0016)}}} & \multicolumn{1}{c|}{\makecell{0.2154 \\(0.0007)}} & \makecell{2.5920 \\(0.1468)} & \multicolumn{1}{c}{\makecell{0.0278 \\(0.0010)}} & \makecell{0.0229 \\(0.0008)} & \multicolumn{1}{c}{{\makecell{0.7034 \\(0.0016)}}}  & \multicolumn{1}{c|}{\makecell{0.2154 \\(0.0007)}}  & \multicolumn{1}{c}{\makecell{2.5920 \\(0.1468)}}&\multicolumn{1}{c}{\makecell{0.0278 \\(0.0010)}} & \makecell{0.0229 \\(0.0008)}                          \\ \cmidrule{3-12}
&        \tableDeepSurvRPGroup            & \multicolumn{1}{c}{{\makecell{0.7062 \\(0.0017)}}} & \multicolumn{1}{c|}{\makecell{0.2169 \\(0.0010)}} & \makecell{2.5010 \\(0.1626)} & \multicolumn{1}{c}{\makecell{0.0289 \\(0.0005)}} & \makecell{0.0234 \\(0.0004)}& \multicolumn{1}{c}{{\makecell{0.7062 \\(0.0017)}}} & \multicolumn{1}{c|}{\makecell{0.2169 \\(0.0010)}} & \makecell{2.5010 \\(0.1626)} & \multicolumn{1}{c}{\makecell{0.0289 \\(0.0005)}} & \makecell{0.0234 \\(0.0004)}                         \\
\cmidrule{3-12}
&          \tableDeepSurvKeyaInt         & \multicolumn{1}{c}{{\makecell{0.6537 \\(0.0054)}}} & \multicolumn{1}{c|}{\makecell{0.1998 \\(0.0008)}} & \textbf{\makecell{1.0480 \\(0.4252)}} & \multicolumn{1}{c}{\makecell{0.0136 \\(0.0012)}} & \makecell{0.0111 \\(0.0010)}& \multicolumn{1}{c}{{\makecell{0.6537 \\(0.0054)}}}  & \multicolumn{1}{c|}{\makecell{0.1998 \\(0.0008)}}  & \multicolumn{1}{c}{{\textbf{\makecell{1.0480 \\(0.4252)}}}}& \multicolumn{1}{c}{\makecell{0.0136 \\(0.0012)}} & \makecell{0.0111 \\(0.0010)}                            \\
\cmidrule{3-12}
&        \tableDeepPseudoRPInd            & \multicolumn{1}{c}{{{\makecell{0.7086 \\(0.0030)}}}} & \multicolumn{1}{c|}{\textbf{\makecell{0.1824 \\(0.0033)}}} & \multicolumn{1}{c}{\makecell{2.3290 \\(0.2906)}}& \multicolumn{1}{c}{\makecell{0.0168 \\(0.0055)}}& \makecell{0.0120 \\(0.0040)} & \multicolumn{1}{c}{{{\makecell{0.7086 \\(0.0030)}}}} & \multicolumn{1}{c|}{\makecell{0.1824 \\(0.0033)}} & \multicolumn{1}{c}{\makecell{2.3290 \\(0.2906)}}& \multicolumn{1}{c}{\makecell{0.0168 \\(0.0055)}}& \makecell{0.0120 \\(0.0040)}              \\ \cmidrule{3-12}
&        \tablePseudoNAMRPInd            & \multicolumn{1}{c}{{{\makecell{0.7022 \\(0.0118)}}}} & \multicolumn{1}{c|}{\makecell{0.2226 \\(0.0019)}} & \multicolumn{1}{c}{\makecell{2.3480 \\(0.2087)}}& \multicolumn{1}{c}{\makecell{0.0181 \\(0.0020)}}& \makecell{0.0129 \\(0.0016)} & \multicolumn{1}{c}{{{\makecell{0.7022 \\(0.0118)}}}} & \multicolumn{1}{c|}{\makecell{0.2226 \\(0.0019)}} & \multicolumn{1}{c}{\makecell{2.3480 \\(0.2087)}}& \multicolumn{1}{c}{\makecell{0.0181 \\(0.0020)}}& \makecell{0.0129 \\(0.0016)}              \\
\cmidrule{2-12}  
&          \tableDeepDROCox          & \multicolumn{1}{c}{{\makecell{0.6830 \\(0.0050)}}} & \multicolumn{1}{c|}{{\makecell{0.1869 \\(0.0004)}}} & \makecell{2.5810 \\(0.5244)} & \multicolumn{1}{c}{\1\textbf{\makecell{5.3651e-06 \\(6.3580e-06)}}} & \1\textbf{\makecell{5.3233e-06 \\(6.2580e-06)}}& \multicolumn{1}{c}{{\makecell{0.6830 \\(0.0050)}}}  & \multicolumn{1}{c|}{{\makecell{0.1869 \\(0.0004)}}}  & \multicolumn{1}{c}{{\makecell{2.5810 \\(0.5244)}}}& \multicolumn{1}{c}{\1\textbf{\makecell{5.3651e-06 \\(6.3580e-06)}}} & \1\textbf{\makecell{5.3233e-06 \\(6.2580e-06)}}                      \\ \cmidrule{3-12}
&       \tableDeepDROCoxSplit             & \multicolumn{1}{c}{{\makecell{0.6829 \\(0.0049)}}} & \multicolumn{1}{c|}{\makecell{0.1881 \\(0.0012)}} & \multicolumn{1}{c}{{\makecell{2.4880 \\(0.5154)}}} & \multicolumn{1}{c}{\1\makecell{6.3123e-06 \\(7.2058e-06)}} & \1\makecell{6.2466e-06 \\(7.0785e-06)}& \multicolumn{1}{c}{{\makecell{0.6829 \\(0.0049)}}}  & \multicolumn{1}{c|}{\makecell{0.1881 \\(0.0012)}}  & \multicolumn{1}{c}{{\makecell{2.4880 \\(0.5154)}}}& \multicolumn{1}{c}{\1\makecell{6.3123e-06 \\(7.2058e-06)}} & \1\makecell{6.2466e-06 \\(7.0785e-06)} \\ \cmidrule{3-12} 
&       \tableDeepExactDROCox            & \multicolumn{1}{c}{{\makecell{0.7057 \\(0.0014)}}}  & \multicolumn{1}{c|}{\1\textbf{\makecell{0.1597 \\(0.0003)}}}  & \multicolumn{1}{c}{\makecell{2.5030 \\(0.2540)}}& \multicolumn{1}{c}{{\makecell{0.0277 \\(0.0004)}}}& {\makecell{0.0225 \\(0.0003)} } & \multicolumn{1}{c}{{\makecell{0.7057 \\(0.0014)}}}  & \multicolumn{1}{c|}{\1\textbf{\makecell{0.1597 \\(0.0003)}}}  & \multicolumn{1}{c}{\makecell{2.5030 \\(0.2540)}}& \multicolumn{1}{c}{{\makecell{0.0277 \\(0.0004)}}}& {\makecell{0.0225 \\(0.0003)} } \\ \bottomrule
\end{tabular}
}
\label{tab:general_performance_SEER2_race_CI}
\end{table*}

We compare \textsc{dro-cox} and \textsc{dro-cox (split)} against various baselines using a similar experimental setup as \citet{keya2021equitable}.
Specifically, we report the test set evaluation metrics for FLC (using age to evaluate CI and F$_{CG}$) in Table~\ref{tab:general_performance_CI}, SUPPORT (gender) in Table~\ref{tab:general_performance_SUPPORT_gender_CI}, and SEER (race) in Table~\ref{tab:general_performance_SEER2_race_CI}. Experimental results using other sensitive attributes for the datasets have similar trends and are in Appendix~\ref{additional_exp_results}. From these tables, we have the following observations:
\begin{itemize}[leftmargin=*,itemsep=0pt,parsep=0pt,topsep=0pt,partopsep=0pt]
    \item Among linear methods, the heuristic \textsc{dro-cox} method consistently outperforms baselines in terms of the CI fairness metric (and often on the other fairness metrics too) while still achieving reasonably high accuracy scores. A similar trend holds among nonlinear methods for the heuristic deep \textsc{dro-cox} variant.

    \item The performance difference (in terms of both accuracy and fairness) between the heuristic \textsc{dro-cox} and sample-splitting-based \textsc{dro-cox (split)} is not clear cut; sometimes one performs better than the other and vice versa. This holds for their linear variants as well as, separately, their nonlinear (deep) variants.
    
    \item As expected, the unregularized Cox and DeepSurv models often have (among) the highest accuracy scores but tend to have poor performance on fairness metrics. %
    
    \item The baselines that are regularized variants of Cox and DeepSurv typically do not simultaneously achieve low scores across all fairness metrics. Even though some of these can work well with some of the metrics by \citet{keya2021equitable}, they clearly do not work as well as our \textsc{dro-cox} variants when it comes to the CI fairness metric that actually accounts for accuracy.
\end{itemize}
\smallskip
\textit{Effect of $\alpha$.}
To show how $\alpha$ trades off between fairness and accuracy, we show results for \mbox{\textsc{dro-cox}} in the linear setting across all datasets (using age for evaluating F$_G$ and CI) in Figure~\ref{fig:sensitive_analysis}, where we use c-index as the accuracy metric. It is clear that accuracy tends to increase when $\alpha$ increases from 0.1 to 0.3 on FLC and SEER, and from 0.3 to 0.5 on SUPPORT. However, the increase in $\alpha$ results in worse scores across fairness metrics.

\begin{figure*}[t!]
\captionsetup[subfigure]{justification=centering}
\centering
\vspace{-.1em}
        \begin{subfigure}[b]{0.33\textwidth}
                \includegraphics[width=\linewidth]{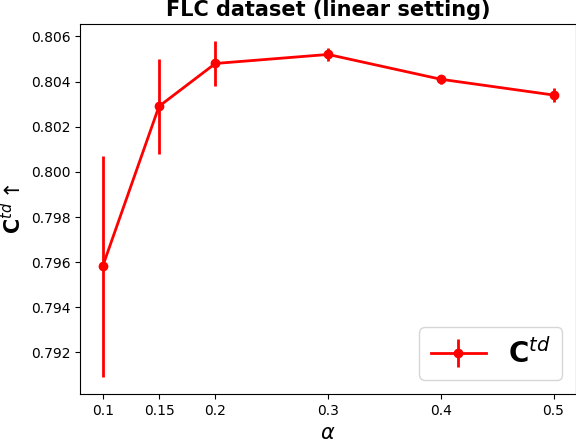}
        \end{subfigure}%
        \begin{subfigure}[b]{0.325\textwidth}
                \includegraphics[width=\linewidth]{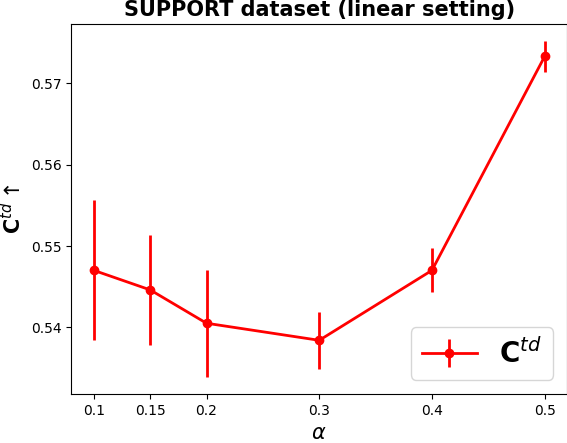}
        \end{subfigure}%
        \begin{subfigure}[b]{0.33\textwidth}
                \includegraphics[width=\linewidth]{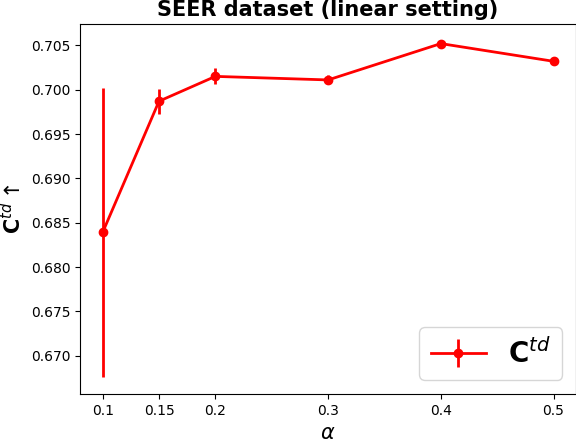}
        \end{subfigure}%

        \begin{subfigure}[b]{0.32\textwidth}
                \includegraphics[width=\linewidth]{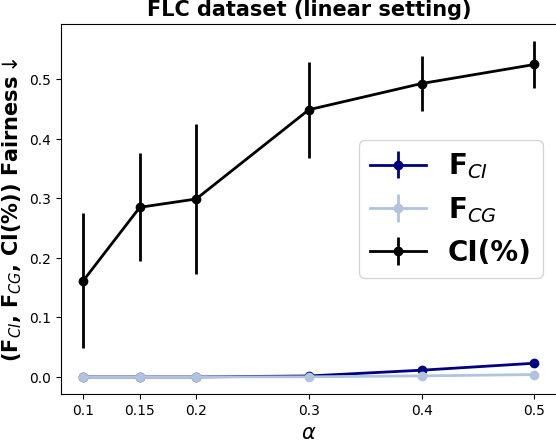}
                \caption{FLC (age)}
        \end{subfigure}%
        \begin{subfigure}[b]{0.32\textwidth}
                \includegraphics[width=\linewidth]{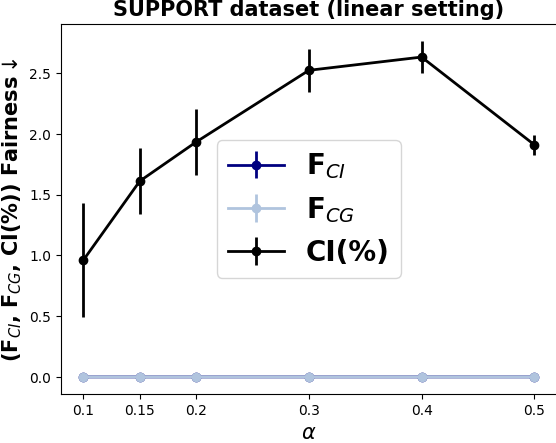}
                \caption{SUPPORT (age)}
        \end{subfigure}%
        \begin{subfigure}[b]{0.32\textwidth}
                \includegraphics[width=\linewidth]{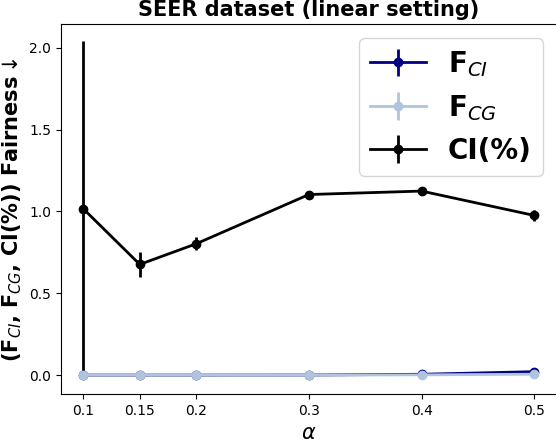}
                \caption{SEER (age)}
        \end{subfigure}%

\caption{\small Effect of $\alpha$ on test set accuracy (c-index; higher is better) and fairness metrics (F$_{CI}$, F$_{CG}$, and CI; lower is better for all fairness metrics) of \textsc{dro-cox} on four datasets.}
\vspace{-2em}
\label{fig:sensitive_analysis}
\end{figure*}

\smallskip
\noindent
\textit{Additional experiments.} Across all methods, instead of minimizing the validation set CI fairness metric during hyperparameter tuning (tolerating a small degradation in the validation set C$^{td}$), we also tried instead minimizing the validation set F$_{CG}$ metric and found similar results (see the rightmost columns under the heading ``F$_{CG}$-based Tuning'' in Tables~\ref{tab:general_performance_CI}, \ref{tab:general_performance_SUPPORT_gender_CI}, and \ref{tab:general_performance_SEER2_race_CI}).\footnotetext{Censoring-based individual and group fairness metrics (F$_{CI}$ and F$_{CG}$ respectively) by \citet{rahman2022fair}---which we formally define in Appendix~\ref{sec:fairness-measures}---depend on a user-specified scale constant $\gamma>0$ that must be specified in advance (of running experiments). A higher value of $\gamma$ makes it easier for a survival model to achieve an F$_{CI}$ or F$_{CG}$ score that is \emph{exactly} and not just approximately equal to~0. We set $\gamma=0.01$ for all datasets and it turns out that in this case, for the Cox model, it is possible for our DRO variants to achieve F$_{CI}$ or F$_{CG}$ scores that are exactly 0. We point out that we have found that if we decrease $\gamma$, then we no longer get exactly 0 for F$_{CI}$ or F$_{CG}$. Ultimately, this issue of F$_{CI}$ or F$_{CG}$ being exactly 0 is due to how they are defined by \citet{rahman2022fair}, and if one did not want these scores to be exactly 0, one would have to tune on $\gamma$ in a manner that could depend on the dataset. For more details, see Appendix~\ref{sec:fairness-measures}.}

We further conduct a number of supplemental experiments that we summarize the findings for now. %
First, we show that our \textsc{dro-cox (split)} procedure is somewhat robust to the choice of $n_1=|\mathcal{D}_1|$ and $n_2=|\mathcal{D}_2|$. Second, we show what happens if $\mathcal{D}_1$ and $\mathcal{D}_2$ have different censoring rates, where the main finding is that \textsc{dro-cox (split)} can still work well when there is a large imbalance in censoring rates between $\mathcal{D}_1$ and $\mathcal{D}_2$. Third, if \textsc{dro-cox (split)} did not use both losses $L_{\text{DRO}}^{\text{split}}(\theta,\eta,\mathcal{D}_{1}\mid\mathcal{D}_{2})$ and $L_{\text{DRO}}^{\text{split}}(\theta,\eta,\mathcal{D}_{2}\mid\mathcal{D}_{1})$ (i.e., if it only used one of these), then it performs worse. %
For details on these additional experiments including a formal definition of a quantity that controls the amount of imbalance in censoring rates between $\mathcal{D}_1$ and $\mathcal{D}_2$, %
see Appendix~\ref{additional_exp_results}. %

\paragraph{DeepHit}

\begin{table*}[t!]
\vspace{-1em}
\caption{\small DeepHit test set scores on the FLC, SUPPORT, SEER datasets when hyperparameter tuning is based on CI and F$_{CG}$.} 
\centering
\setlength\tabcolsep{1.6pt}
\renewcommand{\arraystretch}{0.5}
{\tiny %
\renewcommand{\belowrulesep}{0.1pt}
\renewcommand{\aboverulesep}{0.1pt}
\begin{tabular}{ccccccc|ccccc}
\toprule
\multirow{3}{*}{Datasets} & \multirow{3}{*}{Methods} & \multicolumn{5}{c|}{CI-based Tuning}                                                    & \multicolumn{5}{c}{F$_{CG}$-based Tuning}                             \\ \cmidrule{3-12}
&                   & \multicolumn{2}{c|}{Accuracy Metrics}  & \multicolumn{3}{c|}{Fairness Metrics}  & \multicolumn{2}{c|}{Accuracy Metrics}  & \multicolumn{3}{c}{Fairness Metrics} \\ \cmidrule{3-12}
&                   & \multicolumn{1}{c}{C$^{td}$$\uparrow$}  & \multicolumn{1}{c|}{IBS$\downarrow$}  &\multicolumn{1}{c}{CI(\%)$\downarrow$}& \multicolumn{1}{c}{F$_{CI}$$\downarrow$}& F$_{CG}$$\downarrow$ & \multicolumn{1}{c}{C$^{td}$$\uparrow$}  & \multicolumn{1}{c|}{IBS$\downarrow$}  &\multicolumn{1}{c}{CI(\%)$\downarrow$}& \multicolumn{1}{c}{F$_{CI}$$\downarrow$}& F$_{CG}$$\downarrow$                 \\ \midrule

\multirow{3}{*}{\makecell{FLC \\(age)}} &      DeepHit             & \multicolumn{1}{c}{\makecell{0.7937 \\(0.0080)}} & \multicolumn{1}{c|}{{\makecell{0.1560 \\(0.0204)}}} & \multicolumn{1}{c}{\makecell{1.1950 \\(0.7885)}} & \multicolumn{1}{c}{\makecell{0.0108 \\(0.0012)}} & \makecell{0.0022 \\(0.0003)} & \multicolumn{1}{c}{\textbf{\makecell{0.7937 \\(0.0080)}}} & \multicolumn{1}{c|}{{\makecell{0.1560 \\(0.0204)}}} & \multicolumn{1}{c}{\makecell{1.1950 \\(0.7885)}} & \multicolumn{1}{c}{\makecell{0.0108 \\(0.0012)}} & \makecell{0.0022 \\(0.0003)} \\ \cmidrule{3-12} 
                  &       \textsc{\tableDeepHitRPGroup}            & \multicolumn{1}{c}{{\makecell{0.7825 \\(0.0237)}}} & \multicolumn{1}{c|}{\textbf{\makecell{0.1449 \\(0.0201)}}} & \multicolumn{1}{c}{\makecell{1.2340 \\(0.6046)}} & \multicolumn{1}{c}{\makecell{0.0097 \\(0.0023)}} & \makecell{0.0021 \\(0.0003)} & \multicolumn{1}{c}{\makecell{0.7446 \\(0.0051)}} & \multicolumn{1}{c|}{\textbf{\makecell{0.1326 \\(0.0027)}}} & \multicolumn{1}{c}{\makecell{2.1110 \\(0.4770)}} & \multicolumn{1}{c}{{\makecell{0.0064 \\(0.0001)}}} & {\makecell{0.0018 \\(0.0001)}} \\ \cmidrule{2-12}
                  &       \textsc{dro-deephit}            & \multicolumn{1}{c}{\1\textbf{\makecell{0.7956 \\(0.0051)}}} & \multicolumn{1}{c|}{\makecell{0.1971 \\(0.0543)}} & \multicolumn{1}{c}{\1\makecell{1.0430 \\(0.4835)}} & \multicolumn{1}{c}{\1\makecell{0.0084 \\(0.0028)}} & \1\makecell{0.0017 \\(0.0006)} & \multicolumn{1}{c}{\makecell{0.7821 \\(0.0101)}} & \multicolumn{1}{c|}{\makecell{0.2754 \\(0.0076)}} & \multicolumn{1}{c}{\1\makecell{1.0180 \\(0.5330)}} & \multicolumn{1}{c}{\1\textbf{\makecell{0.0026 \\(0.0005)}}} & \1\textbf{\makecell{0.0006 \\(0.0001)}} \\ \cmidrule{3-12} 
                  &       \textsc{dro-deephit (split)}            & \multicolumn{1}{c}{\makecell{0.7748 \\(0.0189)}} & \multicolumn{1}{c|}{\makecell{0.2264 \\(0.0623)}} & \multicolumn{1}{c}{\1\textbf{\makecell{0.9950 \\(0.4556)}}} & \multicolumn{1}{c}{\1\textbf{\makecell{0.0067 \\(0.0036)}}} & \1\textbf{\makecell{0.0015 \\(0.0007)}} & \multicolumn{1}{c}{\makecell{0.7622 \\(0.0122)}} & \multicolumn{1}{c|}{\makecell{0.2734 \\(0.0092)}} & \multicolumn{1}{c}{\1\textbf{\makecell{0.9270 \\(0.5411)}}} & \multicolumn{1}{c}{\1\makecell{0.0027 \\(0.0009)}} & \1\makecell{0.0007 \\(0.0002)} \\ \midrule
\multirow{3}{*}{\makecell{FLC \\(gender)}} &         DeepHit          & \multicolumn{1}{c}{\makecell{0.7937 \\(0.0080)}} & \multicolumn{1}{c|}{{\makecell{0.1560 \\(0.0204)}}} & \multicolumn{1}{c}{{\makecell{0.4990 \\(0.3792)}}} & \multicolumn{1}{c}{\makecell{0.0108 \\(0.0012)}} & \makecell{0.0055 \\(0.0006)} & \multicolumn{1}{c}{\textbf{\makecell{0.7937 \\(0.0080)}}} & \multicolumn{1}{c|}{\textbf{\makecell{0.1560 \\(0.0204)}}} & \multicolumn{1}{c}{\textbf{\makecell{0.4990 \\(0.3792)}}} & \multicolumn{1}{c}{\makecell{0.0108 \\(0.0012)}} & \makecell{0.0055 \\(0.0006)} \\ \cmidrule{3-12} 
                  &       \textsc{\tableDeepHitRPGroup}            & \multicolumn{1}{c}{{\makecell{0.7840 \\(0.0245)}}} & \multicolumn{1}{c|}{\textbf{\makecell{0.1489 \\(0.0212)}}} & \multicolumn{1}{c}{\makecell{0.5170 \\(0.3982)}} & \multicolumn{1}{c}{\makecell{0.0099 \\(0.0021)}} & \makecell{0.0050 \\(0.0011)} & \multicolumn{1}{c}{\textbf{\makecell{0.7937 \\(0.0080)}}} & \multicolumn{1}{c|}{\textbf{\makecell{0.1560 \\(0.0204)}}} & \multicolumn{1}{c}{\textbf{\makecell{0.4990 \\(0.3792)}}} & \multicolumn{1}{c}{{\makecell{0.0108 \\(0.0012)}}} & {\makecell{0.0055 \\(0.0006)}} \\ \cmidrule{2-12}
                  &         \textsc{dro-deephit}          & \multicolumn{1}{c}{\1\textbf{\makecell{0.7956 \\(0.0051)}}} & \multicolumn{1}{c|}{\makecell{0.1971 \\(0.0543)}} & \multicolumn{1}{c}{\1\textbf{\makecell{0.4320 \\(0.4786)}}} & \multicolumn{1}{c}{\1\makecell{0.0084 \\(0.0028)}} & \1\makecell{0.0043 \\(0.0014)} & \multicolumn{1}{c}{\makecell{0.7821 \\(0.0101)}} & \multicolumn{1}{c|}{\makecell{0.2754 \\(0.0076)}} & \multicolumn{1}{c}{\makecell{1.3700 \\(0.6702)}} & \multicolumn{1}{c}{\1\textbf{\makecell{0.0026 \\(0.0005)}}} & \1\makecell{0.0013 \\(0.0002)}\\ \cmidrule{3-12} 
                  &       \textsc{dro-deephit (split)}            & \multicolumn{1}{c}{\makecell{0.7748 \\(0.0189)}} & \multicolumn{1}{c|}{\makecell{0.2264 \\(0.0623)}} & \multicolumn{1}{c}{\makecell{1.3100 \\(0.9915)}} & \multicolumn{1}{c}{\textbf{\1\makecell{0.0067 \\(0.0036)}}} & \1\textbf{\makecell{0.0034 \\(0.0018)}} & \multicolumn{1}{c}{\makecell{0.7622 \\(0.0122)}} & \multicolumn{1}{c|}{\makecell{0.2734 \\(0.0092)}} & \multicolumn{1}{c}{\makecell{1.9350 \\(0.7234)}} & \multicolumn{1}{c}{\1\makecell{0.0027 \\(0.0009)}} & \1\textbf{\makecell{0.0014 \\(0.0004)}} \\ \midrule
\multirow{3}{*}{\makecell{SUPPORT \\(age)}} &        DeepHit           & \multicolumn{1}{c}{\textbf{\makecell{0.6029 \\(0.0071)}}} & \multicolumn{1}{c|}{{\makecell{0.2151 \\(0.0067)}}} & \multicolumn{1}{c}{\makecell{3.5910 \\(0.3987)}} & \multicolumn{1}{c}{\makecell{0.0055 \\(0.0008)}} & \makecell{0.0026 \\(0.0004)}& \multicolumn{1}{c}{\textbf{\makecell{0.6029 \\(0.0071)}}} & \multicolumn{1}{c|}{{\makecell{0.2151 \\(0.0067)}}} & \multicolumn{1}{c}{\makecell{3.5910 \\(0.3987)}} & \multicolumn{1}{c}{\makecell{0.0055 \\(0.0008)}} & \makecell{0.0026 \\(0.0004)} \\ \cmidrule{3-12} 
                  &       \textsc{\tableDeepHitRPGroup}            & \multicolumn{1}{c}{{\makecell{0.5775 \\(0.0050)}}} & \multicolumn{1}{c|}{\textbf{\makecell{0.2123 \\(0.0009)}}} & \multicolumn{1}{c}{\textbf{\makecell{1.1940 \\(0.8221)}}} & \multicolumn{1}{c}{\makecell{0.0046 \\(0.0006)}} & \makecell{0.0023 \\(0.0003)} & \multicolumn{1}{c}{\makecell{0.5766 \\(0.0033)}} & \multicolumn{1}{c|}{\textbf{\makecell{0.2126 \\(0.0007)}}} & \multicolumn{1}{c}{\textbf{\makecell{1.0230 \\(0.4416)}}} & \multicolumn{1}{c}{{\makecell{0.0044 \\(0.0002)}}} & {\makecell{0.0022 \\(0.0001)}} \\ \cmidrule{2-12}
                  &         \textsc{dro-deephit}          & \multicolumn{1}{c}{\makecell{0.5932 \\(0.0159)}} & \multicolumn{1}{c|}{\makecell{0.2447 \\(0.0147)}} & \multicolumn{1}{c}{\makecell{2.9160 \\(0.8347)}} & \multicolumn{1}{c}{\1\textbf{\makecell{0.0014 \\(0.0009)}}} & \1\textbf{\makecell{0.0007 \\(0.0004)}}& \multicolumn{1}{c}{\makecell{0.5899 \\(0.0154)}} & \multicolumn{1}{c|}{\makecell{0.2493 \\(0.0159)}} & \multicolumn{1}{c}{{\makecell{3.3740 \\(0.6078)}}} & \multicolumn{1}{c}{\1\textbf{\makecell{0.0007 \\(0.0002)}}} & \1\textbf{\makecell{0.0003 \\(0.0001)}} \\ \cmidrule{3-12} 
                  &      \textsc{dro-deephit (split)}             & \multicolumn{1}{c}{\makecell{0.5753 \\(0.0236)}} & \multicolumn{1}{c|}{\makecell{0.2225 \\(0.0112)}} & \multicolumn{1}{c}{{\makecell{2.7280 \\(0.9570)}}} & \multicolumn{1}{c}{\1\makecell{0.0044 \\(0.0013)}} & {\1\makecell{0.0021 \\(0.0006)}} & \multicolumn{1}{c}{\makecell{0.5792 \\(0.0234)}} & \multicolumn{1}{c|}{\makecell{0.2392 \\(0.0268)}} & \multicolumn{1}{c}{\makecell{3.5270 \\(0.7331)}} & \multicolumn{1}{c}{\1\makecell{0.0037 \\(0.0019)}} & \1\makecell{0.0018 \\(0.0009)} \\ \midrule
\multirow{3}{*}{\makecell{SUPPORT \\(gender)}} &        DeepHit           & \multicolumn{1}{c}{\textbf{\makecell{0.6029 \\(0.0071)}}} & \multicolumn{1}{c|}{{\makecell{0.2151 \\(0.0067)}}} & \multicolumn{1}{c}{{\makecell{0.5880 \\(0.2895)}}} & \multicolumn{1}{c}{\makecell{0.0055 \\(0.0008)}} & \makecell{0.0028 \\(0.0004)} & \multicolumn{1}{c}{\textbf{\makecell{0.6029 \\(0.0071)}}} & \multicolumn{1}{c|}{{\makecell{0.2151 \\(0.0067)}}} & \multicolumn{1}{c}{\textbf{\makecell{0.5880 \\(0.2895)}}} & \multicolumn{1}{c}{\makecell{0.0055 \\(0.0008)}} & \makecell{0.0028 \\(0.0004)} \\ \cmidrule{3-12} 
                  &       \textsc{\tableDeepHitRPGroup}            & \multicolumn{1}{c}{{\makecell{0.5767 \\(0.0034)}}} & \multicolumn{1}{c|}{\textbf{\makecell{0.2126 \\(0.0008)}}} & \multicolumn{1}{c}{\makecell{0.6960 \\(0.3183)}} & \multicolumn{1}{c}{\makecell{0.0044 \\(0.0002)}} & \makecell{0.0022 \\(0.0001)} & \multicolumn{1}{c}{\makecell{0.5773 \\(0.0039)}} & \multicolumn{1}{c|}{\textbf{\makecell{0.2125 \\(0.0007)}}} & \multicolumn{1}{c}{\makecell{0.7600 \\(0.2994)}} & \multicolumn{1}{c}{{\makecell{0.0043 \\(0.0002)}}} & {\makecell{0.0022 \\(0.0001)}} \\ \cmidrule{2-12}
                  &         \textsc{dro-deephit}          & \multicolumn{1}{c}{\makecell{0.5932 \\(0.0159)}} & \multicolumn{1}{c|}{\makecell{0.2447 \\(0.0147)}} & \multicolumn{1}{c}{\makecell{1.1980 \\(0.6834)}} & \multicolumn{1}{c}{\1\textbf{\makecell{0.0014 \\(0.0009)}}} & \1\textbf{\makecell{0.0007 \\(0.0005)}} & \multicolumn{1}{c}{\makecell{0.5899 \\(0.0154)}} & \multicolumn{1}{c|}{\makecell{0.2493 \\(0.0159)}} & \multicolumn{1}{c}{\makecell{1.4460 \\(0.4235)}} & \multicolumn{1}{c}{\1\textbf{\makecell{0.0007 \\(0.0002)}}} & \1\textbf{\makecell{0.0004 \\(0.0001)}} \\ \cmidrule{3-12} 
                  &      \textsc{dro-deephit (split)}             & \multicolumn{1}{c}{\makecell{0.5753 \\(0.0236)}} & \multicolumn{1}{c|}{\makecell{0.2225 \\(0.0112)}} & \multicolumn{1}{c}{\1\textbf{\makecell{0.5160 \\(0.3942)}}} & \multicolumn{1}{c}{\makecell{0.0044 \\(0.0013)}} & \makecell{0.0022 \\(0.0006)} & \multicolumn{1}{c}{\makecell{0.5792 \\(0.0234)}} & \multicolumn{1}{c|}{\makecell{0.2392 \\(0.0268)}} & \multicolumn{1}{c}{\makecell{0.7550 \\(0.5022)}} & \multicolumn{1}{c}{\1\makecell{0.0037 \\(0.0019)}} & \1\makecell{0.0019 \\(0.0010)} \\ \midrule
\multirow{3}{*}{\makecell{SUPPORT \\(race)}} &       DeepHit            & \multicolumn{1}{c}{\textbf{\makecell{0.6029 \\(0.0071)}}} & \multicolumn{1}{c|}{{\makecell{0.2151 \\(0.0067)}}} & \multicolumn{1}{c}{\makecell{1.2250 \\(0.4454)}} & \multicolumn{1}{c}{\makecell{0.0055 \\(0.0008)}} & \makecell{0.0033 \\(0.0005)} & \multicolumn{1}{c}{\textbf{\makecell{0.6029 \\(0.0071)}}} & \multicolumn{1}{c|}{{\makecell{0.2151 \\(0.0067)}}} & \multicolumn{1}{c}{{\makecell{1.2250 \\(0.4454)}}} & \multicolumn{1}{c}{\makecell{0.0055 \\(0.0008)}} & \makecell{0.0033 \\(0.0005)} \\ \cmidrule{3-12} 
                  &       \textsc{\tableDeepHitRPGroup}            & \multicolumn{1}{c}{{\makecell{0.5767 \\(0.0031)}}} & \multicolumn{1}{c|}{\textbf{\makecell{0.2126 \\(0.0008)}}} & \multicolumn{1}{c}{\textbf{\makecell{0.7290 \\(0.4122)}}} & \multicolumn{1}{c}{\makecell{0.0044 \\(0.0002)}} & \makecell{0.0026 \\(0.0001)} & \multicolumn{1}{c}{\makecell{0.5813 \\(0.0108)}} & \multicolumn{1}{c|}{\textbf{\makecell{0.2144 \\(0.0041)}}} & \multicolumn{1}{c}{\textbf{\makecell{0.7400 \\(0.4211)}}} & \multicolumn{1}{c}{{\makecell{0.0043 \\(0.0003)}}} & {\makecell{0.0026 \\(0.0002)}} \\ \cmidrule{2-12}
                  &        \textsc{dro-deephit}           & \multicolumn{1}{c}{\makecell{0.5932 \\(0.0159)}} & \multicolumn{1}{c|}{\makecell{0.2447 \\(0.0147)}} & \multicolumn{1}{c}{{\makecell{1.0630 \\(0.5174)}}} & \multicolumn{1}{c}{\1\textbf{\makecell{0.0014 \\(0.0009)}}} & \1\textbf{\makecell{0.0009 \\(0.0005)}} & \multicolumn{1}{c}{\makecell{0.5899 \\(0.0154)}} & \multicolumn{1}{c|}{\makecell{0.2493 \\(0.0159)}} & \multicolumn{1}{c}{\makecell{1.4220 \\(0.4302)}} & \multicolumn{1}{c}{\1\textbf{\makecell{0.0007 \\(0.0002)}}} & \1\textbf{\makecell{0.0004 \\(0.0001)}} \\ \cmidrule{3-12} 
                  &     \textsc{dro-deephit (split)}              & \multicolumn{1}{c}{\makecell{0.5753 \\(0.0236)}} & \multicolumn{1}{c|}{\makecell{0.2225 \\(0.0112)}} & \multicolumn{1}{c}{\makecell{1.1930 \\(0.4449)}} & \multicolumn{1}{c}{\makecell{0.0044 \\(0.0013)}} & \makecell{0.0027 \\(0.0008)}& \multicolumn{1}{c}{\makecell{0.5792 \\(0.0234)}} & \multicolumn{1}{c|}{\makecell{0.2392 \\(0.0268)}} & \multicolumn{1}{c}{\makecell{1.5640 \\(0.6744)}} & \multicolumn{1}{c}{\1\makecell{0.0037 \\(0.0019)}} & \1\makecell{0.0022 \\(0.0012)} \\ \midrule
\multirow{3}{*}{\makecell{SEER \\(age)}} &       DeepHit            & \multicolumn{1}{c}{\textbf{\makecell{0.7156 \\(0.0047)}}} & \multicolumn{1}{c|}{\textbf{\makecell{0.1715 \\(0.0038)}}} & \multicolumn{1}{c}{\makecell{1.4450 \\(0.2901)}} & \multicolumn{1}{c}{\makecell{0.0122 \\(0.0011)}} & \makecell{0.0038 \\(0.0003)} & \multicolumn{1}{c}{\textbf{\makecell{0.7156 \\(0.0047)}}} & \multicolumn{1}{c|}{\textbf{\makecell{0.1715 \\(0.0038)}}} & \multicolumn{1}{c}{\makecell{1.4450 \\(0.2901)}} & \multicolumn{1}{c}{\makecell{0.0122 \\(0.0011)}} & \makecell{0.0038 \\(0.0003)} \\ \cmidrule{3-12} 
                  &       \textsc{\tableDeepHitRPGroup}            & \multicolumn{1}{c}{{\makecell{0.7122 \\(0.0086)}}} & \multicolumn{1}{c|}{\makecell{0.1743 \\(0.0064)}} & \multicolumn{1}{c}{\makecell{1.4160 \\(0.2443)}} & \multicolumn{1}{c}{\makecell{0.0105 \\(0.0029)}} & \makecell{0.0034 \\(0.0007)} & \multicolumn{1}{c}{\makecell{0.6987 \\(0.0025)}} & \multicolumn{1}{c|}{\makecell{0.1801 \\(0.0021)}} & \multicolumn{1}{c}{\makecell{2.0960 \\(0.4633)}} & \multicolumn{1}{c}{{\makecell{0.0046 \\(0.0002)}}} & {\makecell{0.0019 \\(0.0001)}} \\ \cmidrule{2-12}
                  &        \textsc{dro-deephit}           & \multicolumn{1}{c}{\makecell{0.7112 \\(0.0084)}} & \multicolumn{1}{c|}{\makecell{0.2794 \\(0.0871)}} & \multicolumn{1}{c}{\1\textbf{\makecell{0.7990 \\(0.3281)}}} & \multicolumn{1}{c}{\1\textbf{\makecell{0.0061 \\(0.0041)}}} & \1\textbf{\makecell{0.0020 \\(0.0013)}} & \multicolumn{1}{c}{\makecell{0.6951 \\(0.0051)}} & \multicolumn{1}{c|}{\makecell{0.4122 \\(0.0304)}} & \multicolumn{1}{c}{\1\makecell{1.3800 \\(0.5574)}} & \multicolumn{1}{c}{\1\textbf{\makecell{0.0002 \\(0.0002)}}} & \1\textbf{\makecell{0.0001 \\(0.0001)}} \\ \cmidrule{3-12} 
                  &      \textsc{dro-deephit (split)}             & \multicolumn{1}{c}{\makecell{0.6969 \\(0.0211)}} & \multicolumn{1}{c|}{\makecell{0.2073 \\(0.0464)}} & \multicolumn{1}{c}{\1\makecell{1.0240 \\(0.3449)}} & \multicolumn{1}{c}{\makecell{0.0107 \\(0.0016)}} & \makecell{0.0038 \\(0.0004)} & \multicolumn{1}{c}{\makecell{0.6963 \\(0.0224)}} & \multicolumn{1}{c|}{\makecell{0.2063 \\(0.0419)}} & \multicolumn{1}{c}{\1\textbf{\makecell{1.2630 \\(0.7467)}}} & \multicolumn{1}{c}{\makecell{0.0098 \\(0.0025)}} & \makecell{0.0034 \\(0.0005)} \\ \midrule
\multirow{3}{*}{\makecell{SEER \\(race)}} &      DeepHit             & \multicolumn{1}{c}{\textbf{\makecell{0.7156 \\(0.0047)}}} & \multicolumn{1}{c|}{\textbf{\makecell{0.1715 \\(0.0038)}}} & \multicolumn{1}{c}{\makecell{3.2820 \\(0.5958)}} & \multicolumn{1}{c}{\makecell{0.0122 \\(0.0011)}} & \makecell{0.0099 \\(0.0009)} & \multicolumn{1}{c}{\textbf{\makecell{0.7156 \\(0.0047)}}} & \multicolumn{1}{c|}{\textbf{\makecell{0.1715 \\(0.0038)}}} & \multicolumn{1}{c}{\makecell{3.2820 \\(0.5958)}} & \multicolumn{1}{c}{\makecell{0.0122 \\(0.0011)}} & \makecell{0.0099 \\(0.0009)} \\ \cmidrule{3-12} 
                  &       \textsc{\tableDeepHitRPGroup}            & \multicolumn{1}{c}{{\makecell{0.7132 \\(0.0073)}}} & \multicolumn{1}{c|}{\makecell{0.1728 \\(0.0045)}} & \multicolumn{1}{c}{\makecell{3.1330 \\(0.8321)}} & \multicolumn{1}{c}{\makecell{0.0113 \\(0.0028)}} & \makecell{0.0091 \\(0.0023)} & \multicolumn{1}{c}{\makecell{0.6987 \\(0.0048)}} & \multicolumn{1}{c|}{\makecell{0.1806 \\(0.0028)}} & \multicolumn{1}{c}{\textbf{\makecell{1.6760 \\(0.6385)}}} & \multicolumn{1}{c}{{\makecell{0.0045 \\(0.0023)}}} & {\makecell{0.0034 \\(0.0019)}} \\ \cmidrule{2-12}
                  &       \textsc{dro-deephit}            & \multicolumn{1}{c}{\makecell{0.7112 \\(0.0084)}} & \multicolumn{1}{c|}{\makecell{0.2794 \\(0.0871)}} & \multicolumn{1}{c}{\1\makecell{3.0120 \\(0.5652)}} & \multicolumn{1}{c}{\1\textbf{\makecell{0.0061 \\(0.0041)}}} & \1\textbf{\makecell{0.0049 \\(0.0033)}} & \multicolumn{1}{c}{\makecell{0.6951 \\(0.0051)}} & \multicolumn{1}{c|}{\makecell{0.4122 \\(0.0304)}} & \multicolumn{1}{c}{\makecell{3.2520 \\(1.7820)}} & \multicolumn{1}{c}{\1\textbf{\makecell{0.0002 \\(0.0002)}}} & \1\textbf{\makecell{0.0002 \\(0.0002)}} \\ \cmidrule{3-12} 
                  &       \textsc{dro-deephit (split)}            & \multicolumn{1}{c}{\makecell{0.6969 \\(0.0211)}} & \multicolumn{1}{c|}{\makecell{0.2073 \\(0.0464)}} & \multicolumn{1}{c}{\1\textbf{\makecell{2.7700 \\(0.5636)}}} & \multicolumn{1}{c}{\1\makecell{0.0107 \\(0.0016)}} & \1\makecell{0.0085 \\(0.0014)} & \multicolumn{1}{c}{\makecell{0.6963 \\(0.0224)}} & \multicolumn{1}{c|}{\makecell{0.2063 \\(0.0419)}} & \multicolumn{1}{c}{{\makecell{3.0070 \\(0.8355)}}} & \multicolumn{1}{c}{\makecell{0.0098 \\(0.0025)}} & \makecell{0.0078 \\(0.0021)} \\ \bottomrule
\end{tabular}
}
\label{tab:general_performance_deephit_CI}
\end{table*}

We now compare \textsc{dro-deephit} and \textsc{dro-deephit (split)} to the original DeepHit method \citep{lee2018deephit} and the regularized variant \textsc{\tableDeepHitRPGroup}.
We report the test performance on all three datasets in Table \ref{tab:general_performance_deephit_CI}. According to the results in Table \ref{tab:general_performance_deephit_CI}, we have the following observations:
\begin{itemize}[leftmargin=*,itemsep=0pt,parsep=0pt,topsep=0pt,partopsep=0pt]
    \item Our DRO variants can achieve better CI performance than the original DeepHit method on most of datasets with different sensitive attributes when using a CI-based hyperparameter tuning strategy. It is also clear that \textsc{dro-deephit} and \textsc{dro-deephit (split)} can achieve lower values on F$_{CI}$ and F$_{CG}$ on all datasets when using an F$_{CG}$-based hyperparameter tuning strategy. These results indicate that our DRO methods can encourage fairness for DeepHit and can obtain better fairness scores than \textsc{\tableDeepHitRPGroup}.
    \item We find that our DRO variants outperform DeepHit on F$_{CI}$ and F$_{CG}$ metrics when using CI-based hyperparameter tuning. However, we find that our DRO variants cannot always achieve the best scores on the CI fairness metric when using F$_{CG}$-based hyperparameter tuning. We conclude that the CI metric may reflect fairness in the F$_{CG}$ fairness metric but the reverse may not be true.
    \item It is hard to distinguish which method is better between \textsc{dro-deephit} and \textsc{dro-deephit (split)}. For both methods, as expected, they have slightly lower performance than the DeepHit method on accuracy metrics. However, \textsc{dro-deephit} method has the best C$^{td}$ performance on the FLC dataset in Table \ref{tab:general_performance_deephit_CI}.
\end{itemize}

\begin{table*}[t!]
\vspace{-1em}
\caption{\small SODEN test set scores on the FLC, SUPPORT, SEER datasets when hyperparameter tuning is based on CI and F$_{CG}$.} 
\centering
\setlength\tabcolsep{0.1pt}
\renewcommand{\arraystretch}{0.5}
{\tiny %
\renewcommand{\belowrulesep}{0.1pt}
\renewcommand{\aboverulesep}{0.1pt}
\begin{tabular}{ccccccc|ccccc}
\toprule
\multirow{3}{*}{Datasets} & \multirow{3}{*}{Methods} & \multicolumn{5}{c|}{CI-based Tuning}                                                    & \multicolumn{5}{c}{F$_{CG}$-based Tuning}                             \\ \cmidrule{3-12}
&                   & \multicolumn{2}{c|}{Accuracy Metrics}  & \multicolumn{3}{c|}{Fairness Metrics}  & \multicolumn{2}{c|}{Accuracy Metrics}  & \multicolumn{3}{c}{Fairness Metrics} \\ \cmidrule{3-12}
&                   & \multicolumn{1}{c}{C$^{td}$$\uparrow$}  & \multicolumn{1}{c|}{IBS$\downarrow$}  &\multicolumn{1}{c}{CI(\%)$\downarrow$}& \multicolumn{1}{c}{F$_{CI}$$\downarrow$}& F$_{CG}$$\downarrow$ & \multicolumn{1}{c}{C$^{td}$$\uparrow$}  & \multicolumn{1}{c|}{IBS$\downarrow$}  &\multicolumn{1}{c}{CI(\%)$\downarrow$}& \multicolumn{1}{c}{F$_{CI}$$\downarrow$}& F$_{CG}$$\downarrow$                 \\ \midrule
\multirow{2}{*}{\makecell{FLC \\(age)}} &      SODEN             & \multicolumn{1}{c}{\makecell{0.7785 \\(0.0175)}} & \multicolumn{1}{c|}{{\makecell{0.1482 \\(0.0138)}}} & \multicolumn{1}{c}{\makecell{1.1790 \\(0.6098)}} & \multicolumn{1}{c}{\makecell{0.0004 \\(0.0009)}} & \makecell{0.0001 \\(0.0002)} & \multicolumn{1}{c}{\makecell{0.7785 \\(0.0175)}} & \multicolumn{1}{c|}{{\makecell{0.1482 \\(0.0138)}}} & \multicolumn{1}{c}{{\textbf{\makecell{1.1790 \\(0.6098)}}}} & \multicolumn{1}{c}{{\makecell{0.0004 \\(0.0009)}}} & {\makecell{0.0001 \\(0.0002)}} \\ \cmidrule{3-12} 
                  &      \tableSODENRPGroup             & \multicolumn{1}{c}{\makecell{0.7832 \\(0.0138)}} & \multicolumn{1}{c|}{{\makecell{0.1454 \\(0.0134)}}} & \multicolumn{1}{c}{\makecell{0.8248 \\(0.6491)}} & \multicolumn{1}{c}{\makecell{0.0006 \\(0.0007)}} & \makecell{0.0002 \\(0.0002)} & \multicolumn{1}{c}{\textbf{\makecell{0.7807 \\(0.0140)}}} & \multicolumn{1}{c|}{{\textbf{\makecell{0.1454 \\(0.0134)}}}} & \multicolumn{1}{c}{{{\makecell{1.7324 \\(1.2715)}}}} & \multicolumn{1}{c}{{\makecell{0.0001 \\(0.0001)}}} & {{\makecell{1.2814e-05 \\(2.2052e-05)}}} \\ \cmidrule{2-12} 
                  &        \textsc{dro-soden}           & \multicolumn{1}{c}{\1\textbf{\makecell{0.7857 \\(0.0124)}}} & \multicolumn{1}{c|}{\1\textbf{\makecell{0.1434 \\(0.0141)}}} & \multicolumn{1}{c}{\1\textbf{\makecell{1.0401 \\(0.7724)}}} & \multicolumn{1}{c}{\1\textbf{\makecell{1.6903e-05 \\(3.9865e-05)}}} & \1\textbf{\makecell{6.2555e-06 \\(1.4369e-05)}}& \multicolumn{1}{c}{{\makecell{0.7787 \\(0.0134)}}} & \multicolumn{1}{c|}{\makecell{0.1619 \\(0.0247)}} & \multicolumn{1}{c}{{\makecell{1.4140 \\(0.7545)}}} & \multicolumn{1}{c}{\1{\textbf{\makecell{2.4385e-06 \\(5.8775e-06)}}}} & {\1\textbf{\makecell{8.9596e-07 \\(2.1956e-06)}}} \\ \midrule
\multirow{2}{*}{\makecell{FLC \\(gender)}} &        SODEN           & \multicolumn{1}{c}{\makecell{0.7785 \\(0.0175)}} & \multicolumn{1}{c|}{{\makecell{0.1482 \\(0.0138)}}} & \multicolumn{1}{c}{\makecell{1.3822 \\(0.6028)}} & \multicolumn{1}{c}{\textbf{\makecell{0.0004 \\(0.0009)}}} & {\textbf{\makecell{ 0.0002 \\(0.0005)}}}& \multicolumn{1}{c}{\makecell{0.7785 \\(0.0175)}} & \multicolumn{1}{c|}{{\makecell{0.1482 \\(0.0138)}}} & \multicolumn{1}{c}{{{\makecell{1.3822 \\(0.6028)}}}} & \multicolumn{1}{c}{{\makecell{0.0004 \\(0.0009)}}} & {\makecell{0.0002 \\(0.0005)}} \\ \cmidrule{3-12} 
                  &      \tableSODENRPGroup             & \multicolumn{1}{c}{\makecell{0.7824 \\(0.0126)}} & \multicolumn{1}{c|}{{\makecell{0.1496 \\(0.0117)}}} & \multicolumn{1}{c}{\makecell{0.8252 \\(0.3665)}} & \multicolumn{1}{c}{\makecell{0.0005 \\(0.0006)}} & {\makecell{0.0003 \\(0.0003)}} & \multicolumn{1}{c}{\textbf{\makecell{0.7832 \\(0.0126)}}} & \multicolumn{1}{c|}{{\textbf{\makecell{0.1452 \\(0.0131)}}}} & \multicolumn{1}{c}{{{\textbf{\makecell{1.0564 \\(0.4984)}}}}} & \multicolumn{1}{c}{{\makecell{0.0001 \\(0.0001)}}} & {\makecell{2.5966e-05 \\(4.4394e-05)}} \\ \cmidrule{2-12}
                  &         \textsc{dro-soden}          & \multicolumn{1}{c}{\1\textbf{\makecell{0.7857 \\(0.0100)}}} & \multicolumn{1}{c|}{\1\textbf{\makecell{0.1350 \\(0.0069)}}} & \multicolumn{1}{c}{\1\textbf{\makecell{0.7115 \\(0.3545)}}} & \multicolumn{1}{c}{{\makecell{0.0008 \\(0.0023)}}} & {\makecell{0.0004 \\(0.0011)}}& \multicolumn{1}{c}{{\makecell{0.7811 \\(0.0131)}}} & \multicolumn{1}{c|}{\makecell{0.1592 \\(0.0258)}} & \multicolumn{1}{c}{{\makecell{1.5226 \\(0.7068)}}} & \multicolumn{1}{c}{{\1\textbf{\makecell{2.4385e-06 \\(5.8775e-06)}}}} & {\1\textbf{\makecell{1.5323e-06 \\(3.5413e-06)}}} \\ \midrule
\multirow{2}{*}{\makecell{SUPPORT \\(age)}} &         SODEN          & \multicolumn{1}{c}{{\textbf{\makecell{0.6276 \\(0.0101)}}}} & \multicolumn{1}{c|}{\textbf{\makecell{0.1933 \\(0.0013)}}} & \multicolumn{1}{c}{\makecell{2.6275 \\(0.2490)}} & \multicolumn{1}{c}{\makecell{0.0081 \\(0.0007)}} & \makecell{0.0035 \\(0.0003)}& \multicolumn{1}{c}{{\textbf{\makecell{0.6276 \\(0.0101)}}}} & \multicolumn{1}{c|}{\textbf{\makecell{0.1933 \\(0.0013)}}} & \multicolumn{1}{c}{{\makecell{2.6275 \\(0.2490)}}} & \multicolumn{1}{c}{{\makecell{0.0081 \\(0.0007)}}} & {\makecell{0.0035 \\(0.0003)}} \\ \cmidrule{3-12} 
                  &      \tableSODENRPGroup             & \multicolumn{1}{c}{{\makecell{0.6162 \\(0.0118)}}} & \multicolumn{1}{c|}{{\makecell{0.2073 \\(0.0134)}}} & \multicolumn{1}{c}{\makecell{1.9914 \\(0.4342)}} & \multicolumn{1}{c}{\makecell{0.0059 \\(0.0018)}} & \makecell{0.0025 \\(0.0006)} & \multicolumn{1}{c}{{\makecell{0.6070 \\(0.0080)}}} & \multicolumn{1}{c|}{{\makecell{0.2147 \\(0.0099)}}} & \multicolumn{1}{c}{{{\textbf{\makecell{1.6135 \\(0.2891)}}}}} & \multicolumn{1}{c}{{\makecell{0.0043 \\(0.0008)}}} & {\makecell{0.0020 \\(0.0004)}} \\ \cmidrule{2-12}
                  &        \textsc{dro-soden}           & \multicolumn{1}{c}{\makecell{0.6080 \\(0.0161)}} & \multicolumn{1}{c|}{\makecell{0.2002 \\(0.0095)}} & \multicolumn{1}{c}{\1\textbf{\makecell{1.9901 \\(0.4576)}}} & \multicolumn{1}{c}{\1\textbf{\makecell{0.0045 \\(0.0022)}}} & \1\textbf{\makecell{0.0021 \\(0.0012)}}& \multicolumn{1}{c}{\makecell{0.5996 \\(0.0128)}} & \multicolumn{1}{c|}{\makecell{0.2041 \\(0.0082)}} & \multicolumn{1}{c}{{{\makecell{1.9980 \\(0.4681)}}}} & \multicolumn{1}{c}{{\1\textbf{\makecell{0.0031 \\(0.0011)}}}} & {\1\textbf{\makecell{0.0019 \\(0.0011)}}} \\ \midrule
\multirow{2}{*}{\makecell{SUPPORT \\(gender)}} &      SODEN             & \multicolumn{1}{c}{{\textbf{\makecell{0.6276 \\(0.0101)}}}} & \multicolumn{1}{c|}{\textbf{\makecell{0.1933 \\(0.0013)}}} & \multicolumn{1}{c}{\makecell{1.7548 \\(0.1958)}} & \multicolumn{1}{c}{\makecell{0.0081 \\(0.0007)}} & \makecell{0.0041 \\(0.0003)}& \multicolumn{1}{c}{{\textbf{\makecell{0.6276 \\(0.0101)}}}} & \multicolumn{1}{c|}{\textbf{\makecell{0.1933 \\(0.0013)}}} & \multicolumn{1}{c}{{\makecell{1.7548 \\(0.1958)}}} & \multicolumn{1}{c}{{\makecell{0.0081 \\(0.0007)}}} & {\makecell{0.0041 \\(0.0003)}} \\ \cmidrule{3-12} 
                  &      \tableSODENRPGroup             & \multicolumn{1}{c}{{\makecell{0.6263 \\(0.0077)}}} & \multicolumn{1}{c|}{{\makecell{0.1960 \\(0.0057)}}} & \multicolumn{1}{c}{\makecell{1.6308 \\(0.3285)}} & \multicolumn{1}{c}{\makecell{0.0074 \\(0.0012)}} & \makecell{0.0038 \\(0.0006)} & \multicolumn{1}{c}{{\makecell{0.6083 \\(0.0070)}}} & \multicolumn{1}{c|}{{\makecell{0.2147 \\(0.0099)}}} & \multicolumn{1}{c}{{{\textbf{\makecell{1.2239 \\(0.1634)}}}}} & \multicolumn{1}{c}{{\makecell{0.0043 \\(0.0008)}}} & {\makecell{0.0023 \\(0.0004)}} \\ \cmidrule{2-12}
                  &       \textsc{dro-soden}            & \multicolumn{1}{c}{\makecell{0.6177 \\(0.0118)}} & \multicolumn{1}{c|}{\makecell{0.1943 \\(0.0023)}} & \multicolumn{1}{c}{\1\textbf{\makecell{1.6282 \\(0.2094)}}} & \multicolumn{1}{c}{\1\textbf{\makecell{0.0065 \\(0.0016)}}} & \1\textbf{\makecell{0.0033 \\(0.0008)}}& \multicolumn{1}{c}{\makecell{0.5996 \\(0.0128)}} & \multicolumn{1}{c|}{\makecell{0.2041 \\(0.0082)}} & \multicolumn{1}{c}{{{\makecell{1.5995 \\(0.1943)}}}} & \multicolumn{1}{c}{{\1\textbf{\makecell{0.0031 \\(0.0011)}}}} & {\1\textbf{\makecell{0.0016 \\(0.0006)}}} \\ \midrule
\multirow{2}{*}{\makecell{SUPPORT \\(race)}} &       SODEN            & \multicolumn{1}{c}{{\textbf{\makecell{0.6276 \\(0.0101)}}}} & \multicolumn{1}{c|}{\textbf{\makecell{0.1933 \\(0.0013)}}} & \multicolumn{1}{c}{\makecell{1.6910 \\(0.2182)}} & \multicolumn{1}{c}{\makecell{0.0081 \\(0.0007)}} & \makecell{0.0048 \\(0.0004)}& \multicolumn{1}{c}{{\textbf{\makecell{0.6276 \\(0.0101)}}}} & \multicolumn{1}{c|}{\textbf{\makecell{0.1933 \\(0.0013)}}} & \multicolumn{1}{c}{{\textbf{\makecell{1.6910 \\(0.2182)}}}} & \multicolumn{1}{c}{{\makecell{0.0081 \\(0.0007)}}} & {\makecell{0.0048 \\(0.0004)}} \\ \cmidrule{3-12} 
                  &      \tableSODENRPGroup             & \multicolumn{1}{c}{{\makecell{0.6137 \\(0.0085)}}} & \multicolumn{1}{c|}{{\makecell{0.2045 \\(0.0115)}}} & \multicolumn{1}{c}{\makecell{1.6304 \\(0.1344)}} & \multicolumn{1}{c}{\makecell{0.0058 \\(0.0016)}} & \makecell{0.0036 \\(0.0009)} & \multicolumn{1}{c}{{\makecell{0.6089 \\(0.0073)}}} & \multicolumn{1}{c|}{{\makecell{0.2164 \\(0.0101)}}} & \multicolumn{1}{c}{{{\makecell{1.7705 \\(0.4111)}}}} & \multicolumn{1}{c}{{\makecell{0.0043 \\(0.0008)}}} & {\makecell{0.0027 \\(0.0005)}} \\ \cmidrule{2-12}
                  &        \textsc{dro-soden}           & \multicolumn{1}{c}{\makecell{0.6113 \\(0.0143)}} & \multicolumn{1}{c|}{\makecell{0.1993 \\(0.0093)}} & \multicolumn{1}{c}{\1\textbf{\makecell{1.3418 \\(0.4286)}}} & \multicolumn{1}{c}{\1\textbf{\makecell{0.0052 \\(0.0025)}}} & \1\textbf{\makecell{0.0032 \\(0.0015)}}& \multicolumn{1}{c}{\makecell{0.5996 \\(0.0128)}} & \multicolumn{1}{c|}{\makecell{0.2041 \\(0.0082)}} & \multicolumn{1}{c}{{{\makecell{1.1979 \\(0.4273)}}}} & \multicolumn{1}{c}{{\1\textbf{\makecell{0.0031 \\(0.0011)}}}} & {\1\textbf{\makecell{0.0019 \\(0.0006)}}} \\ \midrule
\multirow{2}{*}{\makecell{SEER \\(age)}} &        SODEN           & \multicolumn{1}{c}{\textbf{\makecell{0.7132 \\(0.0017)}}} & \multicolumn{1}{c|}{\textbf{\makecell{0.1550 \\(0.0009)}}} & \multicolumn{1}{c}{\textbf{\makecell{0.8531 \\(0.0940)}}} & \multicolumn{1}{c}{\makecell{0.0280 \\(0.0014)}} & {\makecell{0.0075 \\(0.0006)}}& \multicolumn{1}{c}{\textbf{\makecell{0.7132 \\(0.0017)}}} & \multicolumn{1}{c|}{\textbf{\makecell{0.1550 \\(0.0009)}}} & \multicolumn{1}{c}{{\textbf{\makecell{0.8531 \\(0.0940)}}}} & \multicolumn{1}{c}{{\makecell{0.0280 \\(0.0014)}}} & {\makecell{0.0075 \\(0.0006)}} \\ \cmidrule{3-12} 
                  &      \tableSODENRPGroup             & \multicolumn{1}{c}{\makecell{0.7131 \\(0.0017)}} & \multicolumn{1}{c|}{{\makecell{0.1556 \\(0.0011)}}} & \multicolumn{1}{c}{\makecell{0.8541 \\(0.1562)}} & \multicolumn{1}{c}{\makecell{0.0277 \\(0.0011)}} & {\makecell{0.0075 \\(0.0006)}} & \multicolumn{1}{c}{\makecell{0.7122 \\(0.0009)}} & \multicolumn{1}{c|}{{\makecell{0.1561 \\(0.0012)}}} & \multicolumn{1}{c}{{{\makecell{0.9110 \\(0.0948)}}}} & \multicolumn{1}{c}{{\makecell{0.0276 \\(0.0013)}}} & {\makecell{0.0072 \\(0.0004)}} \\ \cmidrule{2-12}
                  &        \textsc{dro-soden}           & \multicolumn{1}{c}{\makecell{0.7026 \\(0.0116)}} & \multicolumn{1}{c|}{\makecell{0.1757 \\(0.0293)}} & \multicolumn{1}{c}{{\makecell{1.1275 \\(0.3644)}}} & \multicolumn{1}{c}{\1\textbf{\makecell{0.0227 \\(0.0054)}}} & {\textbf{\makecell{0.0071 \\(0.0010)}}}& \multicolumn{1}{c}{\makecell{0.6980 \\(0.0108)}} & \multicolumn{1}{c|}{\makecell{0.2008 \\(0.0367)}} & \multicolumn{1}{c}{{\makecell{1.4280 \\(0.5242)}}} & \multicolumn{1}{c}{{\1\textbf{\makecell{0.0161 \\(0.0095)}}}} & {\1\textbf{\makecell{0.0049 \\(0.0021)}}} \\ \midrule
\multirow{2}{*}{\makecell{SEER \\(race)}} &        SODEN           & \multicolumn{1}{c}{\textbf{\makecell{0.7132 \\(0.0017)}}} & \multicolumn{1}{c|}{\textbf{\makecell{0.1550 \\(0.0009)}}} & \multicolumn{1}{c}{\makecell{2.4948 \\(0.1341)}} & \multicolumn{1}{c}{\makecell{0.0280 \\(0.0014)}} & \makecell{0.0227 \\(0.0011)}& \multicolumn{1}{c}{\textbf{\makecell{0.7132 \\(0.0017)}}} & \multicolumn{1}{c|}{\textbf{\makecell{0.1550 \\(0.0009)}}} & \multicolumn{1}{c}{{\makecell{2.4948 \\(0.1341)}}} & \multicolumn{1}{c}{{\makecell{0.0280 \\(0.0014)}}} & {\makecell{0.0227 \\(0.0011)}} \\ \cmidrule{3-12} 
                  &      \tableSODENRPGroup             & \multicolumn{1}{c}{\makecell{0.7124 \\(0.0016)}} & \multicolumn{1}{c|}{{\makecell{0.1558 \\(0.0011)}}} & \multicolumn{1}{c}{\makecell{2.4390 \\(0.1775)}} & \multicolumn{1}{c}{\makecell{0.0273 \\(0.0013)}} & \makecell{0.0220 \\(0.0011)} & \multicolumn{1}{c}{\makecell{0.7123 \\(0.0016)}} & \multicolumn{1}{c|}{{\makecell{0.1561 \\(0.0013)}}} & \multicolumn{1}{c}{{{\makecell{2.4747 \\(0.1869)}}}} & \multicolumn{1}{c}{{\makecell{0.0271 \\(0.0013)}}} & {\makecell{0.0218 \\(0.0011)}} \\ \cmidrule{2-12}
                  &        \textsc{dro-soden}           & \multicolumn{1}{c}{\makecell{0.6913 \\(0.0109)}} & \multicolumn{1}{c|}{\makecell{0.2079 \\(0.0373)}} & \multicolumn{1}{c}{\1\textbf{\makecell{1.6398 \\(0.4948)}}} & \multicolumn{1}{c}{\1\textbf{\makecell{0.0167 \\(0.0057)}}} & \1\textbf{\makecell{0.0132 \\(0.0047)}}& \multicolumn{1}{c}{\makecell{0.6893 \\(0.0055)}} & \multicolumn{1}{c|}{\makecell{0.2191 \\(0.0269)}} & \multicolumn{1}{c}{{\1\textbf{\makecell{1.7676 \\(0.4458)}}}} & \multicolumn{1}{c}{{\1\textbf{\makecell{0.0126 \\(0.0059)}}}} & {\1\textbf{\makecell{0.0099 \\(0.0047)}}} \\ 
                  \bottomrule
\end{tabular}
}
\label{tab:general_performance_SODEN_CI}
\end{table*}

\paragraph{SODEN}

We conduct experiments to compare the accuracy and fairness of SODEN and 
 \tableSODENRPGroup~to \textsc{dro-soden}.
Our experimental results are reported in Table \ref{tab:general_performance_SODEN_CI} (CI-based hyperparameter tuning and F$_{CG}$-based hyperparameter tuning). From these results, we have the following observations:
\begin{itemize}[leftmargin=*,itemsep=0pt,parsep=0pt,topsep=0pt,partopsep=0pt]
    \item When tuning hyperparameters based on CI, it is clear that \textsc{dro-soden} outperforms the other methods on the CI fairness metric for FLC and SUPPORT datasets. Meanwhile, F$_{CI}$ and F$_{CG}$ are also reduced by using \textsc{dro-soden} while accuracy scores become a little lower than those of SODEN. However, we find \textsc{dro-soden} can achieve a slightly higher C$^{td}$ scores on the FLC dataset.
    
    \item When tuning hyperparameters based on F$_{CG}$, we find \textsc{dro-soden} also can achieve better performance on F$_{CG}$ than the corresponding values that are from the CI-based tuning since we tune hyperparameters based on this metric. In addition, \textsc{dro-soden} can obtain the best F$_{CG}$ and F$_{CI}$ scores compared to the baselines.
\end{itemize}

\subsection{Accuracy Fairness Tradeoff Comparison Across DRO Variants of Different Survival Models}
\label{sec:fairness-accuracy-tradeoff}

\begin{figure*}[t!]
\captionsetup[subfigure]{justification=centering}
\centering
        \begin{subfigure}[b]{0.325\textwidth}
                \includegraphics[width=\linewidth]{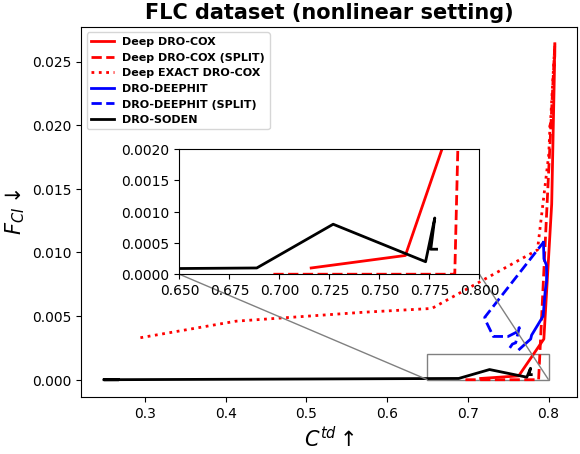}
        \end{subfigure}%
        \begin{subfigure}[b]{0.33\textwidth}
                \includegraphics[width=\linewidth]{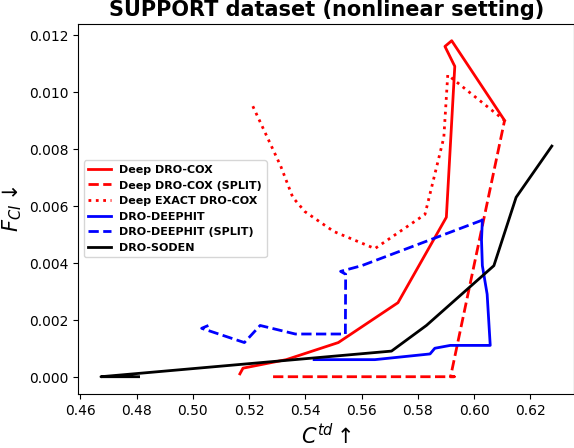}
        \end{subfigure}%
        \begin{subfigure}[b]{0.33\textwidth}
                \includegraphics[width=\linewidth]{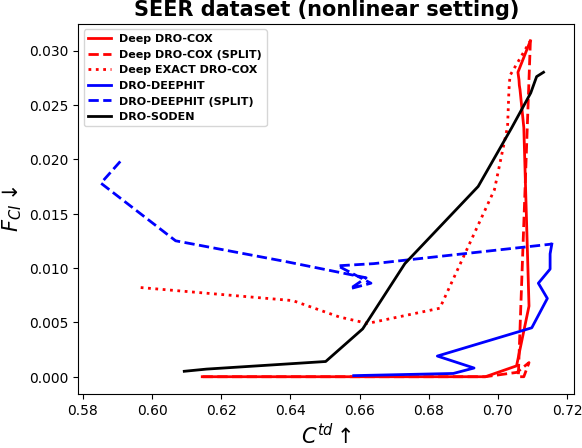}
        \end{subfigure}%
        
        \begin{subfigure}[b]{0.33\textwidth}            \includegraphics[width=\linewidth]{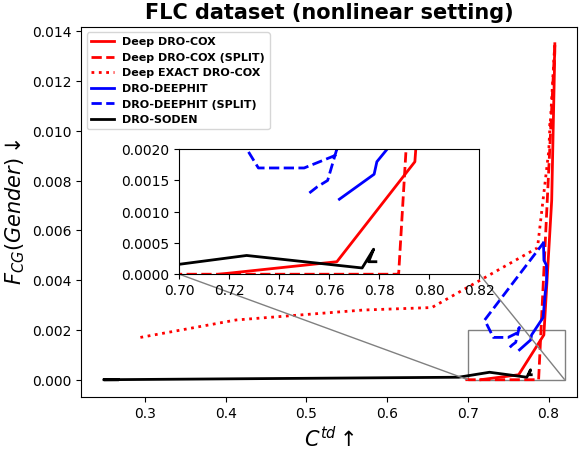}
        \end{subfigure}%
        \begin{subfigure}[b]{0.33\textwidth}
                \includegraphics[width=\linewidth]{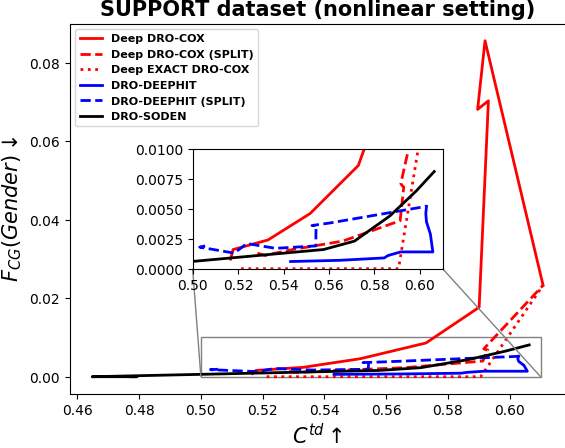}
        \end{subfigure}%
        \begin{subfigure}[b]{0.33\textwidth}
                \includegraphics[width=\linewidth]{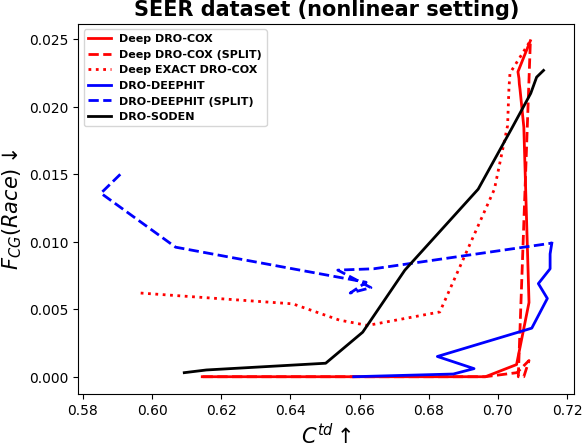}
        \end{subfigure}%
        \vspace{-1em}
        \caption{Comparison of all proposed fairness methods in terms of F$_{CI}$ (first row) and F$_{CG}$ (second row) with C$^{td}$ on FLC, SUPPORT, and SEER datasets. Each line is drawn based on the various values of $\alpha$ (from 0.1 to 1.0). In each subfigure, the closer the curve is to the lower right corner, the better the performance.}\label{fig:all_methods_comparison}
\end{figure*}

We can compare the tradeoff between accuracy and tradeoff across different DRO variants. Specifically, for the \textsc{deep dro-cox}, \textsc{deep dro-cox (split)}, \textsc{dro-deephit}, \textsc{dro-deephit (split)}, and \textsc{dro-soden} models, we test them under the nonlinear setting on FLC, SUPPORT, and SEER datasets. We evaluate the F$_{CI}$, F$_{CG}$, and C$^{td}$ scores of all methods using different values of $\alpha$ from 0.1 to 1.0 and then plot accuracy vs fairness curves for each dataset, as shown in Figure \ref{fig:all_methods_comparison}. Note that in each plot, being closer to the lower right is considered better, corresponding to a model having an $\alpha$ value that achieves as low of a fairness metric score (either F$_{CI}$ or F$_{CG}$) as possible (which is considered more fair) and as high of a C$^{td}$ score as possible. From the figure, we find that the \textsc{dro-deephit} method outperforms the other methods.

\section{Discussion} \label{sec:conclusion}

We have shown a general strategy for converting a wide class of survival models into DRO variants that encourage fairness. The key idea is to write the overall loss in terms of individual losses, which in turn could be used in a DRO framework. When there is coupling so that an individual loss technically is not ``individual'' as it depends on multiple data points, we introduced a sample splitting approach that is compliant with DRO theory. We also showed that the heuristic approach that ignores this coupling problem and naively runs an existing DRO algorithm (that assumes that there is no coupling) works in practice about as well as the sample splitting version. When a survival model used does not have this coupling issue (such as SODEN or the Cox model when using the full Cox loss as in Section~\ref{sec:dro-cox-exact}), then existing DRO machinery directly works; there is no need to use any sample splitting. Specifically for the Cox model, we derive an exact DRO approach that does not use any sample splitting, where the trick is to lift the problem to a higher dimensional (in terms of the number of survival model parameters) space where the coupling issue vanishes.

We now discuss some extensions of our work as well as open questions.

\paragraph{Competing risks}
In various time-to-event prediction problems, we aim to predict the time until the earliest of multiple competing events happen along with which such event happens. For instance, consider hospitalized patients who are in a coma. We may be interested in predicting their time until awakening. However, it could be that they die before awakening. Thus, the two critical events that compete as to which happens first is awakening vs death. Meanwhile, by the time we stop collecting training data, some patients could still be in a coma (so that their outcome is censored). Such a setting is referred to as a \emph{competing risks} problem (see, for instance, Chapter~8 of the textbook by \citet{kalbfleisch2002statistical}).

Our DRO conversion framework can easily accommodate the competing risks setting. To illustrate this, consider the DeepHit model \citep{lee2018deephit} that was originally designed to handle competing risks (and that we actually simplified in our exposition in Section \ref{sec:deephit} and Example~\ref{ex:deephit} to be for the standard survival analysis setting). Suppose that there are $\delta_{\max}\in\mathbb{N}$ competing events, which we simply label as the events $1, 2, \dots, \delta_{\max}$. Now each training point still is represented by the triple $(X_i,Y_i,\Delta_i)$ but $Y_i$ is the time until the earliest critical event happens (or the censoring time if censoring happened prior to any critical event happening), and $\Delta_i\in\{0,1,\dots,\delta_{\max}\}$ indicates which critical event happened first (with the special value of 0 meaning that censoring happened first).

Then in general, DeepHit aims to estimate the so-called \emph{cumulative incidence function} (CIF) \citep{gray1988class,fine1999proportional} that is specific to each event $\delta\in[\delta_{\max}]$:
\[
F_{\delta}(t|x)
\triangleq \mathbb{P}( Y \le t, \Delta = \delta \mid X = x).
\]
To estimate the CIF, DeepHit uses a user-specified discrete time grid $t_1<t_2<\cdots<t_m$. Letting random variable $T$ be the time until the earliest event happens for a data point with feature vector $X$, and letting random variable $\Delta$ indicate which of the critical events is the earliest to happen (also for feature vector $X$), we define
\begin{equation}
\mathbb{P}(T = t_j, \Delta = \delta \mid X = x)
\triangleq f_{\delta, j}(x; \theta)
\quad\text{for }\delta\in[\delta_{\max}]\text{ and }j\in[m],
\label{eq:competing-risk-pmf}
\end{equation}
where neural network
\begin{align*}
f(x;\theta) = \big(
& f_{1,1}(x;\theta), f_{1,2}(x;\theta), \dots, f_{1,m}(x;\theta), \\
& f_{2,1}(x;\theta), f_{2,2}(x;\theta), \dots, f_{2,m}(x;\theta), \\
& \dots, \\
& f_{\delta_{\max},1}(x;\theta), f_{\delta_{\max},2}(x;\theta), \dots, f_{\delta_{\max},m}(x;\theta) \big) \in [0,1]^{\delta_{\max} \cdot m}
\end{align*}
has parameters $\theta$ and maps a raw input $x\in\mathcal{X}$ to a probability distribution over $\delta_{\max} \cdot m$ entries. Note that when there is only one critical event of interest (so that $\delta_{\max}=1$), then equation~\eqref{eq:competing-risk-pmf} reduces to equation~\eqref{eq:deephit-model}).

In particular, DeepHit's estimate of the CIF is given by
\[
\widehat{F}_{\delta}(t_\ell|x)
\triangleq \sum_{j=1}^\ell f_{\delta,j}(x; \theta)
\qquad\text{for }\delta\in[\delta_{\max}]\text{ and }\ell\in[m].
\]
Moreover, the loss function of DeepHit in this case is
\[
L^\text{DeepHit-general}(\theta)
\triangleq \frac{1}{n}\sum_{i=1}^n L_i^{\text{DeepHit-general}}(\theta),
\]
where the $i$-th individual loss is
\begin{align*}
&L^{\text{DeepHit-general}}_i(\theta) \nonumber\\
&=
     \beta\cdot
     \bigg[
       - \mathbf{1}\{\Delta_i \ne 0\}\log(f_{\Delta_i,\kappa(Y_i)}(X_i;\theta))
       - \mathbf{1}\{\Delta_i = 0\} \log\bigg( 1 - \sum_{\delta=1}^{\delta_{\max}}\sum_{\ell=1}^{\kappa(Y_i)}f_{\delta,\ell}(X_i;\theta) \bigg)
     \bigg]
\nonumber \\
&\quad + (1-\beta)\cdot
    \frac{1}{n} \cdot \mathbf{1}\{\Delta_i \ne 0\} \sum_{\substack{j\in[n]\text{ s.t.}\\\kappa(Y_j) > \kappa(Y_i)}}
     \exp\bigg(\frac{\sum_{\ell=1}^{\kappa(Y_i)}[f_{\Delta_i,\ell}(X_j;\theta) - f_{\Delta_i,\ell}(X_i;\theta)]}{\sigma}\bigg).
\end{align*}
Note that the model hyperparameters are the same as what we had presented earlier in Section~\ref{sec:deephit}, with the only minor difference being that in practice, for the ranking loss, it could be helpful to weight the contributions of different competing events differently (i.e., for the $i$-th point to have a ranking loss contribution, it needs to have $\Delta_i\ne0$, in which case we multiply by a scalar weight hyperparameter specific to the event type $\Delta_i\in[\delta_{\max}]$).

In particular, because we can write the loss function as the average of individual loss terms, we can convert this model into a DRO variant using either the heuristic or sample splitting approaches we presented in Section~\ref{sec:dro-split}.

\paragraph{Tuning subpopulation probability threshold $\bm{\alpha}$}
In using DRO, tuning the subpopulation probability threshold $\alpha$ can significantly impact the results. In our experiments, we tuned $\alpha$ using one of two different fairness metrics (CI or F$_{CG}$) on a validation set. While DRO (whether heuristic or using sample splitting) itself does not require the user to specify which features to treat as sensitive in the training loss, we are effectively using some information about which features to treat as sensitive as it shows up in computing the validation set fairness metric. We do this primarily because this is how other researchers have tuned hyperparameters for fair survival models. An open question thus arises of whether we could tune $\alpha$ in some other way in practice that either does not use a validation set or which does not require using a validation set fairness metric that knows which features to treat as sensitive. Fundamentally this is about coming up with other fairness evaluation metrics that can be used for the validation set.

\paragraph{Choosing ``optimal'' splits}
For simplicity, in how we presented our split DRO approach, we used 2-fold cross-fitting, where a key step is randomly splitting the training data into the two sets $\mathcal{D}_1$ and $\mathcal{D}_2$ (as a reminder, we explain how to use more than 2 folds in Appendix~\ref{sec:k-fold-cross-fitting}). Furthermore, our main theoretical guarantee for split DRO (Theorem~\ref{thm:main-result}) assumes that $|\mathcal{D}_1|=|\mathcal{D}_2|$. We defer identifying an ``optimal'' split to future work. Some interesting problems that arise include coming up with some notion of optimality of a split, and figuring out the number of folds that would be optimal. It could even be that instead of randomly splitting the training data, there could be some better non-random optimization-based approach for data splitting.

\paragraph{Impact of DRO on evaluation metrics that are not about fairness}
Lastly, we point out that it would be interesting to empirically study how converting a survival model into its DRO variant impacts other metrics aside from the accuracy or fairness metrics we considered, such as calibration metrics \citep{haider2020effective, goldstein2020x}. Ultimately, we suspect that DRO variants of survival models potentially have interesting properties that make them useful beyond encouraging fairness.

\acks{Shu Hu is supported by the U.S.~National Science Foundation (NSF) CRII award IIS-2434967, the National Artificial Intelligence Research Resource (NAIRR) Pilot, and TACC Lonestar6. George H.~Chen is supported by NSF CAREER award \#2047981. The views, opinions, and/or findings expressed are those of the authors and should not be interpreted as representing the official views or policies of the NSF or the NAIRR Pilot.}

\newpage

\FloatBarrier
\appendix
\counterwithin{figure}{section}
\counterwithin{table}{section}
\counterwithin{equation}{section}

\section{More Details on Cox Models}

\subsection{Estimating the Baseline Hazard and Conditional Survival Function}
\label{sec:breslow}

After learning the log partial hazard function $f(\cdot;\theta)$ (or, equivalently, learning the parameters $\theta$), a standard approach to estimating the baseline hazard function $h_0$ is to use the so-called Breslow method \citep{breslow1972discussion}. In what follows, we use $\widehat{\theta}$ to denote the learned estimate of $\theta$.

The Breslow method estimates a discretized version of $h_0$. Specifically, let $t_1 < t_2 < \cdots < t_m$ denote the unique times when critical event happened in the training data. Let $d_j$ denote the number of critical events that occurred at time $t_j$ for $j\in[m]$. Then we compute the following estimate of $h_0$ at the $j$-th time step:
\[
\widehat{h}_{0,j}
\triangleq \frac{d_j}{\sum_{i\in[n]\text{ s.t.~}Y_i \ge t_j} \exp(f(x_i;\widehat{\theta}))}
\qquad\text{for }j\in[m].
\]
After estimating the baseline hazard function, estimating the survival function is straightforward. Recall that $S(t|x)=\exp\big(-\int_0^t h(u|x)\textrm{d}u\big)$. Then combining this equation with the proportional hazards assumption (i.e., the factorization in equation~\eqref{eq:hazard-factorization}), we get
\begin{equation}
S(t|x)
= \exp\Big(-\int_0^t h_0(u)\exp(f(x;\theta)) \textrm{d}u \Big) 
= \exp\Big(\Big[-\!\!\!\!\!\!\underbrace{\int_0^t h_0(u)\textrm{d}u}_{\text{abbreviate as }H_0(t)}\!\!\!\!\!\!\Big] \exp(f(x;\theta)) \Big).
\label{eq:how-to-estimate-S-helper}
\end{equation}
We can estimate $H_0(t)$ via a summation in place of an integration:
\[
\widehat{H}_0(t) \triangleq \sum_{j\in[m]\text{ s.t.~}t_j \le t} \widehat{h}_{0,j}
\qquad\text{for }t\ge0.
\]
Thus, by plugging in $\widehat{H}_0$ in place of $H_0$ and $\widehat{\theta}$ in place of $\theta$ in equation~\eqref{eq:how-to-estimate-S-helper}, we obtain the conditional survival function estimate $\widehat{S}(t|x) \triangleq \exp(-\widehat{H}_0(t)\exp(f(x;\widehat{\theta})))$.

\subsection{The Proportional Hazards Assumption and the Shape of the Conditional Survival Function}
\label{sec:proportional-hazards-survival-curve-shape}

The proportional hazards assumption constrains the shape of the conditional survival function. Recall that for any two real numbers $a,b\in\mathbb{R}$, we have $\exp(a\cdot b) = (\exp(a))^b$. Then equation~\eqref{eq:how-to-estimate-S-helper} (which was derived using the proportional hazard assumption) is equal to
\[
S(t|x)
= \exp\big( H_0(t) \exp(f(x;\theta)) \big)
= [\underbrace{\exp(H_0(t))}_{\triangleq S_0(t)}]^{\exp(f(x;\theta))}.
\]
In other words, under the proportional hazards assumption, the conditional survival function $S(\cdot|x)$ must necessarily be a power of the so-called baseline survival function $S_0(\cdot)$.

\subsection{Details on the Full Cox Loss}
\label{sec:cox-full-details}

Throughout this section, we assume that we have done the preprocessing stated for Proposition~\ref{prop:cox-full-vs-partial}, namely that for $i\in[n]$ such that $\Delta_i=0$, we have set $Y_i = t_{\kappa(Y_i,0)}$.

\paragraph{Deriving the full Cox loss function}
For a survival model specified in terms of a hazard function, the full likelihood (see, e.g., Section 3.2 of \citet{kalbfleisch2002statistical}) is
\begin{equation}
\text{likelihood}
\triangleq
  \prod_{i=1}^n
    \bigg\{
      [h(Y_{i}|X_{i})]^{\Delta_i}
      \exp\bigg(
        -\int_0^{Y_i} h(u|X_i) du
      \bigg)
    \bigg\}.
\label{eq:full-likelihood}
\end{equation}
For the Cox model, recall that the hazard function is given by
\[
h(t|x)=h_0(t) e^{f(x;\theta)}.
\]
Under the assumption that $h_0$ is piecewise constant, as given in equation~\eqref{eq:hazard-piecewise-constant}, we have
\[
h(t|x)
= \begin{cases}
  e^{\psi_\ell + f(x;\theta)}
  & \text{if }t\in(t_{\ell-1},t_\ell]\text{ for }\ell\in[m], \\
  0 & \text{otherwise}.
  \end{cases}
\]
Plugging this expression for $h(t|x)$ into the full likelihood (equation~\eqref{eq:full-likelihood}), we get
\begin{equation}
\text{likelihood}
=\prod_{i=1}^n
   \bigg\{
     [e^{\psi_\ell + f(X_i;\theta)}]^{\Delta_{i}}\exp\bigg(-e^{f(X_i;\theta)}\sum_{\ell=1}^{\kappa(Y_{i},\Delta_i)}(t_{\ell}-t_{\ell-1})e^{\psi_{\ell}}\bigg)
   \bigg\},
\label{eq:cox-full-likelihood-helper}
\end{equation}
where we have crucially used the preprocessing of the censoring data's observed times in evaluating the integral. (If we did not do the preprocessing, we could still come up with an expression for the integral but the math gets messy.)

Taking the negative log of both sides of equation~\eqref{eq:cox-full-likelihood-helper}, we get:
\[
-\text{log likelihood}
=-\sum_{i=1}^n
  \bigg\{
    \Delta_i[\psi_{\kappa(Y_i,\Delta_i)} + f(X_i;\theta)]
    -
    e^{f(X_i;\theta)}
    \sum_{\ell=1}^{\kappa(Y_i,\Delta_i)}
      (t_{\ell}-t_{\ell-1})e^{\psi_{\ell}}
  \bigg\}
\]
Multiplying both sides by $\frac{1}{n}$, we get the full loss (i.e., equation~\eqref{eq:cox-full}), which we reproduce here for convenience:
\[
L^{\text{Cox-full}}(\theta,\psi)
=\frac{1}{n}
 \sum_{i=1}^n
   \bigg\{
     -
     \Delta_i[\psi_{\kappa(Y_{i},\Delta_i)} + f(X_i;\theta)]
     +
     e^{f(X_i;\theta)}
     \sum_{\ell=1}^{\kappa(Y_i,\Delta_i)}
       (t_{\ell}-t_{\ell-1})e^{\psi_{\ell}}
   \bigg\}.
\]
\paragraph{Proof of Proposition~\ref{prop:cox-full-vs-partial}}
First, we do some re-indexing (to introduce summation over the unique times in which critical events happen):
\begin{align}
&L^{\text{Cox-full}}(\theta,\psi) \nonumber\\
&\!=
  \frac{1}{n}
  \sum_{i=1}^n
    \bigg\{
      -\Delta_i \psi_{\kappa(Y_{i},\Delta_i)}
      -
      \Delta_i
      f(X_i;\theta)
      +
      e^{f(X_i;\theta)}
      \sum_{\ell=1}^{\kappa(Y_{i},\Delta_i)}
        (t_{\ell}-t_{\ell-1})e^{\psi_\ell}
    \bigg\} \nonumber\\
&\!=\!
  -\frac{1}{n}\!\hspace{1pt}
   \sum_{i=1}^n
     \Delta_i \psi_{\kappa(Y_{i},\Delta_i)}
  -
  \frac{1}{n}\!\hspace{1pt}
  \sum_{i=1}^n
    \Delta_i f(X_i;\theta)
  +
  \frac{1}{n}\!\hspace{1pt}
  \sum_{i=1}^n
    e^{f(X_i;\theta)}
    \sum_{\ell=1}^{\kappa(Y_i,\Delta_i)}
      (t_{\ell}-t_{\ell-1})e^{\psi_\ell}
  \nonumber\\
&\!=\!
  -\frac{1}{n}\!\hspace{1pt}
   \sum_{\ell=1}^m
   \underbrace{
     \sum_{j=1}^n
       \Delta_j
       \mathbf{1}\{Y_j=t_\ell\}
   }_{\triangleq d_\ell} \psi_\ell
  -
  \frac{1}{n}\!\hspace{1pt}
  \sum_{i=1}^n
    \Delta_i f(X_i;\theta)
  +
  \frac{1}{n}\!\hspace{1pt}
  \sum_{\ell=1}^m
    (t_{\ell}-t_{\ell-1})
    e^{\psi_\ell}
    \sum_{j=1}^n
      \mathbf{1}\{Y_j\ge t_{\ell}\}
      e^{f(X_j;\theta)}.
\label{eq:cox-full-reindexed}
\end{align}
Taking the derivative of $L^{\text{Cox-full}}(\theta,\psi)$ with respect to $\psi_\ell$ for $\ell\in[m]$, we get
\[
\frac{\partial L^{\text{Cox-full}}(\theta,\psi)}{\partial\psi_\ell}
=-\frac{d_\ell}{n}
 +
 \bigg[
   \frac{1}{n}
   (t_{\ell}-t_{\ell-1})
   \sum_{j=1}^n
     \mathbf{1}\{Y_j\ge\ell\}e^{f(X_j;\theta)}
 \bigg]e^{\psi_\ell}.
\]
By setting this derivative to 0, we get that the optimal value of $\psi_\ell$ is
\[
\widehat{\psi}_\ell
=\log
   \frac{d_\ell}
        {(t_{\ell}-t_{\ell-1})
         \sum_{j=1}^n
           \mathbf{1}\{Y_j\ge t_{\ell}\}
           e^{f(X_j;\theta)}}.
\]
One can verify that indeed $[\frac{\partial^2 L^{\text{Cox-full}}(\theta,\psi)}{\partial\psi_\ell^2}]_{\psi_{\ell}=\widehat{\psi}_{\ell}}>0$ so that this optimal value corresponds to a minimum.

Finally, we plug in $\widehat{\psi}\triangleq(\widehat{\psi}_{1},\dots,\widehat{\psi}_{m})$ in place of $\psi=(\psi_{1},\dots,\psi_{m})$ in $L^{\text{Cox-full}}(\theta,\psi)$ (using equation~\eqref{eq:cox-full-reindexed}):
\begin{align*}
 & L^{\text{Cox-full}}(\theta,\widehat{\psi}) \\
 & =-\frac{1}{n}\sum_{i=1}^{n}\Delta_{i}f(X_i;\theta)-\frac{1}{n}\sum_{\ell=1}^{m}d_{\ell}\widehat{\psi}_{\ell}+\frac{1}{n}\sum_{\ell=1}^{m}(t_{\ell}-t_{\ell-1})e^{\widehat{\psi}_{\ell}}\sum_{j=1}^{n}\mathbf{1}\{Y_{j}\ge t_{\ell}\}e^{f(X_j;\theta)}\\
 & =-\frac{1}{n}\sum_{i=1}^{n}\Delta_{i}f(X_i;\theta)-\frac{1}{n}\sum_{\ell=1}^{m}d_{\ell}\log\frac{d_{\ell}}{(t_{\ell}-t_{\ell-1})\sum_{j=1}^{n}\mathbf{1}\{Y_{j}\ge t_{\ell}\}e^{f(X_j;\theta)}}+\frac{1}{n}\sum_{\ell=1}^{m}d_{\ell}\\
 & =-\frac{1}{n}\sum_{i=1}^{n}\Delta_{i}f(X_i;\theta)-\frac{1}{n}\sum_{\ell=1}^{m}d_{\ell}\bigg[\log\frac{d_{\ell}}{t_{\ell}-t_{\ell-1}}-\log\sum_{j=1}^{n}\mathbf{1}\{Y_{j}\ge t_{\ell}\}e^{f(X_j;\theta)}\bigg]+\frac{1}{n}\sum_{\ell=1}^{m}d_{\ell}\\
 & =-\frac{1}{n}\sum_{i=1}^{n}\Delta_{i}f(X_i;\theta)+\frac{1}{n}\sum_{\ell=1}^{m}d_{\ell}\log\sum_{j=1}^{n}\mathbf{1}\{Y_{j}\ge t_{\ell}\}e^{f(X_j;\theta)}+\underbrace{\text{constant}}_{\text{w.r.t.~}\theta}\\
 & =-\frac{1}{n}\sum_{i=1}^{n}\Delta_{i}f(X_i;\theta)+\frac{1}{n}\sum_{\ell=1}^{m}\bigg[\sum_{i=1}^{n}\Delta_{i}\mathbf{1}\{Y_{i}=t_{\ell}\}\bigg]\log\sum_{j=1}^{n}\mathbf{1}\{Y_{j}\ge t_{\ell}\}e^{f(X_j;\theta)}+\text{constant}\\
 & =-\frac{1}{n}\sum_{i=1}^{n}\Delta_{i}f(X_i;\theta)+\frac{1}{n}\sum_{i=1}^{n}\Delta_{i}\sum_{\ell=1}^{m}\mathbf{1}\{Y_{i}=t_{\ell}\}\log\sum_{j=1}^{n}\mathbf{1}\{Y_{j}\ge t_{\ell}\}e^{f(X_j;\theta)}+\text{constant}\\
 & =-\frac{1}{n}\sum_{i=1}^{n}\Delta_{i}f(X_i;\theta)+\frac{1}{n}\sum_{i=1}^{n}\Delta_{i}\log\sum_{j=1}^{n}\mathbf{1}\{Y_{j}\ge Y_{i}\}e^{f(X_j;\theta)}+\text{constant}\\
 & =\frac{1}{n}\sum_{i=1}^{n}-\Delta_{i}\bigg[f(X_i;\theta)-\log\sum_{j=1}^{n}\mathbf{1}\{Y_{j}\ge Y_{i}\}e^{f(X_j;\theta)}\bigg]+\text{constant}\\
 & =L^{\text{Cox}}(\theta)+\text{constant}. \tag*{$\blacksquare$}
\end{align*}

\section{Cross-Fitting With More Than Two Folds}
\label{sec:k-fold-cross-fitting}

For example, for some pre-specified number of folds $K_{\text{folds}}$, we could randomly partition the training data into $K_{\text{folds}}$ roughly equal-size sets $\mathcal{D}_1,\dots,\mathcal{D}_{K_{\text{folds}}}$. Then for each $k\in[K_{\text{folds}}]$, we could either set
\[
L^{\text{split}}_{\text{DRO}}(\theta, \eta^{(1)}, \dots, \eta^{(K_{\text{folds}})})
\triangleq
  \sum_{k=1}^{K_{\text{folds}}}
    L_{\text{DRO}}^{\text{split}}\big(\theta, \eta^{(k)}, \mathcal{D}_k \mid ([n] \setminus \mathcal{D}_k)\big),
\]
or
\[
L^{\text{split}}_{\text{DRO}}(\theta, \eta^{(1)}, \dots, \eta^{(K_{\text{folds}})})
\triangleq
  \sum_{k=1}^{K_{\text{folds}}}
    L_{\text{DRO}}^{\text{split}}\big(\theta, \eta^{(k)}, ([n] \setminus \mathcal{D}_k) \mid \mathcal{D}_k\big).
\]

\section{Proof of Theorem \ref{thm:main-result}\label{sec:pf-main-result}}

We prove the following.
\begin{proposition}
\label{prop:main-result-more-general}(Slightly more general version of Theorem \ref{thm:main-result}) Let $n\ge2$ and randomly split the training data into $\mathcal{D}_{1}$ and $\mathcal{D}_{2}$ of sizes $n_{1}\ge1$ and $n_{2}=n-n_{1}$. Let $\omega>0$. Suppose that Assumptions A1--A6 hold. If $\phi_{\text{transform}}(s)=s$, then define
\begin{align*}
M & \triangleq M_{\text{indiv}}+n_{2}M_{\text{couple-max}},\\
M' & \triangleq(M_{\text{couple-max}}-M_{\text{couple-min}})\sqrt{\frac{\omega n_{2}}{\zeta}}.
\end{align*}
If instead $\phi_{\text{transform}}(s)=\log(1+s)$, then define
\begin{align*}
M & \triangleq M_{\text{indiv}}+\log(1+n_{2}M_{\text{couple-max}}),\\
M' & \triangleq\frac{(M_{\text{couple-max}}-M_{\text{couple-min}})}{\zeta M_{\text{couple-min}}}\sqrt{\frac{8\omega}{n_{2}}}.
\end{align*}
Then with probability at least
\[
1-2\bigg[\frac{M}{(C_{\alpha}-1)\big[\sqrt{\frac{2\omega}{n_{1}}}\max\{2,\frac{C_{\alpha}}{C_{\alpha}-1}\}M+(2\mathcal{L}+1)M'\big]}+\mathbb{N}(M',\mathcal{X})\bigg]e^{-\omega}-me^{-\frac{n_{2}\zeta}{8}}
\]
over randomness in the training data, we have
\begin{align*}
 & \bigg|\inf_{\eta\in\mathbb{R}}L_{\text{DRO}}^{\text{split}}(\theta,\eta,\mathcal{D}_{1}\mid\mathcal{D}_{2})-\inf_{\eta\in\mathbb{R}}R_{\text{DRO}}^{\text{split}}(\theta,\eta)\bigg|\\
 & \quad\le10C_{\alpha}^{2}\Bigg[\frac{1}{\sqrt{n_{1}}}\max\Big\{2,\frac{C_{\alpha}}{C_{\alpha}-1}\Big\}\big(\sqrt{2\omega}+1\big)M+(2\mathcal{L}+1)M'\Bigg].
\end{align*}
\end{proposition}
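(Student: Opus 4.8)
The plan is to reduce the two-sided gap between the infima to a uniform-in-$\eta$ deviation bound, and then to control that deviation through two nested concentration arguments: one over the randomness in $\mathcal{D}_2$ (controlling each per-point loss $L_i(\theta;\mathcal{A}_i\cap\mathcal{D}_2)$ around the individual risk $R_{\text{indiv}}$) and one over the randomness in $\mathcal{D}_1$ (controlling the empirical average). Writing $\hat F(\eta)\triangleq L_{\text{DRO}}^{\text{split}}(\theta,\eta,\mathcal{D}_1\mid\mathcal{D}_2)$ and $F(\eta)\triangleq R_{\text{DRO}}^{\text{split}}(\theta,\eta)$, the elementary inequality $|\inf_\eta\hat F-\inf_\eta F|\le\sup_\eta|\hat F(\eta)-F(\eta)|$ lets me work pointwise in $\eta$. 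Because $C_\alpha>1$, both objectives diverge as $\eta\to\pm\infty$, so the minimizers lie in a bounded interval and I may restrict to $\eta\in[0,M]$. Since the $+\eta$ terms cancel, the task becomes bounding $C_\alpha|\sqrt{\hat G(\eta)}-\sqrt{G(\eta)}|$, where $\hat G(\eta)$ and $G(\eta)$ are the empirical and population mean-squared ReLU terms inside the square roots.

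First I would introduce the intermediate quantity $\tilde G(\eta)\triangleq\frac{1}{n_1}\sum_{i\in\mathcal{D}_1}[R_{\text{indiv}}(\theta;X_i,Y_i,\Delta_i)-\eta]_+^2$ and split $\hat G-G=(\hat G-\tilde G)+(\tilde G-G)$. The term $\hat G-\tilde G$ isolates the effect of replacing each realized coupled loss by its conditional expectation; since $[\cdot-\eta]_+^2$ is $2M$-Lipschitz on $[0,M]$ uniformly in $\eta$, it suffices to bound $\max_{i\in\mathcal{D}_1}|L_i(\theta;\mathcal{A}_i\cap\mathcal{D}_2)-R_{\text{indiv}}(\theta;X_i,Y_i,\Delta_i)|$. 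By A4 this difference equals $\delta_i[\phi_{\text{transform}}(S_i)-\mathbb{E}\phi_{\text{transform}}(S_i)]$ for a sum $S_i$ of conditionally i.i.d.\ coupling terms lying in $[M_{\text{couple-min}},M_{\text{couple-max}}]$, so Hoeffding's inequality controls $|S_i-\mathbb{E}S_i|$ at scale $\sqrt{\omega n_2}$. The identity case transfers this directly, while the $\log(1+\cdot)$ case first passes through the local Lipschitz constant $1/(1+S_i)$, which is where a lower bound on the number of adjacent points is needed. That lower bound comes from a good event $\mathcal{E}$ on which every time index $t_\ell$ is realized at least $n_2\zeta/2$ times in $\mathcal{D}_2$; a multiplicative Chernoff bound and a union bound over the $m$ indices give $\mathbb{P}(\mathcal{E}^c)\le m e^{-n_2\zeta/8}$, the last term in the failure probability. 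To make the per-point bound uniform over all $x$ that could appear in $\mathcal{D}_1$, I would cover $\mathcal{X}$ by an $M'$-net (the only continuous coordinate, since $y$ takes $m$ values and $\delta\in\{0,1\}$), apply the concentration at the $\mathbb{N}(M',\mathcal{X})$ net points, and extend using the $\mathcal{L}$-Lipschitzness of both $L^*$ and $R_{\text{indiv}}$ in $x$; matching the net radius to the concentration scale yields the uniform bound $(2\mathcal{L}+1)M'$ and the covering term in the probability.

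For $\tilde G-G$, a genuine i.i.d.\ average over $\mathcal{D}_1$, I would use a variance-aware (Bernstein) bound: since $[R_{\text{indiv}}-\eta]_+^2\le M^2$ pointwise, its variance is at most $M^2G(\eta)$, so the deviation is of order $\sqrt{M^2 G(\eta)\,\omega/n_1}+M^2\omega/n_1$. This self-bounding form is essential, because dividing by $\sqrt{\hat G}+\sqrt G\ge\sqrt{G(\eta)}$ to pass through the square root then gives the fast $1/\sqrt{n_1}$ rate (with an $M\sqrt{\omega/n_1}$ leading term) rather than the $n_1^{-1/4}$ rate that the crude inequality $|\sqrt a-\sqrt b|\le\sqrt{|a-b|}$ would produce. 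Uniformity over $\eta$ follows from an $\epsilon$-net of $[0,M]$ at scale $\epsilon\asymp(C_\alpha-1)[\sqrt{\omega/n_1}\max\{2,\tfrac{C_\alpha}{C_\alpha-1}\}M+(2\mathcal{L}+1)M']$, whose cardinality $M/\epsilon$ is exactly the first factor in the bracket of the failure probability, together with the $2M$-Lipschitzness of $[\cdot-\eta]_+^2$ in $\eta$. A curvature argument at the minimizer of $F$, whose first-order condition fixes $\mathbb{E}[[R_{\text{indiv}}-\eta]_+]/\sqrt{G(\eta)}=1/C_\alpha$ and whose second derivative is bounded below in terms of $(C_\alpha-1)$, localizes the near-optimal $\eta$ and keeps $G(\eta)$ away from zero; this is what converts the pointwise estimates into the bound on the infima and explains the $C_\alpha^2$ and $\max\{2,\tfrac{C_\alpha}{C_\alpha-1}\}$ factors.

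The hardest part will be this last transfer. The square root is not globally Lipschitz, so naive concentration inside the root degrades the rate, and the infimum over $\eta$ could in principle sit where $G(\eta)$ is small; overcoming this requires the Bernstein self-bounding estimate and the convexity/curvature of $\eta\mapsto F(\eta)$ to work in tandem, and it is the source of every $C_\alpha$-dependent constant. The remaining steps---choosing the two net radii consistently, assembling the three union bounds (the $\eta$-net, the $\mathcal{X}$-net, and the time-index event $\mathcal{E}$), and tracking the numerical constant $10$---are careful bookkeeping rather than a conceptual obstacle, and specializing the resulting $M$ and $M'$ to the linear Cox and DeepHit settings then yields Corollaries \ref{cor:linear-cox-split-dro-guarantee} and \ref{cor:deephit-split-dro-guarantee}.
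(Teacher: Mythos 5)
Your overall architecture tracks the paper's: reduce to a uniform-in-$\eta$ deviation over a bounded interval, control the coupling approximation $L_i(\theta;\mathcal{A}_i\cap\mathcal{D}_2)$ versus $R_{\text{indiv}}$ by Hoeffding on the coupling sums conditioned on the good event that every time index appears more than $n_2\zeta/2$ times in $\mathcal{D}_2$, make this uniform over $\mathcal{X}$ with an $M'$-net, and finish with an $\eta$-net union bound. That half of your argument is essentially the paper's Lemmas~\ref{lem:enough-points-for-every-time} and~\ref{lem:L-indiv-vs-R-indiv} and is sound. However, two steps as written do not go through.

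First, you restrict $\eta$ to $[0,M]$. The population minimizer can be negative: for $\eta<0$ and $Z=R_{\text{indiv}}\ge0$ the objective is $C_\alpha\sqrt{\mathrm{Var}(Z)+(\mathbb{E}Z-\eta)^2}+\eta$, whose derivative at $\eta=0$ is positive whenever $\mathrm{Var}(Z)>(C_\alpha^2-1)(\mathbb{E}Z)^2$, so by convexity the infimum sits at some $\eta<0$ and $\sup_{\eta\in[0,M]}|\hat F(\eta)-F(\eta)|$ does not dominate $|\inf_\eta\hat F-\inf_\eta F|$. The correct interval, established in Lemma~\ref{lem:eta-limited}, is $[-\frac{1}{C_\alpha-1}M,M]$, and that negative endpoint is precisely where the $\max\{2,\frac{C_\alpha}{C_\alpha-1}\}$ factor comes from: the per-point ReLU'd losses $[L_{\text{indiv}}-\eta]_+$ then range over $[0,\max\{2,\frac{C_\alpha}{C_\alpha-1}\}M]$.

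Second, and more seriously, your route through the square root---a Bernstein bound on $\tilde G-G$ with variance proxy $M^2G(\eta)$ followed by division by $\sqrt{\tilde G}+\sqrt{G}\ge\sqrt{G(\eta)}$---fails exactly where the infimum can live, and the proposed rescue does not work. Your claim that a curvature argument at the minimizer keeps $G(\eta)$ away from zero is false: if $R_{\text{indiv}}$ is (nearly) deterministic and equal to $c$, the population objective is $C_\alpha[c-\eta]_++\eta$, minimized at $\eta=c$ where $G(c)=0$ and the stated first-order condition degenerates to $0/0$. You would need a separate case analysis for small $G(\eta)$ (where $\sqrt{\tilde G}$ and $\sqrt{G}$ are each individually small with high probability). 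The paper avoids dividing by $\sqrt{G}$ altogether: it treats $\eta\mapsto C_\alpha\sqrt{\frac{1}{n_1}\sum_i\Xi_i^2}$ as a $\frac{C_\alpha}{\sqrt{n_1}}$-Lipschitz function of the vector $(\Xi_i)_{i\in\mathcal{D}_1}$ and applies concentration for Lipschitz functions of bounded variables (Lemma~\ref{lem:spadesuit}), plus a separate Jensen-gap bound $|\mathbb{E}\sqrt{\hat G}-\sqrt{G}|\lesssim\sqrt{M/n_1}$ (Lemma~\ref{lem:heartsuit}); both are uniform in how small $G(\eta)$ is. Note also that your two-term decomposition $\hat G-G=(\hat G-\tilde G)+(\tilde G-G)$ has no term that accounts for this Jensen gap between $\mathbb{E}[\sqrt{\cdot}]$ and $\sqrt{\mathbb{E}[\cdot]}$, which the paper handles as a third, deterministic error term.
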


Theorem \ref{thm:main-result} corresponds to Proposition \ref{prop:main-result-more-general}, where we assume $n$ to be even and we set $n_{1}=n_{2}=n/2$.

We define
\[
L_{\text{DRO}}^{\text{split},*}(\theta,\eta)\triangleq C_{\alpha}\sqrt{\mathbb{E}_{(X,Y,\Delta)\sim\mathbb{P}}\big[[L_{\text{indiv}}(\theta;X,Y,\Delta)-\eta]_{+}^{2}\big]}+\eta.
\]
The main goal in the proof is to bound
\begin{align*}
 & |L_{\text{DRO}}^{\text{split}}(\theta,\eta,\mathcal{D}_{1}\mid\mathcal{D}_{2})-R_{\text{DRO}}^{\text{split}}(\theta,\eta)|\\
 & \quad=|L_{\text{DRO}}^{\text{split}}(\theta,\eta,\mathcal{D}_{1}\mid\mathcal{D}_{2})-\mathbb{E}[L_{\text{DRO}}^{\text{split}}(\theta,\eta,\mathcal{D}_{1}\mid\mathcal{D}_{2})]\\
 & \quad\quad\;+\mathbb{E}[L_{\text{DRO}}^{\text{split}}(\theta,\eta,\mathcal{D}_{1}\mid\mathcal{D}_{2})]-L_{\text{DRO}}^{\text{split},*}(\theta,\eta,\mathcal{D}_{1}\mid\mathcal{D}_{2})\\
 & \quad\quad\;+L_{\text{DRO}}^{\text{split},*}(\theta,\eta,\mathcal{D}_{1}\mid\mathcal{D}_{2})-R_{\text{DRO}}^{\text{split}}(\theta,\eta)\big|\\
 & \quad\le\underbrace{|L_{\text{DRO}}^{\text{split}}(\theta,\eta,\mathcal{D}_{1}\mid\mathcal{D}_{2})-\mathbb{E}[L_{\text{DRO}}^{\text{split}}(\theta,\eta,\mathcal{D}_{1}\mid\mathcal{D}_{2})]\big|}_{\triangleq\spadesuit}\\
 & \quad\quad\;+\underbrace{\big|\mathbb{E}[L_{\text{DRO}}^{\text{split}}(\theta,\eta,\mathcal{D}_{1}\mid\mathcal{D}_{2})]-L_{\text{DRO}}^{\text{split},*}(\theta,\eta,\mathcal{D}_{1}\mid\mathcal{D}_{2})\big|}_{\triangleq\heartsuit}\\
 & \quad\quad\;+\underbrace{\big|L_{\text{DRO}}^{\text{split},*}(\theta,\eta,\mathcal{D}_{1}\mid\mathcal{D}_{2})-R_{\text{DRO}}^{\text{split}}(\theta,\eta)\big|}_{\triangleq\clubsuit},
\end{align*}
where we have used the triangle inequality. The bulk of the proof is in upper-bounding $\spadesuit$, $\heartsuit$, and $\clubsuit$. Prior to bounding these, we collect two important lemmas. The first establishes that $L_{\text{indiv}}(\theta;x,y,\delta)$ and $R_{\text{indiv}}(\theta;x,y,\Delta)$ are bounded. Note that we defer all proofs of lemmas to subsections immediately following this main proof outline.
\begin{lemma}
\label{lem:L-indiv-R-indiv-bounded}Under Assumption A4, for all $(x,y,\delta)\in\mathcal{Z}$, if $\phi_{\text{transform}}$ is the identity function, then
\[
L_{\text{indiv}}(\theta;x,y,\delta),R_{\text{indiv}}(\theta;x,y,\delta)\in[0,M_{\text{indiv}}+n_{2}M_{\text{couple-max}}].
\]
Otherwise if $\phi_{\text{transform}}(s)=\log(1+s)$, then
\[
L_{\text{indiv}}(\theta;x,y,\delta),R_{\text{indiv}}(\theta;x,y,\delta)\in[0,M_{\text{indiv}}+\log(1+n_{2}M_{\text{couple-max}})].
\]
\end{lemma}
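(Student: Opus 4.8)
The plan is to bound $L^{*}((x,y,\delta),\mathcal{C};\theta)$ uniformly over all $(x,y,\delta)\in\mathcal{Z}$, all $\theta\in\Theta$, and all admissible adjacency sets $\mathcal{C}$, and then to specialize this uniform bound to the two concrete quantities $L_{\text{indiv}}$ and $R_{\text{indiv}}$. The key structural observation is that both $L_{\text{indiv}}(\theta;x,y,\delta)$ and the integrand defining $R_{\text{indiv}}(\theta;x,y,\delta)$ have the form $L^{*}((x,y,\delta),\mathcal{C};\theta)$ with $\mathcal{C}=\mathcal{A}^{*}((x,y,\delta),\mathcal{S})$ for some reference set $\mathcal{S}$ of exactly $n_{2}$ points (the points indexed by $\mathcal{D}_{2}$ in the first case, the fresh i.i.d.\ sample in the second). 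Since $\mathcal{A}^{*}((x,y,\delta),\mathcal{S})\subseteq\mathcal{S}$ by definition of the adjacency function, we obtain the cardinality bound $|\mathcal{C}|\le n_{2}$, which is the only fact about the adjacency structure that the argument requires.

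With this in hand, I would decompose $L^{*}$ using its Assumption A4 form $\phi_{\text{indiv}}+\delta\cdot\phi_{\text{transform}}\big(\sum_{(x',y',\delta')\in\mathcal{C}}\phi_{\text{couple}}\big)$ and bound each piece separately. For the lower bound, observe that $\phi_{\text{indiv}}\ge0$, $\delta\ge0$, and the inner sum $s\triangleq\sum_{(x',y',\delta')\in\mathcal{C}}\phi_{\text{couple}}\ge0$ (each coupling term is at least $M_{\text{couple-min}}>0$, or the sum is empty), so $\phi_{\text{transform}}(s)\ge0$ in both the identity and the $\log(1+\cdot)$ cases; hence $L^{*}\ge0$. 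For the upper bound, use $\phi_{\text{indiv}}\le M_{\text{indiv}}$, $\delta\le1$, and $s\le|\mathcal{C}|\cdot M_{\text{couple-max}}\le n_{2}M_{\text{couple-max}}$. When $\phi_{\text{transform}}$ is the identity this yields $L^{*}\le M_{\text{indiv}}+n_{2}M_{\text{couple-max}}$, and when $\phi_{\text{transform}}(s)=\log(1+s)$, monotonicity of $\log(1+\cdot)$ gives $L^{*}\le M_{\text{indiv}}+\log(1+n_{2}M_{\text{couple-max}})$.

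These bounds hold pointwise for every realization of $\mathcal{C}$, so $L_{\text{indiv}}$ inherits them immediately. For $R_{\text{indiv}}$, which is an expectation over the fresh sample, I would invoke monotonicity of expectation: a random variable lying in $[0,B]$ almost surely has expectation in $[0,B]$, where $B$ is the appropriate upper constant. This delivers the stated inclusions for both quantities in both cases.

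There is essentially no hard step here; the lemma is a uniform boundedness fact. The only point requiring care is making explicit that the adjacency set feeding into $L^{*}$ always has at most $n_{2}$ elements, since this is precisely what converts the per-term bound $M_{\text{couple-max}}$ on $\phi_{\text{couple}}$ into the aggregate bound $n_{2}M_{\text{couple-max}}$ (or its logarithm). Everything else is monotonicity of $\phi_{\text{transform}}$ together with monotonicity of expectation.
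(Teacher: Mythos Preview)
Your proposal is correct and follows essentially the same approach as the paper: bound $L^{*}$ via Assumption~A4 by noting that the adjacency set has at most $n_{2}$ elements, then transfer the bound to $L_{\text{indiv}}$ directly and to $R_{\text{indiv}}$ via monotonicity of expectation. Your write-up is in fact more explicit than the paper's (which simply writes the inequality chain and says ``by a similar argument'' for $R_{\text{indiv}}$), but the underlying argument is identical.
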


In fact, $M$ is defined in Proposition \ref{prop:main-result-more-general} precisely based on the upper bounds in Lemma \ref{lem:L-indiv-R-indiv-bounded}.

The next lemma says that even though we are optimizing over $\eta\in\mathbb{R}$, we actually only need to consider $\eta$ within a closed interval that depends on $M$.
\begin{lemma}
\label{lem:eta-limited}(Slight variant of Lemma 9 of \citet{duchi2021learning}) We have
\[
\inf_{\eta\in\mathbb{R}}L_{\text{DRO}}^{\text{split}}(\theta,\eta,\mathcal{D}_{1}\mid\mathcal{D}_{2})=\inf_{\eta\in[-\frac{1}{C_{\alpha}-1}M,M]}L_{\text{DRO}}^{\text{split}}(\theta,\eta,\mathcal{D}_{1}\mid\mathcal{D}_{2}),
\]
and similarly
\[
\inf_{\eta\in\mathbb{R}}R_{\text{DRO}}^{\text{split}}(\theta,\eta)=\inf_{\eta\in[-\frac{1}{C_{\alpha}-1}M,M]}R_{\text{DRO}}^{\text{split}}(\theta,\eta).
\]
\end{lemma}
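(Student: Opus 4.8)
The plan is to reduce both claimed identities to a single one-dimensional analysis of a function of the generic form $g(\eta) = C_{\alpha}\sqrt{A(\eta)} + \eta$, where $A(\eta)$ is either the empirical average $\frac{1}{n_{1}}\sum_{i\in\mathcal{D}_{1}}[u_{i}-\eta]_{+}^{2}$ (for $L_{\text{DRO}}^{\text{split}}(\theta,\eta,\mathcal{D}_{1}\mid\mathcal{D}_{2})$, with $u_{i}\triangleq L_{i}(\theta;\mathcal{A}_{i}\cap\mathcal{D}_{2})$) or the expectation $\mathbb{E}_{(X,Y,\Delta)\sim\mathbb{P}}[[R_{\text{indiv}}(\theta;X,Y,\Delta)-\eta]_{+}^{2}]$ (for $R_{\text{DRO}}^{\text{split}}(\theta,\eta)$). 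The only facts I would import are, from Lemma \ref{lem:L-indiv-R-indiv-bounded}, that in either case the relevant loss values lie in $[0,M]$, and from the definition $C_{\alpha}=\sqrt{2(\frac{1}{\alpha}-1)^{2}+1}$ that $C_{\alpha}>1$ strictly (since $\alpha<1$ forces $(\frac{1}{\alpha}-1)^{2}>0$, making $\frac{1}{C_{\alpha}-1}$ well defined). Everything then follows from examining $g$ on the two half-lines lying outside the interval.

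First I would dispose of the right endpoint. For $\eta\ge M$, every loss value is at most $M\le\eta$, so each ReLU term $[u-\eta]_{+}$ vanishes and $A(\eta)=0$; hence $g(\eta)=\eta$, which is strictly increasing on $[M,\infty)$. In particular $g(M)=M$ and $g(\eta)\ge M$ for all $\eta\ge M$, so nothing is gained by pushing $\eta$ above $M$.

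Next I would handle the left endpoint. For $\eta\le0$ the ReLU is inactive because every loss value is nonnegative, so $[u-\eta]_{+}=u-\eta\ge-\eta\ge0$; squaring, averaging (or integrating), and taking the square root gives $\sqrt{A(\eta)}\ge-\eta$. Therefore $g(\eta)\ge -C_{\alpha}\eta+\eta=(C_{\alpha}-1)(-\eta)$. When $\eta<-\frac{1}{C_{\alpha}-1}M$ this lower bound strictly exceeds $M$, so $g(\eta)>M$ throughout the left exterior.

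Finally I would combine the pieces: the infimum of $g$ over $[-\frac{1}{C_{\alpha}-1}M,\,M]$ is at most $g(M)=M$, while $g(\eta)\ge M$ for every $\eta$ outside this interval (which contains $M$, since $M\ge0$ implies $-\frac{1}{C_{\alpha}-1}M\le0\le M$). Hence no exterior point can beat the interior value at $M$, and the unrestricted infimum equals the restricted one. Because the argument uses only the range $[0,M]$ of the losses and never which averaging operation defines $A(\eta)$, it applies verbatim to both $L_{\text{DRO}}^{\text{split}}$ and $R_{\text{DRO}}^{\text{split}}$, yielding the two stated equalities. I do not anticipate a genuine obstacle here: this is the standard truncation argument behind Lemma 9 of \citet{duchi2021learning}, and the only point requiring care is matching the two interval endpoints to the correct elementary bounds — the trivial identity $g(\eta)=\eta$ on the right and the bound $g(\eta)\ge(C_{\alpha}-1)(-\eta)$ on the left — together with confirming $C_{\alpha}>1$.
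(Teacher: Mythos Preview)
Your argument is correct and follows the same outline as the paper's proof: show $g(\eta)=\eta$ for $\eta\ge M$, show $g(\eta)\ge M$ on the left exterior, and compare to $g(M)=M$. The one minor difference is that the paper evaluates the loss only at the single left endpoint $\eta=-\frac{M}{C_{\alpha}-1}$ (obtaining $\ge M$ there) and then invokes convexity of $\eta\mapsto L_{\text{DRO}}^{\text{split}}(\theta,\eta,\mathcal{D}_{1}\mid\mathcal{D}_{2})$ to handle the rest of $(-\infty,-\frac{M}{C_{\alpha}-1})$, whereas you bypass convexity with the direct pointwise bound $g(\eta)\ge(C_{\alpha}-1)(-\eta)$ on all of $(-\infty,0]$; both routes yield the same conclusion.
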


Now we present the bounds on $\spadesuit$, $\heartsuit$, and $\clubsuit$ in three successive lemmas.
\begin{lemma}
\label{lem:spadesuit}(Bound on $\spadesuit$; appears in the proof of Theorem 2 of \citet{duchi2021learning}) Let $\omega>0$. We have
\[
\mathbb{P}\Bigg(\underbrace{\spadesuit\ge C_{\alpha}\max\Big\{2,\frac{C_{\alpha}}{C_{\alpha}-1}\Big\} M\sqrt{\frac{2\omega}{n_{1}}}}_{\mathcal{E}_{\text{bad spade}}(\eta)}\Bigg)\le2e^{-\omega}.
\]
Note that for this probabilistic bound, the event $\mathcal{E}_{\text{bad spade}}(\eta)$, as the notation suggests, depends on $\eta$ as $\spadesuit$ depends on $\eta$ (later we union bound $\mathcal{E}_{\text{bad spade}}(\eta)$ over a finite choice of options of $\eta$).
\end{lemma}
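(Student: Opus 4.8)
The plan is to reduce the statement to a concentration inequality for an empirical root-second-moment of conditionally i.i.d.\ bounded random variables, and then invoke the argument used in the proof of Theorem~2 of \citet{duchi2021learning}. First I would condition on the indices $\mathcal{D}_2$ and the data $\{(X_j,Y_j,\Delta_j):j\in\mathcal{D}_2\}$. By Proposition~\ref{prop:sample-split}, the individual losses $L_i(\theta;\mathcal{A}_i\cap\mathcal{D}_2)=L_{\text{indiv}}(\theta;X_i,Y_i,\Delta_i)$ for $i\in\mathcal{D}_1$ are then i.i.d., so the summands $Z_i\triangleq[L_i(\theta;\mathcal{A}_i\cap\mathcal{D}_2)-\eta]_+$ are i.i.d.\ as well. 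Writing $\widehat{s}\triangleq\sqrt{\frac{1}{n_1}\sum_{i\in\mathcal{D}_1}Z_i^2}$, the additive $\eta$ in $L_{\text{DRO}}^{\text{split}}$ cancels in the centered quantity, so $\spadesuit=C_{\alpha}\,|\widehat{s}-\mathbb{E}[\widehat{s}]|$, where the expectation is conditional on $\mathcal{D}_2$.

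Next I would record the boundedness that drives the constants. By Lemma~\ref{lem:L-indiv-R-indiv-bounded} we have $L_{\text{indiv}}(\theta;\cdot)\in[0,M]$, and by Lemma~\ref{lem:eta-limited} it suffices to consider $\eta\in[-\frac{1}{C_{\alpha}-1}M,\,M]$. Hence each $Z_i$ lies in $[0,B]$ with $B\le\max\{2,\frac{C_{\alpha}}{C_{\alpha}-1}\}\,M$: when $\eta\ge0$ we have $Z_i\le M$, and when $\eta<0$ we have $Z_i\le M-\eta\le\frac{C_{\alpha}}{C_{\alpha}-1}M$. With this range in hand, $\widehat{s}^{\,2}=\frac{1}{n_1}\sum_{i\in\mathcal{D}_1}Z_i^2$ is an average of i.i.d.\ variables in $[0,B^2]$, so it concentrates about its conditional mean at the $1/\sqrt{n_1}$ rate via a bounded-differences (McDiarmid) argument. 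I would then transfer this to $\widehat{s}$ through the identity $\widehat{s}-\mathbb{E}[\widehat{s}]=\frac{\widehat{s}^{\,2}-(\mathbb{E}[\widehat{s}])^2}{\widehat{s}+\mathbb{E}[\widehat{s}]}$, combined with a case split on whether $\mathbb{E}[\widehat{s}]$ is large or small relative to the fluctuation scale; this is precisely the step carried out in \citet{duchi2021learning}, and it is what produces the factor $\max\{2,\frac{C_{\alpha}}{C_{\alpha}-1}\}$ and the $\sqrt{2\omega/n_1}$ dependence. Tracking both tails yields the two-sided bound defining $\mathcal{E}_{\text{bad spade}}(\eta)$ with conditional probability at most $2e^{-\omega}$, and since this holds for every realization of $\mathcal{D}_2$ it also holds unconditionally.

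The main obstacle is the non-Lipschitzness of the square root at the origin: a direct application of McDiarmid to $\widehat{s}$ itself gives per-coordinate differences of size $B/\sqrt{n_1}$, whose squares sum to $B^2=O(1)$, so it fails to deliver the advertised $1/\sqrt{n_1}$ rate. Concentrating the second moment first and only then passing to its square root, with the careful case analysis near $\widehat{s}\approx0$, is the delicate part, and it is exactly the piece I would import from the proof of Theorem~2 of \citet{duchi2021learning} rather than re-derive. Finally, I would emphasize that the event $\mathcal{E}_{\text{bad spade}}(\eta)$ is stated for a \emph{fixed} $\eta$; the dependence on $\eta$ is retained deliberately because the surrounding argument later union-bounds this event over a finite $M'$-net of admissible $\eta$ values (and, through the $\mathcal{L}$-Lipschitzness in $x$ from Assumption~A4(c), a covering of $\mathcal{X}$), which is the source of the $\mathbb{N}(M',\mathcal{X})$ and $M/[\cdots]$ terms in the final probability of Proposition~\ref{prop:main-result-more-general}.
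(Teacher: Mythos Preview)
Your proposal arrives at the right inequality, but the route is more involved than the paper's and, in one respect, mis-diagnoses the difficulty. The paper's proof is a two-liner once the range of $\Xi_i\triangleq[L_{\text{indiv}}(\theta;X_i,Y_i,\Delta_i)-\eta]_+$ is established: it records that the map
\[
(\Xi_i)_{i\in\mathcal{D}_1}\ \longmapsto\ L_{\text{DRO}}^{\text{split}}(\theta,\eta,\mathcal{D}_1\mid\mathcal{D}_2)=\frac{C_\alpha}{\sqrt{n_1}}\big\|(\Xi_i)\big\|_2+\eta
\]
is $\frac{C_\alpha}{\sqrt{n_1}}$-Lipschitz in the Euclidean norm (Lemma~7 of \citet{duchi2021learning}), and then applies a concentration inequality for Lipschitz functions of independent bounded random variables (Lemma~6 of \citet{duchi2021learning}) directly, followed by the change of variables $\widetilde\omega=C_\alpha\max\{2,\frac{C_\alpha}{C_\alpha-1}\}M\sqrt{2\omega/n_1}$. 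The ``obstacle'' you identify---non-Lipschitzness of $\sqrt{\cdot}$ at the origin---never appears, because the Euclidean norm is globally $1$-Lipschitz in $\ell_2$ (reverse triangle inequality), and the relevant concentration tool uses the $\ell_2$ Lipschitz constant rather than the coordinatewise bounded differences of vanilla McDiarmid. Your two-step plan (concentrate $\widehat{s}^{\,2}$, then transfer via $\widehat{s}-\mathbb{E}\widehat{s}=(\widehat{s}^{\,2}-(\mathbb{E}\widehat{s})^2)/(\widehat{s}+\mathbb{E}\widehat{s})$ with a small/large case split on $\mathbb{E}\widehat{s}$) would also succeed, and is a standard workaround when only bounded-differences is available, but it is not what the paper actually does.

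One small correction to your closing remarks: you conflate two separate covering arguments. The union bound that consumes $\mathcal{E}_{\text{bad spade}}(\eta)$ is over an $\varepsilon_{\omega,n_1,n_2}$-net of the interval $[-\frac{1}{C_\alpha-1}M,M]$ for $\eta$, which is what produces the $\frac{C_\alpha}{C_\alpha-1}\frac{M}{\varepsilon_{\omega,n_1,n_2}}$ factor. The $\mathbb{N}(M',\mathcal{X})$ term and the Lipschitzness in $x$ from Assumption~A4(c) enter through the control of $\clubsuit$ (via Lemma~\ref{lem:L-indiv-vs-R-indiv}), not of $\spadesuit$.
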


The statement of this lemma depends on Lemma \ref{lem:L-indiv-R-indiv-bounded} (since the constant $M$ shows up), and the proof crucially uses the fact that from Lemma \ref{lem:eta-limited}, we know that it suffices to only consider $\eta\in[-\frac{1}{C_{\alpha}-1}M,M]$.
\begin{lemma}
\label{lem:heartsuit}(Bound on $\heartsuit$; appears in the proof of Theorem 2 of \citet{duchi2021learning}) We have
\[
\heartsuit\le C_{\alpha}\sqrt{\max\Big\{2,\frac{C_{\alpha}}{C_{\alpha}-1}\Big\} M}\cdot\frac{1}{\sqrt{n_{1}}}.
\]
\end{lemma}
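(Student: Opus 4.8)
The plan is to recognize $\heartsuit$ as a comparison between a population quantity and the expectation of its empirical counterpart, and then to control their gap via Jensen's inequality together with a variance bound. Conditioned on $\mathcal{D}_2$ and its data, Proposition~\ref{prop:sample-split} tells us that the terms $[L_{\text{indiv}}(\theta;X_i,Y_i,\Delta_i)-\eta]_+^2$ for $i\in\mathcal{D}_1$ are i.i.d., with common mean $\mathbb{E}_{(X,Y,\Delta)\sim\mathbb{P}}[[L_{\text{indiv}}(\theta;X,Y,\Delta)-\eta]_+^2]$. Writing $s_n(\eta)\triangleq\sqrt{\frac{1}{n_1}\sum_{i\in\mathcal{D}_1}[L_{\text{indiv}}(\theta;X_i,Y_i,\Delta_i)-\eta]_+^2}$ and $s(\eta)\triangleq\sqrt{\mathbb{E}_{(X,Y,\Delta)\sim\mathbb{P}}[[L_{\text{indiv}}(\theta;X,Y,\Delta)-\eta]_+^2]}$, we have $L_{\text{DRO}}^{\text{split}}=C_\alpha s_n(\eta)+\eta$ and $L_{\text{DRO}}^{\text{split},*}=C_\alpha s(\eta)+\eta$, so the additive $\eta$ cancels and $\heartsuit=C_\alpha|\mathbb{E}[s_n(\eta)]-s(\eta)|$.

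First I would resolve the absolute value. Since $x\mapsto\sqrt{x}$ is concave, Jensen's inequality gives $\mathbb{E}[s_n(\eta)]\le\sqrt{\mathbb{E}[s_n(\eta)^2]}=s(\eta)$, so $\heartsuit=C_\alpha(s(\eta)-\mathbb{E}[s_n(\eta)])\ge0$ and it remains to upper bound this gap. The key algebraic step is the variance identity $(\mathbb{E}[s_n(\eta)])^2=\mathbb{E}[s_n(\eta)^2]-\mathrm{Var}(s_n(\eta))=s(\eta)^2-\mathrm{Var}(s_n(\eta))$; combined with subadditivity of the square root (i.e., $s(\eta)=\sqrt{(s(\eta)^2-V)+V}\le\sqrt{s(\eta)^2-V}+\sqrt{V}$), this yields $\mathbb{E}[s_n(\eta)]\ge s(\eta)-\sqrt{\mathrm{Var}(s_n(\eta))}$ and hence the clean reduction $\heartsuit\le C_\alpha\sqrt{\mathrm{Var}(s_n(\eta))}$. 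Matching this against the target bound, it then suffices to show $\mathrm{Var}(s_n(\eta))\le\frac{1}{n_1}\max\{2,\frac{C_\alpha}{C_\alpha-1}\}M$.

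The remaining and most delicate task is this $O(1/n_1)$ variance bound for the square root of an empirical mean, which I would establish following the argument in the proof of Theorem~2 of \citet{duchi2021learning}, adapted to our conditional-on-$\mathcal{D}_2$ i.i.d.\ structure. Two earlier results feed in here: by Lemma~\ref{lem:L-indiv-R-indiv-bounded} the loss $L_{\text{indiv}}(\theta;\cdot)$ lies in $[0,M]$, and by Lemma~\ref{lem:eta-limited} it suffices to consider $\eta\in[-\frac{M}{C_\alpha-1},M]$, so each summand $[L_{\text{indiv}}-\eta]_+$ is bounded by $\max\{2,\frac{C_\alpha}{C_\alpha-1}\}$ times $M$. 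Viewing $s_n(\eta)=\tfrac{1}{\sqrt{n_1}}\|([L_{\text{indiv}}-\eta]_+)_{i\in\mathcal{D}_1}\|_2$ as a $\tfrac{1}{\sqrt{n_1}}$-Lipschitz coordinatewise function of the $n_1$ independent bounded terms, I would apply the Efron--Stein inequality. \textbf{The main obstacle} is that a crude worst-case per-coordinate difference bound only yields a constant rather than an $O(1/n_1)$ quantity: the map $\sqrt{\cdot}$ has unbounded derivative near $0$, and the empirical second moment can genuinely be small because $[L_{\text{indiv}}-\eta]_+$ vanishes whenever $L_{\text{indiv}}\le\eta$. The decay in $n_1$ must instead be extracted by the careful leave-one-out/truncation argument of \citet{duchi2021learning}, which controls each coordinate's contribution using that the remaining $n_1-1$ terms concentrate and keep the denominator $s_n(\eta)$ bounded away from $0$ with high probability. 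Once that variance bound is in hand, substituting it into $\heartsuit\le C_\alpha\sqrt{\mathrm{Var}(s_n(\eta))}$ gives the stated inequality, and since the bound holds uniformly over the restricted range of $\eta$, it passes through the infima over $\eta$ taken in the surrounding proof.
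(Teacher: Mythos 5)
Your reduction is sound and reproduces, in expanded form, exactly what the paper gets by citing Lemma~8 of \citet{duchi2021learning}: the additive $\eta$ cancels, Jensen's inequality handles one direction ($\mathbb{E}[s_n(\eta)]\le s(\eta)$, which is the paper's second displayed chain), and the identity $(\mathbb{E}[s_n(\eta)])^2=s(\eta)^2-\mathrm{Var}(s_n(\eta))$ plus subadditivity of the square root gives $\heartsuit\le C_{\alpha}\sqrt{\mathrm{Var}(s_n(\eta))}$. Up to that point everything is correct and consistent with the paper.

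The gap is the variance bound itself, which is the entire quantitative content of the lemma and which you do not prove. You correctly diagnose that Efron--Stein with a worst-case per-coordinate difference only yields a constant, but the repair you sketch --- a ``leave-one-out/truncation argument'' relying on $s_n(\eta)$ being bounded away from $0$ with high probability --- is not the mechanism that works and is not available here: $s_n(\eta)$ can genuinely equal $0$ (e.g.\ once $\eta$ exceeds all the losses), and no such lower bound is needed. The actual argument behind Duchi et al.'s Lemma~8, which the paper leans on, is a direct second-moment computation exploiting self-normalization. Set $Z_i\triangleq\Xi_i^2$ and $W\triangleq\frac{1}{n_1}\sum_{i\in\mathcal{D}_1}Z_i$, so $s_n(\eta)=\sqrt{W}$ and $\mathbb{E}[W]=\mathbb{E}[Z]=s(\eta)^2$. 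Using $(\sqrt{a}-\sqrt{b})^2\le (a-b)^2/b$ for $b>0$,
\[
\mathrm{Var}(s_n(\eta))\le\mathbb{E}\big[(\sqrt{W}-\sqrt{\mathbb{E}[Z]})^2\big]\le\frac{\mathrm{Var}(W)}{\mathbb{E}[Z]}\le\frac{\mathbb{E}[Z^2]}{n_1\,\mathbb{E}[Z]},
\]
and the fourth-moment condition $\mathbb{E}[\Xi_i^4]\le\big(\max\{2,\tfrac{C_{\alpha}}{C_{\alpha}-1}\}M\big)^2\mathbb{E}[\Xi_i^2]$ (immediate from the boundedness of $\Xi_i$ via Lemmas~\ref{lem:L-indiv-R-indiv-bounded} and~\ref{lem:eta-limited}) cancels the $\mathbb{E}[Z]$ in the denominator; the degenerate case $\mathbb{E}[Z]=0$ is trivial since then $s_n(\eta)=0$ almost surely. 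No truncation or concentration event appears. Without this computation (or an explicit invocation of the lemma containing it), your proof does not establish the $1/\sqrt{n_1}$ rate. As a secondary point, note that this argument delivers $\mathrm{Var}(s_n(\eta))\le(\max\{2,\tfrac{C_{\alpha}}{C_{\alpha}-1}\}M)^2/n_1$ rather than the target $\max\{2,\tfrac{C_{\alpha}}{C_{\alpha}-1}\}M/n_1$ you write down, so even with the correct mechanism your plan would land on $C_{\alpha}\max\{2,\tfrac{C_{\alpha}}{C_{\alpha}-1}\}M/\sqrt{n_1}$; the placement of the square root in the stated constant is something to reconcile with the precise form of Duchi et al.'s Lemma~8 rather than something your route can recover.
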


Once again, the statement of this lemma depends on Lemma \ref{lem:L-indiv-R-indiv-bounded} (due to the constant $M$ showing up), and the proof again uses the fact that $\eta\in[-\frac{1}{C_{\alpha}-1}M,M]$.
\begin{lemma}
\label{lem:clubsuit}(Bound on $\clubsuit$) Under Assumptions A1, A2, and A4, we have
\begin{align*}
\clubsuit & \le C_{\alpha}\sup_{(x,y,\delta)\in\mathcal{Z}}|L_{\text{indiv}}(\theta;x,y,\delta)-R_{\text{indiv}}(\theta;x,y,\delta)|\\
 & =C_{\alpha}\underbrace{\max_{(x,y,\delta)\in\mathcal{Z}}|L_{\text{indiv}}(\theta;x,y,\delta)-R_{\text{indiv}}(\theta;x,y,\delta)|}_{\triangleq\diamondsuit}.
\end{align*}
Importantly, the supremum is attained by a specific point in the set $\mathcal{Z}\triangleq\mathcal{X}\times\{t_{1},t_{2},\dots,t_{m}\}\times\{0,1\}$.
\end{lemma}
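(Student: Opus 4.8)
The plan is to exploit the fact that $\clubsuit$ compares two DRO-style objectives that differ only in whether they place $L_{\text{indiv}}$ or $R_{\text{indiv}}$ inside the squared ReLU, while sharing the same additive $+\eta$ term. First I would cancel the $+\eta$ terms, reducing $\clubsuit$ to
\[
\clubsuit = C_\alpha\bigl| \,\|[L_{\text{indiv}}(\theta;\cdot)-\eta]_+\|_{L^2(\mathbb{P})} - \|[R_{\text{indiv}}(\theta;\cdot)-\eta]_+\|_{L^2(\mathbb{P})} \,\bigr|,
\]
where $\|\cdot\|_{L^2(\mathbb{P})}$ denotes the $L^2$ norm under the data distribution $\mathbb{P}$. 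This recasts the problem as comparing two $L^2$ norms of closely related random variables.

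Next I would apply the reverse triangle inequality for the $L^2(\mathbb{P})$ norm, giving
\[
\bigl| \,\|[L_{\text{indiv}}-\eta]_+\|_{L^2} - \|[R_{\text{indiv}}-\eta]_+\|_{L^2} \,\bigr| \le \|[L_{\text{indiv}}-\eta]_+ - [R_{\text{indiv}}-\eta]_+\|_{L^2}.
\]
Since the ReLU map $[\cdot]_+$ is $1$-Lipschitz, pointwise we have $|[L_{\text{indiv}}-\eta]_+ - [R_{\text{indiv}}-\eta]_+| \le |L_{\text{indiv}} - R_{\text{indiv}}|$, so the right-hand side is at most $\|L_{\text{indiv}} - R_{\text{indiv}}\|_{L^2(\mathbb{P})}$. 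Finally, bounding an $L^2$ norm over a probability measure by the supremum over the support yields $\clubsuit \le C_\alpha \sup_{(x,y,\delta)\in\mathcal{Z}} |L_{\text{indiv}}(\theta;x,y,\delta) - R_{\text{indiv}}(\theta;x,y,\delta)|$, which is exactly the claimed bound.

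For the final assertion that the supremum is in fact a maximum, I would argue by compactness. Under Assumption A2 the time grid is finite and $\Delta\in\{0,1\}$, so $\mathcal{Z} = \mathcal{X}\times\{t_{1},\dots,t_{m}\}\times\{0,1\}$ is a finite union of copies of $\mathcal{X}$, which is compact by Assumption A1. It then suffices to show that for each fixed $(y,\delta)$ the map $x\mapsto|L_{\text{indiv}}(\theta;x,y,\delta)-R_{\text{indiv}}(\theta;x,y,\delta)|$ is continuous. Under Assumption A3 the adjacency set produced by $\mathcal{A}^{*}$ depends on $(y,\delta)$ through $\kappa(y)$ but \emph{not} on $x$, so Assumption A4(c) directly makes $x\mapsto L_{\text{indiv}}(\theta;x,y,\delta)$ an $\mathcal{L}$-Lipschitz map; and since $R_{\text{indiv}}$ is an expectation of such $\mathcal{L}$-Lipschitz maps (the adjacency structure inside the expectation again not varying with $x$), it is $\mathcal{L}$-Lipschitz in $x$ as well. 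Continuity plus compactness then gives attainment via the extreme value theorem.

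The main obstacle here is less a genuine difficulty than a bookkeeping point: one must verify that the adjacency set entering both $L_{\text{indiv}}$ and $R_{\text{indiv}}$ does not itself move with $x$, which is precisely what Assumption A3 guarantees, so that the Lipschitz condition A4(c) transfers cleanly to both quantities and hence to their difference. Everything else — the reverse triangle inequality, the $1$-Lipschitzness of ReLU, and the bound $\|\cdot\|_{L^2(\mathbb{P})}\le\|\cdot\|_{\infty}$ — is standard and requires no nontrivial estimation.
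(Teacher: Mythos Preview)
Your proposal is correct and reaches the same bound as the paper, though by a somewhat cleaner route. The paper establishes the inequality by writing $[L_{\text{indiv}}-\eta]_+ \le [L_{\text{indiv}}-R_{\text{indiv}}]_+ + [R_{\text{indiv}}-\eta]_+$ via sublinearity of the ReLU, applying Minkowski's inequality in $L^2(\mathbb{P})$, bounding the cross term $\|[L_{\text{indiv}}-R_{\text{indiv}}]_+\|_{L^2}$ by the supremum, and then symmetrizing (swapping the roles of $L_{\text{indiv}}$ and $R_{\text{indiv}}$) to obtain the two-sided bound. Your argument collapses these steps by invoking the reverse triangle inequality for $\|\cdot\|_{L^2(\mathbb{P})}$ and the $1$-Lipschitzness of $[\cdot]_+$ directly; since the reverse triangle inequality is itself proved via Minkowski plus symmetrization, and ReLU $1$-Lipschitzness is equivalent in effect to the sublinearity step here, the two arguments are the same in content, with yours simply packaged at a higher level of abstraction. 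For the attainment claim, the paper's reasoning matches yours (compactness of $\mathcal{X}$ via A1, finiteness of the time grid via A2, continuity in $x$ via A4(c)); your explicit observation that under A3 the adjacency set does not move with $x$---so that A4(c) applies with a genuinely fixed $\mathcal{C}$---is a detail the paper leaves implicit.
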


To help upper-bound $\diamondsuit$, we make use of the following lemma.
\begin{lemma}
\label{lem:enough-points-for-every-time}(Enough data points in $\mathcal{D}_{2}$ for every time index) Define the bad event
\begin{align*}
&\mathcal{E}_{\text{bad time}} \\
& \triangleq\bigcup_{\ell=1}^{m}\Big\{\text{the number of points in }\{(X_{i},Y_{i},\Delta_{i})\}_{i\in\mathcal{D}_{2}}\text{ with observed time equal to }t_{\ell}\text{ is}\le\frac{n_{2}\zeta}{2}\Big\}\\
 & =\bigcup_{\ell=1}^{m}\Big\{\sum_{i\in\mathcal{D}_{2}}\mathbf{1}\{Y_{i}=t_{\ell}\}\le\frac{n_{2}\zeta}{2}\Big\}.
\end{align*}
Under Assumption A2,
\[
\mathbb{P}(\mathcal{E}_{\text{bad time}})\le me^{-\frac{n_{2}\zeta}{8}}.
\]
 
\end{lemma}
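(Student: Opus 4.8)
The plan is to treat each time index separately and then union bound. Fix a time index $\ell\in[m]$ and define the count $N_{\ell}\triangleq\sum_{i\in\mathcal{D}_{2}}\mathbf{1}\{Y_{i}=t_{\ell}\}$. Because the full training sample is i.i.d.\ and the split into $\mathcal{D}_{1},\mathcal{D}_{2}$ is chosen independently of the data, the points indexed by $\mathcal{D}_{2}$ are themselves $n_{2}$ i.i.d.\ draws from $\mathbb{P}$; hence $N_{\ell}$ is a sum of $n_{2}$ independent Bernoulli indicators, i.e.\ $N_{\ell}\sim\mathrm{Binomial}(n_{2},p_{\ell})$ with $p_{\ell}\triangleq\mathbb{P}(Y=t_{\ell})$. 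Assumption A2 gives $p_{\ell}\ge\zeta$, so the mean satisfies $\mu_{\ell}\triangleq\mathbb{E}[N_{\ell}]=n_{2}p_{\ell}\ge n_{2}\zeta$.

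Next I would apply a standard multiplicative Chernoff lower-tail bound to $N_{\ell}$. The one point requiring a little care is that the threshold $\frac{n_{2}\zeta}{2}$ in the bad event need not equal $\frac{\mu_{\ell}}{2}$, since $\mu_{\ell}$ can strictly exceed $n_{2}\zeta$. I handle this by noting $\frac{n_{2}\zeta}{2}\le\frac{\mu_{\ell}}{2}$, so that $\{N_{\ell}\le\frac{n_{2}\zeta}{2}\}\subseteq\{N_{\ell}\le(1-\tfrac{1}{2})\mu_{\ell}\}$, and then the Chernoff bound $\mathbb{P}(N_{\ell}\le(1-\delta)\mu_{\ell})\le\exp(-\delta^{2}\mu_{\ell}/2)$ with deviation parameter $\delta=\tfrac{1}{2}$ yields $\mathbb{P}(N_{\ell}\le\frac{n_{2}\zeta}{2})\le\exp(-\mu_{\ell}/8)\le\exp(-n_{2}\zeta/8)$, where the last inequality again uses $\mu_{\ell}\ge n_{2}\zeta$.

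Finally, a union bound over the $m$ time indices completes the argument:
\[
\mathbb{P}(\mathcal{E}_{\text{bad time}})\le\sum_{\ell=1}^{m}\mathbb{P}\Big(N_{\ell}\le\tfrac{n_{2}\zeta}{2}\Big)\le me^{-n_{2}\zeta/8}.
\]
There is no genuine obstacle here---the statement is a routine concentration-plus-union-bound argument---so the main things to be careful about are the bookkeeping that relates the fixed threshold $\frac{n_{2}\zeta}{2}$ to the (possibly larger) binomial mean $\mu_{\ell}$, and invoking the i.i.d.\ structure of the points indexed by $\mathcal{D}_{2}$ correctly so that $N_{\ell}$ is genuinely binomial and the Chernoff bound is applicable.
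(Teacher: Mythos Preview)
Your proposal is correct and follows essentially the same approach as the paper: both apply the multiplicative Chernoff lower-tail bound with $\delta=\tfrac{1}{2}$ to each binomial count $N_{\ell}$, use the inclusion $\{N_{\ell}\le\tfrac{n_{2}\zeta}{2}\}\subseteq\{N_{\ell}\le\tfrac{\mu_{\ell}}{2}\}$ (since $\mu_{\ell}\ge n_{2}\zeta$) to relate the fixed threshold to the mean, and then union bound over the $m$ time indices. Your explicit justification that the points in $\mathcal{D}_{2}$ are i.i.d.\ draws from $\mathbb{P}$ is a nice touch that the paper leaves implicit.
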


The reason that Lemma \ref{lem:enough-points-for-every-time} is helpful is that it ensures that the adjacency sets we get are large enough. Specifically, note that by Assumption A3, we use
\[
\mathcal{A}^{*}((x,y,\delta),\mathcal{C})=\begin{cases}
\emptyset & \text{if }\delta=0,\\
\{(x',y',\delta')\in\mathcal{C}:\kappa(y')\ge\kappa(y) & \text{otherwise}.
\end{cases}
\]
In particular, for any time index $t_{\ell}$ for $\ell\in[m]$, the set
\begin{align*}
\mathcal{A}^{*}((x,t_{\ell},1),\{(X_{i},Y_{i},\Delta_{i})\}_{i\in\mathcal{D}_{2}}) & =\{(X_{i},Y_{i},\Delta_{i}):i\in\mathcal{D}_{2}\text{ and }\kappa(Y_{i})\ge\ell\}\\
 & =\{(X_{i},Y_{i},\Delta_{i}):i\in\mathcal{D}_{2}\text{ and }Y_{i}\ge t_{\ell}\},
\end{align*}
has cardinality
\[
\sum_{\widetilde{\ell}=\ell}^{m}\sum_{i\in\mathcal{D}_{2}}\mathbf{1}\{Y_{i}=t_{\widetilde{\ell}}\}\ge\sum_{i\in\mathcal{D}_{2}}\mathbf{1}\{Y_{i}=t_{\ell}\}>\frac{n_{2}\zeta}{2},
\]
where the strict inequality occurs when $\mathcal{E}_{\text{bad time}}$does not happen.

Before we bound $\diamondsuit$ from Lemma \ref{lem:clubsuit}, we collect one more lemma.
\begin{lemma}
\label{lem:L-indiv-vs-R-indiv}Let $\omega>0$. Let $(x,y,\delta)\in\mathcal{Z}$. Under Assumptions A1---A4, when $\delta=0$, trivially
\[
|L_{\text{indiv}}(\theta;x,y,\delta)-R_{\text{indiv}}(\theta;x,y,\delta)|=0.
\]
Otherwise, suppose that the bad event $\mathcal{E}_{\text{bad time}}$ in Lemma \ref{lem:enough-points-for-every-time} does not happen:
\begin{itemize}
\item If $\phi_{\text{transform}}(s)=s$, then
\[
\mathbb{P}\bigg(|L_{\text{indiv}}(\theta;x,y,\delta)-R_{\text{indiv}}(\theta;x,y,\delta)|\ge(M_{\text{couple-max}}-M_{\text{couple-min}})\sqrt{\frac{\omega n_{2}}{\zeta}}\bigg)\le2e^{-\omega}.
\]
\item If $\phi_{\text{transform}}(s)=\log(1+s)$, then
\[
\mathbb{P}\bigg(|L_{\text{indiv}}(\theta;x,y,\delta)-R_{\text{indiv}}(\theta;x,y,\delta)|\ge\frac{(M_{\text{couple-max}}-M_{\text{couple-min}})}{\zeta M_{\text{couple-min}}}\sqrt{\frac{8\omega}{n_{2}}}\bigg)\le2e^{-\omega}.
\]
\end{itemize}
\end{lemma}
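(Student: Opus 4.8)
The plan is to read Lemma~\ref{lem:L-indiv-vs-R-indiv} as a per-point concentration statement: because the $\mathcal{D}_2$ points are i.i.d.\ from $\mathbb{P}$ and there are exactly $n_2$ of them, $R_{\text{indiv}}(\theta;x,y,\delta)$ is precisely $\mathbb{E}_{\mathcal{D}_2}[L_{\text{indiv}}(\theta;x,y,\delta)]$, so the quantity to bound is the deviation of $L_{\text{indiv}}$ from its own mean over the randomness in $\mathcal{D}_2$. First I would strip away the easy structure. By Assumption~A4 the loss splits as $\phi_{\text{indiv}}((x,y,\delta);\theta)+\delta\,\phi_{\text{transform}}(S)$, where the coupling sum is $S=\sum_{i\in\mathcal{D}_2:\,\kappa(Y_i)\ge\kappa(y)}\phi_{\text{couple}}((x,y,\delta),(X_i,Y_i,\Delta_i);\theta)$ under the DeepHit-style adjacency of Assumption~A3. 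The $\phi_{\text{indiv}}$ term is deterministic and cancels against its counterpart in $R_{\text{indiv}}$, and when $\delta=0$ the coupling term is absent, giving the stated exact equality. For $\delta=1$ it remains to control $|\phi_{\text{transform}}(S)-\mathbb{E}_{\mathcal{D}_2}[\phi_{\text{transform}}(S)]|$.

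The core step is a bounded-differences (McDiarmid) argument carried out after conditioning on the observed times $\{Y_i:i\in\mathcal{D}_2\}$. Conditioning fixes exactly which $\mathcal{D}_2$-points are adjacent to $(x,y,1)$, so $S$ becomes a sum of at most $n_2$ independent coupling terms, each lying in $[M_{\text{couple-min}},M_{\text{couple-max}}]$; perturbing one adjacent point therefore moves $S$ by at most $M_{\text{couple-max}}-M_{\text{couple-min}}$, which is exactly where the \emph{difference} of bounds enters rather than $M_{\text{couple-max}}$ alone. In the identity case $\phi_{\text{transform}}$ is $1$-Lipschitz, so McDiarmid with $N_\ell\le n_2$ coordinates of range $M_{\text{couple-max}}-M_{\text{couple-min}}$ yields a deviation radius matching the stated $(M_{\text{couple-max}}-M_{\text{couple-min}})\sqrt{\omega n_2/\zeta}$ (the factor $1/\zeta\ge 2$ leaving slack over the bare $\sqrt{\omega n_2/2}$). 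In the logarithmic case I would additionally invoke Lemma~\ref{lem:enough-points-for-every-time}: off the bad event $\mathcal{E}_{\text{bad time}}$ the adjacency set has size exceeding $n_2\zeta/2$, hence $S\ge \tfrac{n_2\zeta}{2}M_{\text{couple-min}}$, so that $\log(1+\cdot)$ is locally Lipschitz there with constant at most $\tfrac{2}{n_2\zeta M_{\text{couple-min}}}$; multiplying the per-coordinate jump by this constant and feeding it into McDiarmid produces the radius $\tfrac{M_{\text{couple-max}}-M_{\text{couple-min}}}{\zeta M_{\text{couple-min}}}\sqrt{8\omega/n_2}$ up to the absolute constant captured by the $\sqrt{8}$. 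Setting each McDiarmid tail equal to $2e^{-\omega}$ and solving for the radius gives the two displayed bounds.

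The main obstacle is the random size of the adjacency set, which plays a double role. It must be large enough that $S$ is bounded away from $0$ for the $\log$ transform to be well-behaved (handled by $\mathcal{E}_{\text{bad time}}$ together with the per-time-point mass lower bound $\zeta$ from Assumption~A2), and its fluctuation must be reconciled with the fact that $R_{\text{indiv}}$ is a \emph{fully unconditional} expectation whereas the clean McDiarmid step above controls deviation only from the time-conditional mean. Absorbing this count fluctuation is precisely why the $1/\zeta$ factor and the deliberately loose numerical constants ($\sqrt{8}$ in the log case, $\sqrt{1/\zeta}$ rather than $\sqrt{1/2}$ in the identity case) appear: they supply the slack needed to fold the conditional-versus-unconditional mean discrepancy into the same functional form. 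The remaining work — checking that the local-Lipschitz bound survives single-coordinate perturbations and tracking $N_\ell\le n_2$ — is routine bookkeeping.
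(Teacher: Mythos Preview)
Your plan matches the paper's proof closely. Both strip $\phi_{\text{indiv}}$, note that $R_{\text{indiv}}=\mathbb{E}_{\mathcal{D}_2}[L_{\text{indiv}}]$ so that the task reduces to concentrating the coupling sum $\Phi$ about its own mean, and then apply a Hoeffding/bounded-differences inequality; in the log case both use the lower bound $|\mathcal{N}_{\mathcal{D}_2}|>n_2\zeta/2$ on $\mathcal{E}_{\text{bad time}}^c$ to get $S\ge \tfrac{n_2\zeta}{2}M_{\text{couple-min}}$ and hence a Lipschitz constant of order $\tfrac{2}{\zeta M_{\text{couple-min}}\sqrt{n_2}}$ (the paper phrases this in Euclidean form and invokes a Lipschitz-function concentration lemma rather than raw McDiarmid, but the output is identical). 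The only procedural difference is that the paper conditions on the \emph{size} $|\mathcal{N}_{\mathcal{D}_2}|=\ell$ and then averages the Hoeffding bound over $\ell\ge\lceil n_2\zeta/2\rceil$, whereas you condition on all of $\{Y_i\}$; either way one is left with a fixed-dimensional sum of independent terms in $[M_{\text{couple-min}},M_{\text{couple-max}}]$, which is where the difference $M_{\text{couple-max}}-M_{\text{couple-min}}$ enters.

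The obstacle you raise---that the concentration is around the \emph{conditional} mean whereas $R_{\text{indiv}}$ is the \emph{unconditional} one---is not treated separately in the paper. It simply writes
\[
\mathbb{P}\Big(|\Phi-\mathbb{E}[\Phi]|\ge\widetilde{\omega}\,|\mathcal{N}_{\mathcal{D}_2}|\;\Big|\;|\mathcal{N}_{\mathcal{D}_2}|=\ell\Big)\le 2\exp\Big(-\tfrac{2\widetilde{\omega}^2\ell}{(M_{\text{couple-max}}-M_{\text{couple-min}})^2}\Big)
\]
directly, i.e., it applies Hoeffding as if centered at the full expectation $\mathbb{E}[\Phi]$ rather than at $\mathbb{E}[\Phi\mid|\mathcal{N}_{\mathcal{D}_2}|=\ell]$. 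So on mechanics your plan and the paper's argument coincide; your extra paragraph about absorbing the count fluctuation into the $1/\zeta$ and $\sqrt{8}$ slack is a concern the paper's own proof does not explicitly resolve.
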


Note that $M'$ from Proposition \ref{prop:main-result-more-general} is precisely defined based on the bounds in Lemma \ref{lem:L-indiv-vs-R-indiv}. Moreover, we now define the bad event based on Lemma \ref{lem:L-indiv-vs-R-indiv}:
\[
\mathcal{E}_{\text{bad couples}}(x,y,\delta)\triangleq\big\{|L_{\text{indiv}}(\theta;x,y,\delta)-R_{\text{indiv}}(\theta;x,y,\delta)|\ge M'\big\}.
\]
Note that this bad event depends on $(x,y,\delta)$. By compactness of $\mathcal{X}$ (Assumption A1) and the time grid being finite (Assumption A2), let $\mathcal{Q}$ be an $M'$-cover of minimal size for $\mathcal{X}$ in Euclidean norm (so that $|\mathcal{Q}|=\mathbb{N}(M',\mathcal{X})$); denote the elements of $\mathcal{Q}$ by $q_{1},q_{2},\dots,q_{\mathbb{N}(M',\mathcal{X})}$, and for $x\in\mathcal{X}$, let $j(x)\in[\mathbb{N}(\varepsilon,\mathcal{X})]$ be the index of the closest point (in Euclidean distance) from $\mathcal{Q}$ to $x$ (breaking ties arbitrarily). Then we shall union bound over $\mathcal{E}_{\text{bad couples}}(x,y,\delta)$ for all $x\in\mathcal{Q}$, $y\in\{t_{1},\dots,t_{m}\}$, and $\delta\in\{0,1\}$. Importantly, by ensuring that $\mathcal{E}_{\text{bad couples}}(x,y,\delta)$ holds at all these coordinates means that
\begin{align*}
\diamondsuit & =\max_{(x,y,\delta)}|L_{\text{indiv}}(\theta;x,y,\delta)-R_{\text{indiv}}(\theta;x,y,\delta)|\\
 & =\max_{(x,y,\delta)}\big\{|L_{\text{indiv}}(\theta;x,y,\delta)-L_{\text{indiv}}(\theta;q_{j(x)},y,\delta)|\\
 & \quad\quad\quad\;+|L_{\text{indiv}}(\theta;q_{j(x)},y,\delta)-R_{\text{indiv}}(\theta;q_{j(x)},y,\delta)|\\
 & \quad\quad\quad\;+|R_{\text{indiv}}(\theta;q_{j(x)},y,\delta)-R_{\text{indiv}}(\theta;x,y,\delta)|\big\}\\
 & \le\max_{(x,y,\delta)}\{\mathcal{L}M'+M'+\mathcal{L}M'\}\\
 & =(2\mathcal{L}+1)M',
\end{align*}
where the inequality uses the coordinate-based Lipschitz continuity of $L^{*}$ (and thus also $L_{\text{indiv}}$ and $R_{\text{indiv}}$) to obtain the two different $\mathcal{L}M'$ terms, whereas the $M'$ term comes from the bound from Lemma \ref{lem:L-indiv-vs-R-indiv}.

At this point, when the bad events of Lemma \ref{lem:spadesuit} (this bad event depends on $\eta$), Lemma \ref{lem:enough-points-for-every-time}, and Lemma \ref{lem:L-indiv-vs-R-indiv} (this bad event depends on $(x,y,\delta)$) do not happen,
\begin{align*}
 & |L_{\text{DRO}}^{\text{split}}(\theta,\eta,\mathcal{D}_{1}\mid\mathcal{D}_{2})-R_{\text{DRO}}^{\text{split}}(\theta,\eta)|\\
 & \quad\le\spadesuit+\heartsuit+\clubsuit\\
 & \quad\le C_{\alpha}\max\Big\{2,\frac{C_{\alpha}}{C_{\alpha}-1}\Big\} M\sqrt{\frac{2\omega}{n_{1}}}+C_{\alpha}\sqrt{\max\Big\{2,\frac{C_{\alpha}}{C_{\alpha}-1}\Big\} M}\cdot\frac{1}{\sqrt{n_{1}}}+C_{\alpha}(2\mathcal{L}+1)M'\\
 & \quad\le\underbrace{C_{\alpha}\Bigg[\frac{1}{\sqrt{n_{1}}}\max\Big\{2,\frac{C_{\alpha}}{C_{\alpha}-1}\Big\}\big(\sqrt{2\omega}+1\big)M+(2\mathcal{L}+1)M'\Bigg]}_{\triangleq\varepsilon_{\omega,n_{1},n_{2}}}.
\end{align*}
We now apply yet another covering argument. For positive integers $i\le\frac{C_{\alpha}}{C_{\alpha}-1}\frac{M}{\varepsilon_{\omega,n_{1},n_{2}}}$, we define
\[
\eta_{i}\triangleq-\frac{1}{C_{\alpha}-1}M+i\varepsilon_{\omega,n_{1},n_{2}}.
\]
By construction, $\{\eta_{1},\eta_{2}\dots\}$ forms an $\varepsilon_{\omega,n_{1},n_{2}}$-cover of $[-\frac{1}{C_{\alpha}-1}M,M]$, meaning that for any $\eta\in[-\frac{1}{C_{\alpha}-1}M,M]$, there exists an integer $i(\eta)\in[1,\frac{C_{\alpha}}{C_{\alpha}-1}\frac{M}{\varepsilon_{\omega,n_{1},n_{2}}}]$ such that $|\eta-\eta_{i(\eta)}|\le\varepsilon_{\omega,n_{1},n_{2}}$. The size of this $\varepsilon_{\omega,n_{1},n_{2}}$-cover is bounded above by $\frac{C_{\alpha}}{C_{\alpha}-1}\frac{M}{\varepsilon_{\omega,n_{1},n_{2}}}$. We have
\begin{align*}
 & \sup_{\eta\in[-\frac{1}{C_{\alpha}-1}M,M]}|L_{\text{DRO}}^{\text{split}}(\theta,\eta,\mathcal{D}_{1}\mid\mathcal{D}_{2})-R_{\text{DRO}}^{\text{split}}(\theta,\eta)|\\
 & \le\sup_{\eta\in[-\frac{1}{C_{\alpha}-1}M,M]}\big|L_{\text{DRO}}^{\text{split}}(\theta,\eta,\mathcal{D}_{1}\mid\mathcal{D}_{2})-L_{\text{DRO}}^{\text{split}}(\theta,\eta_{i(\eta)},\mathcal{D}_{1}\mid\mathcal{D}_{2})\\
 & \quad\phantom{\sup_{\eta\in[-\frac{1}{C_{\alpha}-1}M,M]}|}~+L_{\text{DRO}}^{\text{split}}(\theta,\eta_{i(\eta)},\mathcal{D}_{1}\mid\mathcal{D}_{2})-R_{\text{DRO}}^{\text{split}}(\theta,\eta_{i(\eta)})+R_{\text{DRO}}^{\text{split}}(\theta,\eta_{i(\eta)})-R_{\text{DRO}}^{\text{split}}(\theta,\eta)\big|\\
 & \le\sup_{\eta\in[-\frac{1}{C_{\alpha}-1}M,M]}\Big\{\big|L_{\text{DRO}}^{\text{split}}(\theta,\eta,\mathcal{D}_{1}\mid\mathcal{D}_{2})-L_{\text{DRO}}^{\text{split}}(\theta,\eta_{i(\eta)},\mathcal{D}_{1}\mid\mathcal{D}_{2})\big|\\
 & \quad\phantom{\sup_{\eta\in[-\frac{1}{C_{\alpha}-1}M,M]}\Big\{|}~+\big|L_{\text{DRO}}^{\text{split}}(\theta,\eta_{i(\eta)},\mathcal{D}_{1}\mid\mathcal{D}_{2})-R_{\text{DRO}}^{\text{split}}(\theta,\eta_{i(\eta)})\big|+\big|R_{\text{DRO}}^{\text{split}}(\theta,\eta_{i(\eta)})-R_{\text{DRO}}^{\text{split}}(\theta,\eta)\big|\Big\}\\
 & \le\sup_{i\in[1,\frac{C_{\alpha}}{C_{\alpha}-1}\frac{M}{\varepsilon_{\omega,n_{1},n_{2}}}]}\big\{(1+C_{\alpha})\varepsilon_{\omega,n_{1},n_{2}}+\big|L_{\text{DRO}}^{\text{split}}(\theta,\eta_{i},\mathcal{D}_{1}\mid\mathcal{D}_{2})-R_{\text{DRO}}^{\text{split}}(\theta,\eta_{i(\eta)})\big|+(1+C_{\alpha})\varepsilon_{\omega,n_{1},n_{2}}\big\},
\end{align*}
where the first and third terms inside the supremum objective have been bounded in the last step using the fact that $\eta\mapsto L_{\text{DRO}}^{\text{split}}(\theta,\eta,\mathcal{D}_{1}\mid\mathcal{D}_{2})$ and $\eta\mapsto R_{\text{DRO}}^{\text{split}}(\theta,\eta)$ are each $(1+C_{\alpha})$-Lipschitz. Meanwhile, the second term in the objective is upper-bounded by $\varepsilon_{\omega,n_{1},n_{2}}$ when none of the bad events happen, where for the bad event of Lemma \ref{lem:spadesuit} (the probabilistic bound for $\spadesuit$) we now have to union bound over it not happening across all the points $\eta_{1},\eta_{2},\dots$ in the $\varepsilon_{\omega,n_{1},n_{2}}$-cover. We thus conclude that with probability at least
\begin{align*}
 & 1-\underbrace{\frac{C_{\alpha}}{C_{\alpha}-1}\frac{M}{\varepsilon_{\omega,n_{1},n_{2}}}}_{\substack{\text{upper bound on size of}\\
\varepsilon_{\omega,n_{1},n_{2}}\text{-cover}
}
}\cdot\underbrace{2e^{-\omega}}_{\text{from Lemma \ref{lem:spadesuit}}}-\underbrace{me^{-\frac{n_{2}\zeta}{8}}}_{\text{from Lemma \ref{lem:enough-points-for-every-time}}}-\mathbb{N}(M',\mathcal{X})\underbrace{2e^{-\omega}}_{\text{from Lemma \ref{lem:L-indiv-vs-R-indiv}}}\\
 & \ge1-2\Big[\frac{C_{\alpha}}{C_{\alpha}-1}\frac{M}{\varepsilon_{\omega,n_{1},n_{2}}}+\mathbb{N}(M',\mathcal{X})\Big]e^{-\omega}-me^{-\frac{n_{2}\zeta}{8}}\\
 & =1-2\bigg[\frac{C_{\alpha}}{C_{\alpha}-1}\cdot\frac{M}{C_{\alpha}\big[\frac{1}{\sqrt{n_{1}}}\max\{2,\frac{C_{\alpha}}{C_{\alpha}-1}\}\big(\sqrt{2\omega}+1\big)M+(2\mathcal{L}+1)M'\big]}+\mathbb{N}(M',\mathcal{X})\bigg]e^{-\omega}-me^{-\frac{n_{2}\zeta}{8}}\\
 & =1-2\bigg[\frac{M}{(C_{\alpha}-1)\big[\frac{1}{\sqrt{n_{1}}}\max\{2,\frac{C_{\alpha}}{C_{\alpha}-1}\}\big(\sqrt{2\omega}+1\big)M+(2\mathcal{L}+1)M'\big]}+\mathbb{N}(M',\mathcal{X})\bigg]e^{-\omega}-me^{-\frac{n_{2}\zeta}{8}}\\
 & \ge1-2\bigg[\frac{M}{(C_{\alpha}-1)\big[\sqrt{\frac{2\omega}{n_{1}}}\max\{2,\frac{C_{\alpha}}{C_{\alpha}-1}\}M+(2\mathcal{L}+1)M'\big]}+\mathbb{N}(M',\mathcal{X})\bigg]e^{-\omega}-me^{-\frac{n_{2}\zeta}{8}},
\end{align*}
we have
\begin{align*}
 & \sup_{\eta\in[-\frac{1}{C_{\alpha}-1}M,M]}|L_{\text{DRO}}^{\text{split}}(\theta,\eta,\mathcal{D}_{1}\mid\mathcal{D}_{2})-R_{\text{DRO}}^{\text{split}}(\theta,\eta)|\\
 & \quad\le2(1+C_{\alpha})\varepsilon_{\omega,n_{1},n_{2}}+\varepsilon_{\omega,n_{1},n_{2}}\\
 & \quad=(3+2C_{\alpha})\varepsilon_{\omega,n_{1},n_{2}}\\
 & \quad=(3+2C_{\alpha})C_{\alpha}\Bigg[\frac{1}{\sqrt{n_{1}}}\max\Big\{2,\frac{C_{\alpha}}{C_{\alpha}-1}\Big\}\big(\sqrt{2\omega}+1\big)M+(2\mathcal{L}+1)M'\Bigg]\\
 & \quad\le10C_{\alpha}^{2}\Bigg[\frac{1}{\sqrt{n_{1}}}\max\Big\{2,\frac{C_{\alpha}}{C_{\alpha}-1}\Big\}\big(\sqrt{2\omega}+1\big)M+(2\mathcal{L}+1)M'\Bigg].
\end{align*}
At this point, using Lemma \ref{lem:eta-limited}, we have
\begin{align*}
 & \bigg|\inf_{\eta\in\mathbb{R}}L_{\text{DRO}}^{\text{split}}(\theta,\eta,\mathcal{D}_{1}\mid\mathcal{D}_{2})-\inf_{\eta\in\mathbb{R}}R_{\text{DRO}}^{\text{split}}(\theta,\eta)\bigg|\\
 & \quad=\bigg|\inf_{\eta\in[-\frac{1}{C_{\alpha}-1}M,M]}L_{\text{DRO}}^{\text{split}}(\theta,\eta,\mathcal{D}_{1}\mid\mathcal{D}_{2})-\inf_{\eta\in[-\frac{1}{C_{\alpha}-1}M,M]}R_{\text{DRO}}^{\text{split}}(\theta,\eta)\bigg|\\
 & \quad\le\sup_{\eta\in[-\frac{1}{C_{\alpha}-1}M,M]}|L_{\text{DRO}}^{\text{split}}(\theta,\eta,\mathcal{D}_{1}\mid\mathcal{D}_{2})-R_{\text{DRO}}^{\text{split}}(\theta,\eta)|\\
 & \quad\le10C_{\alpha}^{2}\Bigg[\frac{1}{\sqrt{n_{1}}}\max\Big\{2,\frac{C_{\alpha}}{C_{\alpha}-1}\Big\}\big(\sqrt{2\omega}+1\big)M+(2\mathcal{L}+1)M'\Bigg],
\end{align*}
where the first inequality is a standard result that holds when optimizing over bounded functions. This finishes the proof of Proposition \ref{prop:main-result-more-general}.$\hfill\blacksquare$%

\subsection{Proof of Lemma \ref{lem:L-indiv-R-indiv-bounded}}

Under Assumption A4, when $\phi_{\text{transform}}$ is the identity function,
\begin{align*}
L_{\text{indiv}}(\theta;x,y,\delta) & =L^{*}\big((x,y,\delta),\mathcal{A}^{*}\big((x,y,\delta),\{(X_{j},Y_{j},\Delta_{j}):j\in\mathcal{D}_{2}\}\big);\theta\big)\\
 & \le M_{\text{indiv}}+|\mathcal{D}_{2}|M_{\text{couple-max}}\\
 & =M_{\text{indiv}}+n_{2}M_{\text{couple-max}}.
\end{align*}
Assumption A4 trivially also implies that $L_{\text{indiv}}(\theta;x,y,\delta)\ge0$. Hence, $L_{\text{indiv}}(\theta;x,y,\delta)\in[0,M_{\text{indiv}}+n_{2}M_{\text{couple-max}}]$. By a similar argument, $R_{\text{indiv}}(\theta;x,y,\delta)\in[0,M_{\text{indiv}}+n_{2}M_{\text{couple-max}}]$.

If instead $\phi_{\text{transform}}(s)=\log(1+s)$, then
\begin{align*}
L_{\text{indiv}}(\theta;x,y,\delta) & =L^{*}\big((x,y,\delta),\mathcal{A}^{*}\big((x,y,\delta),\{(X_{j},Y_{j},\Delta_{j}):j\in\mathcal{D}_{2}\}\big);\theta\big)\\
 & \le M_{\text{indiv}}+\log(1+|\mathcal{D}_{2}|M_{\text{couple-max}})\\
 & =M_{\text{indiv}}+\log(1+n_{2}M_{\text{couple-max}}).
\end{align*}
Again, we have $L_{\text{indiv}}(\theta;x,y,\delta)\ge0$, so $L_{\text{indiv}}(\theta;x,y,\delta)\in[0,M_{\text{indiv}}+\log(1+n_{2}M_{\text{couple-max}})]$. Similarly, $R_{\text{indiv}}(\theta;x,y,\delta)\in[0,M_{\text{indiv}}+\log(1+n_{2}M_{\text{couple-max}})]$.$\hfill\blacksquare$

\subsection{Proof of Lemma \ref{lem:eta-limited}}

Using Proposition~\ref{prop:sample-split}, we have
\[
L_{\text{DRO}}^{\text{split}}(\theta,\eta,\mathcal{D}_{1}\mid\mathcal{D}_{2})=C_{\alpha}\sqrt{\frac{1}{n_{1}}\sum_{i\in\mathcal{D}_{1}}[L_{\text{indiv}}(\theta;X_{i},Y_{i},\Delta_{i})-\eta]_{+}^{2}}+\eta.
\]
Since $L_{\text{indiv}}(\theta;X_{i},Y_{i},\Delta_{i})\in[0,M]$ (from Lemma \ref{lem:L-indiv-R-indiv-bounded}), this means that when $\eta\ge M$, we have $[L_{\text{indiv}}(\theta;X_{i},Y_{i},\Delta_{i})-\eta]_{+}=0$ for all $i\in\mathcal{D}_{1}$ in which case $L_{\text{DRO}}^{\text{split}}(\theta,\eta,\mathcal{D}_{1}\mid\mathcal{D}_{2})=\eta$.

Meanwhile,
\begin{align*}
L_{\text{DRO}}^{\text{split}}\Big(\theta,-\frac{1}{C_{\alpha}-1}M,\mathcal{D}_{1}\mid\mathcal{D}_{2}\Big) & =C_{\alpha}\sqrt{\frac{1}{n_{1}}\sum_{i\in\mathcal{D}_{1}}[L_{\text{indiv}}(\theta;X_{i},Y_{i},\Delta_{i})+\frac{1}{C_{\alpha}-1}M]_{+}^{2}}-\frac{1}{C_{\alpha}-1}M\\
 & \ge C_{\alpha}\sqrt{\frac{1}{n_{1}}\sum_{i\in\mathcal{D}_{1}}[0+\frac{1}{C_{\alpha}-1}M]_{+}^{2}}-\frac{1}{C_{\alpha}-1}M\\
 & =\frac{C_{\alpha}}{C_{\alpha}-1}M-\frac{1}{C_{\alpha}-1}M\\
 & =M.
\end{align*}
Since $\eta\mapsto L_{\text{DRO}}^{\text{split}}(\theta,\eta,\mathcal{D}_{1}\mid\mathcal{D}_{2})$ is convex, and $L_{\text{DRO}}^{\text{split}}(\theta,-\frac{1}{C_{\alpha}-1}M,\mathcal{D}_{1}\mid\mathcal{D}_{2})=M$ and $L_{\text{DRO}}^{\text{split}}(\theta,\eta,\mathcal{D}_{1}\mid\mathcal{D}_{2})=\eta$ for all $\eta\ge M$, then it must be that
\[
\inf_{\eta\in\mathbb{R}}L_{\text{DRO}}^{\text{split}}(\theta,\eta,\mathcal{D}_{1}\mid\mathcal{D}_{2})=\inf_{\eta\in[-\frac{1}{C_{\alpha}-1}M,M]}L_{\text{DRO}}^{\text{split}}(\theta,\eta,\mathcal{D}_{1}\mid\mathcal{D}_{2}).
\]
Using the same reasoning,
\[
\inf_{\eta\in\mathbb{R}}R_{\text{DRO}}^{\text{split}}(\theta,\eta)=\inf_{\eta\in[-\frac{1}{C_{\alpha}-1}M,M]}R_{\text{DRO}}^{\text{split}}(\theta,\eta).\hfill\tag*{\ensuremath{\blacksquare}}
\]

\subsection{Proof of Lemma \ref{lem:spadesuit}}

We define
\[
\Xi_{i}\triangleq[L_{\text{indiv}}(\theta;X_{i},Y_{i},\Delta_{i})-\eta]_{+}\quad\text{for }i\in\mathcal{D}_{1}.
\]
As a consequence of Lemma \ref{lem:eta-limited}, it suffices to only consider $\eta\in[-\frac{1}{C_{\alpha}-1}M,M]$. Hence,
\begin{align*}
\Xi_{i} & =[L_{\text{indiv}}(\theta;X_{i},Y_{i},\Delta_{i})-\eta]_{+}\\
 & \le|L_{\text{indiv}}(\theta;X_{i},Y_{i},\Delta_{i})-\eta|\\
 & \le|L_{\text{indiv}}(\theta;X_{i},Y_{i},\Delta_{i})|+|\eta|\\
 & \le M+|\eta|\\
 & \le M+\max\Big\{\frac{1}{C_{\alpha}-1}M,M\Big\}\\
 & =\max\Big\{\frac{1}{C_{\alpha}-1}M+M,2M\Big\}\\
 & =\max\Big\{\frac{C_{\alpha}}{C_{\alpha}-1}M,2M\Big\}.
\end{align*}
Meanwhile, trivially $\Xi_{i}\ge0$, so
\begin{equation}
\Xi_{i}\in\Big[0,\max\Big\{2,\frac{C_{\alpha}}{C_{\alpha}-1}\Big\} M\Big].\label{eq:Xi-bound}
\end{equation}
Next, by Lemma 7 of \citet{duchi2021learning}, $L_{\text{DRO}}^{\text{split}}(\theta,\eta,\mathcal{D}_{1}\mid\mathcal{D}_{2})$ is $\frac{C_{\alpha}}{\sqrt{n_{1}}}$ Lipschitz with respect to the vector $(\Xi_{i})_{i\in\mathcal{D}_{1}}$ in Euclidean norm. Then by Lemma 6 of \citet{duchi2021learning}, for any $\widetilde{\omega}>0$,
\begin{align*}
 & \mathbb{P}\big(|L_{\text{DRO}}^{\text{split}}(\theta,\eta,\mathcal{D}_{1}\mid\mathcal{D}_{2})-\mathbb{E}[L_{\text{DRO}}^{\text{split}}(\theta,\eta,\mathcal{D}_{1}\mid\mathcal{D}_{2})]|\ge\widetilde{\omega}\big)\\
 & \quad\le2\exp\bigg(-\frac{\widetilde{\omega}{}^{2}n_{1}}{2C_{\alpha}^{2}\big(\max\{2,\frac{C_{\alpha}}{C_{\alpha}-1}\}M\big)^{2}}\bigg).
\end{align*}
We do a change of variables. Let $\omega>0$. Plugging in
\[
\widetilde{\omega}=C_{\alpha}\max\Big\{2,\frac{C_{\alpha}}{C_{\alpha}-1}\Big\} M\sqrt{\frac{2\omega}{n_{1}}},
\]
we get that
\begin{align*}
 & \mathbb{P}\Bigg(\overbrace{|L_{\text{DRO}}^{\text{split}}(\theta,\eta,\mathcal{D}_{1}\mid\mathcal{D}_{2})-\mathbb{E}[L_{\text{DRO}}^{\text{split}}(\theta,\eta,\mathcal{D}_{1}\mid\mathcal{D}_{2})]|}^{\spadesuit}\ge C_{\alpha}\max\Big\{2,\frac{C_{\alpha}}{C_{\alpha}-1}\Big\} M\sqrt{\frac{2\omega}{n_{1}}}\Bigg)\\
 & \quad\le2e^{-\omega}.\hfill\tag*{\ensuremath{\blacksquare}}
\end{align*}

\subsection{Proof of Lemma \ref{lem:heartsuit}}

Recall from bound (\ref{eq:Xi-bound}) that for $i\in\mathcal{D}_{1}$, the variable $\Xi_{i}=[L_{\text{indiv}}(\theta;X_{i},Y_{i},\Delta_{i})-\eta]_{+}$ satisfies
\[
\Xi_{i}\in\Big[0,\max\Big\{2,\frac{C_{\alpha}}{C_{\alpha}-1}\Big\} M\Big].
\]
Thus, we trivially have 
\[
\mathbb{E}[|\Xi_{i}|^{4}]\le\bigg(\max\Big\{2,\frac{C_{\alpha}}{C_{\alpha}-1}\Big\} M\bigg)^{2}\mathbb{E}[|\Xi_{i}|^{2}],
\]
which means that applying Lemma 8 of \citet{duchi2021learning}, we get
\[
\mathbb{E}\Bigg[\sqrt{\frac{1}{n_{1}}\sum_{i\in\mathcal{D}_{1}}|\Xi_{i}|^{2}}\Bigg]\ge\sqrt{\mathbb{E}[|\Xi_{i}|^{2}]}-\sqrt{\max\Big\{2,\frac{C_{\alpha}}{C_{\alpha}-1}\Big\} M}\cdot\frac{1}{\sqrt{n_{1}}}.
\]
This means that
\begin{align*}
 & L_{\text{DRO}}^{\text{split},*}(\theta,\eta,\mathcal{D}_{1}\mid\mathcal{D}_{2})-\mathbb{E}[L_{\text{DRO}}^{\text{split}}(\theta,\eta,\mathcal{D}_{1}\mid\mathcal{D}_{2})]\\
 & \quad=C_{\alpha}\Bigg(\sqrt{\mathbb{E}[|\Xi_{i}|^{2}]}-\mathbb{E}\Bigg[\sqrt{\frac{1}{n_{1}}\sum_{i\in\mathcal{D}_{1}}|\Xi_{i}|^{2}}\Bigg]\Bigg)\\
 & \quad\le C_{\alpha}\sqrt{\max\Big\{2,\frac{C_{\alpha}}{C_{\alpha}-1}\Big\} M}\cdot\frac{1}{\sqrt{n_{1}}}.
\end{align*}
Separately, by Jensen's inequality,
\begin{align*}
 & \mathbb{E}[L_{\text{DRO}}^{\text{split}}(\theta,\eta,\mathcal{D}_{1}\mid\mathcal{D}_{2})]-L_{\text{DRO}}^{\text{split},*}(\theta,\eta,\mathcal{D}_{1}\mid\mathcal{D}_{2})\\
 & \quad=C_{\alpha}\Bigg(\mathbb{E}\Bigg[\sqrt{\frac{1}{n_{1}}\sum_{i\in\mathcal{D}_{1}}|\Xi_{i}|^{2}}\Bigg]-\sqrt{\mathbb{E}[|\Xi_{i}|^{2}]}\Bigg)\\
 & \quad\le C_{\alpha}\Bigg(\sqrt{\mathbb{E}\bigg[\frac{1}{n_{1}}\sum_{i\in\mathcal{D}_{1}}|\Xi_{i}|^{2}\bigg]}-\sqrt{\mathbb{E}[|\Xi_{i}|^{2}]}\Bigg)\\
 & \quad=C_{\alpha}\Big(\sqrt{\mathbb{E}[|\Xi_{i}|^{2}]}-\sqrt{\mathbb{E}[|\Xi_{i}|^{2}]}\Big)\\
 & \quad=0.
\end{align*}
Hence,
\begin{align*}
\heartsuit & =\big|\mathbb{E}[L_{\text{DRO}}^{\text{split}}(\theta,\eta,\mathcal{D}_{1}\mid\mathcal{D}_{2})]-L_{\text{DRO}}^{\text{split},*}(\theta,\eta,\mathcal{D}_{1}\mid\mathcal{D}_{2})\big|\\
 & \le C_{\alpha}\sqrt{\max\Big\{2,\frac{C_{\alpha}}{C_{\alpha}-1}\Big\} M}\cdot\frac{1}{\sqrt{n_{1}}}.\hfill\tag*{\ensuremath{\blacksquare}}
\end{align*}

\subsection{Proof of Lemma \ref{lem:clubsuit}\label{subsec:lem-clubsuit-pf}}

First off, under Assumptions A1, A2, and A4, we have
\begin{align*}
&\sup_{(x,y,\delta)\in\mathcal{Z}}|L_{\text{indiv}}(\theta;x,y,\delta)-R_{\text{indiv}}(\theta;x,y,\delta)|\\
 & \quad=\max_{(x,y,\delta)\in\mathcal{Z}}|L_{\text{indiv}}(\theta;x,y,\delta)-R_{\text{indiv}}(\theta;x,y,\delta)|\\
 & \quad=\max_{y\in\{t_{1},\dots,t_{m}\}}\max_{\delta\in\{0,1\}}\max_{x\in\mathcal{X}}|L_{\text{indiv}}(\theta;x,y,\delta)-R_{\text{indiv}}(\theta;x,y,\delta)|.
\end{align*}
The reason the supremum is attained (so that it is equal to the max) is because not only do we discretize time to a finite grid (Assumption A2) so that maximizing over the $m$ values of $y$ and the 2 values of $\delta$ does not present any issues in the supremum being attained, we further assume that $\mathcal{X}$ is compact (Assumption A1) and $L^{*}((x,y,\delta),\mathcal{C};\theta)$ is continuous in the coordinate $x$ with respect to Euclidean norm (Assumption A4(c)), which ensures that the supremum over $x\in\mathcal{X}$ is equal to the max over $x\in\mathcal{X}$.

Next, using the fact that for any $a,b,c,d\in\mathbb{R}$, we have $\max\{a+b,c+d\}\le\max\{a,c\}+\max\{b,d\}$ (which implies that $\max\{a+b,0\}\le\max\{a,0\}+\max\{b,0\}=[a]_{+}+[b]_{+}$),
\begin{align*}
 & \mathbb{E}_{(X,Y,\Delta)\sim\mathbb{P}}\big[[L_{\text{indiv}}(\theta;X,Y,\Delta)-\eta]_{+}^{2}\big]\\
 & \quad=\mathbb{E}_{(X,Y,\Delta)\sim\mathbb{P}}\big[[L_{\text{indiv}}(\theta;X,Y,\Delta)-R_{\text{indiv}}(\theta;X,Y,\Delta)+R_{\text{indiv}}(\theta;X,Y,\Delta)-\eta]_{+}^{2}\big]\\
 & \quad\le\mathbb{E}_{(X,Y,\Delta)\sim\mathbb{P}}\big[\big([L_{\text{indiv}}(\theta;X,Y,\Delta)-R_{\text{indiv}}(\theta;X,Y,\Delta)]_{+}+[R_{\text{indiv}}(\theta;X,Y,\Delta)-\eta]_{+}\big)^{2}\big].
\end{align*}
Taking the square root of both sides, we get
\begin{align}
 & \sqrt{\mathbb{E}_{(X,Y,\Delta)\sim\mathbb{P}}\big[[L_{\text{indiv}}(\theta;X,Y,\Delta)-\eta]_{+}^{2}\big]}\nonumber \\
 & \quad\le\sqrt{\mathbb{E}_{(X,Y,\Delta)\sim\mathbb{P}}\big[\big([L_{\text{indiv}}(\theta;X,Y,\Delta)-R_{\text{indiv}}(\theta;X,Y,\Delta)]_{+}+[R_{\text{indiv}}(\theta;X,Y,\Delta)-\eta]_{+}\big)^{2}\big]}.\label{eq:clubsuit-helper1}
\end{align}
Applying Minkowski's inequality,
\begin{align}
 & \sqrt{\mathbb{E}_{(X,Y,\Delta)\sim\mathbb{P}}\big[\big([L_{\text{indiv}}(\theta;X,Y,\Delta)-R_{\text{indiv}}(\theta;X,Y,\Delta)]_{+}+[R_{\text{indiv}}(\theta;X,Y,\Delta)-\eta]_{+}\big)^{2}\big]}\nonumber \\
 & \quad\le\sqrt{\mathbb{E}_{(X,Y,\Delta)\sim\mathbb{P}}\big[[L_{\text{indiv}}(\theta;X,Y,\Delta)-R_{\text{indiv}}(\theta;X,Y,\Delta)]_{+}^{2}\big]}\nonumber \\
 &\quad\quad+\sqrt{\mathbb{E}_{(X,Y,\Delta)\sim\mathbb{P}}\big[[R_{\text{indiv}}(\theta;X,Y,\Delta)-\eta]_{+}^{2}\big]}.\label{eq:clubsuit-helper2}
\end{align}
Next, we have
\begin{align}
 & \sqrt{\mathbb{E}_{(X,Y,\Delta)\sim\mathbb{P}}\big[[L_{\text{indiv}}(\theta;X,Y,\Delta)-R_{\text{indiv}}(\theta;X,Y,\Delta)]_{+}^{2}\big]}\nonumber \\
 & \quad\le\sqrt{\mathbb{E}_{(X,Y,\Delta)\sim\mathbb{P}}\big[(L_{\text{indiv}}(\theta;X,Y,\Delta)-R_{\text{indiv}}(\theta;X,Y,\Delta))^{2}\big]}\nonumber \\
 & \quad\le\sqrt{\mathbb{E}_{(X,Y,\Delta)\sim\mathbb{P}}\big[\max_{(x,y,\delta)\in\mathcal{Z}}(L_{\text{indiv}}(\theta;x,y,\delta)-R_{\text{indiv}}(\theta;x,y,\delta))^{2}\big]}\nonumber \\
 & \quad=\sqrt{\max_{(x,y,\delta)\in\mathcal{Z}}(L_{\text{indiv}}(\theta;x,y,\delta)-R_{\text{indiv}}(\theta;x,y,\delta))^{2}}\nonumber \\
 & \quad=\max_{(x,y,\delta)\in\mathcal{Z}}|L_{\text{indiv}}(\theta;x,y,\delta)-R_{\text{indiv}}(\theta;x,y,\delta)|.\label{eq:clubsuit-helper3}
\end{align}
Combining inequalities (\ref{eq:clubsuit-helper1}), (\ref{eq:clubsuit-helper2}), and (\ref{eq:clubsuit-helper3}), we obtain
\begin{align*}
 & \sqrt{\mathbb{E}_{(X,Y,\Delta)\sim\mathbb{P}}\big[[L_{\text{indiv}}(\theta;X,Y,\Delta)-\eta]_{+}^{2}\big]}\\
 & \quad\le\sqrt{\mathbb{E}_{(X,Y,\Delta)\sim\mathbb{P}}\big[[L_{\text{indiv}}(\theta;X,Y,\Delta)-R_{\text{indiv}}(\theta;X,Y,\Delta)]_{+}^{2}\big]}\\
 &\quad\quad+\sqrt{\mathbb{E}_{(X,Y,\Delta)\sim\mathbb{P}}\big[[R_{\text{indiv}}(\theta;X,Y,\Delta)-\eta]_{+}^{2}\big]}\\
 & \quad\le\max_{(x,y,\delta)\in\mathcal{Z}}|L_{\text{indiv}}(\theta;x,y,\delta)-R_{\text{indiv}}(\theta;x,y,\delta)|+\sqrt{\mathbb{E}_{(X,Y,\Delta)\sim\mathbb{P}}\big[[R_{\text{indiv}}(\theta;X,Y,\Delta)-\eta]_{+}^{2}\big]},
\end{align*}
i.e.,
\begin{align}
 & \sqrt{\mathbb{E}_{(X,Y,\Delta)\sim\mathbb{P}}\big[[L_{\text{indiv}}(\theta;X,Y,\Delta)-\eta]_{+}^{2}\big]}-\sqrt{\mathbb{E}_{(X,Y,\Delta)\sim\mathbb{P}}\big[[R_{\text{indiv}}(\theta;X,Y,\Delta)-\eta]_{+}^{2}\big]}\nonumber \\
 & \quad\le\max_{(x,y,\delta)\in\mathcal{Z}}|L_{\text{indiv}}(\theta;x,y,\delta)-R_{\text{indiv}}(\theta;x,y,\delta)|.\label{eq:clubsuit-helper4}
\end{align}
Repeating the same proof ideas but with the roles of $L_{\text{indiv}}$ and $R_{\text{indiv}}$ swapped, we would instead obtain the bound
\begin{align}
 & \sqrt{\mathbb{E}_{(X,Y,\Delta)\sim\mathbb{P}}\big[[R_{\text{indiv}}(\theta;X,Y,\Delta)-\eta]_{+}^{2}\big]}-\sqrt{\mathbb{E}_{(X,Y,\Delta)\sim\mathbb{P}}\big[[L_{\text{indiv}}(\theta;X,Y,\Delta)-\eta]_{+}^{2}\big]}\nonumber \\
 & \quad\le\max_{(x,y,\delta)\in\mathcal{Z}}|L_{\text{indiv}}(\theta;x,y,\delta)-R_{\text{indiv}}(\theta;x,y,\delta)|.\label{eq:clubsuit-helper5}
\end{align}
Thus, inequalities (\ref{eq:clubsuit-helper4}) and (\ref{eq:clubsuit-helper5}) together imply that
\begin{align*}
 & \bigg|\sqrt{\mathbb{E}_{(X,Y,\Delta)\sim\mathbb{P}}\big[[L_{\text{indiv}}(\theta;X,Y,\Delta)-\eta]_{+}^{2}\big]}-\sqrt{\mathbb{E}_{(X,Y,\Delta)\sim\mathbb{P}}\big[[R_{\text{indiv}}(\theta;X,Y,\Delta)-\eta]_{+}^{2}\big]}\bigg|\\
 & \quad\le\max_{(x,y,\delta)\in\mathcal{Z}}|L_{\text{indiv}}(\theta;x,y,\delta)-R_{\text{indiv}}(\theta;x,y,\delta)|.
\end{align*}
Therefore,
\begin{align*}
 & |L_{\text{DRO}}^{\text{split},*}(\theta,\eta)-R_{\text{DRO}}^{\text{split}}(\theta,\eta)|\\
 & \quad=C_{\alpha}\bigg|\sqrt{\mathbb{E}_{(X,Y,\Delta)\sim\mathbb{P}}\big[[L_{\text{indiv}}(\theta;X,Y,\Delta)-\eta]_{+}^{2}\big]}-\sqrt{\mathbb{E}_{(X,Y,\Delta)\sim\mathbb{P}}\big[[R_{\text{indiv}}(\theta;X,Y,\Delta)-\eta]_{+}^{2}\big]}\bigg|\\
 & \quad\le C_{\alpha}\max_{(x,y,\delta)\in\mathcal{Z}}|L_{\text{indiv}}(\theta;x,y,\delta)-R_{\text{indiv}}(\theta;x,y,\delta)|.\hfill\tag*{\ensuremath{\blacksquare}}
\end{align*}

\subsection{Proof of Lemma \ref{lem:enough-points-for-every-time}}

For each time index $\ell\in[m]$, by the multiplicative Chernoff bound and Assumption A2,
\[
\mathbb{P}\bigg(\sum_{i\in\mathcal{D}_{2}}\mathbf{1}\{Y_{i}=t_{\ell}\}\le\frac{1}{2}n_{2}\mathbb{P}(Y=t_{\ell})\bigg)\le e^{-\frac{\mathbb{E}[\sum_{i\in\mathcal{D}_{2}}\mathbf{1}\{Y_{i}=t_{\ell}\}]}{8}}=e^{-\frac{n_{2}\mathbb{P}(Y=t_{\ell})}{8}}\le e^{-\frac{n_{2}\zeta}{8}}.
\]
Note that $\sum_{i\in\mathcal{D}_{2}}\mathbf{1}\{Y_{i}=t_{\ell}\}\le\frac{1}{2}n_{2}\zeta$ implies that $\sum_{i\in\mathcal{D}_{2}}\mathbf{1}\{Y_{i}=t_{\ell}\}\le\frac{1}{2}n_{2}\mathbb{P}(Y=t_{\ell})$ since $\mathbb{P}(Y=t_{\ell})\ge\zeta$ by Assumption A2. This means that
\[
\mathbb{P}\bigg(\sum_{i\in\mathcal{D}_{2}}\mathbf{1}\{Y_{i}=t_{\ell}\}\le\frac{1}{2}n_{2}\zeta\bigg)\le\mathbb{P}\bigg(\sum_{i\in\mathcal{D}_{2}}\mathbf{1}\{Y_{i}=t_{\ell}\}\le\frac{1}{2}n_{2}\mathbb{P}(Y=t_{\ell})\bigg)\le e^{-\frac{n_{2}\zeta}{8}}.
\]
Union-bounding over all $m$ time indices yields the claim.$\hfill\blacksquare$

\subsection{Proof of Lemma \ref{lem:L-indiv-vs-R-indiv}}

Let $(x,y,\delta)\in\mathcal{Z}$. If $\delta=0$, then we obtain the trivial equality
\[
|L_{\text{indiv}}(\theta;x,y,\delta)-R_{\text{indiv}}(\theta;x,y,\delta)|=0,
\]
since by Assumption A4, $L_{\text{indiv}}(\theta;x,y,0)$ has no coupling terms, in which case it is exactly equal to $R_{\text{indiv}}(\theta;x,y,0)$.

For the remainder of this lemma's proof, we assume that $\delta\ne0$. In this case, the adjacency set of $(x,y,\delta)$ could be nonempty. First, we introduce the shorthand notation where $\mathcal{N}_{\mathcal{D}_{2}}\subseteq[n]$ denotes the indices of training data in $\mathcal{D}_{2}$ that are considered adjacent to data point $(x,y,\delta)$, and $\mathcal{N}_{\text{fresh}}\in[n_{2}]$ is analogously defined but for the fresh sample of $n_{2}$ data points (used in the definition of $R_{\text{indiv}}$). Formally,
\begin{align*}
\mathcal{N}_{\mathcal{D}_{2}} & \triangleq\Big\{ i\in\mathcal{D}_{2}:(X_{i},Y_{i},\Delta_{i})\in\mathcal{A}^{*}\big((x,y,\delta),\{(X_{j},Y_{j},\Delta_{j}):j\in\mathcal{D}_{2}\}\big)\Big\},\\
\mathcal{N}_{\text{fresh}} & \triangleq\Big\{ i\in[n_{2}]:(X_{i}',Y_{i}',\Delta_{i}')\in\mathcal{A}^{*}\big((x,y,\delta),\{(X_{j}',Y_{j}',\Delta_{j}'):j\in[n_{2}]\}\big)\Big\}.
\end{align*}
When the event $\mathcal{E}_{\text{bad time}}$ in Lemma \ref{lem:enough-points-for-every-time} does not happen, we are guaranteed that $|\mathcal{N}_{\mathcal{D}_{2}}|>\frac{n_{2}\zeta}{2}>0$ (by the definition of DeepHit's adjacency function, when $\delta=1$, $\mathcal{N}_{\mathcal{D}_{2}}$ would at least contain all points in $\mathcal{D}_{2}$ with the same time index as $y$, for which there are more than $\frac{n_{2}\zeta}{2}$ such data points).

Then when $\phi_{\text{transform}}$ is the identity function,
\begin{align*}
 & |L_{\text{indiv}}(\theta;x,y,\delta)-R_{\text{indiv}}(\theta;x,y,\delta)|\\
 & \quad=\bigg|\phi_{\text{indiv}}((x,y,\delta);\theta)+\sum_{j\in\mathcal{N}_{\mathcal{D}_{2}}^{*}}\phi_{\text{couple}}((x,y,\delta),(X_{j},Y_{j},\Delta_{j});\theta)\\
 & \quad\quad\;-\mathbb{E}_{\{(X_{i}',Y_{i}',\Delta_{i}')\}_{i=1}^{n_{2}}}\bigg[\phi_{\text{indiv}}((x,y,\delta);\theta)+\sum_{j\in\mathcal{N}_{\text{fresh}}^{*}}\phi_{\text{couple}}((x,y,\delta),(X_{j}',Y_{j}',\Delta_{j}');\theta)\bigg]\bigg|\\
 & \quad=\bigg|\sum_{j\in\mathcal{N}_{\mathcal{D}_{2}}^{*}}\phi_{\text{couple}}((x,y,\delta),(X_{j},Y_{j},\Delta_{j});\theta)\\
 & \quad\quad\;-\mathbb{E}_{\{(X_{i}',Y_{i}',\Delta_{i}')\}_{i=1}^{n_{2}}}\bigg[\sum_{j\in\mathcal{N}_{\text{fresh}}^{*}}\phi_{\text{couple}}((x,y,\delta),(X_{j}',Y_{j}',\Delta_{j}');\theta)\bigg]\bigg|.
\end{align*}
The key observation is that by construction, $\sum_{j\in\mathcal{N}_{\mathcal{D}_{2}}^{*}}\phi_{\text{couple}}((x,y,\delta),(X_{j},Y_{j},\Delta_{j});\theta)$ has the same distribution as $\sum_{j\in\mathcal{N}_{\text{fresh}}^{*}}\phi_{\text{couple}}((x,y,\delta),(X_{j}',Y_{j}',\Delta_{j}');\theta)$, so
\begin{align*}
 & \mathbb{E}_{\{(X_{i},Y_{i},\Delta_{i})\}_{i\in\mathcal{D}_{2}}}\bigg[\sum_{j\in\mathcal{N}_{\mathcal{D}_{2}}^{*}}\phi_{\text{couple}}((x,y,\delta),(X_{j},Y_{j},\Delta_{j});\theta)\bigg]\\
 & \quad=\mathbb{E}_{\{(X_{i}',Y_{i}',\Delta_{i}')\}_{i=1}^{n_{2}}}\bigg[\sum_{j\in\mathcal{N}_{\text{fresh}}^{*}}\phi_{\text{couple}}((x,y,\delta),(X_{j}',Y_{j}',\Delta_{j}');\theta)\bigg].
\end{align*}
Hence, denoting
\begin{equation}
\Phi\triangleq\sum_{j\in\mathcal{N}_{\mathcal{D}_{2}}^{*}}\phi_{\text{couple}}((x,y,\delta),(X_{j},Y_{j},\Delta_{j});\theta),\label{eq:Phi}
\end{equation}
we have
\[
|L_{\text{indiv}}(\theta;x,y,\delta)-R_{\text{indiv}}(\theta;x,y,\delta)|=|\Phi-\mathbb{E}[\Phi]|.
\]

When event $\mathcal{E}_{\text{bad time}}$ does not happen, we know that $|\mathcal{N}_{\mathcal{D}_{2}}|\ge\lceil\frac{n_{2}\zeta}{2}\rceil$. This means that $\Phi$ is a nonempty sum of i.i.d.~nonnegative random variables each bounded within $[M_{\text{couple-min}},M_{\text{couple-max}}]$. Then by Hoeffding's inequality,%

{} for any $\widetilde{\omega}>0$,
\begin{align*}
 & \mathbb{P}\Big(|\Phi-\mathbb{E}[\Phi]|\ge\widetilde{\omega}|\mathcal{N}_{\mathcal{D}_{2}}|~\bigg|~|\mathcal{N}_{\mathcal{D}_{2}}|\ge\Big\lceil\frac{n_{2}\zeta}{2}\Big\rceil\Big)\\
 & \quad=\frac{\sum_{\ell=\lceil\frac{n_{2}\zeta}{2}\rceil}^{n_{2}}\mathbb{P}\big(|\Phi-\mathbb{E}[\Phi]|\le\widetilde{\omega}|\mathcal{N}_{\mathcal{D}_{2}}|~\big|~|\mathcal{N}_{\mathcal{D}_{2}}|=\ell\big)\mathbb{P}(|\mathcal{N}_{\mathcal{D}_{2}}|=\ell)}{\mathbb{P}(|\mathcal{N}_{\mathcal{D}_{2}}|\ge\lceil\frac{n_{2}\zeta}{2}\rceil)}\\
 & \quad\le\frac{\sum_{\ell=\lceil\frac{n_{2}\zeta}{2}\rceil}^{n_{2}}2\exp\bigg(-\frac{2(\widetilde{\omega}\ell)^{2}}{\ell(M_{\text{couple-max}}-M_{\text{couple-min}})^{2}}\bigg)\mathbb{P}(|\mathcal{N}_{\mathcal{D}_{2}}|=\ell)}{\mathbb{P}(|\mathcal{N}_{\mathcal{D}_{2}}|\ge\lceil\frac{n_{2}\zeta}{2}\rceil)}\\
 & \quad=\frac{\sum_{\ell=\lceil\frac{n_{2}\zeta}{2}\rceil}^{n_{2}}2\exp\bigg(-\frac{2\widetilde{\omega}^{2}\ell}{(M_{\text{couple-max}}-M_{\text{couple-min}})^{2}}\bigg)\mathbb{P}(|\mathcal{N}_{\mathcal{D}_{2}}|=\ell)}{\mathbb{P}(|\mathcal{N}_{\mathcal{D}_{2}}|\ge\lceil\frac{n_{2}\zeta}{2}\rceil)}\\
 & \quad\le\frac{\sum_{\ell=\lceil\frac{n_{2}\zeta}{2}\rceil}^{n_{2}}2\exp\bigg(-\frac{2\widetilde{\omega}^{2}(\frac{n_{2}\zeta}{2})}{(M_{\text{couple-max}}-M_{\text{couple-min}})^{2}}\bigg)\mathbb{P}(|\mathcal{N}_{\mathcal{D}_{2}}|=\ell)}{\mathbb{P}(|\mathcal{N}_{\mathcal{D}_{2}}|\ge\lceil\frac{n_{2}\zeta}{2}\rceil)}\\
 & \quad=2\exp\bigg(-\frac{\widetilde{\omega}^{2}n_{2}\zeta}{(M_{\text{couple-max}}-M_{\text{couple-min}})^{2}}\bigg).
\end{align*}
Now we do a change of variables. Let $\omega>0$, and set
\[
\widetilde{\omega}=(M_{\text{couple-max}}-M_{\text{couple-min}})\sqrt{\frac{\omega}{\zeta n_{2}}}.
\]
Then we have
\begin{align*}
 & \mathbb{P}\Big(|\Phi-\mathbb{E}[\Phi]|\ge(M_{\text{couple-max}}-M_{\text{couple-min}})\sqrt{\frac{\omega}{\zeta n_{2}}}|\mathcal{N}_{\mathcal{D}_{2}}|~\bigg|~|\mathcal{N}_{\mathcal{D}_{2}}|\ge\Big\lceil\frac{n_{2}\zeta}{2}\Big\rceil\Big)\\
 & \quad\le2e^{-\omega}.
\end{align*}
In summary, when $\mathcal{E}_{\text{bad time}}$ does not happen, with probability at least $1-2e^{-\omega}$, we have
\begin{align*}
|L_{\text{indiv}}(\theta;x,y,\delta)-R_{\text{indiv}}(\theta;x,y,\delta)| & =|\Phi-\mathbb{E}[\Phi]|\\
 & \le(M_{\text{couple-max}}-M_{\text{couple-min}})\sqrt{\frac{\omega}{\zeta n_{2}}}|\mathcal{N}_{\mathcal{D}_{2}}|\\
 & \le(M_{\text{couple-max}}-M_{\text{couple-min}})\sqrt{\frac{\omega}{\zeta n_{2}}}\cdot n_{2}\\
 & =(M_{\text{couple-max}}-M_{\text{couple-min}})\sqrt{\frac{\omega n_{2}}{\zeta}}.
\end{align*}
Now let's consider when instead $\phi_{\text{transform}}(s)=\log(1+s)$, and as a reminder we assume $\delta\ne0$. Then
\begin{align*}
 & |L_{\text{indiv}}(\theta;x,y,\delta)-R_{\text{indiv}}(\theta;x,y,\delta)|\\
 & =\bigg|\phi_{\text{indiv}}((x,y,\delta);\theta)+\log\bigg(1+\sum_{j\in\mathcal{N}_{\mathcal{D}_{2}}}\phi_{\text{couple}}((x,y,\delta),(X_{j},Y_{j},\Delta_{j});\theta)\bigg)\\
 & \quad\;-\mathbb{E}_{\{(X_{i}',Y_{i}',\Delta_{i}')\}_{i=1}^{n_{2}}}\bigg[\phi_{\text{indiv}}((x,y,\delta);\theta)+\delta\log\bigg(1+\sum_{j\in\mathcal{N}_{\text{fresh}}}\phi_{\text{couple}}((x,y,\delta),(X_{j}',Y_{j}',\Delta_{j}');\theta)\bigg)\bigg]\bigg|\\
 & =\bigg|\log\bigg(1+\sum_{j\in\mathcal{N}_{\mathcal{D}_{2}}}\phi_{\text{couple}}((x,y,\delta),(X_{j},Y_{j},\Delta_{j});\theta)\bigg)\\
 & \quad\;-\mathbb{E}_{\{(X_{i}',Y_{i}',\Delta_{i}')\}_{i=1}^{n_{2}}}\bigg[\log\bigg(1+\sum_{j\in\mathcal{N}_{\text{fresh}}}\phi_{\text{couple}}((x,y,\delta),(X_{j}',Y_{j}',\Delta_{j}');\theta)\bigg)\bigg]\bigg|.
\end{align*}
By a similar argument as we used for proving the case where $\phi_{\text{transform}}$ is the identity function, the key observation is that
\begin{align*}
 & \mathbb{E}_{\{(X_{i},Y_{i},\Delta_{i})\}_{i\in\mathcal{D}_{2}}}\bigg[\log\bigg(1+\sum_{j\in\mathcal{N}_{\mathcal{D}_{2}}}\phi_{\text{couple}}((x,y,\delta),(X_{j},Y_{j},\Delta_{j});\theta)\bigg)\bigg]\\
 & \quad=\mathbb{E}_{\{(X_{i}',Y_{i}',\Delta_{i}')\}_{i=1}^{n_{2}}}\bigg[\log\bigg(1+\sum_{j\in\mathcal{N}_{\text{fresh}}}\phi_{\text{couple}}((x,y,\delta),(X_{j}',Y_{j}',\Delta_{j}');\theta)\bigg)\bigg].
\end{align*}
We now define $\Gamma_{j}\triangleq\phi_{\text{couple}}((x,y,\delta),(X_{j},Y_{j},\Delta_{j});\theta)$ for each $j\in\mathcal{N}_{\mathcal{D}_{2}}$. Again, when event $\mathcal{E}_{\text{bad time}}$ does not happen, we are guaranteed that $|\mathcal{N}_{\mathcal{D}_{2}}|\ge\frac{n_{2}\zeta}{2}$, i.e., $\mathcal{N}_{\mathcal{D}_{2}}$ is nonempty. Note that the map $(\Gamma_{j})_{j\in\mathcal{N}_{\mathcal{D}_{2}}}\mapsto\log(1+\sum_{j\in\mathcal{N}_{\mathcal{D}_{2}}}\Gamma_{j})$ is concave. We now show that this map is Lipschitz continuous with respect to the Euclidean norm by showing what a valid Lipschitz constant is for the map. Note that for $i\in\mathcal{N}_{\mathcal{D}_{2}}$,
\[
\frac{\partial\log(1+\sum_{j\in\mathcal{N}_{\mathcal{D}_{2}}}\Gamma_{j})}{\partial\Gamma_{i}}=\frac{1}{1+\sum_{j\in\mathcal{N}_{\mathcal{D}_{2}}}\Gamma_{j}}.
\]
Then
\begin{align*}
\Big\|\nabla\log\Big(1+\sum_{j\in\mathcal{N}_{\mathcal{D}_{2}}}\Gamma_{j}\Big)\Big\|_{2} & =\sqrt{\sum_{i\in\mathcal{N}_{\mathcal{D}_{2}}}\Big(\frac{\partial\log(1+\sum_{j\in\mathcal{N}_{\mathcal{D}_{2}}}\Gamma_{j})}{\partial\Gamma_{i}}\Big)^{2}}\\
 & =\sqrt{\frac{|\mathcal{N}_{\mathcal{D}_{2}}|}{(1+\sum_{j\in\mathcal{N}_{\mathcal{D}_{2}}}\Gamma_{j})^{2}}}\\
 & \le\sqrt{\frac{|\mathcal{N}_{\mathcal{D}_{2}}|}{(\sum_{j\in\mathcal{N}_{\mathcal{D}_{2}}}\Gamma_{j})^{2}}}\\
 & \le\sqrt{\frac{|\mathcal{N}_{\mathcal{D}_{2}}|}{(\frac{n_{2}\zeta}{2}M_{\text{couple-min}})^{2}}}\\
 & \le\sqrt{\frac{n_{2}}{(\frac{n_{2}\zeta}{2}M_{\text{couple-min}})^{2}}}\\
 & =\frac{2}{\zeta M_{\text{couple-min}}}\cdot\frac{1}{\sqrt{n_{2}}}.
\end{align*}
In other words, the map $(\Gamma_{j})_{j\in\mathcal{N}_{\mathcal{D}_{2}}}\mapsto\log(1+\sum_{j\in\mathcal{N}_{\mathcal{D}_{2}}}\Gamma_{j})$ has Lipschitz constant $\frac{2}{\zeta M_{\text{couple-min}}}\cdot\frac{1}{\sqrt{n_{2}}}$ when event $\mathcal{E}_{\text{bad time}}$ does not happen. Then applying Lemma 6 of \citet{duchi2021learning}, for any $\widetilde{\omega}>0$,
\begin{align*}
 & \mathbb{P}\bigg(\bigg|\log\Big(1+\sum_{j\in\mathcal{N}_{\mathcal{D}_{2}}}\Gamma_{j}\Big)-\mathbb{E}\bigg[\log\Big(1+\sum_{j\in\mathcal{N}_{\mathcal{D}_{2}}}\Gamma_{j}\Big)\bigg]\bigg|\ge\widetilde{\omega}\,\bigg|\,|\mathcal{N}_{\mathcal{D}_{2}}|\ge\frac{n_{2}\zeta}{2}\bigg)\\
 & \quad\le2\exp\Big(-\frac{\widetilde{\omega}^{2}}{2(\frac{2}{\zeta M_{\text{couple-min}}}\cdot\frac{1}{\sqrt{n_{2}}})^{2}(M_{\text{couple-max}}-M_{\text{couple-min}})^{2}}\Big)\\
 & \quad=2\exp\Big(-\frac{\widetilde{\omega}^{2}\zeta^{2}M_{\text{couple-min}}^{2}n_{2}}{8(M_{\text{couple-max}}-M_{\text{couple-min}})^{2}}\Big).
\end{align*}
Let $\omega>0$ and set $\widetilde{\omega}=\frac{(M_{\text{couple-max}}-M_{\text{couple-min}})}{\zeta M_{\text{couple-min}}}\sqrt{\frac{8\omega}{n_{2}}}$. Then
\begin{align*}
 & \mathbb{P}\bigg(\overbrace{\bigg|\log\Big(1+\sum_{j\in\mathcal{N}_{\mathcal{D}_{2}}^{*}}\Gamma_{j}\Big)-\mathbb{E}\bigg[\log\Big(1+\sum_{j\in\mathcal{N}_{\mathcal{D}_{2}}^{*}}\Gamma_{j}\Big)\bigg]\bigg|}^{=|L_{\text{indiv}}(\theta;x,y,\delta)-R_{\text{indiv}}(\theta;x,y,\delta)|}\ge\frac{(M_{\text{couple-max}}-M_{\text{couple-min}})}{\zeta M_{\text{couple-min}}}\sqrt{\frac{8\omega}{n_{2}}}\bigg)\\
 & \quad\le2e^{-\omega}.\hfill\tag*{\ensuremath{\blacksquare}}
\end{align*}

\section{Proof of Corollary \ref{cor:linear-cox-split-dro-guarantee}\label{sec:pf-linear-cox-split-dro-guarantee}}

The proof of this corollary consists of two main parts. First, we check that the Assumptions A1--A4 needed by Theorem \ref{thm:main-result} hold. Then we apply Theorem \ref{thm:main-result}, where we impose constraints on $n$ and $d$ so that we can simplify the probability bound in equation (\ref{eq:main-result-prob}).

\paragraph*{Verifying Assumptions A1--A4}

Assumption A1 clearly holds since $\mathcal{X}$ is the unit ball in $\mathbb{R}^{d}$, which is compact. There is no need to check Assumption A2 in that we are directly assuming it. Similarly, Assumption A3 also trivially holds (for discrete time, the adjacency function for the Cox model is the same as for DeepHit).

We proceed to verify Assumption A4. For the Cox model where $f(x;\theta)=\theta^{\top}x$, we have
\begin{align*}
L^{*}((x,y,\delta),\mathcal{C};\theta) & =-\delta\Bigg[\theta^{\top}x-\log\bigg(\exp(\theta^{\top}x)+\sum_{(x',y',\delta')\in\mathcal{C}}\exp(\theta^{\top}x')\bigg)\bigg]\\
 & =-\delta\Bigg[\log\exp(\theta^{\top}x)-\log\bigg(\exp(\theta^{\top}x)+\sum_{(x',y',\delta')\in\mathcal{C}}\exp(\theta^{\top}x')\bigg)\bigg]\\
 & =-\delta\log\bigg(\frac{\exp(\theta^{\top}x)}{\exp(\theta^{\top}x)+\sum_{(x',y',\delta')\in\mathcal{C}}\exp(\theta^{\top}x')}\bigg)\\
 & =\delta\log\bigg(\frac{\exp(\theta^{\top}x)+\sum_{(x',y',\delta')\in\mathcal{C}}\exp(\theta^{\top}x')}{\exp(\theta^{\top}x)}\bigg)\\
 & =\delta\log\Big(1+\sum_{(x',y',\delta')\in\mathcal{C}}\exp\big(\theta^{\top}(x'-x)\big)\Big),
\end{align*}
which corresponds to Assumption A4 where $\phi_{\text{transform}}(s)=\log(1+s)$, $\phi_{\text{indiv}}$ always outputs 0 (so $M_{\text{indiv}}=0)$, and
\[
\phi_{\text{couple}}((x,y,\delta),(x',y',\delta');\theta)=\exp\big(\theta^{\top}(x'-x)\big).
\]
In this case, since $\mathcal{X}$ and $\Theta$ are constrained to be within the unit ball, by the Cauchy-Schwarz inequality,
\[
|\theta^{\top}(x'-x)|\le\underbrace{\|\theta\|_{2}}_{\le1}\underbrace{\|x'-x\|_{2}}_{\substack{\le2\text{ since a unit ball}\\
\text{has diameter 2}
}
}\le2.
\]
In particular,
\[
\theta^{\top}(x'-x)\in[-2,2],
\]
so the largest $\phi_{\text{couple}}((x,y,\delta),(x',y',\delta');\theta)$ can be is
\[
\exp\big(\theta^{\top}(x'-x)\big)\le\exp(2)\triangleq M_{\text{couple-max}},
\]
whereas the smallest is
\[
\exp\big(\theta^{\top}(x'-x)\big)\ge\exp(-2)\triangleq M_{\text{couple-min}}.
\]
Meanwhile, to check that $L^{*}((x,y,\delta),\mathcal{C};\theta)$ satisfies Lipschitz continuity, first note that when $\delta=0$, the $L^{*}((x,y,\delta),\mathcal{C};\theta)=0$, so there is nothing to show. When $\delta\ne0$, we have
\[
L^{*}((x,y,\delta),\mathcal{C};\theta)=\log\Big(1+\sum_{(x',y',\delta')\in\mathcal{C}}\exp\big(\theta^{\top}(x'-x)\big)\Big).
\]
Taking the gradient with respect to $x$, we get
\[
\frac{\partial L^{*}((x,y,\delta),\mathcal{C};\theta)}{\partial x}=-\frac{\sum_{(x',y',\delta')\in\mathcal{C}}\exp\big(\theta^{\top}(x'-x)\big)}{1+\sum_{(x',y',\delta')\in\mathcal{C}}\exp\big(\theta^{\top}(x'-x)\big)}\theta.
\]
Then
\begin{align*}
\Big\|\frac{\partial L^{*}((x,y,\delta),\mathcal{C};\theta)}{\partial x}\Big\|_{2} & =\bigg\|-\frac{\sum_{(x',y',\delta')\in\mathcal{C}}\exp\big(\theta^{\top}(x'-x)\big)}{1+\sum_{(x',y',\delta')\in\mathcal{C}}\exp\big(\theta^{\top}(x'-x)\big)}\theta\bigg\|_{2}\\
 & =\underbrace{\frac{\sum_{(x',y',\delta')\in\mathcal{C}}\exp\big(\theta^{\top}(x'-x)\big)}{1+\sum_{(x',y',\delta')\in\mathcal{C}}\exp\big(\theta^{\top}(x'-x)\big)}}_{\le1\text{ (this is a probability from a softmax calculation)}}\underbrace{\|\theta\|_{2}}_{\le1}\\
 & \le1.
\end{align*}
Thus, $L^{*}((x,y,\delta),\mathcal{C};\theta)$ is 1-Lipschitz (i.e., the constant $\mathcal{L}$ in Assumption A4(c) is 1). At this point we have verified that Assumption A4 holds.

\paragraph*{Applying Theorem \ref{thm:main-result}}

We now apply Theorem \ref{thm:main-result}. In this case, we have
\begin{equation}
M\triangleq\log\Big(1+\frac{e^{2}}{2}n\Big)\qquad\text{and}\qquad M'\triangleq\frac{4(e^{2}-e^{-2})}{\zeta e^{-2}}\sqrt{\frac{\omega}{n}}.\label{eq:linear-cox-dro-split-M-M'}
\end{equation}
By a standard result (see, for instance, Corollary 4.2.13 of \citet{vershynin2018high}), for all $\varepsilon\in(0,1]$,
\begin{equation}
\mathbb{N}(\varepsilon,\mathcal{X})\le\Big(\frac{3}{\varepsilon}\Big)^{d}.\label{eq:standard-unit-ball-covering-bound}
\end{equation}
Since the Theorem \ref{thm:main-result}'s probability bound (\ref{eq:main-result-prob}) depends on $\mathbb{N}(M',\mathcal{X})$, we first verify that $M'\le1$ so that inequality (\ref{eq:standard-unit-ball-covering-bound}) holds. To do this, we make use of the lower branch $W_{-1}$ of the Lambert W function and the standard result that
\begin{equation}
-1-\sqrt{2s}-s<W_{-1}(-e^{-s-1})\quad\text{for}\quad s>0.\label{eq:lambert-w-bound}
\end{equation}
By assumption,
\begin{align*}
n & \ge\Big(\frac{4(e^{2}-e^{-2})}{\zeta e^{-2}}\Big)^{2}\Big(\frac{d+1}{2}\Big)e^{\sqrt{2\log\Big(\big(\frac{4(e^{2}-e^{-2})}{\zeta e^{-2}}\big)^{2}\big(\frac{d+1}{2}\big)\Big)-1}}\\
 & =e^{\log\Big(\big(\frac{4(e^{2}-e^{-2})}{\zeta e^{-2}}\big)^{2}\big(\frac{d+1}{2}\big)\Big)+\sqrt{2\log\Big(\big(\frac{4(e^{2}-e^{-2})}{\zeta e^{-2}}\big)^{2}\big(\frac{d+1}{2}\big)\Big)-1}}.
\end{align*}
Inequality (\ref{eq:lambert-w-bound}) (with $s=\log\big((\frac{4(e^{2}-e^{-2})}{\zeta e^{-2}})^{2}(\frac{d+1}{2})\big)-1$) implies that
\[
e^{\log\Big(\big(\frac{4(e^{2}-e^{-2})}{\zeta e^{-2}}\big)^{2}\big(\frac{d+1}{2}\big)\Big)+\sqrt{2\log\Big(\big(\frac{4(e^{2}-e^{-2})}{\zeta e^{-2}}\big)^{2}\big(\frac{d+1}{2}\big)\Big)-1}}>e^{-W_{-1}\Big(-\frac{1}{\Big(\frac{4(e^{2}-e^{-2})}{\zeta e^{-2}}\Big)^{2}\Big(\frac{d+1}{2}\Big)}\Big)},
\]
so that
\[
n\ge e^{-W_{-1}\Big(-\frac{1}{\Big(\frac{4(e^{2}-e^{-2})}{\zeta e^{-2}}\Big)^{2}\Big(\frac{d+1}{2}\Big)}\Big)}.
\]
This implies that
\[
M'=\frac{4(e^{2}-e^{-2})}{\zeta e^{-2}}\sqrt{\frac{\omega}{n}}=\frac{4(e^{2}-e^{-2})}{\zeta e^{-2}}\sqrt{\frac{\frac{d+1}{2}\log n}{n}}\le1
\]
as desired. Then using inequality (\ref{eq:standard-unit-ball-covering-bound}), the probability bound in equation (\ref{eq:main-result-prob}) satisfies the bound
\begin{align*}
 & 1-2\bigg[\frac{M}{(C_{\alpha}-1)\big[2\sqrt{\frac{\omega}{n}}\max\{2,\frac{C_{\alpha}}{C_{\alpha}-1}\}M+(2\mathcal{L}+1)M'\big]}+\mathbb{N}(M',\mathcal{X})\bigg]e^{-\omega}-me^{-\frac{n\zeta}{16}}\\
 & =1-2\bigg[\frac{\log\big(1+\frac{e^{2}}{2}n\big)}{(C_{\alpha}-1)\big[2\sqrt{\frac{\omega}{n}}\max\{2,\frac{C_{\alpha}}{C_{\alpha}-1}\}\log\big(1+\frac{e^{2}}{2}n\big)+\frac{12(e^{2}-e^{-2})}{\zeta e^{-2}}\sqrt{\frac{\omega}{n}}\big]}+\mathbb{N}(M',\mathcal{X})\bigg]e^{-\omega}-me^{-\frac{n\zeta}{16}}\\
 & \ge1-2\Bigg[\frac{\log\big(1+\frac{e^{2}}{2}n\big)}{(C_{\alpha}-1)\big[2\sqrt{\frac{\omega}{n}}\max\{2,\frac{C_{\alpha}}{C_{\alpha}-1}\}\log\big(1+\frac{e^{2}}{2}n\big)+\frac{12(e^{2}-e^{-2})}{\zeta e^{-2}}\sqrt{\frac{\omega}{n}}\big]}+\bigg(\frac{3}{\frac{4(e^{2}-e^{-2})}{\zeta e^{-2}}\sqrt{\frac{\omega}{n}}}\bigg)^{d}\Bigg]e^{-\omega}\\
 &\phantom{=}-me^{-\frac{n\zeta}{16}}\\
 & =1-2\Bigg[\frac{\log\big(1+\frac{e^{2}}{2}n\big)}{2(C_{\alpha}-1)\max\{2,\frac{C_{\alpha}}{C_{\alpha}-1}\}\log\big(1+\frac{e^{2}}{2}n\big)+\frac{12(C_{\alpha}-1)(e^{2}-e^{-2})}{\zeta e^{-2}}}\sqrt{\frac{n}{\omega}}+\bigg(\frac{3\zeta e^{-2}}{4(e^{2}-e^{-2})}\bigg)^{d}\Big(\frac{n}{\omega}\Big)^{d/2}\Bigg]e^{-\omega}\\
 &\phantom{=}~-me^{-\frac{n\zeta}{16}}\\
 & =1-2\Bigg[\frac{\log\big(1+\frac{e^{2}}{2}n\big)}{\Upsilon_{1}\log\big(1+\frac{e^{2}}{2}n\big)+\Upsilon_{2}}\Big(\frac{n}{\omega}\Big)^{1/2}+\Upsilon_{3}\Big(\frac{n}{\omega}\Big)^{d/2}\Bigg]e^{-\omega}-me^{-\frac{n\zeta}{16}},
\end{align*}
where
\begin{align*}
\Upsilon_{1} & \triangleq2(C_{\alpha}-1)\max\Big\{2,\frac{C_{\alpha}}{C_{\alpha}-1}\Big\},\\
\Upsilon_{2} & \triangleq\frac{12(C_{\alpha}-1)(e^{2}-e^{-2})}{\zeta e^{-2}},\\
\Upsilon_{3} & \triangleq\bigg(\frac{3\zeta e^{-2}}{4(e^{2}-e^{-2})}\bigg)^{d}.
\end{align*}
Recall that we have the assumption
\[
n\ge2e^{-\frac{6(e^{4}-1)}{\zeta\max\{2,\frac{C_{\alpha}}{C_{\alpha}-1}\}}-2}=\frac{2e^{-\Upsilon_{2}/\Upsilon_{1}}}{e^{2}}.
\]
This implies that
\[
n>\frac{2(e^{-\Upsilon_{2}/\Upsilon_{1}}-1)}{e^{2}},
\]
which further implies that
\[
\frac{\log\big(1+\frac{e^{2}}{2}n\big)}{\Upsilon_{1}\log\big(1+\frac{e^{2}}{2}n\big)+\Upsilon_{2}}\le\frac{1}{\Upsilon_{1}}.
\]
Consequently,
\begin{align*}
 & 1-2\Bigg[\frac{\log\big(1+\frac{e^{2}}{2}n\big)}{\Upsilon_{1}\log\big(1+\frac{e^{2}}{2}n\big)+\Upsilon_{2}}\Big(\frac{n}{\omega}\Big)^{1/2}+\Upsilon_{3}\Big(\frac{n}{\omega}\Big)^{d/2}\Bigg]e^{-\omega}-me^{-\frac{n\zeta}{16}}\\
 & \quad\ge1-2\Bigg[\frac{1}{\Upsilon_{1}}\Big(\frac{n}{\omega}\Big)^{1/2}+\Upsilon_{3}\Big(\frac{n}{\omega}\Big)^{d/2}\Bigg]e^{-\omega}-me^{-\frac{n\zeta}{16}}.
\end{align*}
Next, we use the fact that we set $\omega=\frac{d+1}{2}\log n$. Let $W_{-1}$ to be the lower branch of the Lambert W function. The statement of the corollary also assumes that
\[
n\ge e^{\sqrt{2(\log\frac{d+1}{2}-1)}+\log\frac{d+1}{2}}.
\]
A standard bound on $W_{-1}$ is that for any $s>0$, we have $-1-\sqrt{2s}-s<W_{-1}(-e^{-s-1})$. Plugging in $s=\log\frac{d+1}{2}-1$ (which is guaranteed to be positive since we assume that $d\ge5>2e-1$), we get that
\[
n\ge e^{\sqrt{2(\log\frac{d+1}{2}-1)}+\log\frac{d+1}{2}}\ge e^{-W_{-1}(-\frac{2}{d+1})}.
\]
This in turn implies that $n\ge\omega=\frac{d+1}{2}\log n$. Then since $n\ge\omega$, we have
\begin{align*}
 & 1-2\Bigg[\frac{1}{\Upsilon_{1}}\Big(\frac{n}{\omega}\Big)^{1/2}+\Upsilon_{3}\Big(\frac{n}{\omega}\Big)^{d/2}\Bigg]e^{-\omega}-me^{-\frac{n\zeta}{16}}\\
 & \quad\ge1-2\Big(\frac{1}{\Upsilon_{1}}+\Upsilon_{3}\Big)\Big(\frac{n}{\omega}\Big)^{d/2}e^{-\omega}-me^{-\frac{n\zeta}{16}}.
\end{align*}
Lastly, because we assume that $n\ge e^{\frac{2}{d+1}}$ and $d\ge5>0$, then these two conditions imply that
\[
\frac{d}{2}\log n-\frac{d}{2}\log\omega-\omega\le-\frac{1}{2}\log n,
\]
which means that
\[
\underbrace{\Big(\frac{n}{\omega}\Big)^{d/2}e^{-\omega}}_{=n^{d/2}\omega^{-d/2}e^{-\omega}}\le n^{-1/2}.
\]
Thus,
\begin{align*}
 & 1-2\Big(\frac{1}{\Upsilon_{1}}+\Upsilon_{3}\Big)\Big(\frac{n}{\omega}\Big)^{d/2}e^{-\omega}-me^{-\frac{n\zeta}{16}}\\
 & \quad\ge1-2\Big(\frac{1}{\Upsilon_{1}}+\Upsilon_{3}\Big)\frac{1}{\sqrt{n}}-me^{-\frac{n\zeta}{16}}.
\end{align*}
This results in the simplified probability bound in equation (\ref{eq:linear-cox-split-dro-prob}).

Finally, we plug in $M$ and $M'$ from equation (\ref{eq:linear-cox-dro-split-M-M'}) as well as $\omega=\frac{d+1}{2}\log n$ into bound (\ref{eq:main-result-loss-bound}) to arrive at (\ref{eq:linear-cox-split-dro-loss-bound}), which completes the proof of the corollary.$\hfill\blacksquare$ 

\section{Proof of Corollary \ref{cor:deephit-split-dro-guarantee}\label{sec:pf-deephit-split-dro-guarantee}}

\paragraph*{Verifying Assumptions A1--A4}

Since $\mathcal{X}$ is the unit ball in $\mathbb{R}^{2}$, Assumption A1 is satisfied. There is no need to check Assumptions A2 or A3 (we directly assume A2, and A3 says that we are using the DeepHit adjacency function, which is the case since we are analyzing DeepHit). As for Assumption A4, we now describe how the bounds on $M_{\text{indiv}}$, $M_{\text{couple-min}}$, and $M_{\text{couple-max}}$ are obtained.

First, let's look at
\[
\phi_{\text{indiv}}((x,y,\delta);\theta)=\beta\cdot\big[-\delta\log(f_{\kappa(y)}(x;\theta))-(1-\delta)\log(S_{\kappa(y)}(x;\theta))\big].
\]
Note that this function is nonnegative since log probabilities are negative and are at most 0, i.e., $\log(f_{\kappa(y)}(x;\theta))\le0$ and $\log(S_{\kappa(y)}(x;\theta))\le0$. Since $f_{\kappa(y)}(x;\theta)\ge\varrho$, this means that
\[
-\beta\cdot\delta\cdot\log(f_{\kappa(y)}(x;\theta))\le-\beta\log(f_{\kappa(y)}(x;\theta))\le-\beta\log\varrho=\beta\log\frac{1}{\varrho}.
\]
Meanwhile,
\[
-\beta(1-\delta)\log(S_{\kappa(y)}(x;\theta))\le-\beta\log(S_{\kappa(y)}(x;\theta))\le-\beta\log\varrho=\beta\log\frac{1}{\varrho},
\]
where the last inequality holds because $S_{j}(x;\theta)$ monotonically decreases as we go to later time indices; the smallest it gets is $S_{m-1}(x;\theta)=f_{m}(x;\theta)\ge\varrho$ (where we have used the assumption that within the training data, no observed time corresponds to index $m$). Thus, we can take $M_{\text{indiv}}=\beta\log\frac{1}{\varrho}$.

Next, we look at
\[
\phi_{\text{couple}}((x,y,\delta),(x',y',\delta'),\mathcal{C};\theta)=(1-\beta)\cdot\frac{1}{n}\cdot\exp\Big(\frac{S_{\kappa(y)}(x;\theta)-S_{\kappa(y)}(x';\theta)}{\sigma}\Big).
\]
Here, the main observation is that $S_{j}(x;\theta)\in[\varrho,1]$ for $j\in[m-1]$. Hence, $S_{\kappa(y)}(x;\theta)-S_{\kappa(y)}(x';\theta)\in[\varrho-1,1-\varrho]$, from which we conclude that
\[
\phi_{\text{couple}}((x,y,\delta),(x',y',\delta'),\mathcal{C};\theta)\in[\underbrace{(1-\beta)\cdot\frac{1}{n}\cdot e^{(\varrho-1)/\sigma}}_{M_{\text{couple-min}}},\;\underbrace{(1-\beta)\cdot\frac{1}{n}\cdot e^{(1-\varrho)/\sigma}}_{M_{\text{couple-max}}}].
\]
Now we check the Lipschitz constant. When $\delta=0$, then
\begin{align*}
L^{*}((x,y,\delta),\mathcal{C};\theta) & =\phi_{\text{indiv}}((x,y,\delta);\theta)+\delta\phi_{\text{transform}}\Big(\sum_{(x',y',\delta')\in\mathcal{C}}\phi_{\text{couple}}((x,y,\delta),(x',y',\delta');\theta)\Big)\\
 & =\phi_{\text{indiv}}((x,y,\delta);\theta)\\
 & =-\beta\log(f_{\kappa(y)}(x;\theta)).
\end{align*}
Note that $s\mapsto\log s$ defined on the interval $[\varrho,\infty)$ has Lipschitz constant $\frac{1}{\varrho}$. We are composing $s\mapsto\log s$ with $f_{\kappa(y)}(x;\theta)$, which we assumed is 1-Lipschitz, so $\log(f_{\kappa(y)}(x;\theta))$ is $\frac{1}{\varrho}$-Lipschitz. Finally by multiplying by $-\beta$, we have that $L^{*}((x,y,\delta),\mathcal{C};\theta)$ is $\frac{\beta}{\varrho}$-Lipschitz.

Next, we consider when $\delta=1$. In this case,
\begin{align*}
L^{*}((x,y,\delta),\mathcal{C};\theta) & =\phi_{\text{indiv}}((x,y,\delta);\theta)+\delta\phi_{\text{transform}}\Big(\sum_{(x',y',\delta')\in\mathcal{C}}\phi_{\text{couple}}((x,y,\delta),(x',y',\delta');\theta)\Big)\\
 & =-\beta\log(S_{\kappa(y)}(x;\theta))+(1-\beta)\cdot\frac{1}{n}\cdot\exp\Big(\frac{S_{\kappa(y)}(x;\theta)-S_{\kappa(y)}(x';\theta)}{\sigma}\Big).
\end{align*}
First off, we show that $-\beta\log(S_{\kappa(y)}(x;\theta))$ is $\frac{\beta(m-1)}{\varrho}$-Lipschitz. Note that $S_{j}(x;\theta)=\sum_{\ell=j+1}^{m}f_{\ell}(x;\theta)$ is the sum of at most $(m-1$) functions that are each 1-Lipschitz, so it is $(m-1)$-Lipschitz. As stated earlier, $S_{\kappa(y)}(x;\theta)\ge\varrho$, and $s\mapsto\log(s)$ defined over $[\varrho,\infty)$ is $\frac{1}{\varrho}$-Lipschitz. Thus, $x\mapsto\log(S_{\kappa(y)}(x;\theta))$ is $\frac{m-1}{\varrho}$-Lipschitz. Finally, $x\mapsto-\beta\log(S_{\kappa(y)}(x;\theta))$ is $\frac{\beta(m-1)}{\varrho}$-Lipschitz.

Now we consider the term $(1-\beta)\cdot\frac{1}{n}\cdot\exp\Big(\frac{S_{\kappa(y)}(x;\theta)-S_{\kappa(y)}(x';\theta)}{\sigma}\Big)$. Note that $S_{\kappa(y)}(x;\theta)-S_{\kappa(y)}(x';\theta)$ is the difference of two $(m-1)$-Lipschitz functions, so it is $2(m-1)$-Lipschitz. Next, $\frac{S_{\kappa(y)}(x;\theta)-S_{\kappa(y)}(x';\theta)}{\sigma}$ is $\frac{2(m-1)}{\sigma}$-Lipschitz. Note that
\[
\frac{S_{\kappa(y)}(x;\theta)-S_{\kappa(y)}(x';\theta)}{\sigma}\in\Big[\frac{\varrho-1}{\sigma},\frac{1-\varrho}{\sigma}\Big].
\]
Observe that the map $s\mapsto\exp(s)$ defined over the interval $[\frac{\varrho-1}{\sigma},\frac{1-\varrho}{\sigma}]$ is Lipschitz with Lipschitz constant 
\[
\frac{e^{\frac{1-\varrho}{\sigma}}-e^{\frac{\varrho-1}{\sigma}}}{\frac{1-\varrho}{\sigma}-\frac{\varrho-1}{\sigma}}=\frac{\sigma(e^{\frac{1-\varrho}{\sigma}}-e^{\frac{\varrho-1}{\sigma}})}{2(1-\varrho)}.
\]
Then $x\mapsto\exp\big(\frac{S_{\kappa(y)}(x;\theta)-S_{\kappa(y)}(x';\theta)}{\sigma}\big)$ has Lipschitz constant
\[
\frac{\sigma(e^{\frac{1-\varrho}{\sigma}}-e^{\frac{\varrho-1}{\sigma}})}{2(1-\varrho)}\cdot\frac{2(m-1)}{\sigma}=\frac{(e^{\frac{1-\varrho}{\sigma}}-e^{\frac{\varrho-1}{\sigma}})(m-1)}{(1-\varrho)}.
\]
Finally, $x\mapsto(1-\beta)\cdot\frac{1}{n}\cdot\exp\big(\frac{S_{\kappa(y)}(x;\theta)-S_{\kappa(y)}(x';\theta)}{\sigma}\big)$ has Lipschitz constant
\[
\frac{(1-\beta)(e^{\frac{1-\varrho}{\sigma}}-e^{\frac{\varrho-1}{\sigma}})(m-1)}{n(1-\varrho)}.
\]
We conclude that $x\mapsto L^{*}((x,y,\delta),\mathcal{C};\theta)$ has Lipschitz constant
\[
\frac{\beta(m-1)}{\varrho}+\frac{(1-\beta)(e^{\frac{1-\varrho}{\sigma}}-e^{\frac{\varrho-1}{\sigma}})(m-1)}{n(1-\varrho)}.
\]
Under the assumption that
\[
n\ge\frac{\varrho(1-\beta)(e^{\frac{1-\varrho}{\sigma}}-e^{\frac{\varrho-1}{\sigma}})}{(1-\varrho)\beta},
\]
we have
\begin{align*}
 & \frac{\beta(m-1)}{\varrho}+\frac{(1-\beta)(e^{\frac{1-\varrho}{\sigma}}-e^{\frac{\varrho-1}{\sigma}})(m-1)}{n(1-\varrho)}\\
 & \quad\le\frac{\beta(m-1)}{\varrho}+\frac{(1-\beta)(e^{\frac{1-\varrho}{\sigma}}-e^{\frac{\varrho-1}{\sigma}})(m-1)}{\frac{\varrho(1-\beta)(e^{\frac{1-\varrho}{\sigma}}-e^{\frac{\varrho-1}{\sigma}})}{(1-\varrho)\beta}(1-\varrho)}\\
 & \quad=\frac{2\beta(m-1)}{\varrho}\\
 & \quad\triangleq\mathcal{L}.
\end{align*}
Note that for simplicity, we have used a somewhat loose bound on the Lipschitz constant. This finishes our verification of Assumptions A1--A4. 

\paragraph*{Applying Theorem \ref{thm:main-result}}

We begin by noting that in this setup,
\begin{align*}
M & =M_{\text{indiv}}+\frac{M_{\text{couple-max}}}{2}n\\
 & =\beta\log\frac{1}{\varrho}+\frac{(1-\beta)\cdot\frac{1}{n}\cdot e^{(1-\varrho)/\sigma}}{2}n\\
 & =\beta\log\frac{1}{\varrho}+\frac{(1-\beta)e^{(1-\varrho)/\sigma}}{2},
\end{align*}
and
\begin{align*}
M' & =(M_{\text{couple-max}}-M_{\text{couple-min}})\sqrt{\frac{\omega n}{2\zeta}}\\
 & =\Big((1-\beta)\cdot\frac{1}{n}\cdot e^{(1-\varrho)/\sigma}-(1-\beta)\cdot\frac{1}{n}\cdot e^{(\varrho-1)/\sigma}\Big)\sqrt{\frac{\omega n}{2\zeta}}\\
 & =\sqrt{\frac{\omega}{n}}\Big(\frac{(1-\beta)e^{(1-\varrho)/\sigma}-(1-\beta)e^{(\varrho-1)/\sigma}}{\sqrt{2\zeta}}\Big).
\end{align*}
Just as in the proof of Corollary \ref{cor:linear-cox-split-dro-guarantee}, we begin by showing that $M'\le1$, which ensures that
\[
\mathbb{N}(M',\mathcal{X})\le\Big(\frac{3}{M'}\Big)^{d}.
\]
We have
\begin{align*}
M' & =\sqrt{\frac{\omega}{n}}\Big(\frac{(1-\beta)e^{(1-\varrho)/\sigma}-(1-\beta)e^{(\varrho-1)/\sigma}}{\sqrt{2\zeta}}\Big)\\
 & =\sqrt{\frac{\frac{d+1}{2}\log n}{n}}\Big(\frac{(1-\beta)e^{(1-\varrho)/\sigma}-(1-\beta)e^{(\varrho-1)/\sigma}}{\sqrt{2\zeta}}\Big)\\
 & =\sqrt{\frac{\log n}{n}}\Bigg(\frac{(1-\beta)e^{(1-\varrho)/\sigma}-(1-\beta)e^{(\varrho-1)/\sigma}}{2}\sqrt{\frac{d+1}{\zeta}}\Bigg).
\end{align*}
Under the assumptions that
\begin{align*}
n & \ge[(1-\beta)e^{(1-\varrho)/\sigma}-(1-\beta)e^{(\varrho-1)/\sigma}]^{2}\big(\frac{d+1}{4\zeta}\big)e^{\sqrt{2\log\big([(1-\beta)e^{(1-\varrho)/\sigma}-(1-\beta)e^{(\varrho-1)/\sigma}]^{2}\big(\frac{d+1}{4\zeta}\big)\big)}}\\
 & =e^{\log\Big([(1-\beta)e^{(1-\varrho)/\sigma}-(1-\beta)e^{(\varrho-1)/\sigma}]^{2}\big(\frac{d+1}{4\zeta}\big)\Big)+\sqrt{2\log\Big([(1-\beta)e^{(1-\varrho)/\sigma}-(1-\beta)e^{(\varrho-1)/\sigma}]^{2}\big(\frac{d+1}{4\zeta}\big)\Big)}},
\end{align*}
and that
\[
\log[(1-\beta)e^{(1-\varrho)/\sigma}-(1-\beta)e^{(\varrho-1)/\sigma}]^{2}\Big(\frac{d+1}{4\zeta}\Big)>1,
\]
then by inequality (\ref{eq:lambert-w-bound}) (with $s=\log[(1-\beta)e^{(1-\varrho)/\sigma}-(1-\beta)e^{(\varrho-1)/\sigma}]^{2}\Big(\frac{d+1}{4\zeta}\Big)-1$), we have
\[
n\ge e^{-W_{-1}\Big(-\frac{1}{[(1-\beta)e^{(1-\varrho)/\sigma}-(1-\beta)e^{(\varrho-1)/\sigma}]^{2}\big(\frac{d+1}{4\zeta}\big)}\Big)},
\]
which implies that $M'\le1$.

Now that we have shown that $M'\le1$ so that $\mathbb{N}(M',\mathcal{X})\le(\frac{3}{M'})^{d}$, the probability in equation (\ref{eq:main-result-prob}) satisfies the bound
\begin{align*}
 & 1-2\bigg[\frac{M}{(C_{\alpha}-1)\big[2\sqrt{\frac{\omega}{n}}\max\{2,\frac{C_{\alpha}}{C_{\alpha}-1}\}M+(2\mathcal{L}+1)M'\big]}+\mathbb{N}(M',\mathcal{X})\bigg]e^{-\omega}-me^{-\frac{n\zeta}{16}}\\
 & =1-2\bigg[\frac{\beta\log\frac{1}{\varrho}+\frac{(1-\beta)e^{(1-\varrho)/\sigma}}{2}}{(C_{\alpha}-1)\big[2\sqrt{\frac{\omega}{n}}\max\{2,\frac{C_{\alpha}}{C_{\alpha}-1}\}(\beta\log\frac{1}{\varrho}+\frac{(1-\beta)e^{(1-\varrho)/\sigma}}{2})+(2\mathcal{L}+1)\sqrt{\frac{\omega}{n}}\Big(\frac{(1-\beta)e^{(1-\varrho)/\sigma}-(1-\beta)e^{(\varrho-1)/\sigma}}{\sqrt{2\zeta}}\Big)\big]}\\
 & \phantom{=1-2\bigg[}~+\mathbb{N}(M',\mathcal{X})\bigg]e^{-\omega}-me^{-\frac{n\zeta}{16}}\\
 & =1-2\bigg[\frac{1}{(C_{\alpha}-1)\Big[2\max\{2,\frac{C_{\alpha}}{C_{\alpha}-1}\}+(2\mathcal{L}+1)\Big(\frac{\big((1-\beta)e^{(1-\varrho)/\sigma}-(1-\beta)e^{(\varrho-1)/\sigma}\big)}{\big(2\beta\log\frac{1}{\varrho}+(1-\beta)e^{(1-\varrho)/\sigma}\big)}\sqrt{\frac{2}{\zeta}}\Big)\Big]}\sqrt{\frac{n}{\omega}}+\mathbb{N}(M',\mathcal{X})\bigg]e^{-\omega}\\
 &\phantom{=}~ -me^{-\frac{n\zeta}{16}}\\
 & \le1-2\bigg[\frac{1}{(C_{\alpha}-1)\Big[2\max\{2,\frac{C_{\alpha}}{C_{\alpha}-1}\}+(2\mathcal{L}+1)\Big(\frac{\big((1-\beta)e^{(1-\varrho)/\sigma}-(1-\beta)e^{(\varrho-1)/\sigma}\big)}{\big(2\beta\log\frac{1}{\varrho}+(1-\beta)e^{(1-\varrho)/\sigma}\big)}\sqrt{\frac{2}{\zeta}}\Big)\Big]}\sqrt{\frac{n}{\omega}}\\
 & \phantom{=1-2\bigg[}~+\bigg(\frac{3}{\sqrt{\frac{\omega}{n}}\Big(\frac{(1-\beta)e^{(1-\varrho)/\sigma}-(1-\beta)e^{(\varrho-1)/\sigma}}{\sqrt{2\zeta}}\Big)}\bigg)^{d}\bigg]e^{-\omega}-me^{-\frac{n\zeta}{16}}\\
 & =1-2\bigg[\Psi_{1}\sqrt{\frac{n}{\omega}}+\Psi_{2}\Big(\frac{n}{\omega}\Big)^{d/2}\bigg]e^{-\omega}-me^{-\frac{n\zeta}{16}},
\end{align*}
where
\begin{align*}
\Psi_{1} & \triangleq\frac{1}{(C_{\alpha}-1)\Big[2\max\{2,\frac{C_{\alpha}}{C_{\alpha}-1}\}+(2\mathcal{L}+1)\Big(\frac{\big((1-\beta)e^{(1-\varrho)/\sigma}-(1-\beta)e^{(\varrho-1)/\sigma}\big)}{\big(2\beta\log\frac{1}{\varrho}+(1-\beta)e^{(1-\varrho)/\sigma}\big)}\sqrt{\frac{2}{\zeta}}\Big)\Big]},\\
\Psi_{2} & \triangleq\bigg(\frac{3\sqrt{2\zeta}}{(1-\beta)e^{(1-\varrho)/\sigma}-(1-\beta)e^{(\varrho-1)/\sigma}}\bigg)^{d}.
\end{align*}
Using the same reasoning as in the proof of Corollary \ref{cor:linear-cox-split-dro-guarantee}, when $n\ge e^{\sqrt{2(\log\frac{d+1}{2}-1)}+\log\frac{d+1}{2}}$ (which we assume in the corollary statement), we are guaranteed that $n\ge\omega$. Hence, we have
\begin{align*}
 & 1-2\bigg[\Psi_{1}\sqrt{\frac{n}{\omega}}+\Psi_{2}\Big(\frac{n}{\omega}\Big)^{d/2}\bigg]e^{-\omega}-me^{-\frac{n\zeta}{16}}\\
 & \quad\ge1-2(\Psi_{1}+\Psi_{2})\Big(\frac{n}{\omega}\Big)^{d/2}e^{-\omega}-me^{-\frac{n\zeta}{16}}.
\end{align*}
Moreover, when $n\ge e^{\frac{2}{d+1}}$ and $d\ge5>0$ (we assume both of these), using the same reasoning as the proof of Corollary \ref{cor:linear-cox-split-dro-guarantee},
\[
\Big(\frac{n}{\omega}\Big)^{d/2}e^{-\omega}\le\frac{1}{\sqrt{n}}.
\]
Hence,
\begin{align*}
 & 1-2(\Psi_{1}+\Psi_{2})\Big(\frac{n}{\omega}\Big)^{d/2}e^{-\omega}-me^{-\frac{n\zeta}{16}}\\
 & \quad\ge1-2(\Psi_{1}+\Psi_{2})\frac{1}{\sqrt{n}}-me^{-\frac{n\zeta}{16}}.
\end{align*}
In the statement of the corollary, $\Psi\triangleq2(\Psi_{1}+\Psi_{2})$. As for the loss bound (\ref{eq:main-result-loss-bound}), we simply plug in the values of $M$ and $M'$ specific to the DeepHit setup here, and we also plug in $\omega=\frac{d+1}{2}\log n$. This finishes the proof.$\hfill\blacksquare$

\section{Fairness Metrics}
\label{sec:fairness-measures}

In this paper, we use the individual, group, and intersectional fairness metrics defined by \citet{keya2021equitable}, the concordance imparity (CI) metric by \citet{zhang2022longitudinal}, and also censoring-based individual and censoring-based group fairness metrics by \citet{rahman2022fair}. For all of these fairness metrics, lower is considered better, where the minimum possible value is~0. We point out that the fairness metrics by \citet{keya2021equitable}~and \citet{rahman2022fair} can readily be treated as regularizers (i.e., they could be included as additional loss terms during model training). Moreover, the individual fairness metric by \citet{keya2021equitable} and the censoring-based individual and censoring-based group fairness metrics by \citet{rahman2022fair} crucially depend on a scaling constant $\gamma>0$ that must be set by the user in advance: if $\gamma$ is set to be higher, then it becomes easier for a survival model to achieve a score of exactly (and not just approximately) 0 for these particular fairness metrics.

Note that in Section~\ref{sec:Experiments} of the main paper, we use the fairness metrics by Keya et al.~as regularizers in baseline methods and not as evaluation metrics. However, we include additional experimental results that use the individual and group fairness metrics by Keya et al.~as evaluation metrics in Appendix~\ref{additional_exp_results} (specifically, see Tables~\ref{tab:Flc_age_more_fair_performance}--\ref{tab:more_fair_performance_SODEN}).

We begin by explaining the fairness metrics proposed by \citet{keya2021equitable} as these were the earliest fairness metrics we are aware of that were developed for survival analysis. Note that Keya et al.~focused on Cox proportional hazards models. For such models, we can take the predicted outcome for a feature vector $x$ to be the so-called \emph{partial hazard} $\widetilde{h}(x) \triangleq \exp( f(x;\theta) )$; this is the same as the hazard function given in equation \eqref{eq:hazard-factorization} except where we exclude the baseline hazard factor $h_0(t)$. Note that once we exclude $h_0(t)$, then $\widetilde{h}$ no longer depends on time~$t$. We state the fairness metrics in terms of a collection of $N_{\text{test}}$ test patients with data $(X_1^{\text{test}},Y_1^{\text{test}},\Delta_1^{\text{test}}),\dots,(X_{N_{\text{test}}}^{\text{test}},Y_{N_{\text{test}}}^{\text{test}},\Delta_{N_{\text{test}}}^{\text{test}})$. Note that the fairness metrics by \citet{keya2021equitable} only use the test feature vectors $X_1^{\text{test}},\dots,X_{N_{\text{test}}}^{\text{test}}$ and ignores the test patients' observed times and event~indicators. Also, at the end of this section, we point out that the individual and group fairness metrics by \citet{keya2021equitable} are sensitive to the scale of the log partial hazard~$f(\cdot;\theta)$.

\paragraph{Individual fairness}
Roughly, \citet{keya2021equitable} consider a model to be fair across individuals (patients) if similar individuals have similar predicted outcomes. To operationalize this notion of fairness in the context of Cox models, Keya et al.~define the individual fairness metric
\begin{equation*}
\begin{aligned}
\textrm{F}_I \triangleq \sum_{i=1}^{N_{\text{test}}}\sum_{j=i+1}^{N_{\text{test}}}\big[&|\widetilde{h}(X_i^{\text{test}})-\widetilde{h}(X_j^{\text{test}})|-\gamma \|X_i^{\text{test}} - X_j^{\text{test}}\|\big]_+,
\end{aligned}
\end{equation*}
where $\gamma$ is a predefined scale factor (0.01 in our experiments). As a reminder, $[\hspace{2pt}\cdot\hspace{2pt}]_+$ is the ReLU function (so that $[a]_+=\max\{0,a\}$ for any $a\in\mathbb{R}$). Importantly, we point out that by setting $\gamma$ to be larger, then more terms in the summation become~0 (since within the ReLU expression, we are subtracting a larger quantity, making it more likely that after applying ReLU, we get~0). If $\gamma$ is set to be too large, then it is possible that all terms in the summation become~0 (i.e., the fairness metric becomes exactly and not just approximately equal to~0).

Note that this individual fairness metric is actually just penalizing $\widetilde{h}$ for not being Lipschitz continuous (as empirically evaluated over the test data). Specifically, $\widetilde{h}$ is defined to be $\gamma$-Lipschitz continuous if
\[
|\widetilde{h}(x)-\widetilde{h}(x')|
\le \gamma \|x - x'\|
\quad\text{for all }x,x'\in\mathcal{X}.
\]
Meanwhile, when F$_I$ is equal to 0, then it means that
\begin{align*}
&|\widetilde{h}(X_i^{\text{test}})-\widetilde{h}(X_j^{\text{test}})|
\le \gamma \|X_i^{\text{test}} - X_j^{\text{test}}\| \quad\text{for all }i,j\in\{1,\dots,N_{\text{test}}\}.
\end{align*}
As a technical remark, in the definition of F$_I$ and also $\gamma$-Lipschitz continuity, the metric used to measure distances between feature vectors does not have to be Euclidean. For example, we can replace $\|X_i^{\text{test}} - X_j^{\text{test}}\|$ with $\rho(X_i^{\text{test}}, X_j^{\text{test}})$, where $\rho:\mathcal{X}\times\mathcal{X}\rightarrow[0,\infty)$ is a user-specified metric.

The individual fairness metric by \citet{keya2021equitable} can be modified to support survival models that do not assume the proportional hazards assumption (such as DeepHit and SODEN) in a straightforward manner: we simply replace the hazard function $\widetilde{h}(x)$ by the estimated survival function $\widehat{S}(t|x)$ to obtain the following time-dependent fairness metric:
\begin{equation*}
\begin{aligned}
\textrm{F}_I(t) \triangleq \sum_{i=1}^{N_{\text{test}}}\sum_{j=i+1}^{N_{\text{test}}}\big[&|\widehat{S}(t|X_i^{\text{test}})-\widehat{S}(t|X_j^{\text{test}})|-\gamma \|X_i^{\text{test}} - X_j^{\text{test}}\|\big]_+.
\end{aligned}
\end{equation*}

\paragraph{Group fairness}
\citet{keya2021equitable} consider a model is fair across a user-specified set of groups if these different groups have similar predicted outcomes. Keya et al.~define the group fairness metric F$_G$ to look at the maximum deviation of a group's average predicted outcome to the overall population's average predicted outcome. Specifically, let $\mathcal{G}$ be the user-specified set of groups to consider (for example, there could be two groups: everyone with age at most 65 years, and everyone older than 65 years), where each group $g\in\mathcal{G}$ is a subset of the test set indices $\{1,\dots,N_{\text{test}}\}$ (so that using this notation, group $g$ has size $|g|$); the different groups should form a partition of the test set (so that the groups are disjoint and their union is the entire test set). Then
\begin{equation*}
\begin{aligned}
\textrm{F}_G
\triangleq \max_{g\in \mathcal{G}}
    \bigg|
      \underbrace{\frac{1}{|g|}\sum_{i\in g}\widetilde{h}(X_i^{\text{test}})}_{\substack{\text{average predicted}\\\text{outcome of group }g}}
      -
      \underbrace{\frac{1}{N_{\text{test}}}\sum_{i=1}^{N_{\text{test}}}\widetilde{h}( X_i^{\text{test}})}_{\substack{\text{average predicted}\\\text{outcome of population}}}
    \bigg|.
\end{aligned}
\end{equation*}
Once again, for survival models that do not assume a proportional hazards assumption (such as DeepHit and SODEN), we can instead replace $\widetilde{h}(x)$ with $\widetilde{S}(t|x)$ to obtain the following time-dependent group fairness metric:
\begin{equation*}
\begin{aligned}
\textrm{F}_G(t)
\triangleq \max_{g\in \mathcal{G}}
    \bigg|
      \underbrace{\frac{1}{|g|}\sum_{i\in g}\widehat{S}(t|X_i^{\text{test}})}_{\substack{\text{average predicted}\\\text{outcome of group }g}}
      -
      \underbrace{\frac{1}{N_{\text{test}}}\sum_{i=1}^{N_{\text{test}}}\widehat{S}(t|X_i^{\text{test}})}_{\substack{\text{average predicted}\\\text{outcome of population}}}
    \bigg|.
\end{aligned}
\end{equation*}

\paragraph{Intersectional fairness} 
\citet{keya2021equitable} consider a notion of intersectional fairness that accounts for multiple sensitive attributes. For example, in the FLC dataset, we have 2 different sensitive attributes, age and gender. For each of these sensitive attributes, we can partition the test set into groups. Specifically, let $\mathcal{G}_1$ be a partition of the test set into different age groups (for example, two groups: at most 65 years old and over 65 years old), and let $\mathcal{G}_2$ be a partition of the test set into different gender groups (for example, two groups: female and male). Then intersectional fairness looks at every intersection of age/gender groups (continuing from the previous examples, we would have four intersectional subgroups: at most 65 years old and female, at most 65 years and male, over 65 years old and female, over 65 years old and male).

The notation here is a bit more involved. The set of all intersectional subgroups of $\mathcal{G}_1$ and $\mathcal{G}_2$ is given by the Cartesian product $\mathcal{G}_1\times\mathcal{G}_2$. Note that $s\in\mathcal{G}_1\times\mathcal{G}_2$ means that $s=(s_1,s_2)$, where $s_1\in\mathcal{G}_1$ and $s_2\in\mathcal{G}_2$. More generally, if there are $J$ sensitive attributes, corresponding to groupings $\mathcal{G}_1,\mathcal{G}_2,\dots,\mathcal{G}_J$, then the set of all intersectional subgroups would be $\mathcal{S} \triangleq \mathcal{G}_1\times\mathcal{G}_2\times\cdots\mathcal{G}_J$. Now $s\in\mathcal{S}$ is a list consisting of $J$ different subsets of test patients (i.e., $s=(s_1,s_2,\dots,s_J)$, where $s_1\in\mathcal{G}_1$, $\dots$, $s_J\in\mathcal{G}_J$). The intersection of these $J$ subsets (i.e., $\cap_{j=1}^J s_j \subset \{1,\dots,N_{\text{test}}\}$) is precisely the set of test patients that intersectional subgroup $s$ corresponds to. Then the average predicted outcome for intersectional subgroup $s$ is
\[
\widetilde{\mathbf{h}}(s) \triangleq
\frac{1}{|\cap_{j=1}^J s_j|}~ \sum_{i\in\cap_{j=1}^J s_j} \widetilde{h}(X_i^{\text{test}}).
\]
Then the intersection fairness metric F$_{\cap}$ by
\citet{keya2021equitable} is the worst-case log ratio of expected predicted outcomes between two intersectional subgroups:
\begin{equation*} %
\textrm{F}_{\cap} \triangleq \max_{s, s' \in \mathcal{S}}
\Big|
\log \frac{ \widetilde{\mathbf{h}}(s) }
          { \widetilde{\mathbf{h}}(s') }
\Big|.
\end{equation*}

\paragraph{Concordance imparity} We now describe an alternative metric for group fairness called concordance imparity (CI) that asks that a survival analysis model achieves similar prediction accuracy for different groups. For ease of exposition, we only state the CI metric by \cite{zhang2022longitudinal} in terms of a single sensitive attribute that has already been discretized (e.g., the attribute is already discrete or we have a pre-specified discretization rule); this special case is sufficient for our experiments. We denote the set of possible discretized values of this sensitive attribute as $\mathcal{A}$.
For example, $\mathcal{A}$ could correspond to age and we could have
$\mathcal{A}=\{\text{``age}\le65\text{''}, \text{``age}>65\text{''}\}$, 
i.e., $\mathcal{A}$ consists of the different groups to consider. We refer the reader to the Zhang and Weiss's original paper for their more general definition of CI that can handle a continuous sensitive attribute via an automatic discretization strategy that they propose.

Assuming that the sensitive attribute has already been discretized into the set $\mathcal{A}$, the CI metric looks at a variant of the standard survival analysis accuracy metric of concordance index \citep{harrell1982evaluating} that Zhang and Weiss call the \emph{concordance fraction} (CF), which is specific to each sensitive attribute value $a\in\mathcal{A}$. The CI metric is then defined to be the worst-case difference between the CF scores of any two $a,a'\in\mathcal{A}$ where $a\ne a'$. The pseudocode can be found in Algorithm \ref{alg:CI-discrete}; note that to keep the notation from getting clunky, we drop the superscript ``test'' from the test feature vectors, observed times, and event indicators in the pseudocode but we still use $N_{\text{test}}$ to denote the number of test patients. Also, in the pseudocode, we let $A_i\in\mathcal{A}$ denote the sensitive attribute value for the $i$-th test patient, where we assume that $A_i$ can directly be computed based on the $i$-th test patient's feature vector. For example, when age (which is not discretized) is one of the features and $\mathcal{A}$ consists of the two age groups previously stated ($\le65$ or $>65$), then since we know the discretization rule used, we can readily determine which age group in $\mathcal{A}$ that any test patient is in.

Importantly, to calculate the CI metric, a way to calculate a risk score is required to compute the CF scores. How to define a risk score differs across models. For Cox models, we can take the risk score to be the log partial hazard function $f(\cdot;\theta)$. For DeepHit and SODEN models, we take the risk score to be the estimated survival probability $\widehat{S}(t|x)$ and therefore we need to replace $f(\cdot;\theta)$ with $\widehat{S}(t|x)$ before using Algorithm \ref{alg:CI-discrete}. Since different values of time $t$ can have different estimated $\widehat{S}(t|x)$ values, we would obtain different value of the CI fairness metric for different $t$. We test three different values of $t$ (the 25$^{th}$, 50$^{th}$, and 75$^{th}$ percentile of the observed times in the test data) and use the average value for the final CI score.

\paragraph{Censoring-based individual fairness}
Individual fairness F$_{I}$ does not consider censoring information that is available. \citet{rahman2022fair} defined a censoring-based individual fairness metric as follows:
\begin{equation*}
\begin{aligned}
\textrm{F}_{CI}(t) \triangleq \frac{1}{|N_c|\times |N_{uc}|}\sum_{\substack{i\in N_c, j\in N_{uc}\\\text{s.t.~}Y_j\geq Y_i}}\big[&|\widehat{S}(t|X_i^{\text{test}})-\widehat{S}(t|X_j^{\text{test}})|-\gamma \|X_i^{\text{test}} - X_j^{\text{test}}\|\big]_+,
\end{aligned}
\end{equation*}
where $N_c$ and $N_{uc}$ are the index sets of censored and uncensored data, and $\widehat{S}(t|X)$ is the estimated survival probability at time $t$ for patient $X$. Similar to in F$_{I}$, the scale factor $\gamma$ is a predefined (0.01 in our experiments). This fairness metric ensures that a censored patient and an uncensored patient who have similar features should also have similar predictions whenever the observed time from the uncensored patient is larger than that of the censored patient. Similar to the CI fairness metric and following the experimental settings of \cite{rahman2022fair}, we test three different $t$ values (25$^{th}$, 50$^{th}$, 75$^{th}$ percentile of the observed times in the test data) and use their average value to calculate the final F$_{CI}$ score.

As a warning, just as with the individual fairness metric F$_I$ by \citet{keya2021equitable} that we described earlier, if $\gamma$ is set higher, then it becomes easier for the F$_{CI}(t)$ metric to become exactly and not just approximately equal to~0.

\paragraph{Censoring-based group fairness}
\citet{rahman2022fair} also modified the F$_G$ metric by \citet{keya2021equitable} to account for censoring information. Reusing notation from the definitions of F$_G$ and F$_{CI}(t)$, we now define the censoring-based group fairness metric
\begin{equation*}
\begin{aligned}
\textrm{F}_{CG}(t) \triangleq \frac{1}{|N_c|\times |N_{uc}|}\sum_{g\in\mathcal{G}}\sum_{\substack{i\in N_{c,g}, j\in N_{uc,g}\\\text{s.t.~}Y_j\geq Y_i}}\big[&|\widehat{S}(t|X_i^{\text{test}})-\widehat{S}(t|X_j^{\text{test}})|-\gamma \|X_i^{\text{test}} - X_j^{\text{test}}\|\big]_+,
\end{aligned}
\end{equation*}
where $N_{c,g}$ and $N_{uc,g}$ are the index sets of censored and uncensored in group $g$, and $\widehat{S}(t|X)$ is the estimated survival probability at time $t$ for patient $X$.
Similar to the setting in censoring-based individual fairness, we use three different $t$ to test the value of F$_{CG}(t)$ and use their average for the final reported F$_{CG}$ score. Once again, if $\gamma$ is set too large, then it becomes easier for F$_{CG}(t)$ to be exactly~0.

\begin{algorithm}[p!]\footnotesize
    \caption{Concordance Imparity (CI) with a discrete sensitive attribute\vspace{1pt}}\label{alg:CI-discrete}
    \SetAlgoLined
    \KwIn{Test dataset $\{(X_i,Y_i,\Delta_i)\}_{i=1}^{N_{\text{test}}}$, risk score $f(\cdot;\theta)$ (from an already trained model),
    set of sensitive attribute values $\mathcal{A}$ (so that each $a\in\mathcal{A}$ corresponds to a different group), $A_1,\dots,A_{N_{\text{test}}}\in\mathcal{A}$ says which sensitive attribute value each test patient has}
    \KwOut{CI score} 

    \For{$a\in\mathcal{A}$}{
        Initialize the numerator count $\mathbf{N}(a) \leftarrow 0$ and denominator count $\mathbf{D}(a) \leftarrow 0$.
    }
    \For{$i=1,\dots,N_{\text{test}}$}{
    \For{$j=1,\dots,N_{\text{test}}$ s.t.~$j\ne i$}{
        \eIf{($Y_i<Y_j$ and $\Delta_i==0$) or ($Y_j<Y_i$ and $\Delta_j==0$) or ($Y_i==Y_j$ and $\Delta_i==0$ and $\Delta_j==0$)}
        {
        \textbf{continue}}{
        Set $\mathbf{D}(A_i) \leftarrow \mathbf{D}(A_i) + 1$.
        }
        \uIf{$Y_i<Y_j$}{
            \uIf{$f(X_i;\theta)>f(X_j;\theta)$}{
            Set $\mathbf{N}(A_i) \leftarrow \mathbf{N}(A_i) + 1$.
            }
            \ElseIf{$f(X_i;\theta)==f(X_j;\theta)$}{
            Set $\mathbf{N}(A_i) \leftarrow \mathbf{N}(A_i) + 0.5$.
            }{}
        }
        \uElseIf{$Y_i>Y_j$}{
            \uIf{$f(X_i;\theta)<f(X_j;\theta)$}{
            Set $\mathbf{N}(A_i) \leftarrow \mathbf{N}(A_i) + 1$.
            }
            \ElseIf{$f(X_i;\theta)==f(X_j;\theta)$}{
            Set $\mathbf{N}(A_i) \leftarrow \mathbf{N}(A_i) + 0.5$.
            }{}
        }
        \ElseIf{$Y_i==Y_j$}{
            \uIf{$\Delta_i==1$ and $\Delta_j==1$}{\eIf{$f(X_i;\theta)\!\!==\!\!f(X_j;\theta)$}{%
            Set $\mathbf{N}(A_i) \leftarrow \mathbf{N}(A_i) + 1$.
            }{%
            Set $\mathbf{N}(A_i) \leftarrow \mathbf{N}(A_i) + 0.5$.
            }}
            \uElseIf{$\Delta_i\!\!==\!\!0$ and $\Delta_j\!\!==\!\!1$ and $f(X_i;\theta)\!<\!f(X_j;\theta)$}{%
            Set $\mathbf{N}(A_i) \leftarrow \mathbf{N}(A_i) + 1$.
            }
            \uElseIf{$\Delta_i\!\!==\!\!1$ and $\Delta_j\!\!==\!\!0$ and $f(X_i;\theta)\!>\!f(X_j;\theta)$}{%
            Set $\mathbf{N}(A_i) \leftarrow \mathbf{N}(A_i) + 1$.
            }
            \Else{%
            Set $\mathbf{N}(A_i) \leftarrow \mathbf{N}(A_i) + 0.5$.
            }
        }{}
    }

    }
    \For{$a\in\mathcal{A}$}{
        Set the concordance fraction of $a$: $\mathbf{CF}(a) \leftarrow \frac{\mathbf{N}(a)}{\mathbf{D}(a)}$.
    }
    
    \Return{$\text{\emph{CI}}\leftarrow\max_{a,a'\in\mathcal{A}\text{ s.t.~}a\ne a'}|\mathbf{CF}(a)-\mathbf{CF}(a')|$}
\end{algorithm}

\vspace{-.75em}
\noindent
\subsection*{Scale Issues with F$_I$ and F$_G$}
\vspace{-.25em}

We point out that the F$_I$ and F$_G$ fairness metrics by \citet{keya2021equitable} are sensitive to the scale of the log partial hazard function $f(\cdot;\theta)$, and thus also the scale of the partial hazard $\widetilde{h}(x)=\exp( f(x;\theta) )$ if they are calculated by using $\widetilde{h}(x)$. For instance, consider a standard linear Cox model with $f(x;\theta)=\theta^T x$, where the parameters $\theta$ have already been learned. Then one way to make the model appear fairer according to the F$_I$ and F$_G$ metrics is to just scale all values in $\theta$ by any positive constant smaller than 1; doing so, the standard accuracy metric of concordance index \citep{harrell1982evaluating} would actually remain unchanged for the model as it only depends on the ranking of the different individuals' (log) partial hazard values. However, an accuracy score that considers each individual's survival function estimate (e.g., integrated IPCW Brier Score \citep{graf1999assessment}) would be affected.

\section{Hyperparameter Tuning and Compute Environment Details}
\label{sec:hyperparameters-compute-env}

\paragraph{Hyperparameters}
\textit{Cox models}:
for nonlinear Cox models, we always use a two-layer MLP with ReLU as the activation function and 24 as the number of hidden units. All models (linear and nonlinear) are trained using Adam \citep{kingma2014adam} in PyTorch 1.7.1 in a batch setting for 500 iterations (except in the case of the exact DRO Cox model on the FLC dataset, where we use 5000 iterations as it took more iterations for the model to converge), only using a CPU and no GPU.

\textit{DeepHit models}: we use three-layer MLP with ReLU activation, batch normalization, and dropout (in 0.1). The number of hidden units is 32. The original DeepHit and \textsc{dro-deephit} models are trained using Adam in PyTorch 1.7.1 in a mini-batch 256 setting for 500 epochs. However, the \textsc{dro-deephit (split)} model is trained using a batch setting for 500 iterations.

\textit{SODEN models}: for the FLC dataset, we use an MLP with 4 layers and 16 hidden units. For SUPPORT and SEER datasets, we use an MLP with 2 layers and 26 hidden units. In addition, RMSprop \citep{tieleman2012lecture} in 128 batch size with a maximum 100 epochs is used to train all models.

We tune on the following hyperparameter grid:
\begin{itemize}[leftmargin=*,itemsep=0pt,parsep=0pt,topsep=0pt,partopsep=0pt]
\item To find the optimal learning rate for each Cox model, we conducted a sweep over values of 0.01, 0.001, and 0.0001. Specifically for the exact DRO Cox model, we used a fixed learning rate of 0.1. For the FIDP, FIPNAM, and DeepHit models, we used a fixed learning rate of 0.01. In the case of SODEN models, the learning rates applied were 0.01, 0.002, and 0.002 for the FLC, SUPPORT, and SEER datasets, respectively.
\item $\lambda$ (only used for baselines; a hyperparameter that controls the tradeoff between the original Cox loss and fairness regularization term): 1, 0.7, 0.4. We set $\lambda=0.1$ for FIDP and FIPNAM.
\item $\alpha$: 0.1, 0.15, 0.2, 0.3, 0.4, 0.5 for \textsc{dro-cox}/\textsc{dro-cox (split)}/\textsc{exact  dro-cox} variants; 0.1, 0.2, 0.3, 0.4, 0.5, 0.6, 0.7, 0.8, 0.9, 1.0 for \textsc{dro-deephit}/\textsc{dro-deephit (split)} variants and \textsc{dro-soden}. 
\end{itemize}
In addition, for \textsc{dro-cox (split)} and \textsc{dro-deephit (split)}, we choose $n_1=n_2=n/2$ (rounding as needed when $n$ is odd, so that $n_1$ might not equal $n_2$).

\paragraph{Compute environment}
All models are implemented with Python 3.8.3, and they are trained and tested on identical compute instances, each with an Intel Core i9-10900K CPU (3.70GHz with 64 GB RAM) and a Quadro RTX 4000 GPU.

\section{Additional Experiments}\label{additional_exp_results}

\paragraph{Using other sensitive attributes in evaluating CI and F$_{CG}$ in Cox models} In Section~\ref{sec:Experiments} of the main paper, for the Cox model, we only showed test set performance metrics ($C^{td}$ and IBS accuracy metrics, and CI, F$_{CI}$, and F$_{CG}$ fairness metrics) for FLC, SUPPORT, and SEER using age, gender, race, and race respectively (in Tables~\ref{tab:general_performance_CI}, \ref{tab:general_performance_SUPPORT_gender_CI}, and~\ref{tab:general_performance_SEER2_race_CI}). We now provide results using gender for FLC (Table~\ref{tab:general_performance_FLC_gender_CI}), age and separately race for SUPPORT (Tables~\ref{tab:general_performance_SUPPORT_age_CI} and~\ref{tab:general_performance_SUPPORT_race_CI}), and age for SEER (Table~\ref{tab:general_performance_SEER2_age_CI}). Our main findings still hold for these additional results. 

We point out that for DeepHit and SODEN models, in Section~\ref{sec:Experiments}, we had already shown results for FLC, SUPPORT, and SEER where per dataset, we consider different sensitive attributes (see Tables~\ref{tab:general_performance_deephit_CI} and~\ref{tab:general_performance_SODEN_CI}).

\paragraph{Using individual and group fairness evaluation metrics by \citet{keya2021equitable}}
Whereas in the main paper, we focused on evaluating test data using CI, F$_{CI}$, and F$_{CG}$ fairness metrics, we now also show results where we use the F$_I$ and F$_G$ fairness metrics by \citet{keya2021equitable} instead. See Tables~\ref{tab:Flc_age_more_fair_performance}--\ref{tab:more_fair_performance_SODEN}. Our main findings still hold using these fairness metrics by Keya et al.

\paragraph{Effect of changing $n_1$ (or $n_2$) for \textsc{dro-cox (split)}} In the above experiments, we set $n_1=n_2=n/2$ (rounding as needed). To evaluate the sensitivity of this setting, we test the model performance using \textsc{dro-cox (split)} under the linear and nonlinear settings, where we set $n_2=0.1n,0.2n,0.3n,0.4n,0.5n$ (corresponding to $n_1=0.9n,0.8n,0.7n,0.6n,0.5n$). We report the test set performance metrics for the FLC dataset (using age for evaluation) in Table \ref{tab:performance_of_diff_split}. From the table, we find that per metric, different settings for $n_1$ and $n_2$ lead to results that, while slightly different, are not dramatically different, i.e., the performance of \textsc{dro-cox (split)} does not appear very sensitive w.r.t.~the choice of $n_1$ and $n_2$.

\paragraph{Effect of changing imbalance in censoring rates across training data splits for \textsc{dro-cox (split)}}
For our split DRO strategy, to see what happens when the two subsets of the training data $\mathcal{D}_1$ and $\mathcal{D}_2$ have different censoring rates, we conduct the following experiment. We first randomly divide the training data into 50/50 pieces $\mathcal{D}_1$/$\mathcal{D}_2$ where we stratify on the censoring rate so that $\mathcal{D}_1$ and $\mathcal{D}_2$ have the same censoring rate. Then, we introduce an censoring rate imbalance by trading, for instance, a randomly chosen censored point from $\mathcal{D}_2$ with a randomly chosen uncensored point from $\mathcal{D}_1$. We could of course trade multiple points.

To formalize a notion of how much imbalance we are introducing, we define a censoring rate \emph{imbalance ratio} as follows. First, note that using the above strategy of trading points between $\mathcal{D}_1$ and $\mathcal{D}_2$ that we stated, the maximum number of points we could possibly trade is given by the \emph{minimum} of the number of uncensored points in $\mathcal{D}_1$ and the number of censored points in $\mathcal{D}_2$. Let's call this maximum number of points we could trade as $n_{\text{max trade}}$. Then we define the imbalance ratio to be a percentage of $n_{\text{max trade}}$ points that we trade. Thus, an imbalance ratio of 80\% means that we trade $0.8 n_{\text{max trade}}$ randomly chosen censored points from $\mathcal{D}_2$ with $0.8 n_{\text{max trade}}$ randomly chosen uncensored points from $\mathcal{D}_1$.

We repeat the same experiment that resulted in Table~\ref{tab:general_performance_CI} specifically for \textsc{dro-cox (split)} (i.e., for simplicity, we only consider the FLC dataset treating age as sensitive), where the only difference now is that we re-train \textsc{dro-cox (split)} using imbalance ratios of 0\%, 20\%, 40\%, 60\%, 80\%, and 100\% (per imbalance ratio, we re-run experiments 10 times). The resulting test set accuracy and fairness metrics are reported in Table~\ref{tab:performance_of_diff_imb_ratio}.

The most important takeaway from Table~\ref{tab:performance_of_diff_imb_ratio} is that our split DRO approach still can work well even with high censoring rate imbalance ratios. For instance, in the linear setting, accounting for the standard deviations that have been reported in the table, at an imbalance ratio of 100\%, the resulting accuracy and fairness metrics are actually within noise of using an imbalance ratio of 0\%. In the nonlinear setting, at an imbalance ratio of 80\%, the model achieves a better mean CI fairness score compared to an imbalance ratio of 0\% while achieving the highest mean $C^{td}$ score (although the mean IBS score increases). Meanwhile, still in the nonlinear setting, at an imbalance of 100\%, the model achieves the lowest IBS and CI fairness scores (within the nonlinear setting). To recap, these findings suggest that our split DRO approach can still work well even at high censoring rate imbalance ratios.

As for whether we should favor low or high censoring rate imbalance ratios, Table~\ref{tab:performance_of_diff_imb_ratio} suggests that in practice, we should just tune on this imbalance ratio since an intermediate imbalance ratio could achieve the best tradeoff of accuracy and fairness scores. For simplicity though, in the main paper, we do not tune on the censoring rate imbalance ratio and stick to just using an imbalance ratio of 0\%. We defer a more thorough investigation of the impact of the imbalance ratio to future work.

\paragraph{The effect of using two losses for \textsc{dro-cox (split)} rather than only one} Recall that \textsc{dro-cox (split)} minimizes the sum of two losses $L_{\text{DRO}}^{\text{split}}(\theta,\eta,\mathcal{D}_{1}\mid\mathcal{D}_{2})$ and $L_{\text{DRO}}^{\text{split}}(\theta,\eta,\mathcal{D}_{2}\mid\mathcal{D}_{1})$. Towards the end of Section~\ref{sec:dro-split}, we said that an approach that only minimizes one of these losses would not use the data as effectively compared to minimizing the sum of these losses. We conducted an experiment to verify this claim, where we refer to the version of \textsc{dro-cox (split)} that only minimizes $L_{\text{DRO}}^{\text{split}}(\theta,\eta,\mathcal{D}_{1}\mid\mathcal{D}_{2})$ as \mbox{\textsc{dro-cox (split, one side)}}. Specifically, we compare \mbox{\textsc{dro-cox (split, one side)}} and \textsc{dro-cox (split)} under linear and nonlinear settings on the FLC dataset using age for evaluation. We report the resulting test set performance metrics in Table~\ref{tab:performance_of_one_side_split}. From the table, we find that \textsc{dro-cox (split)} outperforms \textsc{dro-cox (split, one side)} on most metrics. This experimental finding supports our hypothesis that \textsc{dro-cox (split, one side)} uses data less effectively.

\begin{table} %
\caption{\small Cox model test set scores on the FLC (gender) dataset, in the same format as Table~\ref{tab:general_performance_CI}.} 
\centering
\setlength\tabcolsep{0.1pt}
\renewcommand{\arraystretch}{0.5}
{\tiny %
\renewcommand{\belowrulesep}{0.1pt}
\renewcommand{\aboverulesep}{0.1pt}

}
\label{tab:more_fair_performance_SODEN}
\end{table*}

\begin{table} %
\caption{\small Test set scores for \textsc{dro-cox (split)} on the FLC (age) dataset using $n_2=0.1n, 0.2n, 0.3n, 0.4n, 0.5n$ (corresponding to $n_1=0.9n,0.8n,0.7n,0.6n,0.5n$). The format of this table is similar to that of Table~\ref{tab:general_performance_CI} although here we do not bold or highlight any cells, as our main finding here is that the scores are not dramatically different for the different choices for $n_1$ or $n_2$.}
\vspace{-1em}
\centering
{\scriptsize %
\renewcommand{\belowrulesep}{0.1pt}
\renewcommand{\aboverulesep}{0.1pt}
\begin{tabular}{cccc|ccc}
\toprule
\multirow{2}{*}{} & \multirow{2}{*}{$n_2$} & \multicolumn{2}{c|}{Accuracy Metrics}                                                    & \multicolumn{3}{c}{Fairness Metrics}                             \\ \cmidrule{3-7}
&                   & \multicolumn{1}{c}{C$^{td}\uparrow$} & IBS$\downarrow$  &CI(\%)$\downarrow$& \multicolumn{1}{c}{F$_{CI}$$\downarrow$} & \multicolumn{1}{c}{F$_{CG}$$\downarrow$}                  \\ \midrule
\multirow{5}{*}{\rotatebox{90}{Linear }} &        $0.1n$           & \multicolumn{1}{c}{{\makecell{0.7822 (0.0183)}}} & {\makecell{0.1410 (0.0056)}} & \makecell{0.4670 (0.3846)}& \makecell{0 (0)} & \makecell{0 (0)}             \\ \cmidrule{3-7}
&          $0.2n$         & \multicolumn{1}{c}{{\makecell{0.7945 (0.0069)}}} & {\makecell{0.1402 (0.0029)}} & \makecell{0.3610 (0.2667)} & \makecell{0 (0)} & \makecell{0 (0)}                \\ \cmidrule{3-7}
&          $0.3n$         & \multicolumn{1}{c}{{\makecell{0.7970 (0.0037)}}} & {\makecell{0.1397 (0.0025)}} & \makecell{0.2560 (0.1559)}& \makecell{0 (0)} & \makecell{0 (0)}                 \\ \cmidrule{3-7}
&       $0.4n$           & \multicolumn{1}{c}{{\makecell{0.7970 (0.0043)}}} & {\makecell{0.1392 (0.0015)}} & \makecell{0.2940 (0.1387)}& \makecell{0 (0)} & \makecell{0 (0)}                \\ \cmidrule{3-7}
&       $0.5n$           & \multicolumn{1}{c}{{\makecell{0.7964 (0.0045)}}} & {\makecell{0.1389 (0.0008)}} & \makecell{0.2350 (0.1277)}& \makecell{0 (0)} & \makecell{0 (0)}                \\ \cmidrule{1-7}
\multirow{5}{*}{\rotatebox{90}{Nonlinear  }} &        $0.1n$           & \multicolumn{1}{c}{{\makecell{0.7583 (0.0109)}}} & {\makecell{0.1907 (0.0764)}} & \makecell{2.1490 (1.0704)}& \makecell{1.8664e-04 (5.5992e-04)} & \makecell{3.8323e-05 (1.1497e-04)}                \\  \cmidrule{3-7}
&          $0.2n$         & \multicolumn{1}{c}{{\makecell{0.7712 (0.0107)}}} & {\makecell{0.1622 (0.0095)}} & \makecell{2.2640 (0.7685)} & \makecell{2.4905e-05 (7.4715e-05)} & \makecell{5.2623e-06 (1.5787e-05)}                \\ \cmidrule{3-7}
&          $0.3n$         & \multicolumn{1}{c}{{\makecell{0.7709 (0.0205)}}} & {\makecell{0.1650 (0.0025)}} & \makecell{2.3830 (0.4080)} & \makecell{0 (0)} & \makecell{0 (0)}                \\ \cmidrule{3-7}
&       $0.4n$           & \multicolumn{1}{c}{{\makecell{0.7731 (0.0178)}}} & {\makecell{0.1633 (0.0057)}} & \makecell{2.3860 (0.2411)} & \makecell{1.0570e-07 (3.1711e-07)} & \makecell{6.5156e-08 (1.9547e-07)}               \\ \cmidrule{3-7}
&       $0.5n$         & \multicolumn{1}{c}{{\makecell{0.7784 (0.0092)}}}& {\makecell{0.1647 (0.0037)}} & \makecell{2.3210 (0.3590)}& \makecell{0 (0)} & \makecell{0 (0)}                 \\ \bottomrule
\end{tabular}
}
\label{tab:performance_of_diff_split}
\vspace{-1em}
\end{table}

\begin{table} %
\caption{\small Test set scores for \textsc{dro-cox (split)} on the FLC (age) dataset using censoring rate imbalance ratios (abbreviated below as just ``Ratio'') of $0\%$, $20\%$, $40\%$, $60\%$, $80\%$, and $100\%$. The format of this table is similar to that of Table~\ref{tab:general_performance_CI}.}
\vspace{-1em}
\centering
\setlength\tabcolsep{2pt}
{\scriptsize %
\renewcommand{\belowrulesep}{0.1pt}
\renewcommand{\aboverulesep}{0.1pt}
\begin{tabular}{cccc|ccc}
\toprule
\multirow{2}{*}{} & \multirow{2}{*}{Ratio (\%)} & \multicolumn{2}{c|}{Accuracy Metrics}                                                    & \multicolumn{3}{c}{Fairness Metrics}                             \\ \cmidrule{3-7}
&                   & \multicolumn{1}{c}{C$^{td}\uparrow$} & IBS$\downarrow$  &CI(\%)$\downarrow$& \multicolumn{1}{c}{F$_{CI}$$\downarrow$} & \multicolumn{1}{c}{F$_{CG}$$\downarrow$}                  \\ \midrule
\multirow{6}{*}{\rotatebox{90}{Linear  }} &        $0$           & \multicolumn{1}{c}{{\makecell{0.7964 (0.0045)}}} & {\makecell{0.1389 (0.0008)}} & \makecell{0.2350 (0.1277)}& \makecell{0 (0)} & \makecell{0 (0)}              \\ \cmidrule{3-7}
&          $20$         & \multicolumn{1}{c}{{\makecell{0.7977 (0.0053)}}} & {\makecell{0.1393 (0.0009)}} & \makecell{0.1520 (0.0846)} & \makecell{0 (0)} & \makecell{0 (0)}                \\ \cmidrule{3-7}
&          $40$         & \multicolumn{1}{c}{{\makecell{0.7990 (0.0066)}}} & {\makecell{0.1402 (0.0017)}} & \makecell{0.2200 (0.1239)}& \makecell{0 (0)} & \makecell{0 (0)}                 \\ \cmidrule{3-7}
&       $60$           & \multicolumn{1}{c}{{\makecell{0.7965 (0.0070)}}} & {\makecell{0.1410 (0.0022)}} & \makecell{0.2810 (0.1840)}& \makecell{0 (0)} & \makecell{0 (0)}                \\ \cmidrule{3-7}
&       $80$           & \multicolumn{1}{c}{{\makecell{0.7935 (0.0074)}}} & {\makecell{0.1454 (0.0075)}} & \makecell{0.5660 (0.3148)}& \makecell{0 (0)} & \makecell{0 (0)}                \\ \cmidrule{3-7}
&       $100$           & \multicolumn{1}{c}{{\makecell{0.7929 (0.0102)}}} & {\makecell{0.1341 (0.0006)}} & \makecell{0.3970 (0.2665)}& \makecell{0 (0)} & \makecell{0 (0)}                \\ \cmidrule{1-7}
\multirow{5}{*}{\rotatebox{90}{Nonlinear   }} &        $0$           & \multicolumn{1}{c}{{\makecell{0.7784 (0.0092)}}}& {\makecell{0.1647 (0.0037)}} & \makecell{2.3210 (0.3590)}& \makecell{0 (0)} & \makecell{0 (0)}                 \\  \cmidrule{3-7}
&          $20$         & \multicolumn{1}{c}{{\makecell{0.7734 (0.0188)}}} & {\makecell{0.1647 (0.0048)}} & \makecell{2.4380 (0.3981)} & \makecell{5.7227e-08 (1.7168e-07)} & \makecell{3.9546e-08 (1.1864e-07)}                \\ \cmidrule{3-7}
&          $40$         & \multicolumn{1}{c}{{\makecell{0.7753 (0.0187)}}} & {\makecell{0.1677 (0.0028)}} & \makecell{2.1080 (0.5425)} & \makecell{0.0001 (0.0002)} & \makecell{1.3683e-05 (4.1049e-05)}                \\ \cmidrule{3-7}
&       $60$           & \multicolumn{1}{c}{{\makecell{0.7853 (0.0120)}}} & {\makecell{0.1700 (0.0002)}} & \makecell{1.1180 (0.6116)} & \makecell{0 (0)} & \makecell{0 (0)}               \\ \cmidrule{3-7}
&       $80$           & \multicolumn{1}{c}{{\makecell{0.7900 (0.0122)}}} & {\makecell{0.1703 (0.0002)}} & \makecell{0.6790 (0.5659)} & \makecell{0 (0)} & \makecell{0 (0)}               \\ \cmidrule{3-7}
&       $100$         & \multicolumn{1}{c}{{\makecell{0.7577 (0.0059)}}}& {\makecell{0.1646 (0.0021)}} & \makecell{0.6390 (0.4295)}& \makecell{0 (0)} & \makecell{0 (0)}                 \\ \bottomrule
\end{tabular}
}
\label{tab:performance_of_diff_imb_ratio}
\vspace{-1em}
\end{table}

\begin{table} %
\caption{\small Test set scores of \textsc{dro-cox (split, one side)} vs \textsc{dro-cox (split)} on the FLC (age) dataset. The format of this table is the same that of Table~\ref{tab:general_performance_CI} except without any cells highlighted in green as we are not comparing against baselines by previous authors.}\vspace{-1em}
\centering
{\scriptsize %
\renewcommand{\belowrulesep}{0.1pt}
\renewcommand{\aboverulesep}{0.1pt}
\begin{tabular}{cccc|ccc}
\toprule
\multirow{2}{*}{} & \multirow{2}{*}{Methods} & \multicolumn{2}{c|}{Accuracy Metrics}                                                    & \multicolumn{3}{c}{Fairness Metrics}                             \\ \cmidrule{3-7}
&                   & \multicolumn{1}{c}{C$^{td}\uparrow$} & IBS$\downarrow$  &CI(\%)$\downarrow$ & \multicolumn{1}{c}{F$_{CI}$$\downarrow$} & \multicolumn{1}{c}{F$_{CG}$$\downarrow$}                 \\ \midrule
\multirow{2}{*}{\rotatebox{90}{Linear }} &        \textsc{\makecell{DRO-COX \\(SPLIT, ONE SIDE)}}           & \multicolumn{1}{c}{{\makecell{0.7810 \\(0.0109)}}} & {\textbf{\makecell{0.1330 \\(0.0002)}}} & \makecell{0.4060 \\(0.2847)}& \textbf{\makecell{0 \\ (0)}}& \textbf{\makecell{0 \\ (0)}}              \\ \cmidrule{3-7}
&       \tableDROCoxSplit            & \multicolumn{1}{c}{{\textbf{\makecell{0.7964 \\(0.0045)}}}} & {\makecell{0.1389 \\(0.0008)}} & \textbf{\makecell{0.2350
\\(0.1277)}} & \textbf{\makecell{0 \\(0)}}& \textbf{\makecell{0 \\(0)} }               \\ \cmidrule{1-7}
\multirow{2}{*}{\rotatebox{90}{\makecell{Non-\\linear}}} &         \textsc{\makecell{Deep DRO-COX \\(SPLIT, ONE SIDE)}}           & \multicolumn{1}{c}{{\makecell{0.7554 \\(0.0231)}}} & {\textbf{\makecell{0.1332 \\(0.0002)}}} & \textbf{\makecell{1.9000 \\(0.6850)}} & \makecell{1.6544e-04 \\(1.2172e-04)}& \makecell{4.0388e-05 \\(3.1972e-05)}               \\  \cmidrule{3-7}
&       \tableDeepDROCoxSplit          & \multicolumn{1}{c}{{\textbf{\makecell{0.7784 \\(0.0092)}}}} & {\makecell{0.1647 \\(0.0037)}} & \makecell{2.3210
\\(0.3590)}& \textbf{\makecell{0 \\(0)}}& \textbf{\makecell{0 \\(0)} }                \\ \bottomrule
\end{tabular}
}
\label{tab:performance_of_one_side_split}
\end{table}

\vskip 0.2in
\bibliography{main}

\begin{thebibliography}{51}
\providecommand{\natexlab}[1]{#1}
\providecommand{\url}[1]{\texttt{#1}}
\expandafter\ifx\csname urlstyle\endcsname\relax
  \providecommand{\doi}[1]{doi: #1}\else
  \providecommand{\doi}{doi: \begingroup \urlstyle{rm}\Url}\fi

\bibitem[Antolini et~al.(2005)Antolini, Boracchi, and
  Biganzoli]{antolini2005time}
Laura Antolini, Patrizia Boracchi, and Elia Biganzoli.
\newblock A time-dependent discrimination index for survival data.
\newblock \emph{Statistics in Medicine}, 24\penalty0 (24):\penalty0 3927--3944,
  2005.

\bibitem[Breslow(1972)]{breslow1972discussion}
Norman Breslow.
\newblock Discussion of the paper by {David R Cox} (1972), cited below.
\newblock \emph{Journal of the Royal Statistical Society, Series B},
  34:\penalty0 187--220, 1972.

\bibitem[Buolamwini and Gebru(2018)]{buolamwini2018gender}
Joy Buolamwini and Timnit Gebru.
\newblock Gender shades: Intersectional accuracy disparities in commercial
  gender classification.
\newblock In \emph{Conference on Fairness, Accountability and Transparency},
  pages 77--91. PMLR, 2018.

\bibitem[Chen(2020)]{chen2020deep}
George~H Chen.
\newblock Deep kernel survival analysis and subject-specific survival time
  prediction intervals.
\newblock In \emph{Machine Learning for Healthcare Conference}, pages 537--565.
  PMLR, 2020.

\bibitem[Chen et~al.(2018)Chen, Rubanova, Bettencourt, and
  Duvenaud]{chen2018neural}
Ricky~TQ Chen, Yulia Rubanova, Jesse Bettencourt, and David~K Duvenaud.
\newblock Neural ordinary differential equations.
\newblock \emph{Advances in neural information processing systems}, 31, 2018.

\bibitem[Chernozhukov et~al.(2018)Chernozhukov, Chetverikov, Demirer, Duflo,
  Hansen, Newey, and Robins]{chernozhukov2018double}
Victor Chernozhukov, Denis Chetverikov, Mert Demirer, Esther Duflo, Christian
  Hansen, Whitney Newey, and James Robins.
\newblock Double/debiased machine learning for treatment and structural
  parameters, 2018.

\bibitem[Cox(1972)]{cox1972regression}
David~R Cox.
\newblock Regression models and life-tables.
\newblock \emph{Journal of the Royal Statistical Society: Series B
  (Methodological)}, 34\penalty0 (2):\penalty0 187--202, 1972.

\bibitem[Dispenzieri et~al.(2012)Dispenzieri, Katzmann, Kyle, Larson, Therneau,
  Colby, Clark, Mead, Kumar, and Melton~III]{dispenzieri2012use}
Angela Dispenzieri, Jerry~A Katzmann, Robert~A Kyle, Dirk~R Larson, Terry~M
  Therneau, Colin~L Colby, Raynell~J Clark, Graham~P Mead, Shaji Kumar, and
  L~Joseph Melton~III.
\newblock Use of nonclonal serum immunoglobulin free light chains to predict
  overall survival in the general population.
\newblock In \emph{Mayo Clinic Proceedings}, pages 517--523. Elsevier, 2012.

\bibitem[Duchi et~al.(2022)Duchi, Hashimoto, and
  Namkoong]{duchi2022distributionally}
John Duchi, Tatsunori Hashimoto, and Hongseok Namkoong.
\newblock Distributionally robust losses for latent covariate mixtures.
\newblock \emph{Operations Research}, 2022.

\bibitem[Duchi and Namkoong(2021)]{duchi2021learning}
John~C Duchi and Hongseok Namkoong.
\newblock Learning models with uniform performance via distributionally robust
  optimization.
\newblock \emph{The Annals of Statistics}, 49\penalty0 (3):\penalty0
  1378--1406, 2021.

\bibitem[Dwork et~al.(2012)Dwork, Hardt, Pitassi, Reingold, and
  Zemel]{dwork2012fairness}
Cynthia Dwork, Moritz Hardt, Toniann Pitassi, Omer Reingold, and Richard Zemel.
\newblock Fairness through awareness.
\newblock In \emph{Proceedings of the 3rd Innovations in Theoretical Computer
  Science Conference}, pages 214--226, 2012.

\bibitem[Faraggi and Simon(1995)]{faraggi1995neural}
David Faraggi and Richard Simon.
\newblock A neural network model for survival data.
\newblock \emph{Statistics in Medicine}, 14\penalty0 (1):\penalty0 73--82,
  1995.

\bibitem[Fine and Gray(1999)]{fine1999proportional}
Jason~P Fine and Robert~J Gray.
\newblock A proportional hazards model for the subdistribution of a competing
  risk.
\newblock \emph{Journal of the American Statistical Association}, 94\penalty0
  (446):\penalty0 496--509, 1999.

\bibitem[Foulds et~al.(2020)Foulds, Islam, Keya, and
  Pan]{foulds2020intersectional}
James~R Foulds, Rashidul Islam, Kamrun~Naher Keya, and Shimei Pan.
\newblock An intersectional definition of fairness.
\newblock In \emph{2020 IEEE 36th International Conference on Data Engineering
  (ICDE)}, pages 1918--1921. IEEE, 2020.

\bibitem[Goldstein et~al.(2020)Goldstein, Han, Puli, Perotte, and
  Ranganath]{goldstein2020x}
Mark Goldstein, Xintian Han, Aahlad Puli, Adler Perotte, and Rajesh Ranganath.
\newblock X-cal: Explicit calibration for survival analysis.
\newblock \emph{Advances in neural information processing systems},
  33:\penalty0 18296--18307, 2020.

\bibitem[Graf et~al.(1999)Graf, Schmoor, Sauerbrei, and
  Schumacher]{graf1999assessment}
Erika Graf, Claudia Schmoor, Willi Sauerbrei, and Martin Schumacher.
\newblock Assessment and comparison of prognostic classification schemes for
  survival data.
\newblock \emph{Statistics in Medicine}, 18\penalty0 (17-18):\penalty0
  2529--2545, 1999.

\bibitem[Gray(1988)]{gray1988class}
Robert~J Gray.
\newblock A class of {$K$}-sample tests for comparing the cumulative incidence
  of a competing risk.
\newblock \emph{The Annals of Statistics}, pages 1141--1154, 1988.

\bibitem[Groha et~al.(2020)Groha, Schmon, and Gusev]{groha2020general}
Stefan Groha, Sebastian~M Schmon, and Alexander Gusev.
\newblock A general framework for survival analysis and multi-state modelling.
\newblock \emph{arXiv preprint arXiv:2006.04893}, 2020.

\bibitem[Haider et~al.(2020)Haider, Hoehn, Davis, and
  Greiner]{haider2020effective}
Humza Haider, Bret Hoehn, Sarah Davis, and Russell Greiner.
\newblock Effective ways to build and evaluate individual survival
  distributions.
\newblock \emph{J. Mach. Learn. Res.}, 21\penalty0 (85):\penalty0 1--63, 2020.

\bibitem[Harrell et~al.(1982)Harrell, Califf, and Pryor]{harrell1982evaluating}
Frank~E Harrell, Robert~M Califf, and David~B Pryor.
\newblock Evaluating the yield of medical tests.
\newblock \emph{Journal of the American Medical Association}, 247\penalty0
  (18):\penalty0 2543--2546, 1982.

\bibitem[Hashimoto et~al.(2018)Hashimoto, Srivastava, Namkoong, and
  Liang]{hashimoto2018fairness}
Tatsunori Hashimoto, Megha Srivastava, Hongseok Namkoong, and Percy Liang.
\newblock Fairness without demographics in repeated loss minimization.
\newblock In \emph{International Conference on Machine Learning}, pages
  1929--1938. PMLR, 2018.

\bibitem[Hu and Chen(2022)]{hu2022distributionally}
Shu Hu and George~H Chen.
\newblock Distributionally robust survival analysis: A novel fairness loss
  without demographics.
\newblock In \emph{Machine Learning for Health}, pages 62--87. PMLR, 2022.

\bibitem[Hu et~al.(2020)Hu, Ying, and Lyu]{hu2020learning}
Shu Hu, Yiming Ying, and Siwei Lyu.
\newblock Learning by minimizing the sum of ranked range.
\newblock \emph{Advances in Neural Information Processing Systems}, 2020.

\bibitem[Hu et~al.(2021)Hu, Ke, Wang, and Lyu]{hu2021tkml}
Shu Hu, Lipeng Ke, Xin Wang, and Siwei Lyu.
\newblock {TkML-AP}: Adversarial attacks to top-k multi-label learning.
\newblock In \emph{Proceedings of the IEEE/CVF International Conference on
  Computer Vision}, pages 7649--7657, 2021.

\bibitem[Hu et~al.(2022{\natexlab{a}})Hu, Wang, and Lyu]{hu2022rank}
Shu Hu, Xin Wang, and Siwei Lyu.
\newblock Rank-based decomposable losses in machine learning: A survey.
\newblock \emph{arXiv preprint arXiv:2207.08768}, 2022{\natexlab{a}}.

\bibitem[Hu et~al.(2022{\natexlab{b}})Hu, Ying, Wang, and Lyu]{hu2022sum}
Shu Hu, Yiming Ying, Xin Wang, and Siwei Lyu.
\newblock Sum of ranked range loss for supervised learning.
\newblock \emph{Journal of Machine Learning Research}, 23\penalty0
  (112):\penalty0 1--44, 2022{\natexlab{b}}.

\bibitem[Kalbfleisch and Prentice(2002)]{kalbfleisch2002statistical}
John~D Kalbfleisch and Ross~L Prentice.
\newblock \emph{The Statistical Analysis of Failure Time Data (2nd ed)}.
\newblock John Wiley \& Sons, 2002.

\bibitem[Kaplan and Meier(1958)]{kaplan1958nonparametric}
Edward~L Kaplan and Paul Meier.
\newblock Nonparametric estimation from incomplete observations.
\newblock \emph{Journal of the American Statistical Association}, 53\penalty0
  (282):\penalty0 457--481, 1958.

\bibitem[Katzman et~al.(2018)Katzman, Shaham, Cloninger, Bates, Jiang, and
  Kluger]{katzman2018deepsurv}
Jared~L Katzman, Uri Shaham, Alexander Cloninger, Jonathan Bates, Tingting
  Jiang, and Yuval Kluger.
\newblock Deepsurv: personalized treatment recommender system using a cox
  proportional hazards deep neural network.
\newblock \emph{BMC Medical Research Methodology}, 18\penalty0 (1):\penalty0
  1--12, 2018.

\bibitem[Keya et~al.(2021)Keya, Islam, Pan, Stockwell, and
  Foulds]{keya2021equitable}
Kamrun~Naher Keya, Rashidul Islam, Shimei Pan, Ian Stockwell, and James Foulds.
\newblock Equitable allocation of healthcare resources with fair survival
  models.
\newblock In \emph{Proceedings of the 2021 SIAM International Conference on
  Data Mining (SDM)}, pages 190--198. SIAM, 2021.

\bibitem[Kingma and Ba(2014)]{kingma2014adam}
Diederik~P Kingma and Jimmy Ba.
\newblock Adam: A method for stochastic optimization.
\newblock \emph{arXiv preprint arXiv:1412.6980}, 2014.

\bibitem[Klein and Moeschberger(2003)]{klein2003survival}
John~P Klein and Melvin~L Moeschberger.
\newblock \emph{Survival Analysis: Techniques for Censored and Truncated Data}.
\newblock Springer, 2003.

\bibitem[Knaus et~al.(1995)Knaus, Harrell, Lynn, Goldman, Phillips, Connors,
  Dawson, Fulkerson, Califf, and Desbiens]{knaus1995support}
William~A Knaus, Frank~E Harrell, Joanne Lynn, Lee Goldman, Russell~S Phillips,
  Alfred~F Connors, Neal~V Dawson, William~J Fulkerson, Robert~M Califf, and
  Norman Desbiens.
\newblock The {SUPPORT} prognostic model: Objective estimates of survival for
  seriously ill hospitalized adults.
\newblock \emph{Annals of Internal Medicine}, 122\penalty0 (3):\penalty0
  191--203, 1995.

\bibitem[Kvamme and Borgan(2021)]{kvamme2021continuous}
H{\aa}vard Kvamme and {\O}rnulf Borgan.
\newblock Continuous and discrete-time survival prediction with neural
  networks.
\newblock \emph{Lifetime Data Analysis}, 27\penalty0 (4):\penalty0 710--736,
  2021.

\bibitem[Kvamme et~al.(2019)Kvamme, Borgan, and Scheel]{kvamme2019time}
Havard Kvamme, {\O}rnulf Borgan, and Ida Scheel.
\newblock Time-to-event prediction with neural networks and cox regression.
\newblock \emph{Journal of Machine Learning Research}, 20:\penalty0 1--30,
  2019.

\bibitem[Lee et~al.(2018)Lee, Zame, Yoon, and Van Der~Schaar]{lee2018deephit}
Changhee Lee, William Zame, Jinsung Yoon, and Mihaela Van Der~Schaar.
\newblock {DeepHit}: A deep learning approach to survival analysis with
  competing risks.
\newblock In \emph{Proceedings of the AAAI Conference on Artificial
  Intelligence}, 2018.

\bibitem[Li et~al.(2021)Li, Namkoong, and Xia]{li2021evaluating}
Mike Li, Hongseok Namkoong, and Shangzhou Xia.
\newblock Evaluating model performance under worst-case subpopulations.
\newblock \emph{Advances in Neural Information Processing Systems}, 2021.

\bibitem[Moon et~al.(2022)Moon, Groha, and Gusev]{moon2022survlatent}
Intae Moon, Stefan Groha, and Alexander Gusev.
\newblock Survlatent ode: A neural ode based time-to-event model with competing
  risks for longitudinal data improves cancer-associated venous thromboembolism
  (vte) prediction.
\newblock In \emph{Machine Learning for Healthcare Conference}, 2022.

\bibitem[Rahman and Purushotham(2022)]{rahman2022fair}
Md~Mahmudur Rahman and Sanjay Purushotham.
\newblock Fair and interpretable models for survival analysis.
\newblock In \emph{Proceedings of the 28th ACM SIGKDD Conference on Knowledge
  Discovery and Data Mining}, pages 1452--1462, 2022.

\bibitem[Sagawa et~al.(2020)Sagawa, Koh, Hashimoto, and
  Liang]{sagawa2020distributionally}
Shiori Sagawa, Pang~Wei Koh, Tatsunori~B Hashimoto, and Percy Liang.
\newblock Distributionally robust neural networks for group shifts: On the
  importance of regularization for worst-case generalization.
\newblock In \emph{International Conference on Learning Representations}, 2020.

\bibitem[Schick(1986)]{schick1986asymptotically}
Anton Schick.
\newblock On asymptotically efficient estimation in semiparametric models.
\newblock \emph{The Annals of Statistics}, pages 1139--1151, 1986.

\bibitem[Sonabend et~al.(2022)Sonabend, Pfisterer, Mishler, Schauer, Burk, and
  Vollmer]{sonabend2022flexible}
Raphael Sonabend, Florian Pfisterer, Alan Mishler, Moritz Schauer, Lukas Burk,
  and Sebastian Vollmer.
\newblock Flexible group fairness metrics for survival analysis.
\newblock \emph{arXiv preprint arXiv:2206.03256}, 2022.

\bibitem[Steck et~al.(2007)Steck, Krishnapuram, Dehing-Oberije, Lambin, and
  Raykar]{steck2007ranking}
Harald Steck, Balaji Krishnapuram, Cary Dehing-Oberije, Philippe Lambin, and
  Vikas~C Raykar.
\newblock On ranking in survival analysis: Bounds on the concordance index.
\newblock \emph{Advances in Neural Information Processing Systems}, 2007.

\bibitem[Tang et~al.(2022{\natexlab{a}})Tang, He, Xu, and
  Zhu]{tang2022survival}
Weijing Tang, Kevin He, Gongjun Xu, and Ji~Zhu.
\newblock Survival analysis via ordinary differential equations.
\newblock \emph{Journal of the American Statistical Association}, pages 1--16,
  2022{\natexlab{a}}.

\bibitem[Tang et~al.(2022{\natexlab{b}})Tang, Ma, Mei, and Zhu]{tang2022soden}
Weijing Tang, Jiaqi Ma, Qiaozhu Mei, and Ji~Zhu.
\newblock Soden: A scalable continuous-time survival model through ordinary
  differential equation networks.
\newblock \emph{J. Mach. Learn. Res.}, 23:\penalty0 34--1, 2022{\natexlab{b}}.

\bibitem[Teng(2019)]{Teng2019SEER}
Jing Teng.
\newblock {SEER} breast cancer data.
\newblock \emph{IEEE Dataport}, 2019.
\newblock URL \url{https://dx.doi.org/10.21227/a9qy-ph35}.

\bibitem[Tieleman et~al.(2012)Tieleman, Hinton, et~al.]{tieleman2012lecture}
Tijmen Tieleman, Geoffrey Hinton, et~al.
\newblock Lecture 6.5-rmsprop: Divide the gradient by a running average of its
  recent magnitude.
\newblock \emph{COURSERA: Neural networks for machine learning}, 4\penalty0
  (2):\penalty0 26--31, 2012.

\bibitem[Vershynin(2018)]{vershynin2018high}
Roman Vershynin.
\newblock \emph{High-Dimensional Probability: An Introduction with Applications
  in Data Science}.
\newblock Cambridge University Press, 2018.

\bibitem[Wang et~al.(2019)Wang, Li, and Reddy]{wang2019machine}
Ping Wang, Yan Li, and Chandan~K Reddy.
\newblock Machine learning for survival analysis: A survey.
\newblock \emph{ACM Computing Surveys (CSUR)}, 51\penalty0 (6):\penalty0 1--36,
  2019.

\bibitem[Wu et~al.(2021)Wu, Yang, Fashing, and Tresp]{wu2021uncertainty}
Zhiliang Wu, Yinchong Yang, Peter~A Fashing, and Volker Tresp.
\newblock Uncertainty-aware time-to-event prediction using deep kernel
  accelerated failure time models.
\newblock In \emph{Machine Learning for Healthcare Conference}, pages 54--79.
  PMLR, 2021.

\bibitem[Zhang and Weiss(2022)]{zhang2022longitudinal}
Wenbin Zhang and Jeremy~C Weiss.
\newblock Longitudinal fairness with censorship.
\newblock In \emph{Proceedings of the AAAI Conference on Artificial
  Intelligence}, 2022.

\end{thebibliography}

\end{document}